\documentclass{article}
\usepackage{iclr2027_conference,times}

\usepackage{amsmath,amsfonts,bm}

\def\eqref#1{equation~\ref{#1}}

\def\plaineqref#1{\ref{#1}}

\def\1{\bm{1}}

\DeclareMathAlphabet{\mathsfit}{\encodingdefault}{\sfdefault}{m}{sl}
\SetMathAlphabet{\mathsfit}{bold}{\encodingdefault}{\sfdefault}{bx}{n}

\newcommand{\E}{\mathbb{E}}

\newcommand{\R}{\mathbb{R}}

\newcommand{\Var}{\mathrm{Var}}

\usepackage{hyperref}
\usepackage{url}
\usepackage{booktabs}
\usepackage{array}
\usepackage{amsmath,amssymb}
\usepackage{graphicx}
\usepackage{placeins}
\usepackage{float}
\usepackage{xcolor}
\usepackage{tikz}
\usepackage{amsthm}
\usepackage{mathtools}
\usepackage{enumitem}
\usepackage[listings,breakable]{tcolorbox}
\newtcblisting{prompt}[1]{
  breakable, listing only, title={#1},
  listing options={basicstyle=\ttfamily, breaklines=true,
    literate={’}{{'}}1 {‘}{{`}}1 {“}{{"}}1 {”}{{"}}1},
}

\usepackage{etoc}
\usepackage[sectionbib]{bibunits}
\defaultbibliographystyle{iclr2027_conference}
\defaultbibliography{iclr2027_conference}

\title{The Problem Is the Problem: \\Towards Scalable Mathematical Discovery}

\author{Zeyu Zheng$^{1}$\thanks{Equal contribution.},\ Shengtong Zhang$^{2}$\footnotemark[1],\ Jeremy Avigad$^{1}$,\ Prasad Tetali$^{1}$,\ Sean Welleck$^{1}$\\
$^1$Carnegie Mellon University \quad
$^2$Anysphere Co.\\
\texttt{\{zeyuzhen, avigad, ptetali, swelleck\}@andrew.cmu.edu}, \\
\texttt{shengtong@anysphere.co} \\
\url{https://github.com/zeyu-zheng/FAR}
}

\newcommand{\GapP}{\ensuremath{\mathsf{GapP}}}

\iclrpreprintcopy
\theoremstyle{plain}
\newtheorem{theorem}{Theorem}[subsection]
\newtheorem{lemma}[theorem]{Lemma}
\newtheorem{proposition}[theorem]{Proposition}
\newtheorem{corollary}[theorem]{Corollary}

\newtheorem{definition}[theorem]{Definition}
\theoremstyle{remark}
\newtheorem{remark}[theorem]{Remark}

\newtheorem{example}[theorem]{Example}

\begin{document}

\maketitle

\begin{abstract}
AI systems are increasingly capable of contributing to mathematical research. In research practice, frontier-model reasoning is a limited resource, and expert mathematical review is even more sharply constrained. Allocating these scarce resources well is therefore central to making AI-assisted mathematical discovery efficient. In most current AI-for-math workflows, human effort is concentrated at the beginning and end, in selecting suitable research problems and later reviewing the resulting artifacts. These two stages are becoming bottlenecks for research-level mathematics. We address them by proposing a new human-AI discovery paradigm. The human input is no longer a single problem selected in advance, but a research direction in which the experts have interest and expertise. The system then searches a broad literature corpus for candidate problems in that direction. Inspired by search and recommender systems, we build \textit{Find, Attempt, and Recommend (FAR)}, a literature-to-review cascade that automates the search for suitable problems and focuses human attention on artifacts that have passed several stages of filtering. In a combinatorics pilot, the pipeline starts from 5{,}245 combinatorics papers, recovers 6{,}453 candidate conjectures or open problems, and filters them to 4{,}717 apparently well-posed and still-open conjectures. Subsequent reasoning and automated triage stages surface 598 potential resolutions\footnote{Available at \url{https://probxiv.com}.} and select 77 items for author-team review. Among them, we identify many interesting discoveries, including results on conjectures and questions of Davies--Jenssen--Perkins--Roberts, Erd\H{o}s--Straus, Ikenmeyer--Pak--Panova, and Lund--Saraf--Wolf. These results demonstrate the effectiveness of this new mode of human-AI collaboration for mathematical discovery.
\end{abstract}

\section{Introduction}

AI systems are increasingly capable of contributing to mathematical research, with recent progress on mathematical reasoning benchmarks~\citep{hendrycks2021measuring,zheng2021minif2f,guo2025deepseek,shao2025deepseekmath}, formal theorem proving~\citep{trinh2024solving,chervonyi2025gold,hubert2026olympiad,ren2025deepseek,xin2025scaling,chen2025seed,seed2026seed2}, and selected research-level problems~\citep{openai2026openai,alon2026remarks,tsoukalas2026advancing,team2026seed2}. Most of these systems operate with a problem-level interface, in which a researcher supplies a theorem, conjecture, or formal goal, the system attempts it, and the output is checked.
This interface is useful, but it leaves out an essential part of research: deciding which problems are worth attempting in the first place. 

We study the decision of which problems to attempt in terms of effort allocation for AI-assisted mathematical discovery. Frontier-model reasoning and expert mathematical review are scarce, and their value depends on which conjectures receive them. Rather than concentrating reasoning and review effort on a few problems chosen in advance, we propose a new workflow in which experts specify a research direction, and the AI system 
automatically finds problems to focus effort on (Figure~\ref{fig:paradigm}).
We build \textit{Find, Attempt, and Recommend (FAR)}, a literature-to-review cascade. Given a broad mathematical topic, \textit{FAR} finds relevant open conjectures from a large literature corpus, attempts to prove or disprove them, and recommends promising conjecture-resolution pairs for expert review.

\begin{figure}[t]
\centering
\includegraphics[width=\textwidth]{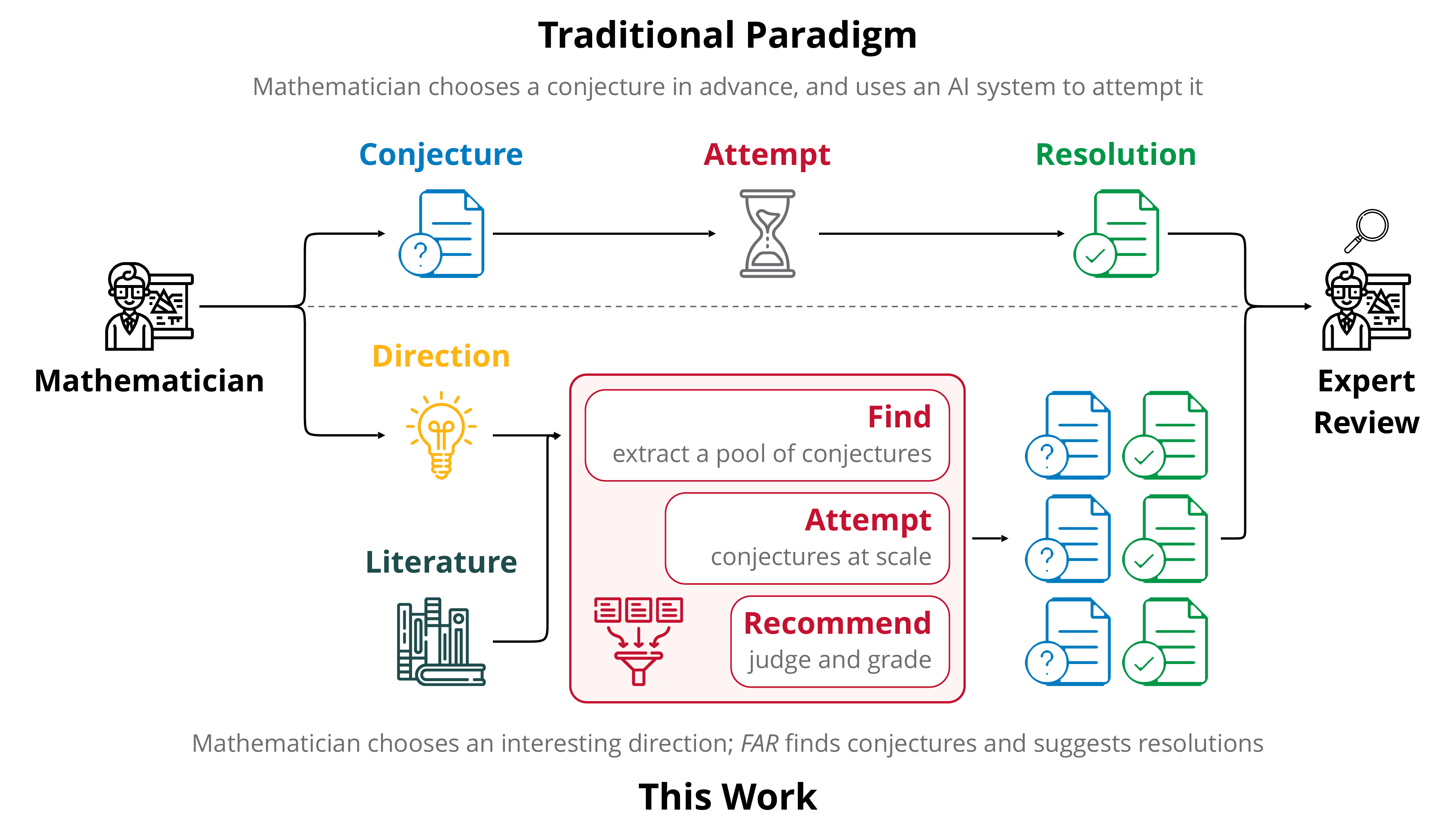}
\caption{\textbf{From choosing a problem to choosing a direction.} The upper part shows the problem-level interface; the lower part shows our approach. Figure~\ref{fig:process-v2} gives the details of \textit{FAR}.}
\label{fig:paradigm}
\end{figure}

We instantiate the workflow in combinatorics. Starting from 51{,}110 mathematics papers, the pipeline identifies 5{,}245 combinatorics papers, extracts 6{,}453 candidate conjectures or open problems from 2{,}742 papers, and filters these to 4{,}717 apparently well-posed and still-open conjectures. A first broad attempt run covers all 4{,}717 of them. Automated triage surfaces 598 potential resolutions, and a final selection step chooses 77 items for internal author-team review. We manually checked 15 of the resulting artifacts, chosen by our own interest. They include counterexamples to conjectures of Davies--Jenssen--Perkins--Roberts and of Lund--Saraf--Wolf, a proof of Ikenmeyer--Pak--Panova's conjecture on symmetric-group characters, and an answer to a question of Erd\H{o}s--Straus on divisibility among binomial coefficients.

We study various strategies for allocating a budget of solving attempts in our pipeline.
We frame allocation as a constrained optimization problem, and derive strategies for maximizing notions of quality and importance.
We show that these strategies can lead to more successful artifacts than with a uniform baseline.
Furthermore, the optimal strategy depends on the objective: for instance, the strategy differs if we wish to maximize the number of successful artifacts versus maximizing the importance across all artifacts.

In summary, our contributions are as follows:
\begin{itemize}
    \item We formulate scalable AI-assisted mathematical discovery as an effort-allocation problem over a pool of interesting mathematical problems, leading to a human--AI collaboration paradigm for using frontier-model reasoning and expert mathematical review efficiently.
    \item Inspired by search and recommender systems, we introduce \textit{Find, Attempt, and Recommend (FAR)}, which builds this problem pool from the mathematical literature and turns model attempts into reviewable artifacts.
    \item We instantiate the workflow in a combinatorics pilot and analyze the resulting literature-to-review funnel, including strategies for allocating model attempts.
    \item We obtain author-reviewed solutions to problems from the combinatorics literature, spanning proofs, counterexamples, and answers to open-ended questions. The write-ups are collected in Appendix~\ref{app:solutions}.
\end{itemize}

\section{Motivation and Related Work}
\label{sec:motivation}

\subsection{AI for Mathematical Research}
\label{sec:ai-for-math}

Recent AI systems have made rapid progress on the part of mathematical research that begins after a problem or objective has been specified. For example, FunSearch and AlphaEvolve search for new mathematical constructions, programs, and algorithms~\citep{romera2024mathematical,novikov2025alphaevolve}. AlphaProof Nexus studies formal proof search on open research-level problems~\citep{tsoukalas2026advancing}. Aletheia, Rethlas, QED, and other recent pipelines explore autonomous or semi-autonomous workflows for attempting open mathematical problems~\citep{feng2026towards,ju2026automated,an2026qed,ju2026automated,peng2026pipelinemath}. Frontier models have also contributed to individually selected problems of long-standing research interest, including OpenAI's disproof of the Unit Distance Problem and the Fable-assisted counterexample to the Jacobian conjecture~\citep{openai2026openai,alon2026remarks,jacobian2026counterexample,bukh2025oddtown}. 
Related work on an AI co-mathematician~\citep{zheng2026ai} develops a collaborative framework in which AI agents pursue parallel workstreams while mathematicians steer the research process. Taken together, these works point toward what Tao describes as a transition from proof scarcity to proof abundance~\citep{tao2026proofabundance}.

Mathematical research, however, rarely starts from an isolated problem statement. Before trying to solve a problem, researchers often need to study a direction of interest, find problems that matter within it, and decide whether those problems are worth sustained thought. These steps are ordinary parts of mathematical work, but they mostly sit outside the scope of today's AI-for-math systems. In current uses of AI for mathematics, the human work before and after model reasoning---problem selection and expert review---is becoming the narrow part of the pipeline.

Our approach moves the starting point of AI assistance to this earlier stage of mathematical research. Rather than selecting a  problem for the system, mathematicians specify a research direction. The system searches an available literature corpus, recovers and attempts candidate problems in that direction at scale, and returns a small set of artifacts for mathematical review. This shift is analogous to the move from task-driven agents, which act on a specified task, toward more proactive systems that help surface what tasks are worth pursuing~\citep{zhou2026externalization,lu2025proactive,thinkingmachines2026interactionmodels}. The human-specified direction bounds the system's initiative and aligns the search with the mathematicians' interests and domain expertise. Mathematicians remain responsible for validating and reporting any resulting mathematical claims~\citep{alper2026leiden,shan2026position}.

\subsection{Constructing an Attemptable Pool}
\label{sec:attemptable-pool}

A research direction does not by itself provide the system with a set of attemptable problems. Mathematical benchmarks such as PutnamBench, FrontierMath, and FirstProof provide clean, self-contained problem pools~\citep{tsoukalas2024putnambench,glazer2024frontiermath,abouzaid2026first}. In software engineering, another major domain for LLM agents~\citep{yang2024swe}, GitHub provides a centralized and structured collection of repositories, issues, documentation, and executable software from which benchmarks such as SWE-bench and ProgramBench can be constructed~\citep{jimenez2024swe,yang2026programbench}.

Mathematics does have valuable collections, including the Open Problem Garden, AIM Problem Lists, and Formal Conjectures~\citep{openproblemgarden,aimproblemlists,firsching2026formal}. Their coverage is selective, however, and they are not the primary infrastructure in which mathematical problems and their surrounding research context are recorded. Much of this information remains dispersed across the literature. A conjecture may appear as a numbered statement, a question, a remark, an unresolved case, or a sentence embedded in local notation, and its status may change after publication. Constructing an attemptable pool of problems therefore requires recovering candidate statements from their source context and checking whether they remain well posed and unresolved.

We draw on search and recommender systems to organize this process. Building a pool of attemptable problems is akin to candidate retrieval, i.e., recovering statements from a large corpus using imperfect signals and filtering them for provenance, well-posedness, and current status~\citep{belkin1992information,liu2009learning,liu2022neural}. 
We also draw on the idea of recommendation cascades, in which progressively more selective stages reduce a large collection of candidates to a small set of artifacts for expert attention~\citep{ricci2010introduction,covington2016deep,wang2011cascade,chen2017efficient,zhu2026contexting}. Our techniques also relate to literature-based discovery, which searches published knowledge for research opportunities not visible from a single paper~\citep{swanson1986undiscovered,swanson1986fish}.

Our objective differs from automated conjecturing, a complementary line of work that creates new mathematical conjectures rather than surfacing existing ones.
This line includes automated theory formation, the Ramanujan Machine, and the data-driven TxGraffiti system~\citep{colton2012automated,raayoni2021generating,davila2026automated}. Recent LLM work has used generated conjectures to expand formal training data and couple conjecturing with proving, as in LeanConjecturer and STP, while Moonshine makes conjecture generation the organizing objective of an autonomous mathematical research agent~\citep{onda2025leanconjecturer,dong2025stp,chen2026moonshine}. We instead recover unresolved statements that authors have already placed in the literature. Each candidate retains its source paper, statement text, local context, and status evidence, so that later attempts and reviews can be checked against what the source actually claimed. After status checking, the result is an attemptable pool $\mathcal{P}$ of source-grounded conjectures that appear well posed and still open.

\subsection{Effort Allocation under Uncertainty}
\label{sec:allocation}

Let $\mathcal{U}$ be the universe of mathematical questions that can be expressed in natural language. The human practice of mathematical research can be viewed as a large effort-allocation process over $\mathcal{U}$. Mathematicians search this space for questions worth exploring, and then try to answer them or make progress on them. As AI agents become primary sources of mathematical attempts, allocating agent compute raises a similar problem. At the same time, the Leiden Declaration emphasizes that mathematicians remain responsible for validating and reporting mathematical claims~\citep{alper2026leiden}, so the allocation of expert review effort also matters. The attemptable pool $\mathcal{P}$ described in Section~\ref{sec:attemptable-pool} forms  a small part of $\mathcal{U}$, but it is a natural starting point because its conjectures and open problems have already been selected, stated, and discussed by mathematicians.

Even for human mathematicians, deciding where to spend effort is difficult. Prior impressions of difficulty often differ from difficulty in hindsight. Some simply stated problems resist solution for a long time, while some long-standing problems eventually admit unexpectedly simple arguments. In an AI-assisted setting there is an additional source of uncertainty: what is difficult for human mathematicians need not be difficult in the same way for the current model. 
In turn, an important question is how to use limited model attempts to discover which conjectures from the attemptable pool $\mathcal{P}$ are likely to produce artifacts that are worth expert review.

\begin{figure}[H]
\centering
\includegraphics[width=0.6\textwidth]{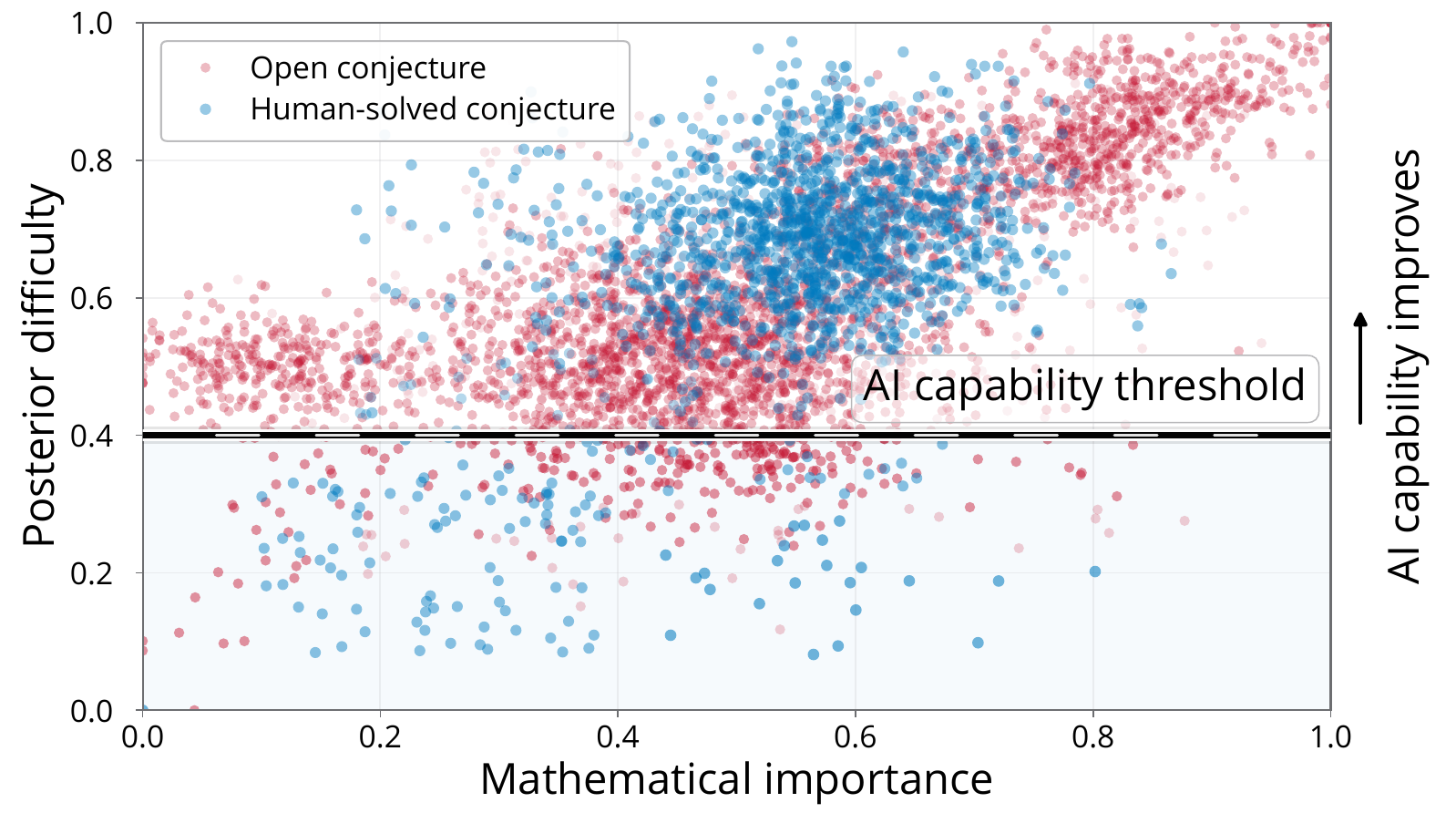}
\caption{\textbf{Schematic view of the current reachable region.} The figure is illustrative, and both axes should be read qualitatively. 
A single pass over the pool probes which conjectures can produce artifacts worth review under the current model. As model capability improves, more conjectures may become reachable.}
\label{fig:threshold-v2}
\end{figure}

The question of allocating limited attempts to a set of problems invites a simple bandit interpretation~\citep{bubeck2012regret,lattimore2019bandit}.
From this perspective, a conjecture $c\in\mathcal{P}$ is an arm, and spending reasoning and review effort on it is a pull. A pull yields an assessed outcome $Y(c)$, which may be no reliable result, a known resolution, or a candidate proof or counterexample.
In our experiments we pull each arm only once, as in the initialization stage of the UCB algorithm~\citep{auer2002finite}, and then use subsequent steps to concentrate human review on a much smaller set of outputs. Further investigating bandit algorithms for our setting is left for future work. 
Our effort allocation problem differs from the systems discussed in Section~\ref{sec:ai-for-math}, which amount to  concentrating reasoning effort on one or a few problems that are selected in advance. 
Finally, we note that effort allocation is dynamic: as models improve, conjectures that previously produced no useful progress may enter the current system's reachable region, thereby demanding different allocations of effort than the older models.
Figure~\ref{fig:threshold-v2} illustrates this reachable region.

\section{Method}
\label{sec:process}

Our method \textit{Find, Attempt, and Recommend (FAR)} instantiates the allocation view above as a literature-to-review pipeline. 
Analogous to search and recommender systems~\citep[e.g.,][]{wang2011cascade,chen2017efficient,liu2022neural}, \textit{FAR} progressively narrows a large collection through increasingly costly stages,
with later stages receiving
more compute per item.  
As illustrated in Figure~\ref{fig:process-v2}, given a literature corpus and a research direction, \textit{FAR} \emph{finds} relevant open problems and filters them into an attemptable pool $\mathcal{P}$ (the Label, Extract, and Check stages), \emph{attempts} candidate resolutions (Solve), and \emph{recommends} selected conjecture-resolution pairs for expert review (Judge).

\begin{figure}[H]
\centering
\includegraphics[width=1\textwidth]{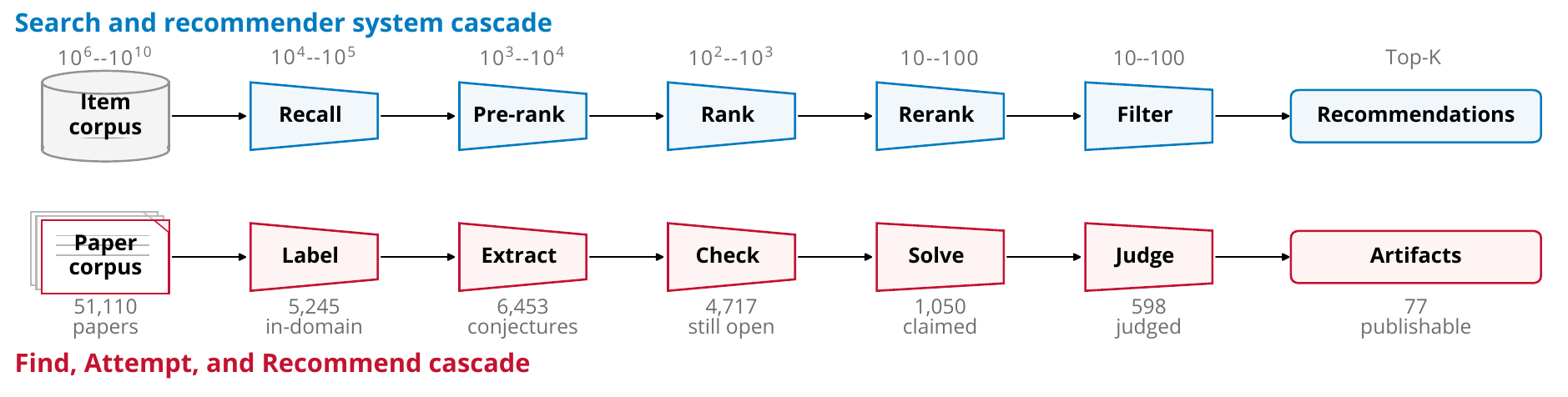}

\caption{\textbf{From papers to recommendations for expert review.} The upper row shows a search or recommender pipeline that recalls and filters candidates from a large corpus. Its numbers indicate typical orders of magnitude. The lower row shows the analogous \textit{FAR} pipeline. Numbers in the lower row are counts from our pilot run detailed in Section~\ref{sec:pilot-v2}.}
\label{fig:process-v2}
\end{figure}

\subsection{Finding Relevant Open Problems}
\label{sec:find}

The finding stage takes a literature corpus and a research direction as input and returns an attemptable pool $\mathcal{P}$ of relevant open problems. It identifies papers in the research direction,  extracts unresolved statements from them, and finally checks whether those statements are valid and still open. The corpus may be arXiv, a topic-specific collection, or another large-scale source of mathematical literature. The prompts for the three stages are given in Appendix~\ref{app:prompt-templates}.

\paragraph{Finding relevant papers via labeling.}
Before the pipeline runs, a mathematician fixes a research direction at whatever granularity they need. An agent then reads each paper, labels whether it lies in that direction, and keeps the papers that do. Labeling plays the part of recall in a retrieval cascade. It sees the entire corpus, so it runs the cheapest model in the pipeline.

\paragraph{Extracting and recovering conjectures.}
An agent extracts unresolved statements from the papers that survive labeling. A paper may yield several such statements or none. These may be labeled conjectures, questions, or open problems, or they may appear in prose. The extractor excludes future work that poses no specific mathematical question, and statements resolved within the same paper. Beyond those exclusions it is permissive, since a statement it passes over cannot be recovered later while a spurious one is dropped by the next stage. Each extracted statement becomes a candidate, recorded with its source paper. Table~\ref{tab:extraction-example} gives a representative source-to-candidate example.

\paragraph{Checking validity and status.}
An agent searches for later work on each candidate and records the supporting sources as status evidence. It labels the candidate \emph{open} when its source paper states a concrete unresolved problem and no credible resolution is found, \emph{solved} when credible evidence resolves it, and \emph{invalid} when the extracted text does not state a concrete open problem. Open candidates form $\mathcal{P}$, meaning  that the pool contains conjectures that appear relevant, well posed, and still open under the available evidence. Like the eligibility filters that the drop items that a recommender can no longer serve, this stage removes the candidates that turn out to be solved or invalid.

\begin{table}[H]
\caption{A recovered candidate, from source text to the pool.}
\label{tab:extraction-example}
\centering
\small
\begin{tabular}{p{0.22\linewidth}p{0.70\linewidth}}
\toprule
Field & Value \\
\midrule
Source paper & Ikenmeyer, Pak, and Panova, \emph{Positivity of the Symmetric Group Characters is as Hard as the Polynomial Time Hierarchy} \\
Extracted label & Conjecture 5.3.2 \\
Extracted statement & The problem \textsc{ComputeCharBinary} is $\GapP$-complete under many-one reductions. \\
Status & Open. The check found no credible resolution, and records that the completeness question is still unsettled. \\
\bottomrule
\end{tabular}
\end{table}

\subsection{Attempting for Candidate Resolutions}
\label{sec:attempt}

Every conjecture in the pool $\mathcal{P}$ receives one attempt, which is the unit of effort that this stage allocates.
An attempt may be a single agent run, a longer harness, a multi-agent workflow, or repeated sampling under a selection rule. 
Performing one attempt per conjecture simply allocates the attempt budget uniformly across problems in the pool.
As discussed in Section~\ref{sec:allocation}, this can be seen as the initialization step of a bandit algorithm, and
we study other strategies in Section \ref{sec:analysis}.

To perform an attempt, an agent is given each conjecture as extracted, together with its source paper, so that its notation is read against the text that introduced it. 
Although the checking stage already searched for whether the problem is open, the checking stage ran a weaker model.
Determining that a conjecture is equivalent to something that is already settled can take non-trivial reasoning, and hence may benefit from the stronger model used in the attempting stage. The agent therefore searches before it attempts, and labels the outcome \texttt{KNOWN} when a credible source already resolves the conjecture, \texttt{NEW} when it produces a complete proof or counterexample of its own, \texttt{FIX} when the statement as written is defective and the minimal repair it proposes cannot be settled, and \texttt{NONE} when it reaches none of these. The prompt is given in Appendix~\ref{app:prompt-templates}. Only \texttt{NEW} outcomes go on to the next stage.

The \textit{find} stages narrowed down the pool of problems, meaning that the \textit{attempt} stage can use more resources per problem.
For the attempt stage we run the most capable model in the pipeline, and it returns a set $\mathcal{Y}$ pairing every conjecture in $\mathcal{P}$ with an outcome.

\subsection{Recommendation for Expert Review}
\label{sec:human-review}

Expert review is the scarcest resource in the pipeline, and only a few of the conjecture-resolution pairs in $\mathcal{Y}$ can receive it. 
The \textit{recommend} stage decides which pairs receive expert review.
It judges whether the result is correct and whether it is significant enough to publish, similar to what a referee determines in human peer review.
The judging and grading in this stage are akin to the final filters in a search or recommender cascade. Here, a mathematician is the user that the results are served to.

\paragraph{Judging.}
One or more agents check each \texttt{NEW} outcome in $\mathcal{Y}$ for correctness, asking whether it addresses the statement that it targets, whether the argument or construction is complete, and whether every step holds. Each agent marks the outcome \texttt{PASS} or \texttt{FAIL}, and an outcome passes only if every agent passes it.

\paragraph{Recommending for review.}
A second agent takes the outcomes that passed and sorts each one as already known, as new but too minor to stand alone, or as substantial enough to publish on its own. Deciding the first requires a fresh literature search, since an existing resolution may have escaped both earlier stages. The last group forms the set $\mathcal{A}$ of artifacts. The judging and grading prompts are given in Appendix~\ref{app:prompt-templates}.

\paragraph{Expert review.}
By this point $\mathcal{A}$ is small, and everything in it lies in the direction that the mathematician set at the start. The mathematician(s) read the artifacts that interest them, check that each is correct and not already known, and write up those that hold.

\section{A Pilot Run in Combinatorics}
\label{sec:pilot-v2}

We instantiate \textit{Find, Attempt, and Recommend} in combinatorics, a domain whose results the authors have the expertise to verify. The remainder of this section reports the setup of the run, the outcomes it produced, and an analysis of those outcomes.

\subsection{Setup}

For this pilot we assemble a corpus of 51{,}110 mathematics papers from \textit{OpenAlex} metadata and source links~\citep{priem2022openalex}. The pipeline is not tied to that source, and arXiv or another large-scale collection of mathematical literature would serve as well. Labeling keeps 5{,}245 papers. Extraction recovers 6{,}453 candidates from 2{,}742 of them, and checking leaves 4{,}717 conjectures drawn from 2{,}206 papers. These form the attemptable pool $\mathcal{P}$.

The stages run different models. Labeling uses \texttt{gpt-oss-120b}, extraction \texttt{gemini-3.5-flash}, and checking \texttt{gemini-3.1-pro} with web search. Attempting, judging, and grading all use \texttt{gpt-5.5} at \texttt{xhigh} reasoning effort, and each conjecture in $\mathcal{P}$ received one attempt, instantiated here as a single run of the \texttt{opencode} agent in a working directory holding the paper and the statement. Each claimed resolution was put to three independent judges. This is the ordering Section~\ref{sec:process} describes, with progressively more capable models as the set narrows and the task becomes harder.

\subsection{Outcomes}
\label{sec:outcomes}

Every conjecture in $\mathcal{P}$ receives one attempt and one of four outcomes. Only \texttt{NEW} outcomes are judged, and only those that pass are graded. The tree below gives the count at each step.

\begin{center}
\small
\begin{tikzpicture}[inner sep=0pt]
\draw[black,line width=.7pt] (0.30,-0.14) -- (0.30,-1.69);
\draw[black,line width=.7pt] (0.30,-0.37) -- (0.82,-0.37);
\draw[black,line width=.7pt] (0.30,-0.81) -- (0.82,-0.81);
\draw[black,line width=.7pt] (0.30,-1.25) -- (0.82,-1.25);
\draw[black,line width=.7pt] (0.30,-1.69) -- (0.82,-1.69);
\draw[black,line width=.7pt] (1.75,-1.90) -- (1.75,-2.57);
\draw[black,line width=.7pt] (1.75,-2.13) -- (2.27,-2.13);
\draw[black,line width=.7pt] (1.75,-2.57) -- (2.27,-2.57);
\draw[black,line width=.7pt] (3.20,-2.78) -- (3.20,-3.89);
\draw[black,line width=.7pt] (3.20,-3.01) -- (3.72,-3.01);
\draw[black,line width=.7pt] (3.20,-3.45) -- (3.72,-3.45);
\draw[black,line width=.7pt] (3.20,-3.89) -- (3.72,-3.89);
\node[anchor=base east] at (0.95,0.00) {4{,}717};
\node[anchor=base west] at (1.10,0.00) {conjectures attempted};
\node[anchor=base west] at (9.55,0.00) {$=\mathcal{P}$};
\node[anchor=base east] at (1.95,-0.44) {2{,}905};
\node[anchor=base west] at (2.10,-0.44) {\texttt{NONE}};
\node[anchor=base west] at (3.82,-0.44) {no result};
\node[anchor=base east] at (1.95,-0.88) {443};
\node[anchor=base west] at (2.10,-0.88) {\texttt{KNOWN}};
\node[anchor=base west] at (3.82,-0.88) {already resolved in the literature};
\node[anchor=base east] at (1.95,-1.32) {319};
\node[anchor=base west] at (2.10,-1.32) {\texttt{FIX}};
\node[anchor=base west] at (3.82,-1.32) {statement defective as written};
\node[anchor=base east] at (1.95,-1.76) {1{,}050};
\node[anchor=base west] at (2.10,-1.76) {\texttt{NEW}};
\node[anchor=base west] at (3.82,-1.76) {claimed resolution};
\node[anchor=base east] at (3.13,-2.20) {452};
\node[anchor=base west] at (3.27,-2.20) {\texttt{FAIL}};
\node[anchor=base west] at (4.40,-2.20) {fails judging};
\node[anchor=base east] at (3.13,-2.64) {598};
\node[anchor=base west] at (3.27,-2.64) {\texttt{PASS}};
\node[anchor=base west] at (4.40,-2.64) {passes judging};
\node[anchor=base east] at (4.58,-3.08) {75};
\node[anchor=base west] at (4.72,-3.08) {already known};
\node[anchor=base east] at (4.58,-3.52) {446};
\node[anchor=base west] at (4.72,-3.52) {too minor to stand alone};
\node[anchor=base east] at (4.58,-3.96) {77};
\node[anchor=base west] at (4.72,-3.96) {publishable};
\node[anchor=base west] at (9.55,-2.64) {$=\mathcal{J}$};
\node[anchor=base west] at (9.55,-3.96) {$=\mathcal{A}$};
\end{tikzpicture}
\end{center}

The run produced claimed resolutions for 1{,}050 of the 4{,}717 conjectures. Judging accepted 598 of them, which we write $\mathcal{J}$, and grading left 77 of those to form $\mathcal{A}$.

The authors reviewed 15 of the artifacts in $\mathcal{A}$ that they found particularly interesting, and every one of them is mathematically correct. One, an asymptotic bound on a divisor-difference problem of Erd\H{o}s recorded in Guy's miscellany, had been settled a few months before our run by a route that none of the searches in the cascade turned up. Section~\ref{sec:results-v2} discusses several of these results, and Appendix~\ref{app:solutions} collects the write-ups.

\subsection{Analysis: Allocating Effort for Discovery}
\label{sec:analysis}

Beyond the mathematics that the run produced, the outcomes of the run let us study alternative effort allocation strategies to the uniform strategy that we used in the pilot.

To do so, we use the outcomes of each stage (namely, the conjectures $\mathcal{J}$ that ended up passing judging and the final artifacts $\mathcal{A}$) along with \textit{difficulty} and \textit{importance} scores that were collected during the run.
Concretely, the agent's prompt in the checking stage asked it to include two estimates:
\begin{itemize}[leftmargin=1.6em,itemsep=1pt,topsep=3pt,parsep=0pt]
    \item a \emph{difficulty} $d$, anchored at $0$ for a conjecture whose resolution would be an unpublishable exercise and at $1$ for one publishable in a top journal;
    \item an \emph{importance} $i$, anchored at $0$ for a statement with no substantive mathematical content and at $1$ for a Fields-Medal-level problem.
\end{itemize}
In both cases the prompt asks for the scores to follow a roughly normal distribution centered on $0.5$. The scores are fixed before any reasoning budget is spent, and the run then attempted every conjecture in $\mathcal{P}$, so the two scores can be read against outcomes that they preceded.

\subsubsection{Validity of the Agent's Difficulty and Importance Scores}

We first study the validity of the difficulty and importance scores, before using them within allocation strategies. For a group of conjectures $\mathcal{S}\subseteq\mathcal{P}$, write
\begin{itemize}[leftmargin=1.6em,itemsep=1pt,topsep=3pt,parsep=0pt]
    \item $\delta(\mathcal{S})=|\mathcal{S}\setminus\mathcal{J}|\,/\,|\mathcal{S}|$, the fraction of $\mathcal{S}$ with no accepted resolution;
    \item $\iota(\mathcal{S})=|\mathcal{A}\cap\mathcal{S}|\,/\,|\mathcal{J}\cap\mathcal{S}|$, the fraction of its accepted resolutions graded publishable.
\end{itemize}
The $\delta(S)$ metric is viewed as an empirical estimate of the difficulty, and $\iota(S)$ as an empirical estimate of the importance based on the outcomes of the run.
Therefore, we use these metrics to validate the model-estimated difficulty score $d$ and importance score $i$.
The tree in Section~\ref{sec:outcomes} gives both over the whole pool, $\delta(\mathcal{P})=4{,}119/4{,}717$ and $\iota(\mathcal{P})=77/598$.
Reading $d$ and $i$ as maps $\mathcal{P}\to[0,1]$, Figure~\ref{fig:solve-rate} plots $\delta(d^{-1}[a,b))$ in (a) and $\iota(i^{-1}[a,b))$ in (b), over the intervals $[a,b)$ marked on each axis.

\begin{figure}[H]
\centering
\begin{minipage}{0.49\textwidth}\centering
\includegraphics[width=\textwidth]{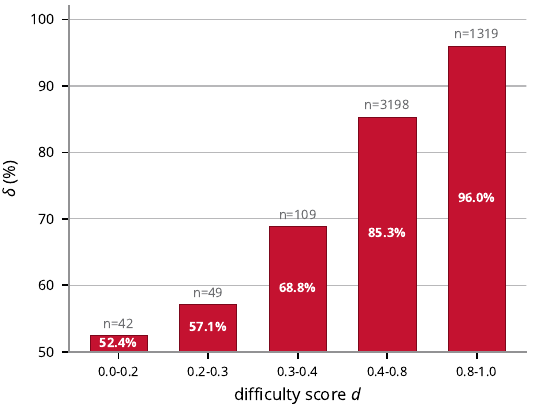}\\[-2pt]{\small (a) difficulty $d$ against $\delta$}
\end{minipage}\hfill
\begin{minipage}{0.49\textwidth}\centering
\includegraphics[width=\textwidth]{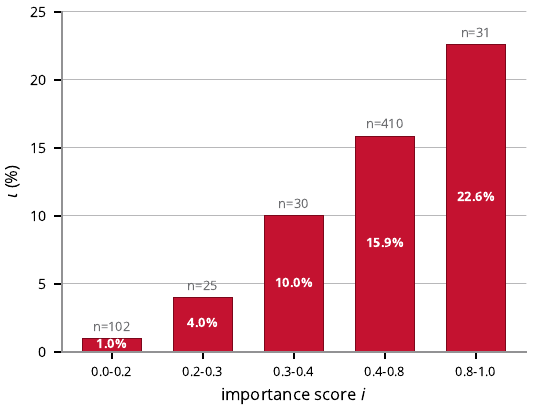}\\[-2pt]{\small (b) importance $i$ against $\iota$}
\end{minipage}
\caption{\textbf{Each score against the quantity it judges.} Panel (a) plots $\delta(d^{-1}[a,b))$ on an axis starting at $50\%$, panel (b) plots $\iota(i^{-1}[a,b))$. $n$ counts attempts in (a) and accepted resolutions in (b). Candidates that a later status recheck reclassified as solved or invalid are excluded.}
\label{fig:solve-rate}
\end{figure}

Figure~\ref{fig:solve-rate} shows that $\delta$, the fraction of attempts with no accepted resolution, and $\iota$, the publishable fraction of the accepted ones, both correlate positively with the difficulty and importance scores the agent assigned during the checking stage. We measure each association by the area under the ROC curve (AUC)~\citep{hanley1982meaning}, the probability that the score ranks a randomly chosen conjecture that has the outcome above a randomly chosen conjecture that does not, ties counting half. Difficulty achieves an AUC of $0.69$ against having no accepted resolution, and importance $0.60$ against being graded publishable among the accepted. Appendix~\ref{app:scores} repeats the figure with intervals attached and gives these statistics in full. A Mann-Whitney test~\citep{mann1947test} puts the first at $p<10^{-40}$ and the second at $p=0.008$, both showing strong positive correlation. We therefore use them below in analyzing allocation strategies.

We also note that the two scores are not independent of each other. Their Spearman rank correlation is $0.83$, and the figure below illustrates that dependence. Over 60\% of the conjectures lie strictly above the diagonal, while almost nothing falls strictly below. In other words, a problem that the agent calls important is almost always one that it also calls hard. This reflects a selection effect, since the pool holds only problems that are still open, and an important question stays open only while it remains hard.

\begin{figure}[H]
\centering
\includegraphics[width=0.5\textwidth]{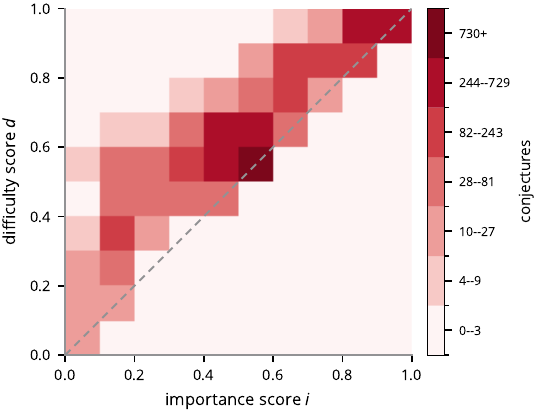}
\end{figure}
\FloatBarrier

The two scores do not carry the same information, however. We demonstrate this by comparing each conjecture only against others that received the same importance score. For each value $v$ of $i$, write $\mathrm{AUC}_v$ for the AUC of $d$ within the conjectures scored $v$, and $n_v$ for the number of pairs it compares. The stratified AUC is then,
\begin{equation*}
\mathrm{AUC}_{\mathrm{strat}}=\frac{\sum_{v} n_v\,\mathrm{AUC}_v}{\sum_{v} n_v}=0.56,\qquad p<10^{-5}.
\end{equation*}
Therefore, among conjectures the LLM scored equally important, the difficulty score still effectively separates the attempts that returned an accepted resolution from those that did not.

\subsubsection{Where to Spend a Limited Budget}

Section~\ref{sec:allocation} posed the question: once agents are a primary source of mathematical attempts, how should their compute be allocated? We take it up here in a simple setting. Suppose the attempting stage of Section~\ref{sec:attempt} is given a budget of $B$ attempts, and each conjecture receives at most one attempt. 
Writing $\mathcal{S}$ for the set of conjectures that the stage attempts and $f$ for the value of the artifacts that it produces, the stage should maximize the following:
\begin{equation}
\begin{aligned}
\text{maximize}_{\;\mathcal{S}\subseteq\mathcal{P}}\quad & \mathbb{E}\big[\,f(\mathcal{A}\cap\mathcal{S})\,\big] \\
\text{subject to}\quad & |\mathcal{S}|=B,
\end{aligned}
\label{eq:allocation}
\end{equation}
where the expectation is over the outcomes of the attempts. Three choices of $f$ are natural:
\begin{itemize}[leftmargin=1.6em,itemsep=2pt,topsep=3pt,parsep=0pt]
    \item $f_1=|\mathcal{A}\cap\mathcal{S}|$: the total number of publishable artifacts.
    \item $f_2=\sum_{c\in\mathcal{A}\cap\mathcal{S}}i(c)$: the total importance of those artifacts.
    \item $f_3=\max_{c\in\mathcal{A}\cap\mathcal{S}}i(c)$: the importance of the single best artifact.
\end{itemize}

\paragraph{Optimal strategies.}
Consider the outcome of a single attempt on conjecture $c$ as random. Then $\mathbb{E}\,\delta(c)$ is the probability that it returns no accepted resolution, $\mathbb{E}\,\iota(c)$ the probability that a resolution it does return is graded publishable, and $p(c)=\big(1-\mathbb{E}\,\delta(c)\big)\,\mathbb{E}\,\iota(c)$ the probability that $c$ ends in $\mathcal{A}$. If $\mathbb{E}\,\delta(c)$ and $\mathbb{E}\,\iota(c)$ are known in advance, we can solve the optimization problem~(\plaineqref{eq:allocation}) exactly for $f_1$ and $f_2$, and within a constant factor for $f_3$.

For $f_1$ and $f_2$, by linearity of expectation:
\[
\mathbb{E}\,[f_1(\mathcal{A}\cap\mathcal{S})]=\sum_{c\in\mathcal{S}}p(c),
\qquad
\mathbb{E}\,[f_2(\mathcal{A}\cap\mathcal{S})]=\sum_{c\in\mathcal{S}}i(c)\,p(c).
\]
Sorting $\mathcal{P}$ by $p(c)$, respectively by $i(c)\,p(c)$, and keeping the first $B$ therefore solves~(\plaineqref{eq:allocation}).

The argument for $f_3$ runs through the classical problem of monotone submodular maximization~\citep{krause2014submodular}. Formally, a set function $F$ on subsets of $\mathcal{P}$ is \emph{submodular} if $F(\mathcal{S})+F(\mathcal{T})\geq F(\mathcal{S}\cup\mathcal{T})+F(\mathcal{S}\cap\mathcal{T})$ for all $\mathcal{S},\mathcal{T}\subseteq\mathcal{P}$, and \emph{monotone} if $F(\mathcal{S})\leq F(\mathcal{T})$ whenever $\mathcal{S}\subseteq\mathcal{T}$.

Maximizing a monotone submodular $F$ under a cardinality constraint contains maximum coverage as a special case and is therefore NP-hard. It is known, however, that a simple greedy algorithm achieves at least a $(1-1/e)$ fraction of the optimum~\citep{nemhauser1978analysis}, and that no polynomial algorithm beats it unless P$\,=\,$NP~\citep{feige1998threshold}. The algorithm starts with $\mathcal{S}_0=\emptyset$, and at each iteration $j$ it adds the element of largest gain $F(c\mid\mathcal{S}_{j-1})=F(\mathcal{S}_{j-1}\cup\{c\})-F(\mathcal{S}_{j-1})$, so that
\[
\mathcal{S}_j=\mathcal{S}_{j-1}\cup\Big\{\operatorname*{arg\,max}_{c\in\mathcal{P}\setminus\mathcal{S}_{j-1}}F(c\mid\mathcal{S}_{j-1})\Big\},
\qquad j=1,\dots,B.
\]

We show that $\mathcal{S}\mapsto\mathbb{E}\,[f_3(\mathcal{A}\cap\mathcal{S})]$ is monotone submodular, which is what the guarantee above requires. The map $f_3$ is monotone by definition. For submodularity, suppose without loss of generality that $f_3(\mathcal{U})\geq f_3(\mathcal{V})$; then $f_3(\mathcal{U}\cup\mathcal{V})=f_3(\mathcal{U})$ and $f_3(\mathcal{U}\cap\mathcal{V})\leq f_3(\mathcal{V})$, so $f_3(\mathcal{U})+f_3(\mathcal{V})\geq f_3(\mathcal{U}\cup\mathcal{V})+f_3(\mathcal{U}\cap\mathcal{V})$. The map $\mathcal{S}\mapsto\mathcal{A}\cap\mathcal{S}$ preserves unions and intersections, so it carries both properties over to $\mathcal{S}\mapsto f_3(\mathcal{A}\cap\mathcal{S})$. Taking expectations then gives the same two properties for $\mathcal{S}\mapsto\mathbb{E}\,[f_3(\mathcal{A}\cap\mathcal{S})]$.

\paragraph{Building a strategy from the scores.}
In practice, neither $\mathbb{E}\,\delta(c)$ nor $\mathbb{E}\,\iota(c)$ are known. Before any reasoning is spent, the only signals available are the model-based difficulty and importance scores that the pool building stage recorded. By the arguments above, if we can approximate $p$ from these scores then we obtain near-optimal strategies for $f_1$ and $f_2$. For $f_3$ an approximation of $p$ is not enough, since each greedy step needs the value of $F(c\mid\mathcal{S})$, which depends on the joint distribution of the attempt outcomes.
Hence we propose an alternative algorithm for $f_3$ that, like those for $f_1$ and $f_2$, needs only an approximation of $p$ and does well in our run.

We fit the two factors of $p$ by least squares, $1-\delta$ on $d$ over $\mathcal{P}$ and $\iota$ on $i$ over $\mathcal{J}$, and write $\hat{\delta}$ and $\hat{\iota}$ for the resulting estimates of $\mathbb{E}\,\delta$ and $\mathbb{E}\,\iota$. We write $\hat{\delta}$ and $\hat{\iota}$ for the two estimates of $\mathbb{E}\,\delta$ and $\mathbb{E}\,\iota$. Their product $\hat{p}=(1-\hat{\delta})\,\hat{\iota}$ estimates $p$. In theory, ranking $\mathcal{P}$ on $\hat{p}$ and taking the top $B$ is near-optimal for $f_1$, and ranking on $i\,\hat{p}$ is near-optimal for $f_2$. 
For $f_3$, we 
choose a top fraction of $\mathcal{P}$ based on importance, and then
rank on $\hat{p}$.
The least squares fits underlying these strategies do not require the whole pool,
so a round may attempt a small part of the pool first, fit and compare strategies on the returned outcomes, and allocate the remaining budget with the strategy that performed best.

\paragraph{Empirical results.}

We compare how these strategies perform on the outcomes that our pilot run produced. 
As the baseline strategy, we take $\mathcal{S}$ to be a uniform random subset of $\mathcal{P}$ of size $B$. 
Against it we compare ranking the whole of $\mathcal{P}$ on $\hat{p}$, ranking it on $i\,\hat{p}$, and ranking on $\hat{p}$ inside a top fraction of $\mathcal{P}$ by importance (inside such a restriction $i$ varies too little to reorder anything, so ranking there on $i\,\hat{p}$ gives an almost same strategy). Figure~\ref{fig:budget} shows the baseline, the two rankings, and the two restrictions that returned the most artifacts.

Each strategy is evaluated by $5$-fold cross-validation. We split $\mathcal{P}$ into five parts at random, fit both factors on four of them, rank the fifth by the resulting $\hat{p}$ and keep its top $B/5$, and combine the five selections, so that no conjecture is ever ranked by a fit that saw its own outcome. Each point of the figure averages $1000$ such partitions, at $B=10$, $25$, $50$, $75$, $100$, $200$ and $300$. Appendix~\ref{app:budget} tabulates every plotted point.

\begin{figure}[H]
\centering
\includegraphics[width=\textwidth]{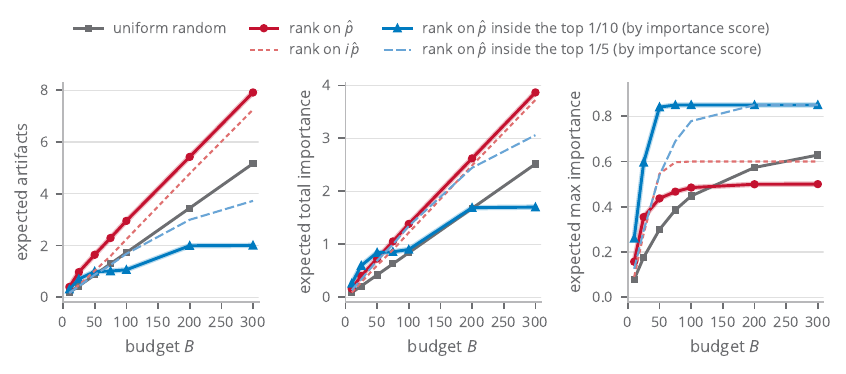}
\caption{\textbf{Allocation strategies against the budget.} Each fit is made on four fifths of $\mathcal{P}$ and applied to the remaining fifth, from which $B/5$ conjectures are drawn. The five selections together make one set of $B$ conjectures, and each point averages what that set returns over $1000$ random partitions. Ties are broken at random, and the uniform baseline is computed exactly from its closed form. Here, $B$ only counts the allocated attempts.}
\label{fig:budget}
\end{figure}
\FloatBarrier

On $f_1$, the run follows the theoretical analysis closely. Ranking $\mathcal{P}$ on $\hat{p}$ outperforms every other strategy at every budget. On $f_2$, the analysis prescribes ranking on $i\,\hat{p}$, but ranking on $\hat{p}$ instead beats it narrowly at every budget. On $f_3$, neither ranking of the whole pool does well, and at the larger budgets both even fall behind the uniform random baseline. What works best in practice is to discard all but the most important part of $\mathcal{P}$ and rank what remains on $\hat{p}$. Of the fractions we tried, keeping the most important $1/10$ did best on this run. Note that the results above depend on the scoring model, on the sources that the pool was built from, and on the models in the cascade.
Nevertheless, the results suggest that using strategies derived from model-based importance and difficulty scores, it is possible to allocate the budget more effectively than with a uniform strategy in our problem setting.
Furthermore, the optimal allocation strategy differs between $f_1$ and $f_3$, providing evidence that the best strategy can depend on the objective that the mathematician is optimizing for. 

\section{Selected Reviewed Results}
\label{sec:results-v2}

We manually reviewed fifteen of the artifacts from the pilot run, chosen by our own interest, and found no mathematical error in any of them. We discuss some of them below, which show three different degrees of novelty. Full write-ups of these results and of the others we reviewed are collected in Appendix~\ref{app:solutions}.

\subsection{A Known Result Graded as New}

\paragraph{Sets with no large divisor difference (Appendix~\ref{sol:erdos}).}
Erd\H{o}s~\citep{guy1983unsolved} asks how large a set of integers can be if no two of its elements have a large difference dividing the larger one. Formally, for $t\ge1$, let $F(n;t)$ be the largest size of a set $A\subseteq\{1,\dots,n\}$ in which no two elements $x<y$ satisfy both $(y-x)\mid y$ and $y-x\ge t$. The question is whether $F(n;t)\le(\tfrac12+o(1))n$ for every fixed $t$.

The artifact returned for this problem proves that $F(n;t)/n\to\tfrac12$. The odd numbers give the lower bound, since the difference of two odd numbers is even and cannot divide the larger. For the upper bound it fixes a finite set $P$ of odd primes exceeding $t$ with $\sum_{p\in P}1/p$ large. For each $p\in P$ the set cannot hold both $2rp$ and $(2r-1)p$, whose difference is $p\ge t$ and divides $2rp$, so the even elements of $A$ are divisible by no more primes of $P$, in total, than the odd integers outside $A$ are. A second-moment estimate on each parity class turns this into the matching upper bound.

The recommend stage judged the proof correct and graded it as a publishable result. Reviewing it ourselves we found the mathematics correct but the result already known. Four months before our run a proof of the same statement had been recorded on the Erd\H{o}s problems site, obtained by Liam Price with ChatGPT-5.2~\citep{erdosproblems635}, and Tao also observed there that the bound follows quickly from an inequality of \citet{elliott2012probabilistic}. The cascade never found this record.

\subsection{A Connection Not Previously Made}

\paragraph{Small unions of lines closing a route to the Nikodym bound (Appendix~\ref{sol:finkakeya}).}
For a set $L$ of affine lines in $\mathbb{F}_q^3$ let $P(L)=\bigcup_{\ell\in L}\ell$ be the union of its lines. \citet{lund2018finite} conjecture that for every constant $C>0$ and every $\alpha$ with $\alpha(q)/q\to\infty$, a set $L$ of at least $Cq^3$ lines in which no plane contains $\alpha(q)$ lines must satisfy $|P(L)|\ge(1-o(1))q^3$. They show that this would give an optimal bound on the size of a Nikodym set in three dimensions, and \citet{tao2025new} still cites it as open in that role.

The artifact returned for this problem disproves the conjecture for every odd $q$. It takes the affine paraboloid $z=x^2-\nu y^2$ with $\nu$ a nonsquare, and keeps at each of its points the $(q+1)/2$ tangent lines on which $x^2-\nu y^2-z$ is always a square. The family has $q^2(q+1)/2$ lines, no affine plane holds more than $q+1$ of them, and its union has exactly $q^2(q+1)/2$ points, a density of $1/2+o(1)$. The same family also refutes the stronger Conjecture 1.5 of that paper.

We found the construction mathematically correct, and verified the counts exhaustively for $q\le13$. The recommend stage, however, informed us that the construction already exists in other contexts. In projective language the family is a classical object of finite geometry, the half-tangent partition of an elliptic quadric~\citep{bruen1999construction,cossidente2017new}. The contribution here is to link the existing construction to this conjecture.

\subsection{Results with No Precedent Found}

The three results below have each been checked by an author or by a domain expert, and are new so far as we could determine. They represent three kinds of discovery. The first proves a conjecture, the second refutes one by counterexample, and the third answers an open-ended question.

\paragraph{Many-one $\GapP$-completeness for binary symmetric group characters (Appendix~\ref{sol:pak}).}
For partitions $\lambda$ and $\mu$ of an integer $n$, write $\chi^\lambda(\mu)$ for the irreducible character value of the symmetric group. \citet{ikenmeyer2024positivity} consider the problem of computing $\chi^\lambda(\mu)$ from $\lambda$ and $\mu$ given as lists of parts in binary. They prove that it is $\GapP$-complete under Turing reductions and conjecture that it is $\GapP$-complete under many-one reductions.

The artifact returned for it proves that conjecture. It reduces the difference of two counts of exact covers of a finite set to a single character value at a two-row partition $\lambda=(n-s,s)$, and checks membership in $\GapP$ separately. A two-row character value is itself a difference, $N_\mu(s)-N_\mu(s-1)$, where $N_\mu(t)$ is the number of ways to choose parts of $\mu$ summing to $t$.

\paragraph{Maximum versus average independent set size in triangle-free graphs (Appendix~\ref{sol:perkins_gang}).}
For a graph $G$ let $\alpha(G)$ be its independence number and $\bar\alpha(G)$ the expected size of a uniformly random independent set. \citet{davies2018average} conjecture that $\alpha(G)/\bar\alpha(G)\ge2-o_d(1)$ for every triangle-free $G$ of minimum degree $d$, and show that this would give $R(3,k)\le(\tfrac12+o(1))k^2/\log k$. It is restated as open in recent surveys of the hard-core model and Ramsey theory~\citep{davies2025hard,morris2026some}.

The artifact returned as a counterexample the Cartesian product $C_5\,\square\,K_{m,m}$, which is triangle-free and $(m+2)$-regular. An independent set of the product picks an independent subset of $C_5$ at each vertex of $K_{m,m}$, and the two sides must pick disjoint subsets. This makes $\alpha=4m$ and the expected size computable exactly, and the ratio tends to $24/13$. Replacing $C_5$ by the circulant $C_{13}(1,5)$ further lowers the limit to $32/19$.

\paragraph{Divisibility among binomial coefficients (Appendix~\ref{sol:erdos_straus}).}
For a fixed integer $n\ge2$, \citet{erdos1977products} ask for the natural density $d^*(n)$ of those $m$ that admit some $k$ with $1\le k\le m-n$ and $\binom{n+k}{n}\mid\binom{m+k}{k}$, that is, the proportion of such $m$ among the integers up to $x$ as $x\to\infty$. They settle $n=1$ themselves and record that $n=2$ ``seems much more difficult to decide''.

The returned artifact shows that $d^*(n)=1$ for every fixed $n\ge2$. Given a bound $B>n$ it takes $k_B=\prod_{p\le B}p^{e_p}$ with $e_p$ least such that $p^{e_p}>n$, so that by Kummer's theorem every prime factor of $N_B=\binom{n+k_B}{n}$ exceeds $B$. Each such prime $q$ therefore exceeds $n$, and so divides exactly one of $k_B+1,\dots,k_B+n$, say $k_B+i(q)$. By Legendre's formula the condition $m\bmod q\ge i(q)$ makes $\binom{m+k_B}{k_B}$ divisible by the full power of $q$ that divides $N_B$. The Chinese remainder theorem turns these residue conditions into a set of density $\prod_q(1-i(q)/q)$ that tends to $1$ as $B\to\infty$.

\section{Conclusion}
\label{sec:conclusion-v2}

We have argued that an important--yet understudied--aspect of AI for mathematics  concerns selecting which problems to solve.
We introduced \textit{FAR}, a literature-to-review cascade that recovers open problems from a corpus, attempts each of them, and recommends promising research outcomes for expert review. In a combinatorics pilot, \textit{FAR} recovered 6{,}453 candidate statements, checked 4{,}717 of them into an attemptable pool, and returned 77 artifacts graded as substantial enough to publish, 15 of which the authors reviewed and found to be correct. 
Using the outcomes of our run, we studied strategies for allocating effort, i.e., selecting a subset of conjectures to attempt given a budget.
We framed this problem as constrained optimization and derived strategies that are favorable both theoretically and in practice.
We showed that simply asking an agent in our pipeline to output difficulty and importance scores  before any reasoning is spent yields scores that can be used effectively within the allocation strategies.
%
We hope that \textit{FAR} is a starting point for new techniques that help  explore mathematics, which consists of not just solving problems, but of deciding which problems are worth spending effort on in the first place.

\section*{Acknowledgments}

We thank Sylvester W. Zhang for his help with expert review. Shengtong Zhang thanks Anysphere co. for providing compute for this project. This work was supported in part by NSF Grant DMS-2434614 and DARPA ExpMath Grant HR0011262E028.

\bibliography{iclr2027_conference}
\bibliographystyle{iclr2027_conference}

\renewcommand{\bibsection}{\paragraph{References.}}

\appendix

\clearpage
\section{Prompt Templates and Operational Details}
\label{app:prompt-templates}

This appendix includes the prompts used in our implementation. The released code also carries the schema validators and the retry logic.

\subsection{Finding Relevant Open Problems}
In the pilot the direction was ``combinatorics''.

\begin{prompt}{Label}
Return one JSON object with this schema:
{
  "comment": "...",
  "in_direction": false
}

Rules:
- `comment` must name the paper's primary subject in a few words.
- `in_direction` must be a JSON boolean.
- Use `true` when the paper's primary content lies in the research direction.
- Use `false` when it does not, when the content is not mathematical, or when the paper appears mislabeled.

Research direction: {direction}

Paper content:
{text}
\end{prompt}

\begin{prompt}{Extract}
Return one JSON object with this schema:
{
  "title": "...",
  "authors": ["...", "..."],
  "decision_basis": "...",
  "has_open_conjecture": false,
  "conjectures": [
    {
      "conjecture_label": "...",
      "conjecture_text": "...",
      "conjecture_section": "..."
    }
  ]
}

Rules:
- `title` must be a non-empty string.
- `authors` must be a JSON array of non-empty author-name strings.
- `decision_basis` must be one short English sentence.
- `has_open_conjecture` must be a JSON boolean.
- `conjectures` must be a JSON array. If `has_open_conjecture` is false, it must be `[]`.
- Set `has_open_conjecture` to true iff the paper contains at least one explicit unresolved mathematical statement.
- Count these as hits:
  1. labeled `Conjecture` / `Question` / `Open Problem`
  2. sentences with markers like `open question`, `open problem`, `open issue`, `remains unknown whether`, or `we suspect ... although we have been unable to establish ...`
  3. a direct statement that a specific mathematical property, existence claim, or classification problem `still remains an open issue`
- Do NOT count:
  1. generic future work that does not pose a specific mathematical question
  2. results that have already been proved or resolved within the paper itself
- If a sentence says a specific claim or property is `still an open issue`, count it even if it is not written as a formal question.
- If `has_open_conjecture` is true, extract only the explicit unresolved statements themselves, not nearby speculation.
- `conjecture_label` should use the paper's label when present, otherwise use a short fallback like `Unlabeled open problem 1`.
- `conjecture_text` should copy the paper's unresolved statement as faithfully as possible and preserve notation.
- `conjecture_section` should be the visible section/subsection title, or `""` if unavailable.

Paper content:
{text}
\end{prompt}

\begin{prompt}{Check}
Return one JSON object with this schema:
{
  "sources": [
    {"title": "...", "url": "...", "claim": "..."}
  ],
  "reason": "...",
  "status": "solved",
  "importance": 0.5,
  "difficulty": 0.5
}

Rules:
- Verify the candidate's current status using current web information.
- `status` must be one of: `open`, `solved`, `invalid`.
- Use `open` when the candidate is a concrete open problem in the source and no credible solved evidence is found.
- Use `solved` when a credible source appears to solve it.
- Use `invalid` when it is not a concrete open problem in the source.
- `sources` should list only sources directly supporting the status.
- each `claim` must be what that source says about this candidate.
- for `solved`, `sources` must name at least one source that resolves the candidate.
- `reason` must be one concise English sentence.
- `importance` must be a number in [0, 1] for the candidate itself: candidates with no substantive mathematical content should be scored 0; Fields-Medal-level problems should be scored 1; most ordinary research problems should follow a roughly normal distribution centered around 0.5.
- `difficulty` must be a number in [0, 1]: solving it would be an unpublishable exercise should be scored 0; solving it would be publishable in a top journal (Annals, Inventiones, JAMS, Acta) should be scored 1; most problems should follow a roughly normal distribution centered around 0.5.
- For `solved` or `invalid`, set `importance` and `difficulty` to 0.

Paper title: {title}
Paper authors: {authors}
Candidate label: {conjecture_label}
Candidate section: {conjecture_section}
Candidate text:
{conjecture_text}
\end{prompt}

\subsection{Attempting for Candidate Resolutions}

\begin{prompt}{Solve, system prompt}
You are a research-level mathematical reasoner. This is a test to see how well you can craft non-trivial, novel and creative proofs given a math problem.

Given a natural-language problem, conjecture, or paper metadata, reconstruct the most likely formal mathematical statement and resolve it.

First, state the reconstructed conjecture precisely, including all hypotheses, definitions, notation, quantifiers, ambient category, and axiom system when relevant. Explain briefly what information supports this reconstruction. If the reconstruction is ambiguous, list the plausible formalizations and choose one to analyze, explicitly noting the ambiguity.

Do not treat the fact that the source labels the statement open, conjectural, unresolved, or a problem as a reason to stop. The task is to attack the statement mathematically. However, do not lower the standard of proof. Never present an incomplete, heuristic, or speculative argument as a complete proof.

Before committing to a proof, test the statement against degenerate, extremal, low-dimensional, finite, infinite, and standard model examples appropriate to the field. Look actively for counterexamples as well as proofs.

If the literal statement is false because of a degenerate, boundary, vacuous, or typo-like case, do not stop after giving the counterexample. Instead:

- State the literal counterexample clearly and explain why it falsifies the literal statement.
- Diagnose whether the failure appears to come from a small formulation defect, such as a missing nonzero/nonempty/nontrivial assumption, a wrong inequality direction, an omitted endpoint condition, a missing connectedness or finiteness hypothesis, a confusion between strict and non-strict inequalities, a missing regularity condition, or a convention mismatch.
- Propose the minimal natural repair or repairs to the statement, using the fewest and most standard changes consistent with the paper’s terminology, surrounding context, and apparent mathematical intent.
- Check that the proposed repair is not merely ad hoc, vacuous, or so weakened that it no longer captures the intended conjecture.
- Retest the repaired statement against the original counterexample and nearby degenerate cases.
- Then prove or refute the most plausible repaired statement.

A complete answer must be a rigorous proof or a rigorous counterexample.

Present the reasoning in a locally checkable form: definitions, lemmas, propositions, and proofs. For every invoked theorem, verify its hypotheses in the present setting. Track dependencies of constants, choices, witnesses, bases, subsequences, exceptional sets, embeddings, isomorphisms, and parameters.

If the proof or counterexample is known in the literature, state that honestly and provide a reliable reference. Distinguish exact resolutions from stronger theorems, weaker partial results, equivalent reformulations, and merely related work. Do not invent references.

After the proof or counterexample, include a verification audit confirming that the formalized statement matches the reconstructed conjecture, that no extra assumptions were introduced, that all theorem hypotheses were checked, and that the conclusion exactly matches the target statement.

Response format:

The first line must be exactly one of: KNOWN, NEW, FIX, NONE.
- KNOWN: a reliable existing source in the literature already proves the conjecture or gives a counterexample/disproof. Cite the source.
- NEW: your answer gives a complete resolution that is not presented as known literature. Use NEW for either a complete proof that the conjecture is true or a complete counterexample/disproof that the conjecture is false.
- FIX: you have identified a small formulation defect and proposed a minimal natural repair, but you are unable to prove or refute the repaired statement. Use FIX to indicate that you have done this.
- NONE: you found neither a known resolution nor a reliable complete proof/counterexample despite all efforts.
Then use these sections exactly:
Problem:
Result:
Citation:
\end{prompt}

\begin{prompt}{Solve, user prompt}
Read input.json in the current directory. It contains the paper title, authors, paper text, the sources a status check turned up, and target conjecture. The target conjecture is in conjecture.text.
Resolve that target conjecture and return only the required labeled answer.
\end{prompt}

\subsection{Judging and Grading}

\texttt{KNOWN} and \texttt{NEW} outcomes are sent to the judge, and its label routes the outcome rather than certifying it. Only the results it accepts as new reach the grader.

\begin{prompt}{Judge, system prompt}
You are a strict referee for natural-language mathematics proofs.  This is a test to see how well you can referee a proposed natural-language mathematics proof given a math problem.

Check the claimed resolution or disproof against the target conjecture supplied in the user task. 

Accept only if the claimed resolution or disproof attacks the correct statement and is mathematically rigorous and complete.
A valid counterexample or disproof may pass if it rigorously disproves the conjecture.
Reject if it has fatal proof gaps, hallucinated dependencies, hidden assumptions, or a mismatch between the stated theorem and the original conjecture.
Do not reject merely because the original paper called the conjecture open.
In the case when the claimed resolution or disproof is NEW, you should also conduct a very thorough literature search using the web search tool to see if a similar or stronger result already exists in the literature.

On the first line, write exactly one word: PASS or FAIL or KNOWN.
- PASS: the claimed resolution is mathematically complete and attacks the correct statement, and in the case of NEW, a similar or stronger result does not exist in the literature despite your best search efforts.
- KNOWN: the claimed resolution is NEW, but a similar or stronger result already exists in the literature.
- FAIL: if neither of the above conditions are met.
Then briefly explain your verdict, including the most important gap if you fail it.
\end{prompt}

\begin{prompt}{Judge, user prompt}
Read input.json and solution.md in the current directory. input.json contains paper metadata, the paper text, the sources a status check turned up, and the target conjecture. solution.md contains the claimed resolution to check.
Return only PASS or FAIL or KNOWN followed by your explanation.
\end{prompt}

\begin{prompt}{Grade, system prompt}
You are a senior combinatorics referee performing a final quality-control pass on a result that a prover produced and a judge already accepted as a correct resolution.

Your job is NOT to re-verify correctness from scratch (assume the proof is correct unless a literature search clearly contradicts it). Your job is to classify the result by its novelty and publishable significance, so a human can triage it afterwards.

Do two things:
1. Literature check. 
Conduct a very thorough web search to determine whether the resolution, or a similar or stronger statement, is already known in the literature. Go beyond just searching for papers that cite the original paper; you should search for all open-access notes, surveys, forums, and other sources that might contain the result. The prover and earlier judges may have missed an existing reference; catching such cases is a primary goal of this pass.
2. Significance grading. If the result is genuinely not in the literature, assess how significant it is as a contribution to combinatorics: how hard, how novel, how interesting to the community, and what venue it would plausibly merit.

On the first line, write exactly one token: KNOWN, TYPE1, TYPE2, or TYPE3.
- KNOWN: the result (or a similar or stronger result) is in fact already known in the literature, despite the prover and earlier judges treating it as new. Cite the reference.
- TYPE1: genuinely new but minor and unpublishable on its own (e.g. a routine exercise, a trivial special case, an immediate corollary of standard results).
- TYPE2: genuinely new and substantial enough to support a standalone paper in a standard combinatorics or mathematics journal.
- TYPE3: genuinely new and strong enough to merit publication in a top combinatorics journal (a major advance, a resolved well-known conjecture, or a result of broad interest).

These boundaries are deliberately rough; when uncertain between two grades, pick the lower one and explain the uncertainty.

After the first line, use these sections exactly:
Classification rationale:
Literature check:
Citation:
\end{prompt}

\begin{prompt}{Grade, user prompt}
Read input.json, solution.md, and judge.md in the current directory. input.json contains the paper metadata, the paper text, the sources a status check turned up, and the target conjecture, solution.md contains the resolution that was accepted as new, and judge.md contains the verdicts of the earlier judges.
Classify the result and return only KNOWN, TYPE1, TYPE2, or TYPE3 on the first line, followed by the required sections.
\end{prompt}

The artifacts put forward for expert review are the \texttt{TYPE2} and \texttt{TYPE3} items.

\clearpage
\section{Analysis Details}
\label{app:analysis}

This appendix gives the data behind the two figures of Section~\ref{sec:analysis}.

\subsection{Score validity}
\label{app:scores}

Figure~\ref{fig:score-validity} gives the same two associations as Figure~\ref{fig:solve-rate} at a uniform bin width of $0.1$. Each point is the measured rate in its bin and each bar its $95\%$ Wilson interval.

\begin{figure}[H]
\centering
\includegraphics[width=\textwidth]{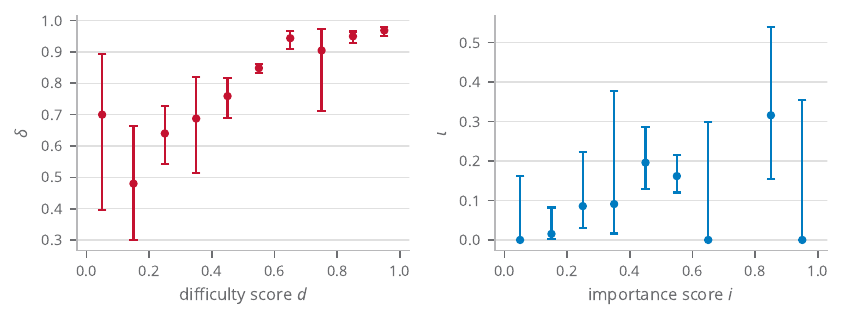}
\caption{\textbf{Each score against the quantity it judges, with Wilson intervals.} Panel (a) gives $\delta$ against the difficulty score, panel (b) gives $\iota$ against the importance score.}
\label{fig:score-validity}
\end{figure}

\subsection{Allocation curves}
\label{app:budget}

The tables below give the value of each point plotted in Figure~\ref{fig:budget}. Every entry is an average over $1000$ random partitions of the pool into five parts, with both factors of $\hat p$ fitted on four of them and the remaining part ranked by the result, from which $B/5$ conjectures are taken. The five selections together form the set of $B$ conjectures that the entry scores. Ties are broken at random, and the uniform baseline is evaluated from its closed form rather than sampled.

\begin{table}[H]
\centering\small\setlength{\tabcolsep}{3.5pt}
\begin{tabular}{lrrrrrrr}
\toprule
strategy & $B=10$ & $B=25$ & $B=50$ & $B=75$ & $B=100$ & $B=200$ & $B=300$ \\
\midrule
uniform random & 0.17 & 0.43 & 0.86 & 1.29 & 1.72 & 3.44 & 5.17 \\
rank on $\hat p$ & 0.40 & 0.97 & 1.64 & 2.29 & 2.95 & 5.42 & 7.92 \\
rank on $i\,\hat p$ & 0.15 & 0.47 & 1.02 & 1.59 & 2.24 & 4.78 & 7.25 \\
rank on $\hat p$ inside the top $1/10$ & 0.30 & 0.70 & 0.99 & 1.01 & 1.06 & 1.99 & 2.00 \\
rank on $\hat p$ inside the top $1/5$ & 0.17 & 0.42 & 0.86 & 1.25 & 1.68 & 2.99 & 3.72 \\
\bottomrule
\end{tabular}
\caption{Expected number of artifacts, the objective $f_1$.}
\end{table}

\begin{table}[H]
\centering\small\setlength{\tabcolsep}{3.5pt}
\begin{tabular}{lrrrrrrr}
\toprule
strategy & $B=10$ & $B=25$ & $B=50$ & $B=75$ & $B=100$ & $B=200$ & $B=300$ \\
\midrule
uniform random & 0.08 & 0.21 & 0.42 & 0.63 & 0.84 & 1.68 & 2.51 \\
rank on $\hat p$ & 0.16 & 0.40 & 0.73 & 1.05 & 1.38 & 2.62 & 3.87 \\
rank on $i\,\hat p$ & 0.09 & 0.28 & 0.60 & 0.89 & 1.22 & 2.49 & 3.72 \\
rank on $\hat p$ inside the top $1/10$ & 0.26 & 0.60 & 0.84 & 0.86 & 0.90 & 1.69 & 1.70 \\
rank on $\hat p$ inside the top $1/5$ & 0.13 & 0.34 & 0.69 & 1.00 & 1.34 & 2.44 & 3.06 \\
\bottomrule
\end{tabular}
\caption{Expected total importance of the artifacts returned, the objective $f_2$.}
\end{table}

\begin{table}[H]
\centering\small\setlength{\tabcolsep}{3.5pt}
\begin{tabular}{lrrrrrrr}
\toprule
strategy & $B=10$ & $B=25$ & $B=50$ & $B=75$ & $B=100$ & $B=200$ & $B=300$ \\
\midrule
uniform random & 0.078 & 0.176 & 0.299 & 0.386 & 0.448 & 0.574 & 0.629 \\
rank on $\hat p$ & 0.157 & 0.354 & 0.437 & 0.467 & 0.485 & 0.499 & 0.500 \\
rank on $i\,\hat p$ & 0.090 & 0.280 & 0.546 & 0.597 & 0.600 & 0.600 & 0.600 \\
rank on $\hat p$ inside the top $1/10$ & 0.259 & 0.596 & 0.841 & 0.850 & 0.850 & 0.850 & 0.850 \\
rank on $\hat p$ inside the top $1/5$ & 0.126 & 0.294 & 0.540 & 0.690 & 0.779 & 0.850 & 0.850 \\
\bottomrule
\end{tabular}
\caption{Expected maximum importance among the artifacts returned, the objective $f_3$.}
\end{table}
\FloatBarrier

\clearpage
\section{Reviewed Solutions}
\label{app:solutions}

This appendix collects the write-ups of the results the authors reviewed, ordered alphabetically by the mathematicians who posed the problems.

\etocsetnexttocdepth{subsection}
\etocsettocstyle{}{}
\localtableofcontents

\subsection[(Ananchuen--Caccetta) Paley graphs with a prescribed adjacency property]{Paley graphs with a prescribed adjacency property}
\label{sol:ellipticcurves}
\begingroup
\begin{bibunit}
\def\F{\mathbb{F}}
\def\cG{\mathcal{G}}

This problem concerns the number of vertices of a Paley graph that are adjacent to one prescribed vertex and to neither of two others.
Ananchuen and Caccetta proved that the Paley graph on $q$ vertices has at least $k$ such vertices, for every choice of the three, as soon as $q>(1+2\sqrt{2k})^{2}$, and conjectured that this threshold is exact.
We show that it is not: the Paley graph on $5^{5}=3125$ vertices has at least $377$ such vertices for every choice, although $3125<(1+2\sqrt{754})^{2}$.
The proof is a quadratic character count in which the only obstruction is an elliptic curve over $\F_{5^{5}}$ of trace $110$, and Waterhouse's classification of elliptic-curve traces forbids a nonzero trace divisible by the characteristic over a field of odd degree in characteristic greater than $3$.

\subsubsection{Introduction}

Let $m,n$ be nonnegative integers and let $k$ be a positive integer.
Following \citet{ananchuen1992graphs}, a graph $G$ has \emph{property $P(m,n,k)$} if for every pair of disjoint sets $A,B\subseteq V(G)$ with $|A|=m$ and $|B|=n$ there are at least $k$ vertices outside $A\cup B$ that are adjacent to every vertex of $A$ and to no vertex of $B$.
We write $\cG(m,n,k)$ for the class of graphs with property $P(m,n,k)$.
These classes are quantitative refinements of the adjacency axioms of \citet{blass1979properties}, who showed by a probabilistic argument that almost all graphs have property $P(n,n,1)$, from which the same follows for every $P(m,n,k)$; a graph is $n$-existentially closed exactly when it lies in $\cG(i,n-i,1)$ for all $0\le i\le n$.
Explicit examples are much harder to come by, and the extremal question of how few vertices a graph in $\cG(m,n,k)$ can have was raised by \citet{exoo1981adjacency} and studied systematically by \citet{ananchuen1992graphs}.

For a prime power $q\equiv1\pmod4$, the \emph{Paley graph} $G_q$ has vertex set $\F_q$, two distinct vertices $u,v$ being adjacent exactly when $u-v$ is a nonzero square in $\F_q$; this is well defined because $-1$ is a square in $\F_q$.
Paley graphs are the standard supply of explicit graphs with prescribed adjacency properties.
\citet{blass1981paley} showed that $G_p\in\cG(n,n,1)$ for every prime $p\equiv1\pmod4$ with $p>n^{2}2^{4n}$, and \citet{ananchuen1993adjacency} obtained thresholds for the full range of parameters by estimating the relevant character sums, among them
\[
q>\bigl(1+2\sqrt{2k}\bigr)^{2}
\quad\Longrightarrow\quad
G_q\in\cG(1,2,k).
\]
This same threshold reappears in \citet{ananchuen1995note}, where it is deduced, together with the companion conclusion $G_q\in\cG(2,1,k)$, from the $n$-parameter implication
\[
q>\bigl\{(n-2)2^{n}+2\bigr\}\sqrt q+(n+2k-1)2^{n}-2n-1
\quad\Longrightarrow\quad
G_q\in\cG(1,n,k)\cap\cG(n,1,k),
\]
which at $n=2$ reduces to the threshold above.

For $n=1$ the $n$-parameter threshold is exactly right.
Indeed it then reads $q>4k-3$, which for $q\equiv1\pmod4$ means $q\ge4k+1$, while \citet{exoo1981adjacency} proved that every graph in $\cG(1,1,k)$ has at least $4k+1$ vertices; hence $G_q\in\cG(1,1,k)$ if and only if $q\ge4k+1$ \citep{ananchuen1995note}.
\citet[Remark~2]{ananchuen1995note} leave the analogous exactness for $n=2$ as a conjecture, writing:

\medskip
\noindent
\emph{We have verified, by computer, that if $q\equiv1\pmod4$ is a prime power less than or equal to $1009$ and $k$ is a positive integer with $q<\bigl(1+2\sqrt{2k}\bigr)^{2}$, then $G_q\notin\cG(1,2,k)$.
We conjecture that this is true for all $q$.}
\medskip

\noindent
It is convenient to restate the conjecture as a single inequality.
The Paley graph is vertex-transitive under translation, so in testing property $P(1,2,k)$ we may always take $A=\{0\}$.
Writing $\eta$ for the quadratic character of $\F_q$, extended by $\eta(0)=0$, the number of vertices to be counted for $B=\{b,c\}$ is
\begin{equation}
\label{ellipticcurves:eq:defN}
N(b,c)=\#\bigl\{x\in\F_q\setminus\{0,b,c\}:\eta(x)=1,\ \eta(x-b)=\eta(x-c)=-1\bigr\}.
\end{equation}
Thus $G_q\in\cG(1,2,k)$ if and only if $k\le N_{\min}(q)$, where
\[
N_{\min}(q)=\min\bigl\{N(b,c):b,c\in\F_q^{*},\ b\ne c\bigr\}.
\]
Since $q<(1+2\sqrt{2k})^{2}$ is equivalent to $8k>(\sqrt q-1)^{2}$, the conjecture asserts precisely that
\begin{equation}
\label{ellipticcurves:eq:reform}
8\,N_{\min}(q)\le(\sqrt q-1)^{2}
\qquad\text{for every prime power }q\equiv1\pmod4 .
\end{equation}
Ananchuen and Caccetta go on, in the same remark, to choose the three vertices in their character-sum estimate so that it yields an upper bound as well, concluding that $G_q\notin\cG(1,2,k)$ whenever $q<\bigl(-1+2\sqrt{2(k+1)}\bigr)^{2}$.
The conjecture, they note, therefore has content only in the window
\[
\bigl(-1+2\sqrt{2(k+1)}\bigr)^{2}\le q\le\bigl(1+2\sqrt{2k}\bigr)^{2},
\]
an interval of length $(8+o(1))\sqrt{2k}$.
Our counterexample lies inside it: for $k=377$ the window is $2915.01\ldots\le q\le3126.83\ldots$, and $q=3125$.

\citet{ananchuen2001adjacency} and \citet{ananchuen2006cubic} carry the character-sum method of the note \citep{ananchuen1995note} over to the graphs built from cubic and quartic residues, and \citet{bonato2009search} surveys the explicit constructions of $n$-existentially closed graphs.
The note's companion paper \citep{australia1994constructing} takes up $\cG(1,2,1)$ directly, exhibiting a graph in that class of every order at least $10$ except $11$ and noting that $\cG(1,2,1)$ contains no graph of any other order.
We are aware of no earlier counterexample to the conjecture and, beyond the two bounds above, of no partial result on the exactness of the threshold for $\cG(1,2,k)$.

\begin{theorem}
\label{ellipticcurves:thm:main}
The Paley graph $G_{3125}$ on $5^{5}$ vertices belongs to $\cG(1,2,377)$, while
\[
3125<\bigl(1+2\sqrt{2\cdot377}\bigr)^{2}.
\]
In particular \eqref{ellipticcurves:eq:reform} fails at $q=5^{5}$, so the conjecture of \citet[Remark~2]{ananchuen1995note} is false.
\end{theorem}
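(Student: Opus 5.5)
The plan is to compute $N(b,c)$ exactly as a sum of quadratic characters, and to identify the only hard term as the trace of an elliptic curve. Then I would use arithmetic constraints on that trace over $\F_{5^5}$ to push the minimum above the Hasse--Weil estimate that Ananchuen and Caccetta used. The inequality $3125<(1+2\sqrt{754})^{2}$ is a direct numerical check: $2\sqrt{754}\approx 54.92$, so the right side is about $3126.8$.

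\emph{Step 1 (character expansion).} For $x\notin\{0,b,c\}$ the indicator of the event in \eqref{ellipticcurves:eq:defN} is $\tfrac18(1+\eta(x))(1-\eta(x-b))(1-\eta(x-c))$. I would expand this product and first sum every term over all of $\F_q$:
\begin{itemize}
\item the constant term contributes $q$;
\item each single character sums to $0$;
\item each product of two shifted characters, such as $\sum_x\eta(x)\eta(x-b)$, equals $-1$, so with the signs in the expansion the three of them contribute $+1$ in total;
\item the triple term equals $\sum_x\eta\bigl(x(x-b)(x-c)\bigr)=-a$, where $a$ is the Frobenius trace of $E_{b,c}\colon y^2=x(x-b)(x-c)$.
\end{itemize}
Finally I would subtract the values of the summand at $x=0,b,c$. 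These depend only on $\eta(-1)=1$, $\eta(b)$, $\eta(c)$ and $\eta(b-c)$. The outcome is an identity
\[
8N(b,c)=q+1-a-\epsilon(b,c),
\]
with $\epsilon$ an explicit small integer, which I would tabulate over the eight sign patterns of $(\eta(b),\eta(c),\eta(b-c))$.

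\emph{Step 2 (constraints on $a$).} For $q=5^5$, Hasse gives $|a|\le\lfloor2\sqrt{3125}\rfloor=111$. Since $E_{b,c}$ has full rational $2$-torsion, $4\mid\#E(\F_q)=q+1-a$, which forces $a\equiv 3126\equiv 2\pmod 4$. The extreme admissible value is therefore $a=110$, and this is exactly where the conjectured threshold would be attained. Waterhouse's classification excludes it: over $\F_{p^r}$ with $r$ odd and $p>3$, the only trace divisible by $p$ is $a=0$, and $5\mid110$. Hence $a\le 106$.

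\emph{Step 3 (conclusion and main obstacle).} I would combine the two steps with the divisibility $8\mid 8N(b,c)$ to show $8N(b,c)\ge 3016$, i.e.\ $N(b,c)\ge377$, in every case. The main obstacle is the bookkeeping of $\epsilon(b,c)$. If the worst-case correction exceeds $4$ while $a=106$, the bare bound $3126-106-\epsilon$ does not immediately give $3016$. I would handle this in two ways:
\begin{itemize}
\item correlate the sign pattern with $a\bmod 8$, using that the $2$-power torsion of $E_{b,c}$ (e.g.\ $4$-torsion, which detects whether $b$, $c$, $b-c$ are squares up to sign) controls $\#E\bmod 8$;
\item use that $q+1-a-\epsilon$ must be a multiple of $8$, which rounds the bound up to the next multiple of $8$.
\end{itemize}
If some case still resists, I would fall back to determining the actual set of traces realized by Legendre curves over $\F_{5^5}$ for each sign pattern.
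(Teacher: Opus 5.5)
Your proposal is correct and follows essentially the same route as the paper: the character expansion $8N(b,c)=q+1+S(b,c)-R(b,c)$ with $S=-a$, the Hasse bound, and Waterhouse's classification to exclude the trace $110$. Your rational $2$-torsion congruence $a\equiv 2\pmod 4$ is the same parity information the paper reads off from the identity itself.

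The worry in your Step 3 goes away once you do the tabulation. Each excluded point contributes $0$ or $4$, and the points $x=0$ and $x=b$ cannot both contribute $4$, so the correction is $R(b,c)\in\{0,4,8\}$. Hence $8N(b,c)\ge 3126-106-8=3012$, and divisibility by $8$ gives $N(b,c)\ge 377$; no mod-$8$ analysis of $E_{b,c}$ is needed.
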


\begin{remark}
\label{ellipticcurves:rem:computation}
The constant $377$ is best possible for this $q$.
Since $x^{5}-x+1$ is irreducible over $\F_{5}$, we may realize $\F_{3125}$ as $\F_{5}[a]$ with $a^{5}=a-1$; take $b=1$ and $c=4+4a+a^{3}$.
In the notation of Lemma~\ref{ellipticcurves:lem:count} below, $b$ and $c$ are squares while $b-c$ is not, so $R(b,c)=8$, and a computation in $\F_{5^{5}}$ gives $S(b,c)=-102$; hence
\[
8N(b,c)=3126-102-8=3016.
\]
Therefore $N(b,c)=377$, which with Theorem~\ref{ellipticcurves:thm:main} gives $N_{\min}(3125)=377$ and $G_{3125}\notin\cG(1,2,378)$.
The margin is thin: $8\,N_{\min}(3125)=3016$ exceeds $(\sqrt{3125}-1)^{2}=3014.19\ldots$ by less than $2$.
\end{remark}

Expanding the three character conditions in \eqref{ellipticcurves:eq:defN} turns $8N(b,c)$ into $q+1+S(b,c)-R(b,c)$, where
\[
S(b,c)=\sum_{x\in\F_q}\eta\bigl(x(x-b)(x-c)\bigr)
\]
is minus the trace of the elliptic curve $y^{2}=x(x-b)(x-c)$ and $R(b,c)\in\{0,4,8\}$ is a correction coming from the three points $0,b,c$.
For $q=3125$ Hasse's bound gives $S(b,c)\ge-111$ and hence $N(b,c)\ge376$, one short of what is needed.
The crucial observation is that the equality case $N(b,c)=376$ forces the curve to have trace exactly $110$, a nonzero multiple of the characteristic, and over $\F_{5^{5}}$ no elliptic curve has such a trace.

\subsubsection{The character count}

Throughout the rest of this section $q$ is a prime power with $q\equiv1\pmod4$, and $\eta$ is the quadratic character of $\F_q$, extended by $\eta(0)=0$.
Thus $\eta(x)=1$ if $x$ is a nonzero square, $\eta(x)=-1$ if $x$ is a nonsquare, and $\eta(-1)=1$.
In particular distinct $u,v\in\F_q$ are adjacent in $G_q$ if and only if $\eta(u-v)=1$.

Only one character sum evaluation is needed:
\begin{equation}
\label{ellipticcurves:eq:quad}
\sum_{x\in\F_q}\eta\bigl((x-u)(x-v)\bigr)=-1
\qquad(u\ne v).
\end{equation}
Indeed, substituting $x=u+(v-u)y$ turns the left side into $\sum_{y}\eta\bigl((v-u)^{2}y(y-1)\bigr)=\sum_{y}\eta\bigl(y(y-1)\bigr)$.
The term $y=0$ vanishes, and for $y\ne0$ we have $\eta\bigl(y(y-1)\bigr)=\eta\bigl(y^{2}(1-y^{-1})\bigr)=\eta(1-y^{-1})$.
As $y$ runs over $\F_q^{*}$ the element $1-y^{-1}$ runs over $\F_q\setminus\{1\}$, so the sum equals $\sum_{w\ne1}\eta(w)=-\eta(1)=-1$.

\begin{lemma}
\label{ellipticcurves:lem:count}
Let $b,c\in\F_q^{*}$ be distinct and let $N(b,c)$ be as in \eqref{ellipticcurves:eq:defN}.
Then
\begin{equation}
\label{ellipticcurves:eq:count}
8N(b,c)=q+1+S(b,c)-R(b,c),
\end{equation}
where
\[
S(b,c)=\sum_{x\in\F_q}\eta\bigl(x(x-b)(x-c)\bigr)
\]
and
\[
R(b,c)=\bigl(1-\eta(b)\bigr)\bigl(1-\eta(c)\bigr)
+\bigl(1+\eta(b)\bigr)\bigl(1-\eta(b-c)\bigr)
+\bigl(1+\eta(c)\bigr)\bigl(1-\eta(b-c)\bigr).
\]
Moreover $R(b,c)\in\{0,4,8\}$.
\end{lemma}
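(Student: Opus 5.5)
The plan is to write $N(b,c)$ as a sum of indicator products and expand. For $x\notin\{0,b,c\}$ the quantity
\[
\tfrac18\bigl(1+\eta(x)\bigr)\bigl(1-\eta(x-b)\bigr)\bigl(1-\eta(x-c)\bigr)
\]
equals $1$ exactly when $x$ is counted in \eqref{ellipticcurves:eq:defN}, and equals $0$ otherwise. So I will sum this product over all $x\in\F_q$, then subtract the contributions of the three exceptional points $x=0,b,c$. At those points the product need not vanish, because $\eta(0)=0$ makes one of the factors equal to $1$ rather than $0$.

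The first step is to sum the expanded product over all $x\in\F_q$. It expands into eight terms:
\[
1+\eta(x)-\eta(x-b)-\eta(x-c)-\eta(x)\eta(x-b)-\eta(x)\eta(x-c)+\eta(x-b)\eta(x-c)+\eta\bigl(x(x-b)(x-c)\bigr).
\]
The constant term contributes $q$. Each linear term sums to $0$, since $\sum_x\eta(x-u)=0$. Each quadratic term is handled by \eqref{ellipticcurves:eq:quad}, which applies because $0,b,c$ are pairwise distinct. The two quadratic terms with a minus sign therefore contribute $+2$, and the one with a plus sign contributes $-1$, for a net $+1$. The cubic term gives $S(b,c)$. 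Altogether the full sum equals $q+1+S(b,c)$.

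The second step is to evaluate the product at the three exceptional points and identify the total with $R(b,c)$. At $x=0$ the product is $(1+0)(1-\eta(-b))(1-\eta(-c))=(1-\eta(b))(1-\eta(c))$, using $\eta(-1)=1$. At $x=b$ it is $(1+\eta(b))\cdot1\cdot(1-\eta(b-c))$. At $x=c$ it is $(1+\eta(c))(1-\eta(c-b))\cdot1$, and $\eta(c-b)=\eta(b-c)$ again because $\eta(-1)=1$. The sum of these three values is exactly $R(b,c)$, so subtracting gives $8N(b,c)=q+1+S(b,c)-R(b,c)$, which is \eqref{ellipticcurves:eq:count}.

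The last step, which I expect to be the only point needing care, is showing $R(b,c)\in\{0,4,8\}$. Since $b,c,b-c\neq0$, each factor $1\pm\eta(\cdot)$ lies in $\{0,2\}$, so each of the three summands of $R$ lies in $\{0,4\}$. I will split into cases on $(\eta(b),\eta(c))$ to see that at most two summands can be nonzero at once.
\begin{itemize}
\item If both $b$ and $c$ are nonsquares, only the first summand can be nonzero, and it equals $4$, so $R=4$.
\item If exactly one of $b,c$ is a square, the first summand vanishes and exactly one of the other two survives, giving $R\in\{0,4\}$ according to $\eta(b-c)$.
\item If both are squares, the first summand vanishes and the other two are equal, each $2\bigl(1-\eta(b-c)\bigr)$, so $R\in\{0,8\}$.
\end{itemize}
In every case $R(b,c)\in\{0,4,8\}$.
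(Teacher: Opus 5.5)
Your proposal is correct and follows essentially the same route as the paper: sum $(1+\eta(x))(1-\eta(x-b))(1-\eta(x-c))$ over all of $\F_q$, use the two-point character sum evaluation to get $q+1+S(b,c)$, and identify the contributions of the three distinct points $x=0,b,c$ with $R(b,c)$. The only difference is cosmetic: to show $R(b,c)\in\{0,4,8\}$ you split cases on $(\eta(b),\eta(c))$, whereas the paper splits on $\eta(b-c)$, and both case analyses are valid.
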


\begin{proof}
Put
\[
T=\sum_{x\in\F_q}\bigl(1+\eta(x)\bigr)\bigl(1-\eta(x-b)\bigr)\bigl(1-\eta(x-c)\bigr).
\]
For $x\notin\{0,b,c\}$ each of $\eta(x)$, $\eta(x-b)$, $\eta(x-c)$ is $\pm1$, so the summand equals $8$ when $x$ is one of the points counted by $N(b,c)$ and equals $0$ otherwise.
The three excluded points contribute
\[
\bigl(1-\eta(-b)\bigr)\bigl(1-\eta(-c)\bigr),
\qquad
\bigl(1+\eta(b)\bigr)\bigl(1-\eta(b-c)\bigr),
\qquad
\bigl(1+\eta(c)\bigr)\bigl(1-\eta(c-b)\bigr)
\]
respectively, and these sum to $R(b,c)$ because $\eta(-u)=\eta(u)$.
Hence
\[
T=8N(b,c)+R(b,c).
\]

On the other hand, expanding the product gives
\begin{align*}
T=\sum_{x}1
&+\sum_{x}\eta(x)-\sum_{x}\eta(x-b)-\sum_{x}\eta(x-c)\\
&+\sum_{x}\eta\bigl((x-b)(x-c)\bigr)-\sum_{x}\eta\bigl(x(x-b)\bigr)-\sum_{x}\eta\bigl(x(x-c)\bigr)
+S(b,c),
\end{align*}
all sums being over $x\in\F_q$.
The first sum is $q$, and the three single-character sums vanish because $\eta$ is nonprincipal.
By \eqref{ellipticcurves:eq:quad} each of the three quadratic sums equals $-1$, so together they contribute $-1+1+1=1$.
Therefore $T=q+1+S(b,c)$, and comparing the two expressions for $T$ gives \eqref{ellipticcurves:eq:count}.

It remains to determine the possible values of $R(b,c)$.
Write $\alpha=\eta(b)$, $\beta=\eta(c)$ and $\gamma=\eta(b-c)$, all of which lie in $\{\pm1\}$, so that
\[
R(b,c)=(1-\alpha)(1-\beta)+(1+\alpha)(1-\gamma)+(1+\beta)(1-\gamma).
\]
If $\gamma=1$ then $R(b,c)=(1-\alpha)(1-\beta)$, which is $4$ when $\alpha=\beta=-1$ and $0$ otherwise.
If $\gamma=-1$ then $R(b,c)=(1-\alpha)(1-\beta)+2(1+\alpha)+2(1+\beta)$, which is $8$ when $\alpha=\beta=1$ and $4$ in the three remaining cases.
This exhausts all possibilities.
\end{proof}

\subsubsection{Traces of elliptic curves}

Two facts about elliptic curves over finite fields are needed.
The first is Hasse's bound \citep[Chapter~V, Theorem~1.1]{silverman2009arithmetic}: if $E$ is an elliptic curve over $\F_q$, then its trace $t=q+1-\#E(\F_q)$ satisfies $|t|\le2\sqrt q$.
The second is the classification of the integers that occur as traces, due to \citet[Theorem~4.1]{waterhouse1969abelian}.

\begin{theorem}
\label{ellipticcurves:thm:waterhouse}
Let $p$ be a prime, let $r\ge1$, and let $t$ be an integer with $|t|\le2p^{r/2}$.
There is an elliptic curve over $\F_{p^{r}}$ of trace $t$ if and only if at least one of the following holds:
\begin{enumerate}
\item $\gcd(t,p)=1$;
\item $r$ is even and $t=\pm2p^{r/2}$;
\item $r$ is even, $p\not\equiv1\pmod3$ and $t=\pm p^{r/2}$;
\item $r$ is odd, $p\in\{2,3\}$ and $t=\pm p^{(r+1)/2}$;
\item $t=0$, and either $r$ is odd or $r$ is even with $p\not\equiv1\pmod4$.
\end{enumerate}
\end{theorem}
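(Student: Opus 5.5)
The natural route is Honda--Tate theory, specialized to dimension one; this is essentially Waterhouse's original argument.

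\textbf{Reduction to Weil numbers.} By Honda--Tate, isogeny classes of simple abelian varieties over $\F_{q}$, $q=p^{r}$, correspond bijectively to conjugacy classes of Weil $q$-numbers $\pi$, meaning algebraic integers all of whose complex absolute values equal $\sqrt q$. An elliptic curve of trace $t$ exists if and only if a root $\pi$ of $x^{2}-tx+q$ is a Weil $q$-number whose associated simple abelian variety $A_\pi$ has dimension $1$. The Weil condition is exactly $|t|\le2\sqrt q$, which is assumed.

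\textbf{Computing the dimension.} The dimension is read off from the endomorphism algebra. $\operatorname{End}^0(A_\pi)$ is a central division algebra $D$ over $K=\mathbb{Q}(\pi)$, and $2\dim A_\pi=[D:K]^{1/2}[K:\mathbb{Q}]$. The local invariants of $D$ are as follows:
\begin{itemize}
\item $1/2$ at real places of $K$;
\item $0$ at finite places not above $p$;
\item $\operatorname{inv}_v=\frac{v(\pi)}{v(q)}[K_v:\mathbb{Q}_p]\bmod 1$ at places $v\mid p$.
\end{itemize}
So the whole proof is a case analysis in which I compute $K$ and these invariants and check whether $\dim A_\pi=1$.

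\textbf{Cases.}
\begin{itemize}
\item \emph{$\gcd(t,p)=1$.} Then $K$ is imaginary quadratic with $p$ split. The two primes above $p$ divide $\pi$ and $\bar\pi$ to valuations $0$ and $r$, so both invariants are integers. Hence $D=K$ and the dimension is $1$ (ordinary case).
\item \emph{$p\mid t$ (supersingular).} The valuations of $\pi$ above $p$ are balanced, and I split by the discriminant $t^{2}-4q$.
\begin{itemize}
\item If $t=\pm2p^{r/2}$, then $\pi$ is rational and $K=\mathbb{Q}$. The invariant at $p$ is $r/2\cdot 1/r=1/2$ and at $\infty$ it is $1/2$. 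So $D$ is the quaternion algebra ramified at $p,\infty$, and the dimension is $1$; this forces $r$ even.
\item If $t=0$, then $\pi^{2}=-q$. For $r$ odd, $K=\mathbb{Q}(\sqrt{-p})$ with $p$ ramified, and the invariant is $\tfrac12\cdot\tfrac{r}{r}\cdot 2\equiv0$, so the dimension is $1$. For $r$ even, $K=\mathbb{Q}(i)$ and $\pi=\pm p^{r/2}i$. If $p\equiv1\pmod4$ then $p$ splits and each invariant is $\tfrac12$, giving dimension $2$. Otherwise $p$ is inert or ramified and the invariant is integral, giving dimension $1$.
\item If $t=\pm p^{r/2}$ with $r$ even, then $\pi/p^{r/2}$ is a primitive sixth or third root of unity and $K=\mathbb{Q}(\sqrt{-3})$. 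The case $p\equiv1\pmod3$ (split) gives invariants $\tfrac12$ and must be excluded.
\item If $t=\pm p^{(r+1)/2}$ with $r$ odd, then $t^{2}-4q=p^{r}(p-4)$. This is a negative discriminant only for $p\in\{2,3\}$, giving $K=\mathbb{Q}(\sqrt{-p})$ or $\mathbb{Q}(\sqrt{-1})$ respectively, with ramified $p$ and integral invariant.
\end{itemize}
\end{itemize}

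\textbf{Necessity.} For necessity I show that no other multiple of $p$ works. Suppose $p\mid t$ with $K$ imaginary quadratic. Since $t=\pi+\bar\pi$ and $\pi\bar\pi=q$, the $p$-adic valuations force each of $\pi,\bar\pi$ to have valuation $r/2$ above $p$. The invariant is then integral only if the local degree is $2$, which happens when $p$ is nonsplit in $K$. From there I would show that $\pi^{2}/q$ (or $\pi/\sqrt q$) must be a root of unity, restricting $t^{2}/q\in\{0,1,2,3,4\}$. This yields exactly the listed values after matching parity of $r$ and the possible roots of unity in $K$.

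\textbf{Main obstacle.} I expect the root-of-unity step in the necessity argument to be the main obstacle. It uses that $\pi^{m}/q^{m/2}$ is an algebraic integer of absolute value $1$ at all places, hence a root of unity by Kronecker. The existence direction is then Deuring's, or simply Honda--Tate's surjectivity.
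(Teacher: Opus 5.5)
The paper does not prove this theorem. It imports it from Waterhouse's Theorem~4.1 and uses it only through Corollary~\ref{ellipticcurves:cor:trace}. Your Honda--Tate case analysis is the standard argument behind that citation, and your sufficiency computations are correct.

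\textbf{Gap in the necessity step.} You claim that $p\mid t$ and $\pi\bar\pi=q$ force $v(\pi)=v(\bar\pi)$ at the primes above $p$. This is false. For $q=8$ and $t=2$, the roots of $x^2-2x+8$ are $1\pm\sqrt{-7}$, and the prime $2$ splits in $\mathbb{Q}(\sqrt{-7})$. The element $1+\sqrt{-7}=2\cdot\tfrac{1+\sqrt{-7}}{2}$ has valuations $2$ and $1$ at the two primes above $2$, even though $2\mid t$. (The invariants are $2/3$ and $1/3$, so $A_\pi$ is a threefold.)

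The balance has to come from the dimension hypothesis, with the steps in the other order:
\begin{itemize}
\item If $\dim A_\pi=1$ and $K$ is imaginary quadratic, then $D=K$, so every invariant is an integer.
\item Suppose $p$ split. Normalizing $v(p)=1$, the invariants $v(\pi)/r$ and $(r-v(\pi))/r$ would force $v(\pi)\in\{0,r\}$. Then one of $\pi,\bar\pi$ is a $v$-adic unit, so $t=\pi+\bar\pi$ is a $v$-adic unit, contradicting $p\mid t$.
\item Hence $p$ is nonsplit. Only now does complex conjugation, which fixes the unique prime above $p$, give $v(\pi)=v(\bar\pi)$.
\end{itemize}

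\textbf{The root-of-unity step.} Once this is repaired, the step you flag as the main obstacle is routine. The element $\zeta=\pi^2/q=\pi/\bar\pi\in K$ has valuation $0$ at the prime above $p$, and also at every other finite prime because $\pi\mid q$. It has absolute value $1$ under every complex embedding. By Kronecker it is a root of unity of $K$ other than $1$, so $t^2/q=2+\zeta+\zeta^{-1}\in\{0,1,2,3\}$.

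You must then feed the nonsplit condition back in:
\begin{itemize}
\item $t^2=q$ places you in $\mathbb{Q}(\sqrt{-3})$, which gives the congruence condition of case 3.
\item $t=0$ with $r$ even places you in $\mathbb{Q}(i)$, which gives the congruence condition of case 5.
\item $t^2\in\{2q,3q\}$ forces $p\in\{2,3\}$ with $r$ odd.
\item $t^2=4q$ is the case $K=\mathbb{Q}$.
\end{itemize}

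\textbf{A minor slip.} In the case $t=\pm p^{(r+1)/2}$ you have the fields swapped. For $p=2$ the discriminant is $-2^{r+1}$ and $K=\mathbb{Q}(i)$. For $p=3$ it is $-3^{r}$ and $K=\mathbb{Q}(\sqrt{-3})$. Since $p$ ramifies in both fields, your conclusion there still holds.
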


The same classification is restated by \citet{schoof1987nonsingular}, where it is the starting point for counting the isomorphism classes of elliptic curves in a fixed isogeny class.
We use it only through the following consequence.

\begin{corollary}
\label{ellipticcurves:cor:trace}
Let $p>3$ be a prime and let $r$ be odd.
If $E$ is an elliptic curve over $\F_{p^{r}}$ whose trace $t$ is divisible by $p$, then $t=0$.
\end{corollary}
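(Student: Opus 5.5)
The corollary follows from Theorem~\ref{ellipticcurves:thm:waterhouse} by running through its five cases. Let $E$ be an elliptic curve over $\F_{p^{r}}$ with $p>3$, $r$ odd, and trace $t$ divisible by $p$. By Hasse's bound, $|t|\le2p^{r/2}$, so $t$ lies in the range where the classification applies. Since $E$ exists, at least one of conditions (1)--(5) must hold for $t$. The plan is to rule out every condition except $t=0$.

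Condition (1) requires $\gcd(t,p)=1$, which contradicts $p\mid t$. Conditions (2) and (3) both require $r$ to be even, which contradicts the hypothesis that $r$ is odd. Condition (4) requires $p\in\{2,3\}$, which is excluded by $p>3$. Only condition (5) remains, and it asserts $t=0$. Its side condition is automatically satisfied here because $r$ is odd, although we only need the conclusion $t=0$. Hence $t=0$.

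There is no real obstacle in this argument. The one point to be careful about is that Theorem~\ref{ellipticcurves:thm:waterhouse} presupposes $|t|\le2p^{r/2}$, so Hasse's bound must be cited before the classification is applied to the trace of an actual curve.
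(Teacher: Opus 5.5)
Your proposal is correct and is the paper's proof: it likewise rules out cases (1)--(4) of Waterhouse's classification using $p\mid t$, the oddness of $r$, and $p>3$, which leaves only $t=0$. Citing Hasse's bound to show that $t$ lies in the range where the classification applies is a step the paper leaves implicit, and it is worth keeping.
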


\begin{proof}
Of the five cases of Theorem~\ref{ellipticcurves:thm:waterhouse}, the first is excluded by $p\mid t$, the second and third by $r$ being odd, and the fourth by $p>3$.
Only the fifth remains, and it gives $t=0$.
\end{proof}

\subsubsection{Proof of the main theorem}

\begin{proof}[Proof of Theorem~\ref{ellipticcurves:thm:main}]
Set $q=5^{5}=3125$.
By vertex-transitivity it suffices to prove that $N(b,c)\ge377$ for every pair of distinct $b,c\in\F_q^{*}$, since $N(b,c)$ counts exactly the vertices adjacent to $0$ and to neither $b$ nor $c$.

Fix such a pair and consider
\[
E_{b,c}:\quad y^{2}=x(x-b)(x-c).
\]
The cubic $x(x-b)(x-c)$ has three distinct roots and $\operatorname{char}\F_q=5\ne2$, so $E_{b,c}$ is nonsingular, that is, an elliptic curve.
For each $x\in\F_q$ the number of $y\in\F_q$ with $y^{2}=x(x-b)(x-c)$ is $1+\eta\bigl(x(x-b)(x-c)\bigr)$, so summing over $x$ and adding the point at infinity gives
\[
\#E_{b,c}(\F_q)=q+1+S(b,c).
\]
Hence the trace of $E_{b,c}$ is $t=q+1-\#E_{b,c}(\F_q)=-S(b,c)$, and Hasse's bound gives
\[
|S(b,c)|\le2\sqrt q=50\sqrt5<112,
\]
so that $S(b,c)\ge-111$.
Combining this with Lemma~\ref{ellipticcurves:lem:count} and $R(b,c)\le8$ yields
\[
8N(b,c)=q+1+S(b,c)-R(b,c)\ge3126-111-8=3007,
\]
and therefore $N(b,c)\ge376$.

It remains to exclude the equality case.
Suppose $N(b,c)=376$.
Then \eqref{ellipticcurves:eq:count} reads
\[
3008=3126+S(b,c)-R(b,c),
\]
so $S(b,c)=R(b,c)-118$.
As $R(b,c)\in\{0,4,8\}$ this forces $S(b,c)\in\{-118,-114,-110\}$, and the bound $S(b,c)\ge-111$ leaves only
\[
R(b,c)=8,
\qquad
S(b,c)=-110.
\]
The curve $E_{b,c}$ would then have trace $t=-S(b,c)=110$.
But $110$ is a nonzero multiple of $5$, and $q=5^{5}$ has $p=5>3$ with $r=5$ odd, so Corollary~\ref{ellipticcurves:cor:trace} rules this out.
Therefore $N(b,c)\ne376$, and the preceding bound gives $N(b,c)\ge377$ for every pair of distinct $b,c\in\F_q^{*}$.
Hence $G_{3125}\in\cG(1,2,377)$.

Finally,
\[
\bigl(1+2\sqrt{2\cdot377}\bigr)^{2}
=\bigl(1+2\sqrt{754}\bigr)^{2}
=3017+4\sqrt{754}
>3017+108
=3125,
\]
because $27^{2}=729<754$.
This completes the proof.
\end{proof}

\putbib

\end{bibunit}
\endgroup
\subsection[(Budden--Penland) A 4-uniform tree that is not 5-good]{A 4-uniform tree that is not 5-good}
\label{sol:ngood_trees}
\begingroup
\begin{bibunit}

This problem asks whether $r$-uniform trees are optimal in the Ramsey problem against a complete $r$-uniform hypergraph, as ordinary trees are against a clique.
They are not.
The unique $4$-uniform tree on seven vertices fails to be $5$-good, and the obstruction is a red/blue coloring of $K_8^{(4)}$ whose red edges are exactly the fourteen affine planes of $\mathbb{F}_2^3$.

\subsubsection{Introduction}

All hypergraphs here are finite and simple.
For an $r$-uniform hypergraph $H$ we write $V(H)$ and $E(H)$ for its vertex and edge sets, and $K_n^{(r)}$ for the complete $r$-uniform hypergraph on $n$ vertices.
The Ramsey number $R(G,H;r)$ is the least $p$ such that every red/blue coloring of $E(K_p^{(r)})$ contains a red copy of $G$ or a blue copy of $H$.
A \emph{Berge cycle} of length $k\ge2$ in $H$ is an alternating sequence $v_1,e_1,v_2,e_2,\dots,v_k,e_k$ of distinct vertices $v_i$ and distinct edges $e_i$ with $v_i,v_{i+1}\in e_i$ for every $i$, indices read modulo $k$.
An \emph{$r$-uniform tree} is a connected $r$-uniform hypergraph containing no Berge cycle.
Equivalently \citep[Theorem~2.1]{budden2017trees}, it is a hypergraph that can be assembled starting from a single edge, each subsequent edge meeting the union of the previous ones in exactly one vertex; in particular an $r$-uniform tree on $m$ vertices has exactly $(m-1)/(r-1)$ edges.

A \emph{weak coloring} of $H$ is a coloring of $V(H)$ in which no edge is monochromatic, the \emph{weak chromatic number} $\chi_w(H)$ is the least number of colors in a weak coloring, and the \emph{chromatic surplus} $s(H)$ is the least size of a color class over all weak colorings of $H$ with exactly $\chi_w(H)$ colors.
\citet[Theorem~3.1]{budden2017trees} proved that every connected $r$-uniform hypergraph $G$ of order $m\ge r$ satisfies
\begin{equation}\label{ngood_trees:eq:ngood}
  R\bigl(G,K_n^{(r)};r\bigr)\ \ge\ (m-1)\bigl(\chi_w(K_n^{(r)})-1\bigr)+s\bigl(K_n^{(r)}\bigr),
\end{equation}
and they call $G$ \emph{$n$-good} when equality holds; their notation for the chromatic surplus is $t(\cdot)$, and we follow the later $s(\cdot)$ of \citet{budden2022hypergraph}.
For an arbitrary $r$-uniform target $H$ with $s(H)\le m$ the same theorem gives $R(G,H;r)\ge(m-1)(\chi_w(H)-1)+s(H)$, and $G$ is \emph{$H$-good} when equality holds; $n$-goodness is the case $H=K_n^{(r)}$.
Here $\chi_w(K_n^{(r)})=\lceil n/(r-1)\rceil$, since a color class of a weak coloring of $K_n^{(r)}$ is exactly a set of at most $r-1$ vertices.
For $r=2$, \eqref{ngood_trees:eq:ngood} reads $R(G,K_n)\ge(m-1)(n-1)+1$, and \citet{chvatal1977tree} proved that every tree attains it: every tree $T_m$ on $m$ vertices satisfies $R(T_m,K_n)=(m-1)(n-1)+1$.
The term $n$-good goes back to \citet{burr1980extremal}, and the systematic study of goodness to \citet{burr1983generalizations}.

\citet[Conjecture~6.2]{budden2017trees} conjectured that hypertrees are optimal in the same sense:

\medskip
\noindent
\emph{If $r\ge2$ and $T$ is any $r$-uniform tree, then $T$ is $n$-good} [for every $n\ge r$].
\medskip

\noindent
We disprove this.

The evidence behind the conjecture is substantial.
\citet{budden2017trees} record the hypertree embedding bound of \citet{loh2009note}, namely
\[
  R\bigl(T_m^{(r)},K_n^{(r)};r\bigr)\ \le\ \frac{(m-1)(n-1)}{r-1}+1
\]
for every $r$-uniform tree $T_m^{(r)}$ on $m$ vertices, and deduce that the conjecture holds whenever $(r-1)\mid(n-1)$, because the two bounds then agree.
For $r=3$ the divisibility condition says exactly that $n$ is odd, and for the smallest nontrivial tree \citet{budden2017trees} push past it: $R(T_5^{(3)},K_n^{(3)};3)=2n-2$ for even $n$ as well, so $T_5^{(3)}$ is $n$-good for every $n\ge3$.
Beyond $T_5^{(3)}$ their bounds for even $n$ leave a range of possible values rather than a single one, and they report finding no counterexample.
\citet{budden2022hypergraph} reduce the conjecture to the residues $n\equiv0\pmod{r-1}$: a tree that is $n$-good for every multiple $n$ of $r-1$ is $n$-good for every $n\ge r$.
In the asymptotic direction, \citet{boyadzhiyska2025ramsey} prove that for every $r\ge3$ and every $r$-uniform hypergraph $H$ with $s(H)\le2r-1$, all sufficiently long $r$-uniform loose paths are $H$-good, the threshold depending on $H$.
Every complete target satisfies the hypothesis, since $s(K_n^{(r)})\le r-1$, so for each fixed $n$ all sufficiently long loose paths are $n$-good.
A loose path that is not $n$-good is therefore short, and ours is the shortest nontrivial one.

Connected hypergraphs that are not $n$-good were known, but no tree was among them; \citet{budden2017trees} show that the $3$-uniform loose cycle $C_4^{(3)}$, two edges meeting in two vertices, is not $5$-good, and record that $K_4^{(3)}$, $K_5^{(3)}$ and $K_6^{(3)}$ are not $4$-good and that $K_5^{(4)}$ is not $5$-good.
Two nearby negative results are of a different kind.
The $\ell$-paths with $\ell\ge2$ that \citet{boyadzhiyska2025ramsey} rule out are not trees, since consecutive edges meet in $\ell\ge2$ vertices and hence span a Berge cycle of length $2$.
The loose paths they rule out are trees, but the targets there are incomplete hypergraphs $H$ with $s(H)>2r-1$, outside the hypothesis above.
Radziszowski's dynamic survey describes the subsequent literature as ``further results towards the conjecture that all $r$-uniform trees are $n$-good'' \citep{Radziszowski_1994}.
(A corrigendum \citep{budden2019corrigendum} amends Theorem~3.10 of the original paper, on disjoint unions of $n$-good hypergraphs; Conjecture~6.2 is unaffected.)

The counterexample is the smallest $4$-uniform tree other than a single edge, equivalently the $4$-uniform loose path with two edges, and it is taken against $K_5^{(4)}$, a pair of parameters that the divisibility criterion does not cover.
Let $T$ be the $4$-uniform hypergraph on the vertex set $[7]$ with the two edges
\begin{equation}\label{ngood_trees:eq:tree}
  e_1=\{1,2,3,4\},\qquad e_2=\{4,5,6,7\}.
\end{equation}
It is connected, and it has no Berge cycle: a Berge cycle uses at least two distinct edges, so with only $e_1$ and $e_2$ available it would need two distinct vertices in $e_1\cap e_2$, whereas $|e_1\cap e_2|=1$.
Hence $T$ is a $4$-uniform tree, and it is the unique one of order $7$ up to isomorphism, since a $4$-uniform tree on $7$ vertices has exactly $(7-1)/(4-1)=2$ edges and those edges meet in exactly one vertex.

\begin{theorem}\label{ngood_trees:thm:main}
Let $T$ be the $4$-uniform tree of \eqref{ngood_trees:eq:tree}.
Then
\[
  R\bigl(T,K_5^{(4)};4\bigr)=9,
\]
while the right-hand side of \eqref{ngood_trees:eq:ngood} equals $8$ for $G=T$, $r=4$ and $n=5$.
In particular $T$ is not $5$-good, and Conjecture~6.2 of \citet{budden2017trees} is false.
\end{theorem}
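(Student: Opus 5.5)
Three things have to be shown: the right-hand side of \eqref{ngood_trees:eq:ngood} equals $8$, some coloring of $E(K_8^{(4)})$ avoids both a red $T$ and a blue $K_5^{(4)}$, and every coloring of $E(K_9^{(4)})$ contains one of them.

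The right-hand side is routine. We have $m=7$ and $\chi_w(K_5^{(4)})=\lceil 5/3\rceil=2$. A weak $2$-coloring of $K_5^{(4)}$ splits the five vertices into two classes of size at most $3$, so the smaller class has size at least $2$, and a $3+2$ split attains this. Hence $s(K_5^{(4)})=2$, and the bound is $6\cdot1+2=8$.

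For the lower bound $R\ge9$ we color $K_8^{(4)}$ on vertex set $\mathbb{F}_2^3$, making a $4$-set red exactly when it is an affine plane. There are $14$ red edges.

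No red copy of $T$ exists. A red $T$ needs two affine planes meeting in exactly one point. Two distinct affine planes in $\mathbb{F}_2^3$ are either parallel, so they are disjoint, or they meet in an affine line, which has $2$ points. Neither case gives a single common point.

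No blue $K_5^{(4)}$ exists. We must show every $5$-set contains an affine plane. Let $S$ be a $5$-set and $U$ its complement, $|U|=3$, and write $U=\{u_1,u_2,u_3\}$. Put $w=u_1+u_2+u_3$. The set $U\cup\{w\}$ is an affine plane, since in $\mathbb{F}_2^3$ any four points with zero sum form a plane. The plane complementary to $U\cup\{w\}$ is its parallel translate. It has $4$ points, all lying outside $U$, so it is contained in $S$, which finishes the lower bound.

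For the upper bound $R\le9$ we fix any coloring of $E(K_9^{(4)})$ with no red $T$. We call this red hypergraph $\mathcal{R}$; in it, any two red edges meet in $0$, $2$ or $3$ vertices, never in exactly $1$. We must find a $5$-set with no red $4$-subset. The main tool is the structure of components: consider the graph on red edges joining two edges when they intersect. The vertex sets spanned by the components are pairwise disjoint.

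We expect this classification to be the main obstacle. The goal is to show that each component on a set $W$ is very restricted, consisting of $4$-subsets of a set of size at most $8$ with pairwise intersections in $\{2,3\}$. The candidates are a star of $4$-sets through a common triple, the family $\binom{[5]}{4}$, a sunflower-like family through a common pair, or an affine-plane system on $8$ points. We would prove this by a direct case analysis on the intersection sizes $2$ and $3$.

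Given the classification, we pick five vertices greedily so that no component receives a whole red edge. A component with $|W|\le 5$ can contribute at most $3$ of the chosen points, while larger components are $3$-dense but still admit $3$ or $4$ points in general position. Since $9$ vertices must be covered, a counting argument over the disjoint supports then finishes. If the classification proves messy, the fallback is a finite computer check over colorings of $K_9^{(4)}$ up to symmetry, using the intersection restriction to prune the search.
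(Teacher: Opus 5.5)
Your computation of the right-hand side of \eqref{ngood_trees:eq:ngood} is correct and matches the paper. So is your lower-bound coloring: your complement argument for $5$-sets is the paper's argument with $\sigma=u_1+u_2+u_3$. You should add the one-line check that $w\notin U$, since $w=u_1$ would force $u_2=u_3$. The genuine gap is in the upper bound $R\le 9$.

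\textbf{The classification is false.} Take the four red edges
\[
\{1,2,3,4\},\quad\{3,4,5,6\},\quad\{5,6,7,8\},\quad\{1,3,4,9\}.
\]
They pairwise meet in $0$, $2$ or $3$ vertices, so there is no red $T$. Yet they form a single intersection component whose support is all nine vertices. This component lies in no set of size at most $8$. It is none of your four types either: there is no common pair, and no set of size at most $8$ contains it. Your own affine example also shows that two edges in one component can be disjoint, so ``pairwise intersections in $\{2,3\}$'' is not right either. Your observation that distinct components have disjoint supports is correct, but it carries no weight here. Once one component can cover every vertex of $K_9^{(4)}$, you need five points from that single component with no red edge among them. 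Your plan only ever promises three or four points per component. So the closing ``counting argument over the disjoint supports'' is not merely unwritten; its key input is missing. The computer-search fallback is not a proof as stated.

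\textbf{What is missing.} You need the fact that a $4$-uniform hypergraph in which no two edges meet in exactly one vertex is weakly $2$-colorable. The paper gets this from Loh's hypertree embedding theorem (Theorem~\ref{ngood_trees:thm:loh}): every $r$-uniform hypergraph with $\chi_w>k$ contains every $r$-uniform tree with $k$ edges. Apply it with $k=2$ to your red hypergraph $\mathcal{R}$ on nine vertices. Either $\mathcal{R}$ contains $T$, or $\mathcal{R}$ has a weak $2$-coloring. In the second case some color class has at least $\lceil 9/2\rceil=5$ vertices. No red edge lies inside a class, so any five of those vertices span a blue $K_5^{(4)}$. This two-line argument replaces your case analysis entirely.
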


\begin{remark}\label{ngood_trees:rem:scope}
The $3$-uniform case of Conjecture~6.2, which is the bulk of \citet{budden2017trees}, remains open.
So does their conjecture that any two $r$-uniform trees of the same order have the same Ramsey number against $K_n^{(r)}$: $T$ is the only $4$-uniform tree of order $7$.

The divisibility hypothesis of \citet{budden2017trees} is not an artifact of a lossy estimate.
Fix $n\ge r$, write $n-1=q(r-1)+j$ with $0\le j\le r-2$, and let $k=(m-1)/(r-1)$ be the number of edges of an $r$-uniform tree of order $m$.
Then $\chi_w(K_n^{(r)})=q+1$ and $s(K_n^{(r)})=j+1$, so the Loh upper bound and the Burr-type lower bound in \eqref{ngood_trees:eq:ngood} are $(m-1)q+kj+1$ and $(m-1)q+j+1$ respectively, and the window between them has width exactly $j(k-1)$.
It closes precisely when $(r-1)\mid(n-1)$ or the tree is a single edge.
At $(r,m,n)=(4,7,5)$ the window is $\{8,9\}$ and Theorem~\ref{ngood_trees:thm:main} places the truth at the top of it, whereas at $(r,m,n)=(3,5,4)$ the window is again a single step, $\{6,7\}$, and there the truth is at the bottom: $R(T_5^{(3)},K_4^{(3)};3)=6$ \citep{budden2017trees}.
\end{remark}

The lower bound is a single explicit coloring.
The key point is that the affine planes of $\mathbb{F}_2^3$, regarded as $4$-element subsets, meet one another in $0$ or $2$ points and never in $1$, while the two edges of $T$ meet in exactly $1$ point; at the same time the planes are numerous enough that every $5$ points contain one.
The matching upper bound is Loh's hypertree embedding theorem, which at $(r,m,n)=(4,7,5)$ gives exactly $9$.

\subsubsection{The affine-plane coloring}

Take the vertex set of $K_8^{(4)}$ to be $V=\mathbb{F}_2^3$.
For $a\in\mathbb{F}_2^3\setminus\{0\}$ and $b\in\mathbb{F}_2$ put
\[
  P(a,b):=\{x\in\mathbb{F}_2^3:\ a\cdot x=b\},
\]
where $a\cdot x$ denotes the standard bilinear form; such a set is an \emph{affine plane}, that is, a coset of a two-dimensional subspace.
Color a $4$-subset of $V$ red if it is an affine plane, and blue otherwise.
Figure~\ref{ngood_trees:fig:planes} depicts the two intersection patterns at issue.

\begin{figure}[t]
\centering
\begin{tikzpicture}[
  pt/.style={circle,fill=black,inner sep=1.5pt},
  lbl/.style={font=\scriptsize},
  cptn/.style={font=\small,align=center}
]
\begin{scope}
  \draw[fill=blue!14,draw=blue!60!black,line width=0.7pt,fill opacity=0.7] (-1.05,0) ellipse (1.6 and 0.85);
  \draw[fill=orange!25,draw=orange!70!black,line width=0.7pt,fill opacity=0.6] (1.05,0) ellipse (1.6 and 0.85);
  \node[pt] at (-2.05,0.4) {};
  \node[pt] at (-2.05,-0.4) {};
  \node[pt] at (-1.05,0) {};
  \node[pt] at (0,0) {};
  \node[pt] at (1.05,0) {};
  \node[pt] at (2.05,0.4) {};
  \node[pt] at (2.05,-0.4) {};
  \node[lbl] at (-2.05,0.74) {$1$};
  \node[lbl] at (-2.05,-0.74) {$2$};
  \node[lbl] at (-1.05,0.32) {$3$};
  \node[lbl] at (0,0.34) {$4$};
  \node[lbl] at (1.05,0.32) {$5$};
  \node[lbl] at (2.05,0.74) {$6$};
  \node[lbl] at (2.05,-0.74) {$7$};
  \node[lbl] at (-1.75,1.1) {$e_1$};
  \node[lbl] at (1.75,1.1) {$e_2$};
  \node[cptn] at (0,-1.5) {the tree $T$};
\end{scope}
\begin{scope}[xshift=5.9cm,yshift=-0.85cm]
  \coordinate (a000) at (0,0);
  \coordinate (a100) at (1.7,0);
  \coordinate (a010) at (0,1.7);
  \coordinate (a110) at (1.7,1.7);
  \coordinate (a001) at (0.8,0.7);
  \coordinate (a101) at (2.5,0.7);
  \coordinate (a011) at (0.8,2.4);
  \coordinate (a111) at (2.5,2.4);
  \fill[orange!35,fill opacity=0.55] (a000) -- (a100) -- (a101) -- (a001) -- cycle;
  \fill[blue!35,fill opacity=0.55] (a000) -- (a100) -- (a110) -- (a010) -- cycle;
  \draw[gray!70,line width=0.6pt] (a000)--(a100) (a000)--(a010) (a000)--(a001)
    (a100)--(a110) (a100)--(a101) (a010)--(a110) (a010)--(a011)
    (a001)--(a101) (a001)--(a011) (a110)--(a111) (a101)--(a111) (a011)--(a111);
  \draw[red!70!black,line width=1.5pt] (a000)--(a100);
  \node[pt] at (a000) {};
  \node[pt] at (a100) {};
  \node[pt] at (a010) {};
  \node[pt] at (a110) {};
  \node[pt] at (a001) {};
  \node[pt] at (a101) {};
  \node[pt] at (a011) {};
  \node[pt] at (a111) {};
  \node[lbl,below left] at (a000) {$000$};
  \node[lbl,below right] at (a100) {$100$};
  \node[lbl,left] at (a010) {$010$};
  \node[lbl,below right] at (a110) {$110$};
  \node[lbl,above left] at (a001) {$001$};
  \node[lbl,right] at (a101) {$101$};
  \node[lbl,above left] at (a011) {$011$};
  \node[lbl,above right] at (a111) {$111$};
  \node[cptn] at (1.25,-0.65) {$P(001,0)$ and $P(010,0)$};
\end{scope}
\end{tikzpicture}
\caption{On the left, the unique $4$-uniform tree $T$ of order $7$: its two edges meet in the single vertex $4$.
On the right, two of the fourteen affine planes of $\mathbb{F}_2^3$, drawn as faces of the cube; they meet in the two points $000$ and $100$, marked by the thick segment.
Lemma~\ref{ngood_trees:lem:nored} says that two red edges can never meet the way $e_1$ and $e_2$ do.}
\label{ngood_trees:fig:planes}
\end{figure}

\begin{lemma}\label{ngood_trees:lem:planes}
There are exactly $14$ affine planes in $\mathbb{F}_2^3$, and a $4$-subset $S\subseteq\mathbb{F}_2^3$ is an affine plane if and only if $\sum_{x\in S}x=0$.
\end{lemma}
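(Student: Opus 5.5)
The plan is to prove the two claims of Lemma~\ref{ngood_trees:lem:planes} separately: first count the planes, then characterize them by their coordinate sum.

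\emph{Counting.} For the count, note that $P(a,b)$ is determined by the pair $(a,b)$ with $a\ne0$. There are $7$ choices of $a$ and $2$ of $b$. Distinct pairs give distinct sets. Over $\mathbb{F}_2$ the only nonzero scalar is $1$, so the kernel $\{x: a\cdot x=0\}$ determines $a$, since $a$ spans the orthogonal complement of that kernel. The two values of $b$ then give the two complementary cosets. Each $P(a,b)$ has exactly $4$ elements, being a coset of a $2$-dimensional subspace. This yields exactly $14$ affine planes.

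\emph{Sum criterion, forward direction.} Suppose $S=v+W$ with $\dim W=2$, say $W=\{0,w_1,w_2,w_1+w_2\}$. Then
\[
\sum_{x\in S}x=4v+2(w_1+w_2)=0
\]
in characteristic $2$.

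\emph{Sum criterion, converse.} Let $S=\{x_1,x_2,x_3,x_4\}$ be a $4$-subset with $x_1+x_2+x_3+x_4=0$. Translate by $x_1$; this preserves both the zero-sum property (a $4$-element sum picks up $4x_1=0$) and the property of being an affine plane. So we may assume $x_1=0$. Then $x_4=x_2+x_3$, where $x_2,x_3$ are distinct and nonzero. Hence $x_2,x_3$ are linearly independent over $\mathbb{F}_2$, and $S=\{0,x_2,x_3,x_2+x_3\}$ is exactly the $2$-dimensional subspace they span. In particular $S$ is an affine plane.

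As an optional cross-check, there are $\binom{8}{4}=70$ four-subsets. The number with zero sum is the number of ways to choose distinct $x_1,x_2,x_3$ with $x_4=x_1+x_2+x_3$ distinct from all three, divided by $4!/3!=4$. This gives $8\cdot7\cdot6/(6\cdot4)=14$, matching the count. The cross-check is not needed, since the two directions above already give the equivalence.

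No step is a real obstacle. The only point needing care is verifying that $x_4=x_2+x_3$ differs from $0,x_2,x_3$, which holds because $x_2,x_3$ are distinct and nonzero. That is precisely what makes $x_2,x_3$ independent.
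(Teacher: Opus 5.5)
Your proof is correct and follows the paper's proof essentially step for step. The count goes through the parametrization $P(a,b)$ with two cosets per $2$-dimensional subspace. The forward direction uses the vanishing of $4v+2(w_1+w_2)$ in characteristic $2$. The converse translates one point to the origin and recognizes $\{0,p,q,p+q\}$; the paper translates by $x_4$ rather than $x_1$, which makes no difference.

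One small remark on the count. As written, it only shows that the fourteen sets $P(a,b)$ are distinct affine planes; that every coset of a $2$-dimensional subspace is one of them is left implicit, as it is in the paper's ``bijection''. Your zero-sum count $\binom{8}{3}/4=14$, which you call unnecessary, closes this when combined with your forward direction. Alternatively, use $\dim W^{\perp}=1$ for every $2$-dimensional $W$.
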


\begin{proof}
The map $a\mapsto P(a,0)$ is a bijection from $\mathbb{F}_2^3\setminus\{0\}$ onto the set of two-dimensional subspaces of $\mathbb{F}_2^3$, and each such subspace has exactly the two cosets $P(a,0)$ and $P(a,1)$.
Hence there are exactly $14$ affine planes, each of size $4$.

A two-dimensional subspace has the form $\{0,p,q,p+q\}$ and therefore sums to $0$, and translating a $4$-set by $x$ changes its sum by $4x=0$; so every affine plane sums to $0$.
Conversely, let $S=\{x_1,x_2,x_3,x_4\}$ satisfy $\sum_{i}x_i=0$.
Then $S+x_4=\{x_1+x_4,\,x_2+x_4,\,x_3+x_4,\,0\}$ consists of $0$ together with three distinct nonzero elements $p,q,u$ whose sum is $0$.
Thus $u=p+q$, and $p,q$ are distinct nonzero vectors and hence linearly independent, so $S+x_4=\{0,p,q,p+q\}$ is a two-dimensional subspace.
Therefore $S$ is one of its cosets.
\end{proof}

\subsubsection{The lower bound}

We now check that the coloring contains neither of the two forbidden configurations.

\begin{lemma}\label{ngood_trees:lem:nored}
Any two distinct red edges meet in $0$ or $2$ vertices.
Consequently the coloring contains no red copy of $T$.
\end{lemma}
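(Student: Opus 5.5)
The plan is to use the sum criterion of Lemma~\ref{ngood_trees:lem:planes}. Take two distinct red edges $S_1,S_2$, that is, distinct affine planes, and write $k=|S_1\cap S_2|$. Distinct planes are distinct $4$-sets, so $k\le 3$. The goal is to rule out $k=1$ and $k=3$.

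To rule out $k=3$, I would suppose $S_1\cap S_2=\{x,y,z\}$, $S_1=\{x,y,z,u\}$ and $S_2=\{x,y,z,w\}$. Lemma~\ref{ngood_trees:lem:planes} gives $u=x+y+z=w$, so $S_1=S_2$, which is a contradiction.

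To rule out $k=1$, I would use cardinality. If the two planes met in a single point, then $|S_1\cup S_2|=7$, leaving exactly one point $v$ of $\mathbb{F}_2^3$ outside both. The sum of all eight vectors of $\mathbb{F}_2^3$ is $0$, since each coordinate equals $1$ in exactly four vectors. Writing $S_1\cap S_2=\{x\}$, summing over the whole space gives
\[
0=\sum_{S_1}+\sum_{S_2}-x+v=x+v
\]
in characteristic $2$, so $v=x$. This is impossible because $x$ lies in $S_1\cap S_2$ while $v$ lies outside both. An alternative argument is also available: $S_1$ and $S_2$ are cosets of $2$-dimensional subspaces $U_1,U_2$. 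If $U_1=U_2$, the cosets are equal or disjoint. Otherwise $U_1\cap U_2$ is a line, so a nonempty intersection is a coset of that line and has size $2$. Either route completes the first claim: the intersection size is $0$ or $2$.

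For the consequence, suppose the coloring contained a red copy of $T$. Then there would be red edges $f_1,f_2$ images of $e_1,e_2$ under an injective vertex map. Since $|e_1\cap e_2|=1$ and the map is injective, $|f_1\cap f_2|=1$, and in particular $f_1\ne f_2$. This contradicts the first part.

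No step is a real obstacle. The only point needing care is a clean exclusion of the case $k=1$, and the coset argument is probably the cleanest choice.
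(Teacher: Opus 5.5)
Your proof is correct, including both exclusion steps for $k=3$ and $k=1$ and the deduction about $T$. The coset argument you end up preferring is essentially the paper's proof, written dually. The paper writes each plane as $P(a,b)=\{x:a\cdot x=b\}$. Equal normals $a=c$ give disjoint planes. For $a\neq c$, the map $x\mapsto(a\cdot x,c\cdot x)$ is a surjection $\mathbb{F}_2^3\to\mathbb{F}_2^2$ whose one-dimensional kernel is your line $U_1\cap U_2$, so $P(a,b)\cap P(c,d)$, being a fiber, has exactly two points.

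Your primary route is genuinely different. It never uses the linear structure directly. It excludes $k=3$ and $k=1$ using only the zero-sum characterization of Lemma~\ref{ngood_trees:lem:planes} together with $\sum_{x\in\mathbb{F}_2^3}x=0$, which is the same identity the paper invokes in Lemma~\ref{ngood_trees:lem:noblue}. The trade-off is this. The paper's version gives slightly more: non-parallel planes always meet, in exactly two points, so intersection size $0$ occurs only for parallel planes. Your sum route yields only $\{0,2\}$, which is all that is needed, but it makes the entire lower-bound coloring rest on the single zero-sum criterion. Your deduction that there is no red copy of $T$, via injectivity preserving $|e_1\cap e_2|=1$, is the same as in the paper.
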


\begin{proof}
Let $P(a,b)\neq P(c,d)$ be red edges.
If $a=c$ then $b\neq d$, and the two sets are disjoint.
If $a\neq c$ then $a$ and $c$ are distinct nonzero vectors, hence linearly independent over $\mathbb{F}_2$, so the linear map $x\mapsto(a\cdot x,\,c\cdot x)$ from $\mathbb{F}_2^3$ to $\mathbb{F}_2^2$ is surjective with kernel of dimension $1$.
Every one of its fibers, in particular $P(a,b)\cap P(c,d)$, therefore has exactly $2$ elements.

A red copy of $T$ would consist of two red edges meeting in exactly one vertex, which the above excludes.
\end{proof}

\begin{lemma}\label{ngood_trees:lem:noblue}
Every $5$-subset of $\mathbb{F}_2^3$ contains a red $4$-subset.
Consequently the coloring contains no blue $K_5^{(4)}$.
\end{lemma}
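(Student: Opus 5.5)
Every $5$-subset of $\mathbb{F}_2^3$ contains four points with vector sum $0$; by Lemma~\ref{ngood_trees:lem:planes} such four points form an affine plane, i.e.\ a red edge. Let $X=\{x_1,\dots,x_5\}$ be the $5$-subset and put $\sigma=\sum_{i}x_i$. The $4$-subset $X\setminus\{x_j\}$ has sum $\sigma+x_j$, since $-x_j=x_j$ over $\mathbb{F}_2$. This sum vanishes exactly when $x_j=\sigma$. So if $\sigma\in X$, we can drop $\sigma$ and we are done.

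The main case is $\sigma\notin X$, and here I would use translation invariance. Translating $X$ by $v$ changes $\sigma$ by $5v=v$ and carries affine planes to affine planes. Translating by $x_5$, we may assume $0\in X$, so $X=\{0,p,q,u,w\}$ with $p,q,u,w$ nonzero and distinct. The four nonzero elements lie in a $3$-dimensional space, so they are linearly dependent. Two distinct nonzero vectors are independent over $\mathbb{F}_2$, so any dependence has length $3$ or $4$, and we treat these in turn.

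If three of them satisfy, say, $p+q+u=0$, then $\{0,p,q,u\}$ has sum $0$. If instead $p+q+u+w=0$, then $\{p,q,u,w\}$ has sum $0$. In either case Lemma~\ref{ngood_trees:lem:planes} yields a red $4$-subset inside $X$. This argument does not even need the preliminary case split on $\sigma$.

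For the consequence: a blue $K_5^{(4)}$ would be a $5$-set all of whose $4$-subsets are blue. That contradicts the statement just proved. The only delicate point is remembering to pass to a translate containing $0$, so that the dependence argument applies to a set of four nonzero vectors.
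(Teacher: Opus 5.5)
Your proof is correct, but its working part takes a different route from the paper's. The paper does not translate. It passes to the complement $Y=\mathbb{F}_2^3\setminus X=\{y_1,y_2,y_3\}$ and uses $\sum_{x\in\mathbb{F}_2^3}x=0$ to get $\sum_{x\in X}x=\sum_{y\in Y}y=\sigma$. It then notes that $\sigma\notin Y$, since $\sigma=y_1$ would force $y_2+y_3=0$, i.e.\ $y_2=y_3$. So $\sigma\in X$ always, and $X\setminus\{\sigma\}$ is the red $4$-subset.

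In particular, the case $\sigma\notin X$ that you call the main case is empty. This also follows from your own two observations. Your dependence argument yields a zero-sum $4$-subset of $X$ (zero sum is translation invariant since $4v=0$), which has the form $X\setminus\{x_j\}$, and your first paragraph then forces $x_j=\sigma$.

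Your route gives a familiar linear-algebra reason: after translating so that $0\in X$, four nonzero vectors in a $3$-dimensional space are dependent, and a dependence among distinct nonzero vectors over $\mathbb{F}_2$ has at least three terms. This amounts to saying that five points of $\mathbb{F}_2^3$ are affinely dependent, and an affine dependence among distinct points over $\mathbb{F}_2$ is a zero-sum subset of even size at least $4$. The paper's complement argument is shorter and constructive: it names the red $4$-subset as $X\setminus\{\sigma\}$. Together with your first observation, it also shows this is the unique red $4$-subset of $X$. The deduction that there is no blue $K_5^{(4)}$ is the same in both.
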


\begin{proof}
Let $X\subseteq\mathbb{F}_2^3$ with $|X|=5$, write $Y=\mathbb{F}_2^3\setminus X=\{u,v,w\}$, and put $\sigma=u+v+w$.
If $\sigma\in Y$, say $\sigma=u$, then $v+w=0$ and hence $v=w$, a contradiction; so $\sigma\in X$.
In each coordinate exactly four of the eight vectors of $\mathbb{F}_2^3$ have entry $1$, so $\sum_{x\in\mathbb{F}_2^3}x=0$ and therefore
\[
  \sum_{x\in X}x=\sum_{y\in Y}y=\sigma .
\]
It follows that $\sum_{x\in X\setminus\{\sigma\}}x=0$, so $X\setminus\{\sigma\}$ is red by Lemma~\ref{ngood_trees:lem:planes}.
A blue $K_5^{(4)}$ would be a $5$-set all of whose $4$-subsets are blue, which the above excludes.
\end{proof}

Lemmas~\ref{ngood_trees:lem:nored} and~\ref{ngood_trees:lem:noblue} exhibit a red/blue coloring of $K_8^{(4)}$ with no red $T$ and no blue $K_5^{(4)}$, so
\begin{equation}\label{ngood_trees:eq:lower}
  R\bigl(T,K_5^{(4)};4\bigr)\ \ge\ 9 .
\end{equation}

\subsubsection{Proof of the main theorem}

The remaining input is the hypertree embedding theorem of \citet[Theorem~1]{loh2009note}, which answered a question of \citet{bohman2010coloring} by removing all dependence on the uniformity $r$.

\begin{theorem}\label{ngood_trees:thm:loh}
Every $r$-uniform hypergraph with weak chromatic number greater than $k$ contains a copy of every $r$-uniform tree with $k$ edges.
\end{theorem}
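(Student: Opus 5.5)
The plan is induction on the number $k$ of edges, in the style of the classical greedy argument that a graph of chromatic number greater than $k$ contains every tree with $k$ edges. The base case $k=0$ is trivial: $\chi_w>0$ means $H$ has a vertex. For $k=1$, $\chi_w(H)>1$ means some edge exists, which is the tree. For the inductive step, write the given tree as $T=T'\cup\{e\}$, where $e$ is a leaf edge meeting $V(T')$ in a single vertex $u$, in line with the edge-by-edge assembly of $r$-uniform trees recalled before \eqref{ngood_trees:eq:ngood}.

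The key device is a vertex partition of $H$, in place of the minimum-degree argument used for graphs. I would proceed by contradiction: assume $H$ contains no copy of $T$ and build a weak coloring of $H$ with at most $k$ colors. Define the ``bad'' set $B$ as follows. A vertex $v$ lies in $B$ if, for every copy of $T'$ in $H$ sending $u$ to $v$, every edge of $H$ through $v$ meets the image of $T'-\{u\}$. I will show two things.
\begin{enumerate}
\item $H-B$ has weak chromatic number at most $k-1$. Otherwise, by induction, $H-B$ contains a copy of $T'$, with $u$ mapped to some $v\notin B$. Since $v\notin B$, that copy extends by an edge through $v$ that avoids the rest of the copy, giving a copy of $T$ in $H$, a contradiction.
\item $B$ itself can be given one new color without creating a monochromatic edge.
\end{enumerate}
Combining these yields a weak coloring of $H$ with $k$ colors, contradicting $\chi_w(H)>k$.

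I expect the second point to be the main obstacle. Its difficulty is that the definition of $B$ quantifies over copies of $T'$ in all of $H$, whereas the induction only supplies copies inside $H-B$. If one edge $f$ of $H$ lies entirely in $B$, then every vertex of $f$ is blocked for every copy of $T'$ rooted there. The first thing I would try is to exploit $f$ directly. Choose a copy of $T'$ rooted at some vertex of $f$; since $f$ is blocked, $f$ meets that copy in a second vertex. Then re-root $T'$ at another vertex of $f$ and reroute along $f$, aiming to produce either a Berge-cycle-free copy of $T$ or a contradiction with the blocking property. If this fails, the fallback is to redefine $B$ more carefully: build it iteratively, adding a vertex only when no edge inside the current bad set becomes monochromatic. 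This makes the second point hold by construction, and shifts the burden to showing that the remaining vertices still admit the greedy extension used in the first point.

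The reason this should work without dependence on $r$ is that the extension step only asks for one edge through $v$ avoiding a specific set of at most $k(r-1)$ vertices. The blocking condition is precisely the failure of that request, so no degree counting in $r$ enters the argument.
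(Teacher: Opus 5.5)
Your scheme collapses before the rerouting question even arises. Under your standing assumption that $H$ contains no copy of $T$, every vertex of $H$ lies in $B$. A vertex $v\notin B$ comes with some copy of $T'$ rooted at $v$ and an edge through $v$ missing the rest of that copy, and together these already form a copy of $T$. Vertices rooting no copy of $T'$ lie in $B$ vacuously. So $B=V(H)$, point~(1) holds vacuously and never uses the induction, and point~(2) asserts that a $T$-free hypergraph has no edges. That is false already for $k=2$ and $H$ a single edge. (Point~(1) also misreads your definition: $v\notin B$ yields \emph{some} extendable copy rooted at $v$, not the one you found in $H-B$.)

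Your fallback, shrinking $B$ to an edge-free set, makes~(2) true by fiat but removes the only justification offered for~(1). A vertex left outside $B$ is still blocked, so nothing stops $H-B$ from containing $T'$. What remains unproved is the entire theorem: that some rooted copy of $T'$ leaves an edge through its root free, even though the other $(k-1)(r-1)$ vertices of the copy may a priori meet every edge through the root. This collision problem is exactly what forced the earlier $r$-dependent bounds. Your closing paragraph restates it rather than resolving it.

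The paper gives no proof here; it quotes Theorem~1 of Loh's note. One route that sidesteps collisions replaces deletion by a greedy coloring.

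\emph{The coloring.} Color the vertices of $H$ one at a time, giving each the least color $a\ge1$ such that no edge whose other vertices are already colored has all of them colored $a$. This is a weak coloring, since each edge is non-monochromatic once its last vertex is colored. So $\chi_w(H)>k$ forces some vertex to receive a color at least $k+1$. By construction, a vertex of color $a$ lies, for every $b<a$, in an edge whose other $r-1$ vertices all have color $b$.

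\emph{The embedding.} Root $T$ at a vertex $\rho$ and give its $k$ edges the distinct labels $k,k-1,\dots,1$ in breadth-first order from $\rho$, so that labels strictly decrease away from $\rho$. Send $\rho$ to a vertex of color at least $k+1$. Then, top-down, send each edge with label $b$ to an edge through the image of its vertex nearest $\rho$ whose other vertices all have color $b$. Such an edge exists because that image has color larger than $b$.

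\emph{Injectivity.} Every vertex of $T$ other than $\rho$ lies in exactly one edge in which it is not the vertex nearest $\rho$, and it is sent to a vertex whose color is that edge's label. The labels are distinct and the root's image has color exceeding every label. So two tree vertices can collide only if they lie in the same edge, and those are sent to distinct vertices of one host edge. The map is therefore injective, and no count involving $r$ ever appears.
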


\begin{proof}[Proof of Theorem~\ref{ngood_trees:thm:main}]
For the upper bound, consider any red/blue coloring of $E(K_9^{(4)})$ and let $H_R$ be the spanning subhypergraph formed by the red edges.
The tree $T$ has exactly $2$ edges.
If $\chi_w(H_R)>2$, then Theorem~\ref{ngood_trees:thm:loh} produces a red copy of $T$.
Otherwise fix a weak coloring of $H_R$ using at most two colors.
Some color class $C$ satisfies $|C|\ge\lceil9/2\rceil=5$, and no red edge lies inside a color class, so all $4$-subsets of any five vertices of $C$ are blue.
That is a blue $K_5^{(4)}$.
Hence $R(T,K_5^{(4)};4)\le9$, and with \eqref{ngood_trees:eq:lower} we obtain $R(T,K_5^{(4)};4)=9$.

It remains to evaluate the right-hand side of \eqref{ngood_trees:eq:ngood}.
A weak coloring of $K_5^{(4)}$ is exactly a partition of a $5$-set into classes of size at most $3$, so two classes are needed and $3+2$ is the only partition into two such classes.
Hence $\chi_w(K_5^{(4)})=2$ and $s(K_5^{(4)})=2$, and since $|V(T)|=7$ the right-hand side of \eqref{ngood_trees:eq:ngood} equals $(7-1)(2-1)+2=8$.
As $9\neq8$, the tree $T$ is not $5$-good.
\end{proof}

\begin{remark}\label{ngood_trees:rem:consequences}
Theorem~\ref{ngood_trees:thm:main} also settles a value left open in \citet{budden2017trees}.
\citet[Theorem~4.1]{budden2017trees} prove that $2r\le R(T_{2r-1}^{(r)},K_{r+1}^{(r)};r)\le2r+1$ for all $r\ge3$, which at $r=4$ is precisely the window $8\le R(T,K_5^{(4)};4)\le9$; the value is $9$.
The failure also propagates upward.
By \citet[Theorem~10]{budden2022hypergraph}, a $4$-uniform tree that is $6$-good is also $5$-good and $4$-good, so $T$ is not $6$-good either.
\end{remark}

\putbib
\end{bibunit}
\endgroup
\subsection[(Davies--Jenssen--Perkins--Roberts) Maximum versus average independent set size in triangle-free graphs]{Maximum versus average independent set size in triangle-free graphs}
\label{sol:perkins_gang}
\begingroup
\begin{bibunit}
\def\I{\mathcal{I}}
\def\avgalpha{\overline{\alpha}}

This problem asks by how much the largest independent set of a triangle-free graph exceeds a typical one.
\citet{davies2018average} conjectured that a factor $2-o_d(1)$ is forced once the minimum degree $d$ is large.
We disprove this.
The Cartesian products $C_5\,\square\,K_{m,m}$ are triangle-free and $(m+2)$-regular, and for them the ratio of the maximum to the average size of an independent set tends to $24/13=1.846\ldots$, which is bounded away from $2$.
The mechanism is that the two sides of $K_{m,m}$ must draw their independent sets from disjoint parts of the pentagon.

\subsubsection{Introduction}

For a finite graph $G$ let $\I(G)$ be its family of independent sets and $\alpha(G)=\max_{I\in\I(G)}|I|$ its independence number.
The \emph{hard-core model} on $G$ at fugacity $\lambda>0$ is the probability distribution on $\I(G)$ giving each $I$ mass proportional to $\lambda^{|I|}$, and
\[
  \avgalpha_G(\lambda)=\frac{\sum_{I\in\I(G)}|I|\,\lambda^{|I|}}{\sum_{I\in\I(G)}\lambda^{|I|}}
\]
is the expected size of a set drawn from it.
At $\lambda=1$ the distribution is uniform on $\I(G)$, so $\avgalpha_G(1)$ is the average size of an independent set of $G$; it lies between $0$ and $\alpha(G)$ and is not normalized by $|V(G)|$.

\citet{davies2018average} proved that a triangle-free graph $G$ on $n$ vertices with $\Delta(G)\le d$ satisfies $\avgalpha_G(1)\ge(1+o_d(1))\frac{\log d}{d}\,n$, so that the average independent set is already as large as the lower bound Shearer's theorem \citep{shearer1983note} gives for the maximum one; in particular this gives a second proof of $R(3,k)\le(1+o(1))k^2/\log k$.
In Section~5 of the same paper they ask by how much the maximum exceeds the average and record four conjectures: three on triangle-free graphs and a fourth, Conjecture~4, for $K_r$-free graphs.
The second of them, \citet[Conjecture~2]{davies2018average}, reads as follows:

\medskip
\noindent
\emph{For every triangle-free graph $G$ of minimum degree $d$, $\alpha(G)/\avgalpha_G(1)\ge 2-o_d(1)$.}
\medskip

\noindent
\citet{davies2018average} deduce from it that $R(3,k)\le(\tfrac12+o(1))k^2/\log k$, halving the constant in Shearer's bound, which is still the best known.

The general lower bound available for all graphs is \citet[Theorem~6]{davies2018average}, which states that $\alpha(G)/\avgalpha_G(\lambda)\ge 1+\alpha(G)/(\lambda n)$ for every graph $G$ on $n$ vertices, with equality for a disjoint union of copies of a single $K_r$.
That bound degenerates as soon as $\alpha(G)=o(n)$.
The constant $4/3$ of Conjecture~1 is the exact value of $\alpha(G)/\avgalpha_G(1)$ at $G=K_3$, and it also falls below $197136/137585=1.43283\ldots$, the smallest ratio \citet{davies2018average} report, attained by the cyclic triangle-free graph witnessing $R(3,9)\ge36$ \citep{grinstead1982ramsey}.

The evidence offered for Conjecture~2 is the expectation that graphs produced by the triangle-free process have ratio tending to $2$.
The conjecture remained open: \citet{davies2025hard} record it as Conjecture~B in the open problems section of their survey of the hard-core model in graph theory, and \citet{morris2026some} restates it in his ICM survey of Ramsey theory, noting that any lower bound better than $1+o(1)$ would be a very significant breakthrough.
The one route toward it proposed in \citet{davies2018average} is the identity
\[
  \alpha(G)=\avgalpha_G(1)+\int_1^{\infty}\frac{\Var_{\lambda}(|I|)}{\lambda}\,d\lambda,
\]
which reduces the conjecture to lower bounds on the variance of the hard-core model; \citet{davies2025expectations} prove such bounds, but only for fugacities that are small in terms of the number of vertices.
In a neighboring circle of hard-core conjectures, \citet{cambie2023counterexamples} disproved five conjectures on the extremal graphs for the occupancy fraction and the independence polynomial among regular graphs of given girth, by computer search over small graphs.

\begin{theorem}\label{perkins_gang:thm:main}
For $m\ge1$ let $G_m=C_5\,\square\,K_{m,m}$.
Then $G_m$ is triangle-free and $(m+2)$-regular on $10m$ vertices, $\alpha(G_m)=4m$, and
\[
  \avgalpha_{G_m}(1)=\frac{13}{6}\,m\Bigl(1+O\bigl((2/3)^m\bigr)\Bigr),
  \qquad\text{so that}\qquad
  \frac{\alpha(G_m)}{\avgalpha_{G_m}(1)}\longrightarrow \frac{24}{13}=1.846\ldots<2 .
\]
\end{theorem}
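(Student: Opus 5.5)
The plan is to reduce everything to the structure of independent sets in the product, and then to a finite optimisation over partitions of the pentagon.

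\emph{Basic properties.} Write the vertices of $G_m$ as pairs $(i,v)$ with $i\in\mathbb{Z}_5$ and $v\in L\cup R$, the two sides of $K_{m,m}$. The graph is $(m+2)$-regular: each vertex has $2$ neighbours in its $C_5$-fibre and $m$ in its $K_{m,m}$-fibre. It is triangle-free because a triangle in a Cartesian product of two triangle-free graphs would need all three edges in one coordinate direction, and neither factor has a triangle.

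\emph{Structure of independent sets.} An independent set $I$ is the same thing as a family $(S_v)_{v\in L\cup R}$ of independent sets of $C_5$ such that $S_u\cap S_w=\emptyset$ whenever $u\in L$ and $w\in R$. Equivalently, $A=\bigcup_{u\in L}S_u$ and $B=\bigcup_{w\in R}S_w$ are disjoint.

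\emph{Independence number.} For the upper bound, note that $V(G_m)$ is covered by the $2m$ fibres $\mathbb{Z}_5\times\{v\}$. Each fibre meets $I$ in at most $\alpha(C_5)=2$ vertices, so $\alpha(G_m)\le 4m$. For the lower bound, take $S_u=\{0,2\}$ for every $u\in L$ and $S_w=\{1,3\}$ for every $w\in R$; these are disjoint, giving an independent set of size $4m$.

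\emph{Independent sets grouped by a partition.} For $X\subseteq\mathbb{Z}_5$ let $g(X)$ be the number of independent sets of $C_5$ contained in $X$, and let $s(X)$ be their total size. Every independent set of $G_m$ is compatible with some ordered pair $(X,\mathbb{Z}_5\setminus X)$, meaning $A\subseteq X$ and $B\subseteq \mathbb{Z}_5\setminus X$. For a fixed pair, the compatible independent sets number exactly $g(X)^m g(X^c)^m$. Under the uniform measure on them, the $S_v$ are independent and uniform, so their total size is $m\bigl(s(X)/g(X)+s(X^c)/g(X^c)\bigr)$ in expectation.

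\emph{The finite optimisation.} A case check over subsets of $C_5$ up to rotation and reflection gives the values of $g(X)g(X^c)$:
\begin{itemize}
\item $X=\emptyset$: $1\cdot 11=11$;
\item $X$ a single vertex: $2\cdot 8=16$;
\item $X$ an edge: $3\cdot 5=15$;
\item $X=\{0,2\}$, two non-adjacent vertices: $4\cdot 6=24$.
\end{itemize}
Larger $X$ repeat these values with the roles of $X$ and $X^c$ swapped. The maximum $24$ is attained exactly by the $10$ ordered pairs $X=\{j,j+2\}$, $X^c=\{j+1,j+3,j+4\}$ and their swaps. For such a pair, $s(\{0,2\})/g(\{0,2\})=4/4=1$. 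The independent sets of $\{1,3,4\}$ are $\emptyset,\{1\},\{3\},\{4\},\{1,3\},\{1,4\}$, so $s(\{1,3,4\})/g(\{1,3,4\})=7/6$. Each dominant pair therefore contributes expected size $m(1+7/6)=\tfrac{13}{6}m$.

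\emph{Combining the contributions (main obstacle).} Run inclusion–exclusion over the partitions with which an independent set is compatible, and do it at the level of the partition function $Z(\lambda)=\sum_{I}\lambda^{|I|}$. Then $\avgalpha_{G_m}(1)=Z'(1)/Z(1)$. The $10$ dominant families each have size $24^m$ and exact mean size $\tfrac{13}{6}m$.

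Every remaining term in $Z(1)$ is of two kinds. The first kind is a non-dominant partition, bounded by $16^m$. The second kind is an intersection of two distinct compatible-families, consisting of sets with $A\subseteq X\cap X'$ and $B\subseteq X^c\cap X'^c$. For two distinct dominant pairs these are at most $(g(X\cap X')g(X^c\cap X'^c))^m\le 16^m$, and the same bound holds for pairs with $X'\ne X$ generally, by monotonicity of $g$. In $Z'(1)$ each such term is at most $O(m)$ times its size. Hence
\[
Z(1)=10\cdot 24^m\bigl(1+O((2/3)^m)\bigr),\qquad Z'(1)=10\cdot 24^m\cdot\tfrac{13}{6}m\bigl(1+O((2/3)^m)\bigr),
\]
using $m(2/3)^m=O((2/3)^m m)$. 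To absorb the factor $m$ into the displayed error, replace $2/3$ by any ratio in $(2/3,1)$, or check that the correction is $O(m)$ times $(2/3)^m$ relative to the leading $m$.

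The delicate point is verifying that no overlap of two compatible-families is as large as $24^m$; the $\le 16^m$ claim has to be checked case by case on the small $C_5$ lattice. Given that, taking the ratio yields $\avgalpha_{G_m}(1)=\tfrac{13}{6}m\bigl(1+O((2/3)^m)\bigr)$, and therefore
\[
\frac{\alpha(G_m)}{\avgalpha_{G_m}(1)}\longrightarrow \frac{4}{13/6}=\frac{24}{13}.
\]
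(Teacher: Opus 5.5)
Your proposal is correct and is essentially the paper's argument. Your inclusion--exclusion over the $32$ covering families, once collapsed, is the paper's M\"obius identity $Z_m(y)=\sum_{X\cap Y=\varnothing}(-1)^{5-|X|-|Y|}\bigl(F_X(y)F_Y(y)\bigr)^m$; you simply bound the constantly many terms instead of computing their signs. The ten dominant pairs with value $24$, the next value $16$, and the $4m\cdot16^m$ bound on derivatives are exactly what the paper uses.

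The case check you defer is the ``moreover'' part of the paper's lemma. Enlarging $Y$ to $\mathbb{Z}_5\setminus X$ and reading your table forces $X$ to be a nonadjacent pair or its complement. A proper subset $Y$ of $\mathbb{Z}_5\setminus X$ then has $g(Y)\le 4$ (resp.\ $g(Y)\le 2$), so every non-complementary pair, in particular every $(X\cap X',X^c\cap X'^c)$ with $X\ne X'$, gives at most $16$.

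You also do not need to weaken $2/3$. The $O(m\,16^m)$ error in $Z'(1)$ is divided by the main term $\tfrac{13}{6}m\cdot10\cdot24^m$, which carries the same factor $m$.
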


\begin{corollary}\label{perkins_gang:cor:refutation}
Conjecture~2 of \citet{davies2018average} is false.
Indeed $G_m$ is triangle-free of minimum degree $d=m+2\to\infty$ while $\alpha(G_m)/\avgalpha_{G_m}(1)\to24/13$, so no bound of the form $2-o_d(1)$ can hold.
Since a triangle-free graph is $K_r$-free for every $r\ge3$, the same graphs refute the minimum-degree assertion of \citet[Conjecture~4]{davies2018average} for every fixed $r\ge3$.
\end{corollary}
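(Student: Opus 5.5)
The corollary follows directly from Theorem~\ref{perkins_gang:thm:main}, together with a careful reading of the quantifiers in the two conjectures.

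\emph{Conjecture~2.} Read it as the assertion that for every $\varepsilon>0$ there is $d_0$ such that every triangle-free graph $G$ of minimum degree $d\ge d_0$ satisfies $\alpha(G)/\avgalpha_G(1)\ge 2-\varepsilon$. I will fix any $\varepsilon<2-24/13=2/13$, say $\varepsilon=1/20$. By Theorem~\ref{perkins_gang:thm:main}, the ratio $\alpha(G_m)/\avgalpha_{G_m}(1)$ converges to $24/13<2-\varepsilon$. Hence there is $m_1$ such that the ratio is below $2-\varepsilon$ for all $m\ge m_1$. The theorem also states that $G_m$ is triangle-free and $(m+2)$-regular, so its minimum degree is $d=m+2$. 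Taking $m\ge\max(m_1,d_0)$ gives a triangle-free graph of minimum degree at least $d_0$ violating the bound. This contradicts the conjecture for this $\varepsilon$, so no bound of the form $2-o_d(1)$ can hold.

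\emph{Conjecture~4.} Every triangle-free graph is $K_r$-free for each $r\ge3$, since any $K_r$ with $r\ge 3$ contains a triangle. So the same family $G_m$ is admissible for the $K_r$-free version. The remaining step is to compare the limit $24/13$ with the constant that \citet[Conjecture~4]{davies2018average} asserts for $K_r$-free graphs of large minimum degree. The argument above transfers verbatim for each fixed $r$ provided that constant is strictly greater than $24/13$.

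\emph{Main obstacle.} This last check is where the corollary could fail. The argument requires that, in its minimum-degree form, Conjecture~4 predicts a limiting ratio strictly larger than $24/13$ for every fixed $r\ge 3$. For $r=3$ this should be the constant $2$ again, so it is consistent with Conjecture~2. The risk is that the conjectured constant depends on $r$ and decreases with $r$. In that case one must verify that it stays above $24/13$ for all $r$, or the corollary has to be restricted to those $r$ for which it does. If that comparison holds, the corollary is a routine limit argument and requires no computation beyond Theorem~\ref{perkins_gang:thm:main}.
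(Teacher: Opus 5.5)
Your argument for Conjecture~2 is correct and is exactly the paper's route. The corollary has no separate proof and is read off from Theorem~\ref{perkins_gang:thm:main}: $G_m$ is triangle-free and $(m+2)$-regular, while $\alpha(G_m)/\overline{\alpha}_{G_m}(1)\to 24/13<2$. Your choice $\varepsilon=1/20<2/13$ only makes the quantifiers hidden in ``$2-o_d(1)$'' explicit.

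The second half, however, is left conditional. The comparison you flag is the entire content of the corollary's last sentence, so as written that sentence is not established. What closes it is the content of the cited conjecture, not a further argument. In Conjecture~4 of Davies, Jenssen, Perkins and Roberts, the dependence on $r$ is confined to the degree-free clause, $\alpha(G)/\overline{\alpha}_G(1)\ge 1+1/r$ for $K_r$-free $G$. The minimum-degree clause imposes on $K_r$-free graphs of minimum degree $d$ the same bound $2-o_d(1)$ as Conjecture~2, with constant $2$ for every fixed $r$. So the decreasing constant you worry about does not occur. Your verbatim transfer then goes through, because each $G_m$, being triangle-free, is $K_r$-free. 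This also explains why the corollary speaks only of the minimum-degree assertion: the degree-free clause is not contradicted, since $24/13>4/3\ge 1+1/r$ for $r\ge 3$.
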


The product structure is essential: $K_{m,m}$ alone is triangle-free and $m$-regular with $\alpha=m$, $|\I(K_{m,m})|=2^{m+1}-1$ and $\sum_I|I|=m2^m$, so its ratio is exactly $2-2^{-m}$.
Conjecture~1 of \citet{davies2018average}, that $\alpha(G)/\avgalpha_G(1)\ge4/3$ for every triangle-free graph $G$, is untouched by these examples, since both $24/13$ and the value $32/19$ obtained below exceed $4/3$; so is the first assertion of Conjecture~4, which asks only for $1+1/r$ in the $K_r$-free case.
Conjecture~3 of \citet{davies2018average}, the version at general fugacity, is also untouched: we compute only at $\lambda=1$, whereas that conjecture is free to choose $\lambda$ small, and by the same deduction it still implies $R(3,k)\le(\tfrac12+o(1))k^2/\log k$.
Finally, $\alpha(G_m)/|V(G_m)|=2/5$, so these graphs lie in the range where the general bound of \citet{davies2018average} already forces the ratio to be at least $7/5$; they say nothing about triangle-free graphs with $\alpha(G)=o(|V(G)|)$, which is the range relevant to $R(3,k)$.

We now summarize the computation.
An independent set of $C_5\,\square\,K_{m,m}$ is a family of independent sets of $C_5$, one over each vertex of $K_{m,m}$, subject only to the requirement that the union of those over one side be disjoint from the union of those over the other.
Classifying such a family by its pair of unions and inverting over the Boolean lattice $2^{\mathbb{Z}_5}$ writes the independence polynomial of $G_m$ as a signed sum of $3^5$ terms $(F_XF_Y)^m$ indexed by the disjoint pairs $X,Y\subseteq\mathbb{Z}_5$, where $F_X$ is the independence polynomial of the subgraph of $C_5$ induced on $X$.
The key point is that $F_X(1)F_Y(1)$ is maximized only when one of $X,Y$ is a nonadjacent pair of $C_5$ and the other is its complement.
The mean $\tfrac32+\tfrac23=\tfrac{13}{6}$ attached to that configuration is the average number of chosen vertices per pair of opposite fibers, which gives $\avgalpha_{G_m}(1)\approx\tfrac{13}{6}m$.

\subsubsection{The construction}

In the \emph{Cartesian product} $G\,\square\,H$ the vertex set is $V(G)\times V(H)$, and $(g,h)$ is adjacent to $(g',h')$ exactly when $g=g'$ and $hh'\in E(H)$, or $h=h'$ and $gg'\in E(G)$.
Degrees add, so $G_m=C_5\,\square\,K_{m,m}$ is $(m+2)$-regular on $5\cdot 2m=10m$ vertices, and in particular $\delta(G_m)=m+2$.

\begin{lemma}\label{perkins_gang:lem:trianglefree}
The Cartesian product of two triangle-free graphs is triangle-free.
In particular $G_m$ is triangle-free.
\end{lemma}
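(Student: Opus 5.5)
The plan is to take a triangle in $G\,\square\,H$ and project it onto $G$ and onto $H$. Every edge of the Cartesian product is of one of two kinds. A \emph{$G$-edge} $(g,h)(g',h)$ has $gg'\in E(G)$ and the same second coordinate. An \emph{$H$-edge} $(g,h)(g,h')$ has $hh'\in E(H)$ and the same first coordinate. So suppose $(g_1,h_1),(g_2,h_2),(g_3,h_3)$ span a triangle, and sort its three edges by kind.

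\emph{All three edges of one kind.} If all three are $G$-edges, then $h_1=h_2=h_3$, and $g_1g_2$, $g_2g_3$, $g_1g_3$ all lie in $E(G)$. The $g_i$ are pairwise distinct because adjacent vertices of $G$ are distinct, so $g_1g_2g_3$ is a triangle in $G$, a contradiction. The case of three $H$-edges is symmetric and gives a triangle in $H$.

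\emph{Mixed kinds.} Otherwise some vertex, say $(g_2,h_2)$, meets one edge of each kind. Take $(g_1,h_1)(g_2,h_2)$ to be the $G$-edge, so $h_1=h_2$ and $g_1\ne g_2$. Take $(g_2,h_2)(g_3,h_3)$ to be the $H$-edge, so $g_3=g_2$ and $h_3\ne h_2$. Then $(g_1,h_1)$ and $(g_3,h_3)$ differ in both coordinates, since $g_1\ne g_2=g_3$ and $h_1=h_2\ne h_3$. But product edges differ in exactly one coordinate, so these two vertices are not adjacent, a contradiction. This case analysis is the only real content of the proof, and it is short.

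For the second assertion, $C_5$ has girth $5$ and $K_{m,m}$ is bipartite, so both are triangle-free. The first assertion then shows that $G_m=C_5\,\square\,K_{m,m}$ is triangle-free.
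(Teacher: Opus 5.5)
Your proof is correct and is essentially the paper's argument: both rest on the fact that every edge of $G\,\square\,H$ changes exactly one coordinate. The paper uses pigeonhole to find two same-kind edges at a common vertex and shows the third edge must be of that kind too, while you use the contrapositive, that a $G$-edge and an $H$-edge at a common vertex leave the other two endpoints differing in both coordinates; either way the triangle is of one kind and projects to a triangle in $G$ or $H$.
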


\begin{proof}
Every edge of $G\,\square\,H$ changes exactly one coordinate.
Let $u,v,w$ span a triangle.
By the pigeonhole principle two of its three edges change the same coordinate, say the first, and these two edges share a vertex, say $uv$ and $vw$.
Then $u,v,w$ all have the same second coordinate, so the third edge $uw$ also changes only the first coordinate.
Hence the first coordinates of $u,v,w$ are pairwise distinct and pairwise adjacent, giving a triangle in $G$.
The same argument with the coordinates exchanged gives a triangle in $H$ when the repeated coordinate is the second.
Since $C_5$ and $K_{m,m}$ are triangle-free, so is $G_m$.
\end{proof}

Throughout, $\mathbb{Z}_5=\{0,1,2,3,4\}$ is the vertex set of $C_5$, with $i$ adjacent to $i\pm1$, and $L$ and $R$ are the two sides of $K_{m,m}$, each of size $m$.
For $X\subseteq\mathbb{Z}_5$ we write $\I(X)$ for the family of independent sets of the induced subgraph $C_5[X]$ and put
\[
  F_X(y)=\sum_{S\in\I(X)}y^{|S|},
  \qquad
  f(X)=F_X(1)=|\I(X)| ,
\]
so that $F_X$ is the \emph{independence polynomial} of $C_5[X]$.
Observe that $f$ is monotone: $X\subseteq X'$ implies $f(X)\le f(X')$.

The \emph{fiber} $\mathbb{Z}_5\times\{x\}$ over a vertex $x$ of $K_{m,m}$ induces a copy of $C_5$, and the two fibers over adjacent $x,x'$ are joined by a perfect matching that preserves the $\mathbb{Z}_5$-coordinate, as in Figure~\ref{perkins_gang:fig:fibers}.
Hence an independent set $I$ of $G_m$ is the same thing as a family $(S_x)_{x\in L\cup R}$ with
\[
  S_x\in\I(\mathbb{Z}_5)\ \text{for every }x,
  \qquad
  S_x\cap S_{x'}=\varnothing\ \text{whenever }xx'\in E(K_{m,m}),
\]
and then $|I|=\sum_x|S_x|$.
Since $K_{m,m}$ is complete bipartite, the disjointness constraints say precisely that
\begin{equation}\label{perkins_gang:eq:constraint}
  \Bigl(\bigcup_{x\in L}S_x\Bigr)\cap\Bigl(\bigcup_{x\in R}S_x\Bigr)=\varnothing .
\end{equation}

\begin{figure}[t]
\centering
\begin{tikzpicture}[
    scale=1,
    onv/.style={circle,draw,fill=black!65,inner sep=1.7pt},
    offv/.style={circle,draw,fill=white,inner sep=1.7pt},
    halo/.style={circle,fill=black!12,minimum size=15pt,inner sep=0pt}
]
  \node[halo] at (0,2) {};
  \node[halo] at (0,0) {};
  \node[halo] at (3.6,1) {};
  \node[halo] at (3.6,-1) {};
  \node[halo] at (3.6,-2) {};
  \node[onv,label=left:$0$]  (L0) at (0,2)  {};
  \node[offv,label=left:$1$] (L1) at (0,1)  {};
  \node[onv,label=left:$2$]  (L2) at (0,0)  {};
  \node[offv,label=left:$3$] (L3) at (0,-1) {};
  \node[offv,label=left:$4$] (L4) at (0,-2) {};
  \node[offv,label=right:$0$] (R0) at (3.6,2)  {};
  \node[onv,label=right:$1$]  (R1) at (3.6,1)  {};
  \node[offv,label=right:$2$] (R2) at (3.6,0)  {};
  \node[onv,label=right:$3$]  (R3) at (3.6,-1) {};
  \node[offv,label=right:$4$] (R4) at (3.6,-2) {};
  \draw (L0) -- (L1) -- (L2) -- (L3) -- (L4);
  \draw (L0) .. controls (-1.1,2.3) and (-1.1,-2.3) .. (L4);
  \draw (R0) -- (R1) -- (R2) -- (R3) -- (R4);
  \draw (R0) .. controls (4.7,2.3) and (4.7,-2.3) .. (R4);
  \foreach \i in {0,1,2,3,4}{\draw[gray] (L\i) -- (R\i);}
  \node at (0,2.9) {$x\in L$};
  \node at (3.6,2.9) {$x'\in R$};
\end{tikzpicture}
\caption{Two fibers of $G_m=C_5\,\square\,K_{m,m}$, over adjacent vertices $x\in L$ and $x'\in R$.
Each fiber induces a copy of $C_5$, and the gray matching between them preserves the $\mathbb{Z}_5$-coordinate, so the independent sets chosen in fibers on opposite sides must be disjoint.
The gray halos mark a dominant splitting from Lemma~\ref{perkins_gang:lem:extremal}: every fiber over $L$ takes its independent set inside the nonadjacent pair $X=\{0,2\}$ and every fiber over $R$ inside the complement $Y=\{1,3,4\}$, and this is the splitting of $\mathbb{Z}_5$ that maximizes $f(X)f(Y)$.
The filled vertices show one admissible choice inside them, namely $\{0,2\}$ over $x$ and $\{1,3\}$ over $x'$.}
\label{perkins_gang:fig:fibers}
\end{figure}

\begin{lemma}\label{perkins_gang:lem:alpha}
$\alpha(G_m)=4m$.
\end{lemma}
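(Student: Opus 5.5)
The plan is to use the fiber description of independent sets set up just before the statement: an independent set of $G_m$ is a family $(S_x)_{x\in L\cup R}$ of independent sets of $C_5$ satisfying the disjointness condition \eqref{perkins_gang:eq:constraint}, and its size is $\sum_x|S_x|$. I would prove the lower bound by an explicit construction and the upper bound by a short case analysis on the two unions $U_L=\bigcup_{x\in L}S_x$ and $U_R=\bigcup_{x\in R}S_x$, which are disjoint subsets of $\mathbb{Z}_5$.

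For the lower bound, take $S_x=\{0,2\}$ for every $x\in L$ and $S_x=\{1,3\}$ for every $x\in R$, as in Figure~\ref{perkins_gang:fig:fibers}. Both sets are independent in $C_5$, and they are disjoint, so \eqref{perkins_gang:eq:constraint} holds. The resulting independent set has size $2m+2m=4m$.

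For the upper bound, note first that every $S_x$ has size at most $\alpha(C_5)=2$. Moreover, each $S_x$ with $x\in L$ lies inside $U_L$ and is independent in the induced subgraph $C_5[U_L]$, so $|S_x|\le\alpha(C_5[U_L])$. The same holds on the right, which gives
\[
|I|\le m\bigl(\alpha(C_5[U_L])+\alpha(C_5[U_R])\bigr).
\]
It therefore suffices to show that $\alpha(C_5[X])+\alpha(C_5[Y])\le4$ whenever $X$ and $Y$ are disjoint subsets of $\mathbb{Z}_5$. This is clear, since each term is at most $2$.

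Neither step is a real obstacle. The only point needing care is the reduction to the fiber families, which was already established before the statement.
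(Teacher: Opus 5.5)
Your proof is correct and matches the paper's: the lower bound uses the same construction ($S_x=\{0,2\}$ on $L$, $S_x=\{1,3\}$ on $R$), and the upper bound sums $|S_x|\le\alpha(C_5)=2$ over the $2m$ fibers. The detour through $U_L,U_R$ and the bound $\alpha(C_5[U_L])+\alpha(C_5[U_R])\le4$ is harmless but unnecessary, since your final step uses only $|S_x|\le2$.
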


\begin{proof}
Each $S_x$ is an independent set of $C_5$, hence $|S_x|\le2$, and summing over the $2m$ fibers gives $\alpha(G_m)\le4m$.
For the lower bound take $S_x=\{0,2\}$ for every $x\in L$ and $S_x=\{1,3\}$ for every $x\in R$.
Both sets are independent in $C_5$ and they are disjoint, so~\eqref{perkins_gang:eq:constraint} holds and the resulting independent set has size $4m$.
\end{proof}

\subsubsection{Exact counting by M\"obius inversion}

Let
\[
  Z_m(y)=\sum_{I\in\I(G_m)}y^{|I|}
\]
be the independence polynomial of $G_m$, so that $|\I(G_m)|=Z_m(1)$ and $\avgalpha_{G_m}(1)=Z_m'(1)/Z_m(1)$.

One is tempted to sum $\bigl(F_X(y)F_Y(y)\bigr)^m$ over the disjoint pairs $(X,Y)$, reading $X$ and $Y$ as the two unions in~\eqref{perkins_gang:eq:constraint}.
This overcounts: the quantity $\bigl(F_X(y)F_Y(y)\bigr)^m$ records the families with $S_x\subseteq X$ for $x\in L$ and $S_x\subseteq Y$ for $x\in R$, so a single independent set is counted once for every pair $(X,Y)$ whose two parts contain its two unions.
The remedy is to force the unions to be attained exactly.
For $X\subseteq\mathbb{Z}_5$ define
\[
  g_X(y)=\sum_{\substack{(S_1,\dots,S_m)\in\I(\mathbb{Z}_5)^m\\ S_1\cup\dots\cup S_m=X}}
        y^{|S_1|+\dots+|S_m|},
\]
the generating function of the $m$-tuples of independent sets of $C_5$ whose union is exactly $X$.
A tuple has union contained in $X$ if and only if every $S_i$ lies in $\I(X)$, so
\[
  \sum_{X'\subseteq X}g_{X'}(y)=F_X(y)^m .
\]
M\"obius inversion in the Boolean lattice $2^{\mathbb{Z}_5}$ therefore gives
\begin{equation}\label{perkins_gang:eq:mobius}
  g_X(y)=\sum_{X'\subseteq X}(-1)^{|X\setminus X'|}F_{X'}(y)^m .
\end{equation}
Classifying an independent set of $G_m$ by the ordered pair of unions in~\eqref{perkins_gang:eq:constraint} partitions $\I(G_m)$, and yields the exact identity
\begin{equation}\label{perkins_gang:eq:partition}
  Z_m(y)=\sum_{\substack{X,Y\subseteq\mathbb{Z}_5\\ X\cap Y=\varnothing}}g_X(y)\,g_Y(y).
\end{equation}
Substituting~\eqref{perkins_gang:eq:mobius} into~\eqref{perkins_gang:eq:partition} collapses to a sum over the same index set with explicit signs.

\begin{lemma}\label{perkins_gang:lem:closed}
For every $m\ge1$,
\[
  Z_m(y)=\sum_{\substack{X,Y\subseteq\mathbb{Z}_5\\ X\cap Y=\varnothing}}
  (-1)^{\,5-|X|-|Y|}\bigl(F_X(y)F_Y(y)\bigr)^{m}.
\]
\end{lemma}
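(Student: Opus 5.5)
The plan is to substitute the M\"obius formula \eqref{perkins_gang:eq:mobius} into the partition identity \eqref{perkins_gang:eq:partition} and reorganize the resulting double sum by the inner indices. Concretely,
\[
  Z_m(y)=\sum_{\substack{X,Y\subseteq\mathbb{Z}_5\\ X\cap Y=\varnothing}}\ \sum_{X'\subseteq X}\ \sum_{Y'\subseteq Y}(-1)^{|X\setminus X'|+|Y\setminus Y'|}\bigl(F_{X'}(y)F_{Y'}(y)\bigr)^m .
\]
Then I will interchange the order of summation, fixing the disjoint pair $(X',Y')$ and summing over all disjoint pairs $(X,Y)$ with $X\supseteq X'$ and $Y\supseteq Y'$. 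Such pairs are in bijection with the ways of assigning each element of the complement $W=\mathbb{Z}_5\setminus(X'\cup Y')$ one of three labels: ``put in $X$'', ``put in $Y$'', or ``put in neither''. Elements of $X'$ and $Y'$ are forced, and no element can go to both sides by disjointness.

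Under this bijection, the sign $(-1)^{|X\setminus X'|+|Y\setminus Y'|}$ factors over $W$. An element contributes $-1$ if it is placed in $X$ or $Y$, and $+1$ if it is placed in neither. The inner sum is therefore
\[
  \prod_{w\in W}(-1-1+1)=(-1)^{|W|}=(-1)^{5-|X'|-|Y'|},
\]
using $|W|=5-|X'|-|Y'|$ since $X'$ and $Y'$ are disjoint. Renaming $(X',Y')$ as $(X,Y)$ gives exactly the formula in Lemma~\ref{perkins_gang:lem:closed}.

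The only point needing care, which I expect to be the main (mild) obstacle, is checking the correctness of \eqref{perkins_gang:eq:partition} and of the interchange. First, every independent set of $G_m$ determines a unique ordered pair of unions, and these unions are disjoint by \eqref{perkins_gang:eq:constraint}. Second, the sets with prescribed unions $(X,Y)$ correspond to independent pairs of $m$-tuples: the $L$-tuple with union exactly $X$ and the $R$-tuple with union exactly $Y$, which yields the product $g_Xg_Y$. Third, the reindexed sum must range over all disjoint $(X',Y')$, and each such pair appears with the stated multiplicity. All sums are finite, so the interchange is unproblematic. The identity then holds as polynomials in $y$ for every $m\ge1$.
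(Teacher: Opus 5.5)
Your proposal is correct and matches the paper's proof. Like the paper, you substitute the M\"obius formula into the partition identity, fix the disjoint inner pair $(X',Y')$, and get the coefficient $\prod_{w\in W}(-1-1+1)=(-1)^{|W|}$ by assigning each element of $W$ to $X$, to $Y$, or to neither. Your extra re-check of the partition identity does no harm, though the paper treats it as already established before the lemma.
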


\begin{proof}
If $X\cap Y=\varnothing$ and $X'\subseteq X$, $Y'\subseteq Y$, then $X'\cap Y'=\varnothing$, so expanding~\eqref{perkins_gang:eq:mobius} in~\eqref{perkins_gang:eq:partition} produces a linear combination of the terms $\bigl(F_{X'}F_{Y'}\bigr)^m$ indexed by disjoint pairs $(X',Y')$.
Fix such a pair and set $W=\mathbb{Z}_5\setminus(X'\cup Y')$.
The pairs $(X,Y)$ contributing to it are exactly $X=X'\cup A$ and $Y=Y'\cup B$ with $A,B$ disjoint subsets of $W$, and each contributes the sign $(-1)^{|A|+|B|}$.
Assigning each $w\in W$ independently to $A$, to $B$, or to neither, with respective weights $-1,-1,+1$, the coefficient equals
\[
  \sum_{\substack{A,B\subseteq W\\ A\cap B=\varnothing}}(-1)^{|A|+|B|}
  =\prod_{w\in W}(1-1-1)=(-1)^{|W|}=(-1)^{\,5-|X'|-|Y'|}. \qedhere
\]
\end{proof}

\subsubsection{The dominant pairs}

By Lemma~\ref{perkins_gang:lem:closed} the exponential growth of $Z_m(1)$ is governed by $\max f(X)f(Y)$ over disjoint pairs.
We determine both the maximum and the next value, the latter being what controls the error term.

\begin{lemma}\label{perkins_gang:lem:extremal}
Let $X,Y\subseteq\mathbb{Z}_5$ be disjoint.
Then $f(X)f(Y)\le24$, with equality exactly when one of $X,Y$ is a pair of nonadjacent vertices of $C_5$ and the other is its complement, which happens for $10$ ordered pairs.
Moreover, if $f(X)f(Y)\neq24$ then $f(X)f(Y)\le16$.
\end{lemma}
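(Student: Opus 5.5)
The plan is a finite case analysis, organized by tabulating $f$ on subsets of $\mathbb{Z}_5$ up to the dihedral symmetry of $C_5$. Since $f(X)=|\I(X)|$ depends only on the isomorphism type of $C_5[X]$, I would first list the values:
\begin{itemize}
\item $f(\varnothing)=1$ and $f$ of a single vertex is $2$.
\item A nonadjacent pair induces $2K_1$, so $f=4$; an adjacent pair induces $K_2$, so $f=3$.
\item Three vertices induce either a path $P_3$ (three consecutive vertices), with $f=5$, or $K_2\cup K_1$ (e.g.\ $\{0,1,3\}$), with $f=6$.
\item Four vertices induce $P_4$, with $f=8$.
\item $f(\mathbb{Z}_5)=11$.
\end{itemize}
Each of these is a one-line count of independent sets.

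Next, by monotonicity of $f$, enlarging $Y$ to $\mathbb{Z}_5\setminus X$ never decreases $f(X)f(Y)$. So to find the maximum it suffices to examine complementary splits $(X,\mathbb{Z}_5\setminus X)$. Those splits give the following products:
\begin{itemize}
\item $|X|=0$: $1\cdot11=11$.
\item $|X|=1$: $2\cdot8=16$.
\item $X$ an adjacent pair, so the complement is three consecutive vertices: $3\cdot5=15$.
\item $X$ a nonadjacent pair, e.g.\ $\{0,2\}$ with complement $\{1,3,4\}\cong K_2\cup K_1$: $4\cdot6=24$.
\end{itemize}
The complement of a nonadjacent pair is always of type $K_2\cup K_1$. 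Hence the maximum is $24$, attained among complementary splits exactly when one side is a nonadjacent pair. There are $5$ nonadjacent pairs and two choices of which side holds the pair, giving $10$ ordered pairs.

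The remaining work is the equality characterization and the gap, which I expect to be the only real obstacle. The issue is non-complementary pairs: when $X\cup Y\neq\mathbb{Z}_5$, I must show $f(X)f(Y)\le16$. Write $W=\mathbb{Z}_5\setminus(X\cup Y)\neq\varnothing$, and for definiteness take $|X|\le|Y|$, so $|X|+|Y|\le4$ and $|X|\le2$.
\begin{itemize}
\item If $|X|=0$, then $f(X)f(Y)=f(Y)\le f(P_4)=8$.
\item If $|X|=1$, then $|Y|\le3$ and $f(Y)\le6$, so $f(X)f(Y)\le12$.
\item If $|X|=2$, then $|Y|=2$ and $f(X)f(Y)\le4\cdot4=16$.
\end{itemize}
So every non-complementary disjoint pair gives at most $16$. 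Among complementary splits, every value other than $24$ was shown above to be at most $16$. Together these give both the equality statement and the bound $f(X)f(Y)\le16$ whenever $f(X)f(Y)\ne24$.
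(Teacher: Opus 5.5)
Your proof is correct. The first half (tabulating $f$ on the induced subgraphs of $C_5$, then using monotonicity to reduce the maximum to complementary splits) is exactly the paper's argument. One small point: your table of complementary splits lists only $|X|\le2$, so you should say explicitly that the splits with $|X|\ge3$ are covered by swapping the two parts, as the paper does.

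For the gap statement you take a slightly different route.

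\emph{Your route.} You split according to whether $X\cup Y=\mathbb{Z}_5$. For non-complementary pairs you use only the size constraint $|X|+|Y|\le4$ with $|X|\le|Y|$, which gives the crude bounds $8$, $12$, $16$ directly.

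\emph{The paper's route.} It argues from the assumption $f(X)f(Y)>16$. Enlarging $Y$ to $\mathbb{Z}_5\setminus X$ forces, by the table, $X$ to be a nonadjacent pair or the complement of one. It then checks that every proper subset $Y$ of the opposite side has $f(Y)$ too small, so $Y$ must be the full complement and the product is $24$.

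\emph{Comparison.} Your version replaces that subset-by-subset check with a cardinality count. It also shows up front that non-complementary pairs can never reach $24$, so the equality characterization falls out immediately. Both are equally elementary finite verifications.
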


\begin{proof}
A direct count gives the value of $f$ on every induced subgraph of $C_5$: it is $1$ on $\varnothing$, $2$ on a single vertex, $3$ on an edge and $4$ on a nonedge, $5$ on $P_3$ and $6$ on $K_2\cup K_1$, $8$ on $P_4$, and $11$ on $C_5$ itself.
Since $f$ is monotone and $X\cap Y=\varnothing$, replacing $Y$ by $\mathbb{Z}_5\setminus X$ can only increase $f(X)f(Y)$.
It therefore suffices to tabulate the complementary pairs $(X,\mathbb{Z}_5\setminus X)$, up to the rotational symmetry of $C_5$ and up to swapping the two parts:
\[
  \begin{array}{c|c|c|c}
    |X| & C_5[X] & C_5[\mathbb{Z}_5\setminus X] & f(X)f(\mathbb{Z}_5\setminus X)\\\hline
    0 & \varnothing & C_5 & 1\cdot11=11\\
    1 & K_1 & P_4 & 2\cdot8=16\\
    2,\ X\ \text{an edge} & K_2 & P_3 & 3\cdot5=15\\
    2,\ X\ \text{a nonedge} & \overline{K_2} & K_2\cup K_1 & 4\cdot6=24
  \end{array}
\]
The cases $|X|\ge3$ are obtained by swapping the two parts.
Hence $\max f(X)f(Y)=24$ over all disjoint pairs, attained on complementary pairs only in the stated configuration; there are $5$ nonadjacent pairs in $C_5$, giving $10$ ordered pairs.

Suppose now $f(X)f(Y)>16$.
Enlarging $Y$ to $\mathbb{Z}_5\setminus X$ gives $f(X)f(\mathbb{Z}_5\setminus X)>16$, so by the table $X$ is either a nonadjacent pair or the complement of one.
If $X=\{a,a+2\}$ is the nonadjacent pair then $f(X)=4$, so $f(Y)>4$; but $Y$ is contained in $\mathbb{Z}_5\setminus X$, an edge plus an isolated vertex, whose proper subsets all have $f\le4$, so $Y=\mathbb{Z}_5\setminus X$ and $f(X)f(Y)=24$.
If instead $f(X)=6$ then $f(Y)>16/6$ with $Y$ contained in a nonadjacent pair, whence $f(Y)\in\{1,2,4\}$ and only $f(Y)=4$ survives, again giving $24$.
Therefore every value other than $24$ is at most $16$.
\end{proof}

For a dominant pair, say $X=\{0,2\}$ and $Y=\{1,3,4\}$, the induced subgraphs are $\overline{K_2}$ and $K_2\cup K_1$, so
\[
  F_X(y)=(1+y)^2,
  \qquad
  F_Y(y)=(1+y)(1+2y),
  \qquad
  P(y):=F_X(y)F_Y(y)=(1+y)^3(1+2y).
\]
Thus $P(1)=8\cdot3=24$ and
\begin{equation}\label{perkins_gang:eq:mean}
  \frac{P'(1)}{P(1)}=\Bigl.\frac{3}{1+y}\Bigr|_{y=1}+\Bigl.\frac{2}{1+2y}\Bigr|_{y=1}
  =\frac32+\frac23=\frac{13}{6}.
\end{equation}
All five rotations of the pair $(X,Y)$, in both orders, give the same polynomial $P$.
Since $P'(1)/P(1)$ is the sum of the mean sizes of a uniformly random element of $\I(X)$ and of $\I(Y)$, the constant $13/6$ has a per-vertex reading: a vertex of $L$ contributes on average $1$, the mean of $|S|$ over the four independent subsets of a nonadjacent pair, and a vertex of $R$ contributes $7/6$, the mean of $|S|$ over the six independent subsets of an edge plus an isolated vertex.

\subsubsection{Proof of the main theorem}

\begin{proof}[Proof of Theorem~\ref{perkins_gang:thm:main}]
The graph-theoretic assertions are Lemmas~\ref{perkins_gang:lem:trianglefree} and~\ref{perkins_gang:lem:alpha} together with the degree count above.
Split the sum of Lemma~\ref{perkins_gang:lem:closed} into the $10$ dominant pairs, each of which contributes $+P(y)^m$ by Lemma~\ref{perkins_gang:lem:extremal} and the sign $(-1)^{5-2-3}=+1$, and a remainder:
\[
  Z_m(y)=10\,P(y)^m+E_m(y),
  \qquad
  E_m(y)=\sum_{\substack{(X,Y)\ \text{disjoint}\\ \text{not dominant}}}
  (-1)^{5-|X|-|Y|}\bigl(F_X(y)F_Y(y)\bigr)^m.
\]
Assigning each element of $\mathbb{Z}_5$ to $X$, to $Y$, or to neither shows that there are $3^5=243$ ordered disjoint pairs in all, and by Lemma~\ref{perkins_gang:lem:extremal} every nondominant one has $F_X(1)F_Y(1)\le16$.
Hence
\[
  |E_m(1)|\le 243\cdot16^m .
\]
For the derivative, each nondominant term satisfies
\[
  \Bigl|\tfrac{d}{dy}\bigl(F_X(y)F_Y(y)\bigr)^m\Bigr|_{y=1}
  = m\bigl(F_X(1)F_Y(1)\bigr)^{m}\cdot\frac{(F_XF_Y)'(1)}{F_X(1)F_Y(1)}
  \le 4m\cdot16^m,
\]
because $(F_XF_Y)'(1)/(F_XF_Y)(1)$ is a sum of two mean independent set sizes in $C_5$, each at most $2$.
Hence $|E_m'(1)|\le 972\,m\,16^m$.
Using $P(1)=24$ and~\eqref{perkins_gang:eq:mean},
\[
  Z_m(1)=10\cdot24^m\Bigl(1+O\bigl((2/3)^m\bigr)\Bigr),
  \qquad
  Z_m'(1)=\frac{13}{6}\,m\cdot10\cdot24^m\Bigl(1+O\bigl((2/3)^m\bigr)\Bigr),
\]
since $972\,m\,16^m$ divided by $\tfrac{13}{6}m\cdot10\cdot24^m$ is $O\bigl((2/3)^m\bigr)$.
Dividing,
\[
  \avgalpha_{G_m}(1)=\frac{Z_m'(1)}{Z_m(1)}
  =\frac{13}{6}\,m\Bigl(1+O\bigl((2/3)^m\bigr)\Bigr),
\]
and with $\alpha(G_m)=4m$ the ratio tends to $4/(13/6)=24/13$, completing the proof.
\end{proof}

\begin{remark}\label{perkins_gang:rem:small}
The exact values, from Lemma~\ref{perkins_gang:lem:closed}, are
\[
  \begin{array}{c|c|c|c|c}
    m & |\I(G_m)| & \sum_{I}|I| & \avgalpha_{G_m}(1) & \alpha(G_m)/\avgalpha_{G_m}(1)\\\hline
    1 & 81      & 200      & 200/81      & 81/50=1.6200\ldots\\
    2 & 3561    & 16260    & 5420/1187   & 2374/1355=1.7520\ldots\\
    3 & 113541  & 746850   & 248950/37847& 227082/124475=1.8243\ldots
  \end{array}
\]
The ratio is not monotone in $m$: it climbs above its limit $24/13=1.8461\ldots$, and its largest value $426826898/228757635=1.8658\ldots$ is attained at $m=6$.
It never exceeds $1.87$, since the bounds on $E_m(1)$ and $E_m'(1)$ in the proof of Theorem~\ref{perkins_gang:thm:main} give
\[
  \frac{\alpha(G_m)}{\avgalpha_{G_m}(1)}
  \le\frac{4\bigl(10+243(2/3)^m\bigr)}{\tfrac{65}{3}-972(2/3)^m}<1.864
  \qquad(m\ge22),
\]
and the remaining $m\le21$ are evaluated directly from Lemma~\ref{perkins_gang:lem:closed}.
\end{remark}

\begin{remark}\label{perkins_gang:rem:variants}
The same computation applies with $C_5$ replaced by any triangle-free graph $H$, the sum in Lemma~\ref{perkins_gang:lem:closed} then running over the disjoint pairs of subsets of $V(H)$ with sign $(-1)^{|V(H)|-|X|-|Y|}$.
Take for $H$ the circulant $C_{13}(1,5)$ on $\mathbb{Z}_{13}$, in which $i$ is joined to $i\pm1$ and $i\pm5$; it is $4$-regular and triangle-free with $\alpha(H)=4$, so it witnesses $R(3,5)\ge14$, and the matching upper bound is due to \citet{greenwood1955combinatorial}.
Then $C_{13}(1,5)\,\square\,K_{m,m}$ is triangle-free and $(m+4)$-regular, and it has independence number $8m$ because $H$ contains two disjoint independent $4$-sets.
Over disjoint pairs $X,Y\subseteq\mathbb{Z}_{13}$ the maximum of $f(X)f(Y)$ is $1728=36\cdot48$, attained at exactly $52$ ordered pairs, all complementary with $\{|X|,|Y|\}=\{6,7\}$ and hence of sign $+1$; the next value is $1440$, and every dominant pair has mean $19/4$ in place of $13/6$.
The argument above then gives
\[
  \frac{\alpha}{\avgalpha}\longrightarrow\frac{8}{19/4}=\frac{32}{19}=1.6842\ldots
\]
with relative error $O\bigl((5/6)^m\bigr)$.
Replacing $C_5$ by $[q]$ with clique fibers for $q\ge2$, that is taking the graph on $(L\cup R)\times[q]$ in which each fiber $\{x\}\times[q]$ is a $K_q$ and $(x,c)$ is joined to $(x',c)$ for all $xx'\in E(K_{m,m})$, gives $K_{q+1}$-free graphs of minimum degree $m+q-1$ whose ratio tends to $\rho_q=2/\bigl(\frac{a}{a+1}+\frac{b}{b+1}\bigr)$ with $a=\lfloor q/2\rfloor$ and $b=\lceil q/2\rceil$; here $q=2$ gives the triangle-free graph $K_2\,\square\,K_{m,m}$ and the value $2$, while $q=3$ gives $12/7$.
Finally, for a disjoint union independence numbers add and independence polynomials multiply, so $\avgalpha$ adds as well and the ratio of a disjoint union of $t$ copies of $G_m$ equals that of $G_m$; consequently, for each fixed minimum degree $d=m+2$ that same ratio is realized by triangle-free graphs on arbitrarily many vertices, and the counterexample is not an artifact of the number of vertices being tied to the degree.
\end{remark}

\putbib

\end{bibunit}
\endgroup
\subsection[(Erd\H{o}s) Sets with no large divisor difference]{Sets with no large divisor difference}
\label{sol:erdos}
\begingroup
\begin{bibunit}

This problem asks how large a set $A\subseteq[n]$ can be if no difference $b-a\ge t$ between two of its elements divides $b$.
The odd numbers form such a set of size $\lceil n/2\rceil$, and Erd\H{o}s asked whether $\lvert A\rvert\le\bigl(\tfrac12+o_t(1)\bigr)n$ must hold.
We prove that it does, with the explicit error term $3n/\sqrt{\log\log n}$.
The proof weights each integer by the number of its prime factors drawn from the odd primes in $(t,n^{1/3}]$, compares the even elements of $A$ with the odd integers omitted from it through the injection $a\mapsto a-p$, and converts the resulting weighted inequality into a bound on $\lvert A\rvert$ by Cauchy--Schwarz.

\subsubsection{Introduction}

Erd\H{o}s asked, in a letter to Ruzsa written around 1980, how dense a set of integers can be if no sufficiently large difference between two of its elements divides one of them.
The question is recorded in the miscellany of Erd\H{o}s problems of \citet{guy1983unsolved} and in Ruzsa's survey of Erd\H{o}s's work on the integers \citep{ruzsa1999erdHos}, and is Problem~635 on the Erd\H{o}s problems website \citep{erdosproblems635}, where it reads as follows.

\medskip
\noindent
\emph{Let $t\geq 1$ and $A\subseteq \{1,\ldots,N\}$ be such that whenever $a,b\in A$ with $b-a\geq t$ we have $b-a\nmid b$.
How large can $\lvert A\rvert$ be?
Is it true that $\lvert A\rvert\leq\left(\frac12+o_t(1)\right)N$?}
\medskip

We write $n$ throughout for the integer called $N$ there.
We answer the second question affirmatively, with an explicit rate.

Fix an integer $t\ge1$.
Call a set $A\subseteq[n]$ \emph{$t$-admissible} if no two elements $a<b$ of $A$ satisfy $b-a\ge t$ and $b-a\mid b$, and write $F(n;t)$ for the largest size of a $t$-admissible subset of $[n]$.
Since $b=a+(b-a)$, the divisibility $b-a\mid b$ is equivalent to $b-a\mid a$, so the forbidden configuration is symmetric in the two elements; we call such a difference a \emph{divisor difference} of the pair.
Enlarging $t$ constrains fewer pairs, so $F(n;t)$ is nondecreasing in $t$.

Two cases are immediate.
A $1$-admissible set contains no two consecutive integers, since a difference of $1$ is at least $t=1$ and divides everything, so $F(n;1)\le\lceil n/2\rceil$.
The odd numbers in $[n]$ are $1$-admissible, because the difference of two odd numbers is even and an even number cannot divide an odd one, so in fact $F(n;1)=\lceil n/2\rceil$, and $F(n;t)\ge\lceil n/2\rceil$ for every $t$.
Once $n\ge2$, the odd numbers stop being optimal as soon as $t\ge2$.
Erd\H{o}s observed that the set
\[
  A=\{m\le n:m\text{ odd}\}\cup\{2^k\le n:k\text{ odd}\}
\]
is $2$-admissible \citep{erdosproblems635}.
Indeed, pairs of odd elements are handled by the previous paragraph.
Two powers $2^k<2^j$ have $j-k\ge2$, since $j$ and $k$ are distinct odd numbers, so their difference $2^k(2^{j-k}-1)$ has odd part $2^{j-k}-1>1$ and therefore does not divide $2^j$.
If exactly one of $a<b$ is an added power, then $b-a$ is odd, while the equivalent divisibilities $b-a\mid a$ and $b-a\mid b$ make $b-a$ a divisor of that power of two, forcing $b-a=1<2$.
Since the two sets above are disjoint, this gives
\[
  F(n;t)\ \ge\ \Bigl\lceil\frac n2\Bigr\rceil+\frac{\log_2 n-1}{2}
  \qquad\text{for every }t\ge 2,
\]
so the inequality asked for cannot hold with $o_t(1)$ replaced by $0$.

The second question was answered affirmatively in January 2026 by Liam Price with ChatGPT-5.2, as recorded on the problem's page \citep{erdosproblems635}; the first question, which asks for the exact value of $F(n;t)$, remains open.
Tao observed that an affirmative answer also follows quickly from an inequality of \citet{elliott2012probabilistic}, which bounds a weighted mean square, over the small primes $p$, of the difference between the average of an arbitrary function on an interval $I$ and its average along the multiples of $p$ in $I$, by the mean square of that function on $I$.
Elliott's inequality can be substituted for Lemma~\ref{erdos:lem:moments} below.
Tao also pointed out that the graph in which $a$ and $b$ are joined whenever $b-a$ divides $b$, once $b-a$ is restricted to primes or almost primes, is closely related to the divisibility graphs through which pair correlations of multiplicative functions are studied \citep{matomaki2016sign,helfgott2021expansion}; that literature is concerned with connectivity and expansion rather than with independent sets, which is what Erd\H{o}s's question asks about \citep{erdosproblems635}.

\begin{theorem}
\label{erdos:thm:main}
For every integer $t\ge1$ there is an $n_0(t)$ such that
\[
  \Bigl\lceil\frac n2\Bigr\rceil
  \ \le\ F(n;t)\ \le\
  \frac n2+\frac{3n}{\sqrt{\log\log n}}
\]
for all $n\ge n_0(t)$.
In particular $F(n;t)=\bigl(\tfrac12+o_t(1)\bigr)n$ for each fixed $t$.
\end{theorem}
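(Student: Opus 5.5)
The lower bound is the odd-numbers construction already discussed, so the work is the upper bound. Fix $t$ and let $\mathcal P$ be the set of odd primes $p$ with $t<p\le n^{1/3}$; write $S=\sum_{p\in\mathcal P}1/p=\log\log n-O_t(1)$ by Mertens. For $m\in[n]$ let $\omega(m)=\#\{p\in\mathcal P:p\mid m\}$. Let $A$ be $t$-admissible, with even part $A_0$ and let $O$ be the set of odd integers in $[n]$ not in $A$. The basic combinatorial input is this: if $a\in A_0$ and $p\in\mathcal P$ with $p\mid a$, then $a-p$ is odd, $p=a-(a-p)\ge t$ divides $a$, so $a-p\notin A$ (when $a-p\ge1$), i.e. $a-p\in O$. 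The map $(a,p)\mapsto(a-p,p)$ is injective, and $p\mid a-p$. Summing over pairs gives
\[
\sum_{a\in A_0}\omega(a)\ \le\ \sum_{m\in O}\omega(m)+O(\#\{(a,p):a\le p\}),
\]
with negligible error since $p\le n^{1/3}$.

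Next I convert this weighted inequality into a cardinality bound by showing $\omega$ is concentrated near $S$ on each parity class. The step is a Turán--Kubilius second-moment estimate: over the odd integers in $[n]$ and over the even integers in $[n]$ separately, $\sum(\omega(m)-S)^2\le C\,n\,S$. Because $p\le n^{1/3}$, products $pq\le n^{2/3}$ and the counting errors $O(1)$ per divisor sum up to $O(n^{2/3}\cdot\#\mathcal P^2/n)\cdot n=o(nS)$; restricting to a parity class is harmless since all $p$ are odd. This is the lemma the text calls Lemma~\ref{erdos:lem:moments}, and Elliott's inequality could replace it.

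Then write $\sum_{a\in A_0}\omega(a)\ge S|A_0|-\sum_{a\in A_0}|\omega(a)-S|\ge S|A_0|-\sqrt{|A_0|\cdot CnS}$ by Cauchy--Schwarz, and similarly $\sum_{m\in O}\omega(m)\le S|O|+\sqrt{|O|CnS}$. Hence $S(|A_0|-|O|)\le 2\sqrt{C}\,n\sqrt S+o(n)$, so $|A_0|-|O|\le 2\sqrt C\,n/\sqrt S$. Since $|A|=|A_0|+\lceil n/2\rceil-|O|$, we get $|A|\le n/2+1+2\sqrt{C}\,n/\sqrt{S}$, and with $S\ge\log\log n-O_t(1)$ this gives the stated error $3n/\sqrt{\log\log n}$ for $n\ge n_0(t)$, provided the constant works out.

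The main obstacle is pinning down the constant in the variance bound sharply enough that $2\sqrt{C}<3$ after absorbing lower-order terms. Direct computation gives variance about $\sum_{p}(1/p)(1-1/p)\le S$, so $C\approx1$ plus $o(1)$. This yields $2n/\sqrt S$, which leaves room. The care goes into the explicit error accounting in the second moment, where the cross terms $\sum_{p\ne q}(\lfloor\cdot/pq\rfloor-\ldots)$ must be controlled using $p,q\le n^{1/3}$.
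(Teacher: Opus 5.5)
Your proposal is correct and follows the paper's proof essentially step for step. The shared ingredients are the prime set $(t,n^{1/3}]$, the injection $a\mapsto a-p$ from even elements of $A$ into odd non-elements, the second-moment lemma on each parity class, and the Cauchy--Schwarz conversion to $|A_0|-|O|\ll n/\sqrt{S}$.

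Three small corrections. First, your boundary term is actually zero: $a$ is even and $p$ is odd, so $2p\mid a$ and $a-p\ge p$. Second, the floor errors in the second moment contribute one $O(1)$ term per pair $p\neq q$, so the total error is $O(|\mathcal P|^2)=O(n^{2/3})=o(nS)$; your displayed $O(n^{2/3}\cdot\#\mathcal P^2/n)\cdot n$ is not what you mean. Third, the main term on each parity class is about $nS/2$, so $C=1$ leaves ample room for the constant $3$.
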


\begin{remark}
\label{erdos:rem:independence}
The proof of Theorem~\ref{erdos:thm:main} given below was found independently of, and later than, the answer recorded on the problem's page \citep{erdosproblems635}, which has priority.
The inequality behind Tao's alternative route, due to \citet{elliott2012probabilistic}, is contemporaneous with Erd\H{o}s's question, and since the deduction from it is short Tao notes that the statement may already be present in the literature \citep{erdosproblems635}.
\end{remark}

\begin{remark}
\label{erdos:rem:rate}
Erd\H{o}s's construction above gives $F(n;t)\ge n/2+\Omega(\log n)$ for $t\ge2$, whereas the error term proved here is of size $n/\sqrt{\log\log n}$.
Tao has noted that arguments of Elliott's type give an error term $O_t(n/\log\log n)$, still far above the $\Omega(\log n)$ of the construction \citep{erdosproblems635}.
\end{remark}

The upper bound rests on a single injection.
Let $p$ be an odd prime with $p>t$.
If $a$ is an even multiple of $p$, then $a-p$ is an odd multiple of $p$ at distance $p\ge t$ from $a$, and $p$ divides $a$; so $a$ and $a-p$ cannot both lie in a $t$-admissible set.
Summing this over a set $P$ of odd primes larger than $t$ weights each even element of a $t$-admissible set $A$, and each odd element excluded from $A$, by its number of prime factors drawn from $P$.
That weight has mean $\kappa=\sum_{p\in P}1/p$ on each parity class, and its total square deviation there is at most $n\kappa/2+O(\lvert P\rvert^2)$, so Cauchy--Schwarz converts the weighted inequality into a bound of size $n/\sqrt\kappa$ for the excess of even elements of $A$ over missing odd elements.
Taking $P$ to be the odd primes in $(t,n^{1/3}]$ makes $\kappa$ as large as $\log\log n$.

\subsubsection{A second moment estimate}

The following elementary estimate will be used.

\begin{lemma}
\label{erdos:lem:moments}
Let $P$ be a finite set of odd primes, and put
\[
  \kappa=\sum_{p\in P}\frac1p,
  \qquad
  \omega_P(m)=\#\{p\in P:p\mid m\}.
\]
Then for every $n\ge1$,
\[
  \sum_{\substack{m\le n\\ m\text{ even}}}\bigl(\omega_P(m)-\kappa\bigr)^2\le\frac{n\kappa}{2}+2\lvert P\rvert^2
  \qquad\text{and}\qquad
  \sum_{\substack{m\le n\\ m\text{ odd}}}\bigl(\omega_P(m)-\kappa\bigr)^2\le\frac{n\kappa}{2}+2\lvert P\rvert^2 .
\]
\end{lemma}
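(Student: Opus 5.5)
We expand the square and count multiples on each parity class separately. Fix a parity class, say the even $m\le n$ (the odd case is identical), and let $M$ be the number of such $m$, so $M\le n/2+1$. Write $\omega_P(m)=\sum_{p\in P}\mathbf 1[p\mid m]$ and expand
\[
\sum_{m}\bigl(\omega_P(m)-\kappa\bigr)^2=\sum_m\omega_P(m)^2-2\kappa\sum_m\omega_P(m)+\kappa^2M .
\]
The first two terms are sums over $p$ and over pairs $(p,q)$ of counts of multiples in the class.

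The key counting fact is the following. For odd squarefree $d$, the number of $m\le n$ of a fixed parity divisible by $d$ equals $n/(2d)+\theta_d$ with $|\theta_d|\le1$. This holds because such $m$ form a single residue class modulo $2d$, since $d$ is odd and coprime to $2$.

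With this fact we evaluate the three terms. We get
\[
\sum_m\omega_P(m)=\sum_{p}\Bigl(\frac n{2p}+\theta_p\Bigr)=\frac{n\kappa}2+O(|P|).
\]
Splitting diagonal and off-diagonal terms gives
\[
\sum_m\omega_P(m)^2=\sum_p\Bigl(\frac n{2p}+\theta_p\Bigr)+\sum_{p\ne q}\Bigl(\frac n{2pq}+\theta_{pq}\Bigr)\le \frac{n\kappa}2+\frac n2\Bigl(\kappa^2-\sum_p p^{-2}\Bigr)+|P|+|P|(|P|-1).
\]
Combining with $\kappa^2M$, the main terms $\frac n2\kappa^2-2\kappa\cdot\frac n2\kappa+\kappa^2\cdot\frac n2$ cancel. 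What remains is
\[
\frac{n\kappa}2-\frac n2\sum_p p^{-2}+|P|^2+2\kappa|P|+\kappa^2\,(M-n/2).
\]
Here the $-2\kappa\theta$ error is bounded by $2\kappa|P|$, and $M-n/2\le1$.

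The main obstacle is absorbing the stray error terms $2\kappa|P|+\kappa^2$ into the stated $2|P|^2$. Since all $p\ge3$, we have $\kappa\le|P|/3$. So $2\kappa|P|+\kappa^2\le\frac23|P|^2+\frac19|P|^2$, and together with $|P|^2$ the total is below $2|P|^2$. We may also simply drop the negative term $-\frac n2\sum p^{-2}$. The only care needed is the sign bookkeeping of the $\theta$'s: we bound every error crudely by its absolute value, but keep the leading cancellation exact by using the exact form $n/(2d)$ of the main term, not a floor. The case $P=\emptyset$ is trivial. The odd class is handled by the identical computation.
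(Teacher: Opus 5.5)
Your proposal is correct and follows essentially the same route as the paper. Both arguments expand the square and write each count of multiples of an odd squarefree $d$ in a parity class as $n/(2d)$ plus an error of absolute value at most $1$; the $n\kappa^2$ main terms then cancel. The leftover errors $|P|^2+2\kappa|P|+\kappa^2$ are absorbed into $2|P|^2$ via $\kappa\le|P|/3$, and the negative term $-\tfrac n2\sum_p p^{-2}$ is dropped.
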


\begin{proof}
Let $\mathcal M$ be either the set of even integers in $[n]$ or the set of odd ones, and for an odd squarefree $d$ put $N(d)=\#\{m\in\mathcal M:d\mid m\}$.
Since $d$ is odd, $N(d)=\lfloor n/2d\rfloor$ in the even case and $N(d)=\lfloor n/d\rfloor-\lfloor n/2d\rfloor$ in the odd case.
In both cases
\[
  N(d)=\frac{n}{2d}+\delta_d,
  \qquad
  \lvert\delta_d\rvert\le1 .
\]
For distinct $p,q\in P$ we have $\mathbf 1_{\{p\mid m\}}\mathbf 1_{\{q\mid m\}}=\mathbf 1_{\{pq\mid m\}}$, so expanding the square gives
\begin{align*}
  \sum_{m\in\mathcal M}\bigl(\omega_P(m)-\kappa\bigr)^2
  &=\sum_{\substack{p,q\in P\\ p\ne q}}N(pq)+(1-2\kappa)\sum_{p\in P}N(p)+\kappa^2N(1)\\
  &=\frac n2\Bigl(\kappa^2-\sum_{p\in P}\frac1{p^2}\Bigr)
    +\frac{n\kappa}{2}-n\kappa^2
    +\frac{n\kappa^2}{2}
    +\mathcal{E}\\
  &=\frac{n\kappa}{2}-\frac n2\sum_{p\in P}\frac1{p^2}+\mathcal{E},
\end{align*}
where
\[
  \mathcal{E}=\sum_{\substack{p,q\in P\\ p\ne q}}\delta_{pq}+(1-2\kappa)\sum_{p\in P}\delta_p+\kappa^2\delta_1 .
\]
Every $p\in P$ is odd, so $\kappa\le\lvert P\rvert/3$ and therefore
\[
  \lvert\mathcal{E}\rvert
  \le\bigl(\lvert P\rvert^2-\lvert P\rvert\bigr)+(1+2\kappa)\lvert P\rvert+\kappa^2
  \le\Bigl(1+\frac23+\frac19\Bigr)\lvert P\rvert^2
  \le 2\lvert P\rvert^2 .
\]
Discarding the negative term $-\frac n2\sum_{p\in P}p^{-2}$ completes the proof.
\end{proof}

\subsubsection{Proof of the main theorem}

\begin{proof}[Proof of Theorem~\ref{erdos:thm:main}]
The lower bound is the set of odd numbers in $[n]$, as noted above.

For the upper bound, fix $t\ge1$ and set
\[
  z=n^{1/3},
  \qquad
  P=\{p\text{ an odd prime}:t<p\le z\},
  \qquad
  \kappa=\sum_{p\in P}\frac1p .
\]
By Mertens' theorem $\sum_{p\le z}1/p=\log\log z+O(1)$, and $\log\log z=\log\log n-\log3$, so $\kappa=\log\log n+O_t(1)$.
Choose $n_0(t)$ so that $\kappa\ge\max\bigl(1,\tfrac12\log\log n\bigr)$, $n^{1/3}\ge4$, and $\sqrt{\log\log n}\le n$ for all $n\ge n_0(t)$.

Let $A\subseteq[n]$ be $t$-admissible and put $B=[n]\setminus A$.
We claim that for each $p\in P$,
\[
  \#\{a\in A:a\text{ even},\ p\mid a\}
  \ \le\
  \#\{b\in B:b\text{ odd},\ p\mid b\}.
\]
Indeed, let $a\in A$ be even with $p\mid a$.
As $p$ is odd, $a=2rp$ for some $r\ge1$, and $a-p=(2r-1)p$ is an odd multiple of $p$ lying in $[1,n]$.
The pair $a-p<a$ has difference $p>t$ and $p\mid a$, so $a-p\notin A$, that is, $a-p\in B$.
The map $a\mapsto a-p$ is injective, which proves the claim.

Summing the claim over $p\in P$ gives
\begin{equation}
\label{erdos:eq:weighted}
  \sum_{\substack{a\in A\\ a\text{ even}}}\omega_P(a)
  \ \le\
  \sum_{\substack{b\in B\\ b\text{ odd}}}\omega_P(b).
\end{equation}
Write
\[
  x=\#\{a\in A:a\text{ even}\},
  \qquad
  y=\#\{b\in B:b\text{ odd}\}.
\]
Subtracting $\kappa$ from every summand in \eqref{erdos:eq:weighted} and rearranging,
\[
  \kappa(x-y)
  \ \le\
  \sum_{\substack{b\in B\\ b\text{ odd}}}\bigl(\omega_P(b)-\kappa\bigr)
  -\sum_{\substack{a\in A\\ a\text{ even}}}\bigl(\omega_P(a)-\kappa\bigr).
\]
Apply Cauchy--Schwarz to each of the two sums, extend the resulting sums of squares to all integers of the relevant parity in $[n]$, and invoke Lemma~\ref{erdos:lem:moments}:
\[
  \kappa(x-y)
  \ \le\
  \bigl(\sqrt x+\sqrt y\bigr)\Bigl(\frac{n\kappa}{2}+2\lvert P\rvert^2\Bigr)^{1/2}.
\]
Now $\lvert P\rvert\le z=n^{1/3}$, and $n^{1/3}\ge4\ge4/\kappa$, so $2\lvert P\rvert^2\le2n^{2/3}\le n\kappa/2$.
Moreover $x\le n/2$ and $y\le(n+1)/2$, whence $\sqrt x+\sqrt y\le\sqrt{2(n+1)}\le\sqrt{3n}$.
Therefore $\kappa(x-y)\le\sqrt{3n}\cdot\sqrt{n\kappa}$, that is,
\[
  x-y\le n\sqrt{\frac3\kappa}.
\]
The set $A$ has exactly $\lceil n/2\rceil-y$ odd elements and $x$ even elements, so
\[
  \lvert A\rvert
  =\Bigl\lceil\frac n2\Bigr\rceil-y+x
  \le\Bigl\lceil\frac n2\Bigr\rceil+n\sqrt{\frac3\kappa}
  \le\frac{n+1}{2}+n\sqrt{\frac{6}{\log\log n}}
  \le\frac n2+\frac{3n}{\sqrt{\log\log n}},
\]
the last step because $\sqrt{\log\log n}\le n$ and $3-\sqrt6>\tfrac12$.
Since $A$ was an arbitrary $t$-admissible subset of $[n]$, this is the desired bound on $F(n;t)$.
\end{proof}

\putbib
\end{bibunit}
\endgroup
\subsection[(Erd\H{o}s--Straus) Divisibility among binomial coefficients]{Divisibility among binomial coefficients}
\label{sol:erdos_straus}
\begingroup
\begin{bibunit}
\def\N{\mathbb{N}}

This problem asks for the density of the set of $m$ for which some admissible $k$ makes a product of $k$ consecutive integers starting just above $n$ divide the product of $k$ consecutive integers starting just above $m$.
\citet{erdos1977products} asked whether all, or almost all, large $m$ admit such a $k$, and if not what the density of those that do is; we show that the density exists and equals $1$ for every fixed $n\ge2$.
The proof produces, for each $B$, a single $k$ for which $\binom{n+k}{n}$ has no prime factor below $B$, so that the required divisibility is forced by one congruence condition on $m$ modulo each prime divisor of that coefficient.

\subsubsection{Introduction}

For positive integers $n$ and $k$ write $A(n,k)=(n+k)!/n!$ for the product of the $k$ consecutive integers $n+1,\dots,n+k$.
\citet{erdos1977products} study the divisibility relation $A(n,k)\mid A(m,k)$ for $m>n$.
Since
\[
  \frac{A(m,k)}{A(n,k)}=\binom{m+k}{k}\Bigm/\binom{n+k}{k},
\]
that relation is exactly the divisibility
\begin{equation}\label{erdos_straus:eq:14}
  \binom{n+k}{n}\ \Bigm|\ \binom{m+k}{k}
\end{equation}
between two binomial coefficients.
Bounding $k$ is essential here.
Indeed, for $k>m-n$ one has
\[
  \frac{A(m,k)}{A(n,k)}=\frac{A(n+k,m-n)}{A(n,m-n)},
\]
and \citet{erdos1977products} note that this quotient is an integer for $k=A(n,m-n)-m$, a value exceeding $m-n$ as soon as $m\ge n+2$.
For $m=n+1$ that value is $0$; there one may take $k=n+1$ instead, with quotient $2$.
Without a restriction on $k$ the problem is therefore vacuous.
With the natural restriction $1\le k\le m-n$ in place, they ask the following question in their \S1, in which the displayed divisibility is their (1.4).

\medskip
\noindent
\emph{Given $n>1$ is it true that for all (almost all) large $m$ there exists a $k$, $1\le k\le m-n$ so that $\binom{k+n}{n}\mid\binom{m+k}{k}$?
If not, what is the density $d^*(n)$ of integers $m$ for which \textup{(1.4)} has a solution with $1\le k\le m-n$?}
\medskip

For a fixed integer $n\ge1$ let
\[
  G_n:=\Bigl\{m\in\N:\ m>n,\ \exists\,k\in\N,\ 1\le k\le m-n,\
    \binom{n+k}{n}\Bigm|\binom{m+k}{k}\Bigr\},
\]
and for a set $S\subseteq\N$ write
\[
  d(S):=\lim_{x\to\infty}\frac{|S\cap[1,x]|}{x}
\]
for its natural density, when the limit exists, so that $d^*(n)=d(G_n)$.
We prove that $d^*(n)$ exists and equals $1$ for every fixed $n\ge2$, so that the ``almost all'' alternative holds.

The case $n=1$, excluded from the question above, is settled outright in \citet{erdos1977products}, and its proof is the carry argument used below.
For a prime $p$, taking $k=p-1$ turns \eqref{erdos_straus:eq:14} into the assertion $p\mid\binom{m+p-1}{p-1}$.
The base-$p$ digits of $p-1$ are $(p-1,0,0,\dots)$, so adding $m$ and $p-1$ in base $p$ produces a carry out of the units place as soon as $p\nmid m$, and Kummer's theorem then gives $p\mid\binom{m+p-1}{p-1}$.
No $m>2$ is divisible by every prime $p\le m$, and any prime $p\le m$ with $p\nmid m$ supplies an admissible $k=p-1$ in $[1,m-1]$, so every $m>2$ lies in $G_1$.
Erd\H{o}s and Straus write that already the next case, $n=2$, ``seems much more difficult to decide''.

Erd\H{o}s and Straus themselves prove two results about \eqref{erdos_straus:eq:14}, both in the regime where $m/n$ is bounded.
\citet[Theorem~2.1]{erdos1977products} show that for fixed $c>0$ and $\Lambda>1$ only finitely many triples $n,k,m$ with $k\ge cn$ and $n+k\le m\le\Lambda n$ satisfy it, and their Theorem~3.3 removes the hypothesis $k\ge cn$ at the cost of requiring $k\ge2$ and a prime in $[n+1,n+k]$.
Both leave the range studied here untouched, since we fix $n$ and let $m\to\infty$, so that $m/n\to\infty$.
Apart from these, work on \eqref{erdos_straus:eq:14} has concentrated on the boundary case $k=m-n$, that is, on the question whether for each fixed $n$ some $k$ satisfies $\binom{n+k}{n}\mid\binom{n+2k}{k}$.
That case is raised in \citet{erdos1977products}, recorded by \citet{erdos1980old}, and appears as Problem 389 on the Erd\H{o}s problems website \citep{erdosproblems389}, where it is listed as open; \citet{ulas2013note} gives computational results for it and for a companion question of Erd\H{o}s and Graham, verifying that a suitable $k$ exists for every $n\le20$, and settling the companion Erd\H{o}s--Graham question for every $n\le9$.

\citet[Theorem~2]{pomerance2015divisors} proves that for each fixed $k\ge1$ the set of $M$ with $M+k\mid\binom{2M}{M}$ has asymptotic density $1$, and notes that the same proof yields density one for the divisibility $(M+1)(M+2)\cdots(M+k)\mid\binom{2M}{M}$.
\citet[Theorems~1 and~2]{pomerance2026remarks} allows $k$ to grow with $M$: the product $(M+1)\cdots(M+k)$ divides $\binom{2M}{M}$ for all $k\le\eta\log M$, for any fixed $\eta<1/\log4$, and $\binom{M+k}{k}$ divides $\binom{2M}{M}$ for all $k\le e^{0.8\sqrt{\log M}}$, in both cases for a set of $M$ of density $1$.
The mechanism there is the one we use: a high power of $p$ dividing $M+k$ forces the low base-$p$ digits of $M$ to be large, and Kummer's theorem converts this into carries.
None of these statements implies our theorem: in each of them the dividend is the central binomial coefficient $\binom{2M}{M}$ and $k$ is small compared with $M$, whereas in \eqref{erdos_straus:eq:14} the dividend is $\binom{m+k}{k}$ and the divisor is $\binom{n+k}{n}$ with $n$ fixed and $k$ unbounded.
In a similar vein \citet{harborth1979divisibility} showed that for each fixed $k$ almost all entries $\binom{M}{j}$ of Pascal's triangle are divisible by $M(M-1)\cdots(M-k+1)$; see the discussion in \citet{pomerance2015divisors}.
We have found no prior source asserting that $G_n$ has positive lower density, let alone density one.

Throughout, $p$ and $q$ denote primes, $v_p(\cdot)$ is the $p$-adic valuation, $\pi(B)=\sum_{p\le B}1$ is the prime-counting function, $\vartheta(B)=\sum_{p\le B}\log p$ is Chebyshev's function, and $\omega(N)$ is the number of distinct prime divisors of $N$.

\begin{theorem}\label{erdos_straus:thm:main}
For every fixed integer $n\ge2$ the natural density $d^*(n)$ exists and
\[
  d^*(n)=1 .
\]
\end{theorem}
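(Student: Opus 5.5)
The plan follows the sketch given in Section~\ref{sec:results-v2}. The density will be bounded below by a quantity that tends to $1$; the upper bound $1$ is trivial. Since $d^*(n)\le1$ always, it suffices to show that for every $\varepsilon>0$ the lower density of $G_n$ is at least $1-\varepsilon$. Then $\liminf\ge1$, and together with the trivial $\limsup\le1$ the limit exists and equals $1$.

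\emph{Step 1: choosing $k$.} Fix $B>n$. Put $k_B=\prod_{p\le B}p^{e_p}$, where $e_p$ is the least integer with $p^{e_p}>n$. We check that every prime factor of $N_B=\binom{n+k_B}{n}$ exceeds $B$. For $p\le B$, the base-$p$ expansion of $k_B$ ends in $e_p$ zero digits, and $n<p^{e_p}$ occupies only those low digits. So adding $n$ and $k_B$ in base $p$ produces no carry, and Kummer's theorem gives $v_p(N_B)=0$. Now let $q$ be a prime with $q\mid N_B$. Then $q>B>n$. Also $N_B=\prod_{i=1}^n (k_B+i)/n!$ and $q\nmid n!$, so $q$ divides some $k_B+i$ with $1\le i\le n$. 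This index is unique, because two such integers differ by less than $q$. Call it $i(q)$. Moreover $v_q(N_B)=v_q(k_B+i(q))=:a_q$.

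\emph{Step 2: a sufficient congruence condition.} By Kummer's theorem, $v_q\binom{m+k_B}{k_B}$ equals the number of carries when $m$ and $k_B$ are added in base $q$. Write $k_B+i(q)=q^{a_q}u$ with $q\nmid u$. Then the lowest $a_q$ base-$q$ digits of $k_B$ all equal $q-1$, except the units digit, which is $q-i(q)$. Suppose $m\bmod q\ge i(q)$. Then the units digit produces a carry, and this carry propagates through the next $a_q-1$ digits, which equal $q-1$. That gives at least $a_q$ carries, so $q^{a_q}\mid\binom{m+k_B}{k_B}$. Hence every $m$ with $m\bmod q\ge i(q)$ for all $q\mid N_B$ satisfies \eqref{erdos_straus:eq:14} with $k=k_B$. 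It also satisfies $k_B\le m-n$ once $m\ge n+k_B$, which excludes only finitely many $m$. Only the residue of $m$ modulo each $q$ matters, so by the Chinese remainder theorem this set is periodic modulo $\prod_{q\mid N_B}q$. Its density is therefore $\prod_{q\mid N_B}(1-i(q)/q)\ge\prod_{q\mid N_B}(1-n/q)$.

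\emph{Step 3: the product tends to $1$.} This is the main obstacle. The number of prime factors of $N_B$ grows with $B$, so the bound must be uniform. Each factor satisfies $q>B$ and $\omega(N_B)\le\log N_B/\log B$. Also $\log N_B\le n\log(k_B+n)\lesssim n\log k_B$, and $\log k_B\le\sum_{p\le B}\log(pn)\le\vartheta(B)+\pi(B)\log n=O_n(B)$ by Chebyshev. Therefore
\[
\sum_{q\mid N_B}\frac{n}{q}\le\frac{n\,\omega(N_B)}{B}=O_n\!\Bigl(\frac{1}{\log B}\Bigr),
\]
so the product is at least $1-O_n(1/\log B)$, which tends to $1$. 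Letting $B\to\infty$ gives the claimed lower density. I expect the delicate points to be the exact carry bookkeeping in Step 2, in particular the case $a_q\ge2$, and making sure that the choice of $e_p$ gives no carries at the primes $p\le B$.
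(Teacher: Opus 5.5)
Your proposal is correct and follows the paper's proof essentially step by step. It uses the same $k_B=\prod_{p\le B}p^{e_p}$, the same unique index $i(q)$ with $v_q(N_B)=v_q(k_B+i(q))$, the same periodic set of density $\prod_{q\mid N_B}(1-i(q)/q)$, and the same Chebyshev bound $\omega(N_B)=O_n(B/\log B)$; the only differences are cosmetic, namely that you count the $a_q$ forced carries directly by Kummer's theorem where the paper applies Legendre's formula level by level, and that you use $\prod(1-x_i)\ge 1-\sum x_i$ where the paper uses $\log(1-x)\ge -2x$.
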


Since $G_n\subseteq\N$, its upper density is at most $1$, so the entire content of the theorem is a lower bound on the lower density.

\begin{remark}\label{erdos_straus:rem:scope}
Whether \emph{all} large $m$ lie in $G_n$ remains open, since we show that $\N\setminus G_n$ has density zero and not that it is finite.
The proof is also silent about the least admissible $k$ for a given $m$.
To achieve density $1-\varepsilon$ it uses a single $k=k_B$ of size $e^{O_n(B)}$ and covers only those $m$ with $m\ge n+k_B$, so it gives no information about small $k$, and in particular none about the boundary case $k=m-n$, which is Problem 389.
\end{remark}

We now summarize the proof.
For each integer parameter $B$ we exhibit one value $k=k_B$ of the free variable in \eqref{erdos_straus:eq:14} that works simultaneously for every $m$ in an explicit periodic set $T_B$, and we show that the density of $T_B$ tends to $1$ as $B\to\infty$.
The key point is to choose $k_B$ so that $N_B=\binom{n+k_B}{n}$ has no prime factor below $B$.
Every prime $q\mid N_B$ then divides exactly one of the $n$ integers $k_B+1,\dots,k_B+n$, say $k_B+i(q)$, and does so to the full multiplicity $v_q(N_B)$.
The single congruence condition $m\bmod q\ge i(q)$ is then enough to force $q^{v_q(N_B)}\mid\binom{m+k_B}{k_B}$, because $k_B$ has residue $q^{s}-i(q)$ modulo $q^{s}$ for every $s\le v_q(N_B)$ and each such $s$ therefore contributes a carry.
Only Kummer's theorem, Legendre's formula, the Chinese remainder theorem and Chebyshev's elementary bounds on $\pi$ and $\vartheta$ are needed.

\subsubsection{A binomial coefficient free of small primes}

Fix $n\ge2$.
For an integer parameter $B>n$ define
\begin{equation}\label{erdos_straus:eq:kB}
  k_B:=\prod_{p\le B}p^{e_p},
  \qquad
  e_p:=\min\{e\ge1:\ p^{e}>n\},
\end{equation}
and set
\[
  N_B:=\binom{n+k_B}{n}.
\]

\begin{lemma}\label{erdos_straus:lem:nosmall}
Every prime divisor of $N_B$ is larger than $B$.
\end{lemma}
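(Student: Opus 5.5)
We need to show that no prime $p\le B$ divides $N_B=\binom{n+k_B}{n}$, and we will do it with Kummer's theorem: $v_p(N_B)$ equals the number of carries when $n$ and $k_B$ are added in base $p$.

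Fix a prime $p\le B$. By \eqref{erdos_straus:eq:kB}, $p^{e_p}$ divides $k_B$, so the lowest $e_p$ base-$p$ digits of $k_B$ are all zero. By the choice of $e_p$ we have $n<p^{e_p}$, so $n$ has at most $e_p$ base-$p$ digits, and all of them sit in positions $0,\dots,e_p-1$. In each of those positions at least one summand has digit $0$, and no carry enters from below. By induction on the position, each of these digit sums is therefore at most $p-1$, and no carry is produced. Above position $e_p-1$ the digits of $n$ vanish and no carry comes in, so there are no carries at all. Kummer's theorem then gives $v_p(N_B)=0$.

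The same conclusion can be reached without carries. We have $N_B=\prod_{i=1}^{n}(k_B+i)/i$, and for $1\le i\le n<p^{e_p}$ the congruence $k_B\equiv 0 \pmod{p^{e_p}}$ gives $v_p(k_B+i)=v_p(i)$. Hence $v_p\bigl(\prod_i(k_B+i)\bigr)=v_p(n!)$, so $p\nmid N_B$. I would present this Legendre-style version as a backup check.

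Since every prime factor of $N_B$ must then exceed $B$, the lemma follows; note also that $N_B>1$ is not needed. There is no real obstacle here. The only point that needs care is that $e_p\ge1$ and $p^{e_p}>n$ together ensure that $n$'s digits lie entirely within the block of zero digits of $k_B$.
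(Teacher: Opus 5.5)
Your proposal is correct and matches the paper's proof: the lowest $e_p$ base-$p$ digits of $k_B$ vanish, all digits of $n$ sit below position $e_p$ because $n<p^{e_p}$, so the addition produces no carry and Kummer's theorem gives $v_p(N_B)=0$. Your backup argument is also valid: for $1\le i\le n<p^{e_p}$ one has $v_p(k_B+i)=v_p(i)$, which yields the same conclusion directly.
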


\begin{proof}
Let $p\le B$.
By \eqref{erdos_straus:eq:kB} we have $p^{e_p}\mid k_B$, so the $e_p$ lowest base-$p$ digits of $k_B$ vanish, while $n<p^{e_p}$, so every base-$p$ digit of $n$ of index $e_p$ or higher vanishes.
Adding $n$ and $k_B$ in base $p$ therefore produces no carry at all: in each of the positions $0,\dots,e_p-1$ the digit of $k_B$ is $0$, and in each higher position the digit of $n$ is $0$, so no position ever sums to $p$ or more.
By Kummer's theorem $v_p\binom{n+k_B}{n}$ equals the number of carries in this addition, whence
\[
  v_p(N_B)=0 .
\]
Thus no prime $p\le B$ divides $N_B$.
\end{proof}

\begin{lemma}\label{erdos_straus:lem:index}
Let $q$ be a prime with $q\mid N_B$ and put $a_q:=v_q(N_B)$.
Then there is a unique index $i(q)\in\{1,\dots,n\}$ with $q\mid k_B+i(q)$, and for this index
\[
  q^{a_q}\ \big\|\ k_B+i(q),
\]
that is, $a_q=v_q\bigl(k_B+i(q)\bigr)$.
\end{lemma}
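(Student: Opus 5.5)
By Lemma~\ref{erdos_straus:lem:nosmall}, every prime $q\mid N_B$ satisfies $q>B>n$. The plan is to write the valuation via Legendre's formula as a sum over the $n$ factors, and to show that only one factor can contribute.

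First we write
\[
  N_B=\binom{n+k_B}{n}=\frac{(k_B+1)(k_B+2)\cdots(k_B+n)}{n!}.
\]
Since $q>n$, the prime $q$ does not divide $n!$. Hence
\[
  a_q=v_q(N_B)=\sum_{i=1}^{n}v_q(k_B+i).
\]

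Next we prove existence and uniqueness of the index. Because $a_q\ge1$, at least one term in this sum is positive, so some $i\in\{1,\dots,n\}$ has $q\mid k_B+i$. If two distinct indices $i\ne j$ in $\{1,\dots,n\}$ both had $q\mid k_B+i$ and $q\mid k_B+j$, then $q$ would divide their difference $i-j$. But $0<|i-j|\le n-1<q$, so this is impossible. Thus the index $i(q)$ is unique.

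Finally, every other term in the sum vanishes, so
\[
  a_q=v_q\bigl(k_B+i(q)\bigr),
\]
which is the assertion $q^{a_q}\,\|\,k_B+i(q)$.

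No step here is a real obstacle. The only point that needs care is the inequality $q>n$, which is used twice: once to remove $n!$ from the valuation, and once to force uniqueness of the index. It comes directly from Lemma~\ref{erdos_straus:lem:nosmall} together with the choice $B>n$.
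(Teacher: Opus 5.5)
Your proposal is correct and matches the paper's proof. You use $q>B>n$ to drop $n!$ and write $v_q(N_B)=\sum_{i=1}^{n}v_q(k_B+i)$. Existence then follows from $a_q\ge 1$, and uniqueness holds because two indices would force $q$ to divide a nonzero difference smaller than $n<q$. (Strictly, you are using additivity of $v_q$ over products rather than Legendre's formula, but nothing depends on that wording.)
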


\begin{proof}
By Lemma~\ref{erdos_straus:lem:nosmall} we have $q>B>n$, so $q\nmid n!$ and hence
\[
  v_q(N_B)=v_q\Bigl(\prod_{i=1}^{n}(k_B+i)\Bigr)-v_q(n!)
         =\sum_{i=1}^{n}v_q(k_B+i).
\]
The left-hand side is positive, so at least one summand is positive.
Two distinct indices $1\le i<j\le n$ with $q\mid k_B+i$ and $q\mid k_B+j$ would give $q\mid j-i$ with $0<j-i<n<q$, which is impossible.
Hence exactly one index $i(q)$ contributes, and the displayed identity reduces to $a_q=v_q(k_B+i(q))$.
\end{proof}

\subsubsection{A congruence forcing the divisibility}

\begin{lemma}\label{erdos_straus:lem:legendre}
Let $q\mid N_B$, write $i=i(q)$ and $a=a_q$.
If $m$ is a nonnegative integer with
\[
  m\bmod q\in\{i,i+1,\dots,q-1\},
\]
then
\[
  v_q\binom{m+k_B}{k_B}\ \ge\ a .
\]
\end{lemma}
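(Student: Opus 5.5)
Since $q^a\mid k_B+i$ by Lemma~\ref{erdos_straus:lem:index}, we have
\[
  k_B\equiv -i\pmod{q^s}
  \qquad\text{for every } 1\le s\le a,
\]
so $k_B\bmod q^s=q^s-i$; note $1\le i\le n<q$. Kummer's theorem says that $v_q\binom{m+k_B}{k_B}$ equals the number of carries when $m$ and $k_B$ are added in base $q$. Equivalently, it counts the indices $s\ge1$ with
\[
  (m\bmod q^s)+(k_B\bmod q^s)\ge q^s .
\]
I will use this form rather than digitwise carries, since it avoids tracking incoming carries explicitly. Equivalently again, one can use Legendre's formula
\[
  v_q\binom{m+k_B}{k_B}=\sum_{s\ge1}\Bigl(\Bigl\lfloor\frac{m+k_B}{q^s}\Bigr\rfloor-\Bigl\lfloor\frac{m}{q^s}\Bigr\rfloor-\Bigl\lfloor\frac{k_B}{q^s}\Bigr\rfloor\Bigr),
\]
in which each summand is $0$ or $1$, and equals $1$ exactly when the displayed inequality holds.

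Fix $s$ with $1\le s\le a$. Because $m\bmod q\ge i$, and $m\bmod q^s\equiv m\pmod q$ with $m\bmod q^s\ge m\bmod q$, we get $m\bmod q^s\ge i$. Therefore
\[
  (m\bmod q^s)+(k_B\bmod q^s)\ge i+(q^s-i)=q^s ,
\]
so index $s$ contributes $1$. Summing over $s=1,\dots,a$ gives at least $a$ contributions, and all summands are nonnegative, so $v_q\binom{m+k_B}{k_B}\ge a$.

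No step is a real obstacle. The only point needing care is using the residue of $m$ modulo $q^s$ rather than its $s$-th digit alone, which the inequality $m\bmod q^s\ge m\bmod q$ handles. It is also worth recording $i<q$, so that $q^s-i$ is a genuine residue.
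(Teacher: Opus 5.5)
Your proof is correct and follows the same route as the paper: for each $1\le s\le a$ you use $k_B\bmod q^s=q^s-i$ and $m\bmod q^s\ge m\bmod q\ge i$ to show that the $s$-th Legendre summand equals $1$. You then conclude by nonnegativity of the remaining summands. Your phrasing via residues $(m\bmod q^s)+(k_B\bmod q^s)\ge q^s$ is just a compact restatement of the paper's floor computation with $t_s$ and $u_s$.
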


\begin{proof}
Fix $s$ with $1\le s\le a$.
By Lemma~\ref{erdos_straus:lem:index} we have $q^{a}\mid k_B+i$, hence $q^{s}\mid k_B+i$, so $k_B\equiv -i \pmod{q^{s}}$.
Since $1\le i\le n<q\le q^{s}$, the residue of $k_B$ modulo $q^{s}$ is exactly $q^{s}-i$, that is,
\[
  k_B=u_sq^{s}+(q^{s}-i)
  \qquad\text{for some integer } u_s=\lfloor k_B/q^{s}\rfloor\ge 0 .
\]
Write $m=v_sq^{s}+t_s$ with $0\le t_s<q^{s}$.
Since $t_s\equiv m\pmod{q}$ and $t_s\ge0$, we have $t_s\ge t_s\bmod q=m\bmod q\ge i$.
Consequently
\[
  i\le t_s<q^{s},
  \qquad\text{so}\qquad
  q^{s}\ \le\ t_s+q^{s}-i\ <\ 2q^{s},
\]
and therefore
\[
  \Bigl\lfloor\frac{m+k_B}{q^{s}}\Bigr\rfloor
  -\Bigl\lfloor\frac{m}{q^{s}}\Bigr\rfloor
  -\Bigl\lfloor\frac{k_B}{q^{s}}\Bigr\rfloor
  =(v_s+u_s)+\Bigl\lfloor\frac{t_s+q^{s}-i}{q^{s}}\Bigr\rfloor-v_s-u_s
  =\Bigl\lfloor\frac{t_s+q^{s}-i}{q^{s}}\Bigr\rfloor
  =1 .
\]
By Legendre's formula applied to $(m+k_B)!$, $m!$ and $k_B!$,
\[
  v_q\binom{m+k_B}{k_B}
  =\sum_{s\ge1}
    \Bigl(\Bigl\lfloor\frac{m+k_B}{q^{s}}\Bigr\rfloor
      -\Bigl\lfloor\frac{m}{q^{s}}\Bigr\rfloor
      -\Bigl\lfloor\frac{k_B}{q^{s}}\Bigr\rfloor\Bigr).
\]
Each summand is nonnegative, since $\lfloor\alpha+\beta\rfloor\ge\lfloor\alpha\rfloor+\lfloor\beta\rfloor$, and the $a$ summands with $1\le s\le a$ each equal $1$.
Hence the sum is at least $a$.
\end{proof}

Observe that the hypothesis of Lemma~\ref{erdos_straus:lem:legendre} is a condition modulo $q$ alone, even though its conclusion concerns the prime power $q^{a}$.
This is what makes the Chinese remainder theorem cheap to apply.
Define
\begin{equation}\label{erdos_straus:eq:TB}
  T_B:=\Bigl\{m\in\N:\ m\bmod q\in\{i(q),i(q)+1,\dots,q-1\}
    \ \text{ for every prime } q\mid N_B\Bigr\}.
\end{equation}

\begin{corollary}\label{erdos_straus:cor:TB}
For every $m\in T_B$ we have $N_B\mid\binom{m+k_B}{k_B}$.
Consequently
\[
  T_B\cap[\,n+k_B,\infty)\ \subseteq\ G_n .
\]
\end{corollary}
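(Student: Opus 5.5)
The plan is to verify the divisibility one prime at a time and assemble it from Lemmas~\ref{erdos_straus:lem:nosmall}--\ref{erdos_straus:lem:legendre}. The divisibility then feeds directly into the definition of $G_n$. Fix $m\in T_B$. Since $N_B$ is a positive integer, the claim $N_B\mid\binom{m+k_B}{k_B}$ is equivalent to
\[
v_q\binom{m+k_B}{k_B}\ \ge\ v_q(N_B)
\]
for every prime $q$. Only primes $q\mid N_B$ need to be checked, since for the others the right-hand side is $0$.

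For such a $q$, Lemma~\ref{erdos_straus:lem:index} supplies the index $i(q)\in\{1,\dots,n\}$ and identifies $a_q=v_q(N_B)$. Membership of $m$ in $T_B$ gives exactly $m\bmod q\in\{i(q),\dots,q-1\}$. Lemma~\ref{erdos_straus:lem:legendre} then gives $v_q\binom{m+k_B}{k_B}\ge a_q$. Since $q$ was an arbitrary prime divisor of $N_B$, this proves $N_B\mid\binom{m+k_B}{k_B}$.

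For the inclusion, take $m\in T_B$ with $m\ge n+k_B$ and set $k=k_B$. We need $m>n$ and $1\le k\le m-n$. First, $k_B\ge1$ because it is a product of positive prime powers; indeed $k_B\ge 2^{e_2}>n\ge 1$. Second, $k_B\le m-n$ is exactly the hypothesis $m\ge n+k_B$, and then $m\ge n+k_B>n$. Finally, $N_B=\binom{n+k_B}{n}$ is precisely the left side of \eqref{erdos_straus:eq:14} with $k=k_B$. So $m\in G_n$.

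No step here is a real obstacle; the corollary is bookkeeping. The one point needing care is that $T_B$ is defined with the residue conditions only modulo $q$, not modulo $q^{a_q}$. This mismatch is already absorbed by Lemma~\ref{erdos_straus:lem:legendre}, whose hypothesis is a condition modulo $q$ alone. I would also note that the admissibility condition $1\le k\le m-n$ is what forces the restriction to $[n+k_B,\infty)$. That restriction costs nothing for density purposes, since it removes only finitely many integers.
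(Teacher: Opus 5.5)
Your proposal is correct and matches the paper's proof. Both apply Lemma~\ref{erdos_straus:lem:legendre} to each prime $q\mid N_B$ to get $v_q\binom{m+k_B}{k_B}\ge v_q(N_B)$, then take $k=k_B$ when $m\ge n+k_B$. Your extra checks that $k_B\ge 1$ and $m>n$ are details the paper leaves implicit, and they are right.
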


\begin{proof}
Let $m\in T_B$.
For each prime $q\mid N_B$, Lemma~\ref{erdos_straus:lem:legendre} gives $v_q\binom{m+k_B}{k_B}\ge a_q=v_q(N_B)$; hence $N_B\mid\binom{m+k_B}{k_B}$.
If moreover $m\ge n+k_B$, then $k:=k_B$ satisfies $1\le k\le m-n$ and $\binom{n+k}{n}=N_B$ divides $\binom{m+k}{k}$, so $m\in G_n$.
\end{proof}

\begin{example}\label{erdos_straus:ex:n2}
Take $n=2$ and $B=5$.
Then $e_2=2$ and $e_3=e_5=1$, so $k_5=4\cdot3\cdot5=60$ and
\[
  N_5=\binom{62}{2}=1891=31\cdot61 ,
\]
whose prime factors indeed exceed $5$.
Here $31\mid 62=k_5+2$ and $61\mid 61=k_5+1$, so $i(31)=2$ and $i(61)=1$, both with $a_q=1$.
Corollary~\ref{erdos_straus:cor:TB} therefore says that
\[
  \binom{62}{2}\ \Bigm|\ \binom{m+60}{60}
  \qquad\text{whenever } m\ge62,\quad m\not\equiv0,1 \pmod{31},\quad m\not\equiv0\pmod{61},
\]
a set of density $\tfrac{29}{31}\cdot\tfrac{60}{61}=\tfrac{1740}{1891}>0.92$.
The congruence conditions are sufficient but not necessary: for instance $m=930$ is excluded by the condition at $31$, yet $\binom{62}{2}\mid\binom{990}{60}$.
Indeed $930$ and $60$ have base-$31$ digits $(0,30)$ and $(29,1)$, written from least to most significant, so their addition still carries out of the second place and $31\mid\binom{990}{60}$, while $930\bmod61=15\ge i(61)$ leaves $930$ inside the condition at $61$.
\end{example}

\subsubsection{The density of the congruence set}

The definition \eqref{erdos_straus:eq:TB} imposes congruence conditions modulo the finitely many distinct primes $q\mid N_B$, so $T_B$ is a union of residue classes modulo $\prod_{q\mid N_B}q$ and its natural density $d(T_B)$ exists.
Since $i(q)\le n<q$, the condition at $q$ excludes exactly the $i(q)$ residues $0,1,\dots,i(q)-1$, so by the Chinese remainder theorem
\begin{equation}\label{erdos_straus:eq:crt}
  d(T_B)=\prod_{q\mid N_B}\Bigl(1-\frac{i(q)}{q}\Bigr)
             \ \ge\ \prod_{q\mid N_B}\Bigl(1-\frac{n}{q}\Bigr)\ >\ 0 .
\end{equation}

\begin{lemma}\label{erdos_straus:lem:omega}
For every integer $B>n$ we have $\log k_B=O_n(B)$ and
\[
  \omega(N_B)=O_n\!\Bigl(\frac{B}{\log B}\Bigr).
\]
\end{lemma}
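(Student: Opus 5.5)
Both claims follow from Chebyshev's elementary bounds, taken directly from the definitions \eqref{erdos_straus:eq:kB} and Lemma~\ref{erdos_straus:lem:nosmall}.

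\emph{The bound on $\log k_B$.} By minimality of $e_p$ we have $p^{e_p-1}\le n$ whenever $e_p\ge2$, so $p^{e_p}\le pn$ in every case. Hence
\[
  \log k_B=\sum_{p\le B}e_p\log p\le\sum_{p\le B}\bigl(\log p+\log n\bigr)=\vartheta(B)+\pi(B)\log n .
\]
A finer split also works: $e_p=1$ for every $p>n$, and the primes $p\le n$ contribute at most $\pi(n)\log(n^2)=O_n(1)$. Either way, Chebyshev's bounds $\vartheta(B)=O(B)$ and $\pi(B)=O(B/\log B)$ give $\log k_B=O_n(B)$.

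\emph{The bound on $\omega(N_B)$.} Here Lemma~\ref{erdos_straus:lem:nosmall} does the work: every prime divisor of $N_B$ exceeds $B$. A product of $\omega(N_B)$ distinct primes, each larger than $B$, is at least $B^{\omega(N_B)}$, so
\[
  \omega(N_B)\log B\le\log N_B .
\]
It remains to bound $\log N_B$. Since
\[
  N_B=\binom{n+k_B}{n}\le(n+k_B)^n\le(2k_B)^n,
\]
using $k_B\ge n$ (which holds because $p^{e_p}>n$ for any prime $p\le B$), we get $\log N_B\le n\bigl(\log 2+\log k_B\bigr)=O_n(B)$ by the first part. Dividing by $\log B$ gives $\omega(N_B)=O_n(B/\log B)$.

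There is no real obstacle. The one point to handle carefully is the uniform estimate $p^{e_p}\le pn$, so that the sum over $p\le B$ is controlled by $\vartheta(B)$ plus a $\pi(B)\log n$ term rather than growing faster.
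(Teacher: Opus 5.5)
Your proof is correct and follows the paper's argument: the bound $\log k_B\le\vartheta(B)+\pi(B)\log n$ is exactly the paper's, and for $\omega(N_B)$ you compare the product of the prime divisors of $N_B$, each larger than $B$ by Lemma~\ref{erdos_straus:lem:nosmall}, with the size of $N_B$, just as the paper does. The only difference is cosmetic. The paper counts the large primes dividing each factor $k_B+i$ separately and sums over $i$, whereas you bound $N_B\le(2k_B)^n$ in one step, which gives essentially the same estimate $\omega(N_B)\le n\log(2k_B)/\log B$.
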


\begin{proof}
By the minimality of $e_p$ in \eqref{erdos_straus:eq:kB} we have $p^{e_p-1}\le n$, whence $e_p\log p\le \log n+\log p$, and therefore
\[
  \log k_B=\sum_{p\le B}e_p\log p
  \le \pi(B)\log n+\vartheta(B).
\]
Chebyshev's bounds $\pi(B)=O(B/\log B)$ and $\vartheta(B)=O(B)$ give $\log k_B=O_n(B)$.

For the second assertion, note that $N_B\cdot n!=\prod_{i=1}^{n}(k_B+i)$, so every prime divisor of $N_B$ divides some $k_B+i$ with $1\le i\le n$.
Fix such an $i$ and let $t$ be the number of distinct primes exceeding $B$ that divide $k_B+i$.
Each such prime is at least $B+1$ because $B$ is an integer, so their product divides $k_B+i$ and is at least $(B+1)^{t}$, whence
\[
  t\le \frac{\log(k_B+n)}{\log(B+1)} .
\]
By Lemma~\ref{erdos_straus:lem:nosmall} every prime divisor of $N_B$ exceeds $B$, so summing over the $n$ values of $i$ gives
\[
  \omega(N_B)\le n\,\frac{\log(k_B+n)}{\log(B+1)}
             =O_n\!\Bigl(\frac{B}{\log B}\Bigr),
\]
where the last step uses $\log k_B=O_n(B)$.
\end{proof}

\begin{lemma}\label{erdos_straus:lem:density}
We have $d(T_B)\to1$ as $B\to\infty$.
\end{lemma}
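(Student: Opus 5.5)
We start from the product formula \eqref{erdos_straus:eq:crt}. Since $i(q)\le n$ for every prime $q\mid N_B$, it gives
\[
  d(T_B)\ \ge\ \prod_{q\mid N_B}\Bigl(1-\frac nq\Bigr).
\]
The plan is to take logarithms and show that $\sum_{q\mid N_B} n/q\to0$. By Lemma~\ref{erdos_straus:lem:nosmall}, every prime $q\mid N_B$ satisfies $q>B>n$, so $q\ge B+1$ and each factor lies in $(0,1)$. If we ensure $n/q\le\tfrac12$, for instance by taking $B\ge 2n$, then the elementary inequality $\log(1-x)\ge -2x$ on $[0,\tfrac12]$ yields
\[
  \log d(T_B)\ \ge\ -2n\sum_{q\mid N_B}\frac1q\ \ge\ -\frac{2n\,\omega(N_B)}{B+1}.
\]

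Lemma~\ref{erdos_straus:lem:omega} gives $\omega(N_B)=O_n(B/\log B)$. Therefore the right-hand side is $-O_n(1/\log B)$, which tends to $0$. This gives $\liminf_{B\to\infty} d(T_B)\ge 1$, and since $d(T_B)\le 1$ trivially, we conclude $d(T_B)\to1$.

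The only delicate point is the size of $\omega(N_B)$. The crude count from Lemma~\ref{erdos_straus:lem:omega} has to beat the lower bound $q>B$ on each prime factor. It does so only by a factor $\log B$, so the argument is tight in its logarithmic bookkeeping but needs nothing beyond what has already been established.

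For completeness, this is how I would deduce Theorem~\ref{erdos_straus:thm:main}, even though the statement above is only the lemma. Given $\varepsilon>0$, choose $B$ with $d(T_B)>1-\varepsilon$. Corollary~\ref{erdos_straus:cor:TB} puts $T_B\cap[n+k_B,\infty)$ inside $G_n$, and removing a finite initial segment does not change density. Hence $\liminf_{x\to\infty} |G_n\cap[1,x]|/x\ge 1-\varepsilon$ for every $\varepsilon>0$. Combined with the trivial upper bound $1$, the limit defining $d(G_n)$ exists and equals $1$.
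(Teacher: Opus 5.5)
Your proposal is correct and follows essentially the same route as the paper. The paper also bounds $d(T_B)$ below by $\prod_{q\mid N_B}(1-n/q)$, assumes $B\ge 2n$, applies $\log(1-x)\ge-2x$ on $[0,\tfrac12]$ to get $\log d(T_B)\ge -2n\,\omega(N_B)/(B+1)$, and concludes with Lemma~\ref{erdos_straus:lem:omega}. Your sketched deduction of Theorem~\ref{erdos_straus:thm:main} likewise matches the paper's final proof.
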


\begin{proof}
Assume that $B\ge 2n$.
Every prime $q\mid N_B$ satisfies $q\ge B+1>2n$ by Lemma~\ref{erdos_straus:lem:nosmall} and the integrality of $B$, so $n/q<1/2$.
For $0\le x\le 1/2$ one has $\log(1-x)\ge -2x$; indeed $h(x):=\log(1-x)+2x$ satisfies $h(0)=0$ and $h'(x)=2-\tfrac1{1-x}\ge0$ on $[0,1/2]$, so $h\ge0$ there.
Applying this to each factor of \eqref{erdos_straus:eq:crt},
\[
  \log d(T_B)\ \ge\ \sum_{q\mid N_B}\log\Bigl(1-\frac nq\Bigr)
  \ \ge\ -2\sum_{q\mid N_B}\frac nq
  \ \ge\ -\frac{2n\,\omega(N_B)}{B+1},
\]
where the last step uses $q\ge B+1$ for every such $q$.
By Lemma~\ref{erdos_straus:lem:omega} the right-hand side is $O_n(1/\log B)$ in absolute value, hence $\log d(T_B)\to0$ and $d(T_B)\to1$.
\end{proof}

Note that $d(T_B)$ need not increase with $B$.
Continuing Example~\ref{erdos_straus:ex:n2}, we have $k_{13}=4\cdot3\cdot5\cdot7\cdot11\cdot13=60060$ and
\[
  N_{13}=\binom{60062}{2}=17\cdot59\cdot509\cdot3533 ,
\]
with $i(17)=i(3533)=1$ and $i(59)=i(509)=2$, so that \eqref{erdos_straus:eq:crt} gives
\[
  d(T_{13})=\frac{16}{17}\cdot\frac{57}{59}\cdot\frac{507}{509}\cdot\frac{3532}{3533}
  =0.9054\ldots\ <\ 0.9201\ldots=d(T_5).
\]
Raising $B$ from $5$ to $13$ has replaced two prime factors by four, one of them barely above $B$.
Only the limit is asserted.

\subsubsection{Proof of the main theorem}

\begin{proof}[Proof of Theorem~\ref{erdos_straus:thm:main}]
Let $\varepsilon>0$.
By Lemma~\ref{erdos_straus:lem:density} choose an integer $B\ge2n$ with $d(T_B)>1-\varepsilon$.
By Corollary~\ref{erdos_straus:cor:TB} every element of $T_B$ that is at least $n+k_B$ lies in $G_n$, so for every $x\ge n+k_B$,
\[
  |G_n\cap[1,x]|
  \ \ge\ \bigl|T_B\cap[n+k_B,x]\bigr|
  \ \ge\ |T_B\cap[1,x]|-(n+k_B).
\]
The subtracted quantity is independent of $x$, so dividing by $x$ and letting $x\to\infty$ gives
\[
  \liminf_{x\to\infty}\frac{|G_n\cap[1,x]|}{x}
  \ \ge\ \lim_{x\to\infty}\frac{|T_B\cap[1,x]|}{x}
  =d(T_B)>1-\varepsilon .
\]
Since $\varepsilon>0$ was arbitrary, the lower density of $G_n$ is $1$, while its upper density is trivially at most $1$.
Hence the natural density exists and $d^*(n)=1$, completing the proof.
\end{proof}

\putbib
\end{bibunit}
\endgroup
\subsection[(Ikenmeyer--Pak--Panova) Many-one \texorpdfstring{$\mathsf{GapP}$}{GapP}-completeness for binary symmetric group characters]{Many-one \texorpdfstring{$\mathsf{GapP}$}{GapP}-completeness for binary symmetric group characters}
\label{sol:pak}
\begingroup
\begin{bibunit}
\def\SharpP{\#\mathsf{P}}
\def\ComputeCharBinary{\textnormal{\textsc{ComputeCharBinary}}}
\def\CSat{\#\textnormal{\textsc{CircuitSat}}}
\def\EC{\#\textnormal{\textsc{ExactCover}}}
\def\DiffEC{\textnormal{\textsc{Diff}}\#\textnormal{\textsc{ExactCover}}}
\def\ec{\operatorname{ec}}
\def\Sn{\mathfrak{S}}

This problem concerns the complexity of evaluating an irreducible character of the symmetric group when both partitions are written in binary.
\citet{ikenmeyer2024positivity} proved that this evaluation is $\GapP$-complete under Turing reductions and conjectured that many-one reductions already suffice; we prove the conjecture, in the sharper form that the partition indexing the character may always be taken to have at most two parts.
The reduction encodes an arbitrary $\GapP$ function as a subset-sum count in a large base and reads the value off a two-row character, which by a classical identity is a difference of two such counts at consecutive targets.

\subsubsection{Introduction}

For a partition $\lambda\vdash n$ let $\chi^\lambda$ denote the irreducible character of the symmetric group $\Sn_n$ indexed by $\lambda$.
Its value at a permutation depends only on the cycle type of that permutation, so for $\mu\vdash n$ we write $\chi^\lambda(\mu)$ for the common value of $\chi^\lambda$ on the conjugacy class of cycle type $\mu$.
The problem \ComputeCharBinary\ takes as input two partitions $\lambda,\mu\vdash n$, each presented as a list of parts written in binary, and outputs the integer $\chi^\lambda(\mu)$.
Because a character value can be negative, the natural home for this function is not the counting class $\SharpP$ of \citet{valiant1979complexity} but the gap class $\GapP=\SharpP-\SharpP$ of differences of two $\SharpP$ functions, introduced by \citet{fenner1994gap}.
A function $F$ is \emph{$\GapP$-hard under many-one reductions} if for every $f\in\GapP$ there is a polynomial-time computable map $R$ with $f(x)=F(R(x))$ for all $x$, and \emph{$\GapP$-complete} if moreover $F\in\GapP$.
This is stronger than hardness under Turing reductions, where $f$ is only required to be computable in polynomial time given an oracle for $F$; we follow \citet{papadimitriou2003computational} for the standard conventions.

\citet{ikenmeyer2024positivity} proved that deciding $\chi^\lambda(\mu)=0$ is $\mathsf{C}_{=}\mathsf{P}$-complete and that deciding $\chi^\lambda(\mu)\geq 0$ is $\mathsf{PP}$-complete, both under many-one reductions, and deduced that neither $|\chi^\lambda(\mu)|$ nor $\chi^\lambda(\mu)^2$ lies in $\SharpP$ unless the polynomial hierarchy collapses to its second level.
As a byproduct of the same reduction they obtained that \ComputeCharBinary\ is $\GapP$-complete under Turing reductions \citep{ikenmeyer2024positivity}, and they noted that their route cannot be pushed as far as a parsimonious reduction.
They then stated the following \citep[Conjecture~5.2]{ikenmeyer2024positivity}.

\medskip
\noindent
\emph{The problem \ComputeCharBinary\ is $\GapP$-complete under many-one reductions.}
\medskip

We prove this conjecture, in the sharper form that the reduction may always be taken to output a partition $\lambda$ with at most two parts.

The closest previous result is the Turing-reduction completeness stated above.
Before that, \citet{hepler1994complexity} proved that computing $\chi^\lambda(\mu)$ is $\SharpP$-hard under many-one reductions already for unary input, hence also for binary input; $\SharpP$-hardness constrains only the nonnegative part of the character and does not by itself give $\GapP$-hardness.
\citet{pak2017complexity} showed that the positivity of a Kronecker coefficient can be decided in time $O(\log N)$ for partitions with a bounded number of parts and largest part $N$, while \citet{ikenmeyer2017vanishing} showed that deciding positivity of a Kronecker coefficient is NP-hard; \citet{panova2023computational} surveys this circle of questions.
On the algorithmic side, \citet{bravyi2025classical} give an algorithm computing a matrix product state that encodes the column $(\chi^\lambda(\mu))_{\lambda\vdash n}$ of the character table, and record the worst-case $\SharpP$-hardness of a single entry as the obstruction to a general polynomial-time algorithm.
We are not aware of a previous many-one $\GapP$-hardness result for this function.

\begin{theorem}
\label{pak:thm:main}
The function \ComputeCharBinary\ is $\GapP$-complete under polynomial-time many-one reductions.
More precisely, for every $f\in\GapP$ there is a polynomial-time computable map $x\mapsto(\lambda_x,\mu_x)$, whose values are pairs of partitions of a common integer $n_x$ with $\lambda_x$ having at most two parts, such that
\[
f(x)=\chi^{\lambda_x}(\mu_x)
\]
for every input $x$.
\end{theorem}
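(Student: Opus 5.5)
The plan has two halves: show that \ComputeCharBinary\ lies in $\GapP$, and give a direct many-one reduction from a $\GapP$-complete difference of counting problems to a two-row character value.

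\textbf{Membership.} The Murnaghan--Nakayama rule with binary parts writes $\chi^\lambda(\mu)$ as a signed count of rim-hook tableaux. The number of parts and the number of hooks are polynomial in the input length, and each hook is described by polynomially many bits of position data. So a nondeterministic machine can guess a sequence of rim-hook removals (one hook per part of $\mu$) and verify it in polynomial time. It then branches into an accepting or rejecting path according to the sign $(-1)^{\mathrm{ht}}$, and this exhibits $\chi^\lambda(\mu)$ as the gap of a polynomial-time machine. Alternatively, membership can be cited from \citet{ikenmeyer2024positivity}, who already placed the problem in $\GapP$.

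\textbf{The two-row identity.} For $\lambda=(n-s,s)$ with $s\le n/2$, we have $\chi^{(n-s,s)}=\pi_s-\pi_{s-1}$, where $\pi_t$ is the permutation character of $\Sn_n$ acting on $t$-subsets. This is Young's rule, $M^{(n-t,t)}=\bigoplus_{j\le t}S^{(n-j,j)}$. Evaluating at cycle type $\mu$, $\pi_t(\mu)$ is the number of $t$-subsets fixed by the permutation, that is, unions of cycles. Hence
\[
\chi^{(n-s,s)}(\mu)=N_\mu(s)-N_\mu(s-1),
\]
where $N_\mu(t)$ counts subsets of the parts of $\mu$ (indexed, so equal parts are distinguished) summing to $t$.

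\textbf{Encoding a $\GapP$ function.} Every $f\in\GapP$ is $g-h$ with $g,h\in\SharpP$. By parsimonious reductions (for instance through \CSat\ to \EC), $g(x)$ and $h(x)$ are numbers of exact covers of instances computable from $x$ in polynomial time. I would standardize both on a common ground set $U$ of size $u$ with set families $\mathcal F_g,\mathcal F_h$. Encode each set $S$ as a subset-sum item $\sum_{e\in S}D^{e}$ in a base $D$ exceeding the total number of sets. With target $T=\sum_{e}D^{e}$, the subsets of items summing to $T$ are then exactly the exact covers, because no carries can occur.

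Now build a single multiset $\mu$ in which sums at $s$ count covers from $\mathcal F_g$ and sums at $s-1$ count covers from $\mathcal F_h$. Use a further high digit with a marker: give $g$-items the extra weight $D^{u}$ and $h$-items an extra marker weight, and adjust so that the $h$-target is exactly one less than the $g$-target. One concrete way is to add the constant $1$ as a low digit of every $g$-item while the target also fixes the number of chosen items through a count digit. Alternatively, include a special part equal to $1$ and arrange that the $h$-configurations must use it while the $g$-configurations cannot, so that the $-1$ shift is absorbed. Finally, pad $\mu$ with one huge part $P>s$ so that $n=\sum\mu\ge 2s$ and $\lambda=(n-s,s)$ is a partition; the huge part can never participate in a sum at $s$ or $s-1$. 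The binary sizes stay polynomial since $D$ and the number of digits are polynomial.

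\textbf{Main obstacle.} The hard step is the gadget making $N_\mu(s)=g(x)$ and $N_\mu(s-1)=h(x)$ exactly, with no spurious mixed subsets. In particular, subsets combining $g$-items, $h$-items and the unit part must not land on either target. I would handle this with separate digit blocks: one block for the $g$-ground set, one for the $h$-ground set, and one selector digit. A solution at $s$ must fill the $g$-block and leave the $h$-block empty; a solution at $s-1$ must fill the $h$-block and borrow exactly through the unit part. I would choose $D$ large enough that every digit is read off without carries, and check that borrowing cannot create alternative representations.
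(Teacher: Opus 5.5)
Your membership argument, the two-row identity $\chi^{(n-s,s)}(\mu)=N_\mu(s)-N_\mu(s-1)$ from Young's rule, and the parsimonious route to exact cover are all sound. For membership, your Murnaghan--Nakayama count over bead positions works with binary data, whereas the paper uses the Frobenius formula with ordered set partitions. The genuine gap is the gadget. You correctly name it as the main obstacle, but you do not resolve it, and none of the mechanisms you sketch works.

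\emph{Common ground set.} With a common ground set $U$, nothing prevents a subset that mixes items of $\mathcal F_g$ and $\mathcal F_h$ from covering $U$ exactly. A low digit $1$ on each $g$-item then records how many $g$-items were chosen. So pure $g$-covers and pure $h$-covers sit at targets that differ by the size of a cover, not by $1$, and the target one below a pure $g$-cover is hit by mixed covers.

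\emph{Separate blocks.} Here there is a structural obstruction. Unless a borrow occurs, $s$ and $s-1$ agree in every digit above the units digit, so both targets prescribe the same pattern on the $g$-block and on the $h$-block. If the $h$-block pattern is zero, no $h$-cover reaches $s-1$. If it is all ones, a solution at $s$ must fill the $h$-block too, and if only $h$-items can do that you count products of covers. A special part equal to $1$ only moves the units digit and cannot change this. Forcing a borrow instead turns a run of zero digits of $s$ into digits $D-1$ in $s-1$. Sums of $0/1$ patterns cannot produce such digits once $D$ exceeds the number of items, so the carry-free digitwise reading you rely on is lost.

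The missing idea is to let each selector pre-fill the \emph{other} instance's block while itself carrying the units-digit difference. Let $A$ and $B$ be the all-ones patterns on the $g$-block and the $h$-block, and $D$ a selector digit above both. The paper takes
\[
c_A=B+D+1,\qquad c_B=A+D,\qquad s=A+B+D+1,
\]
adds a guard $H=s+1$ as padding, and works in base $Q=r+r'+4$, so that no column sum can carry. The selector digit forces exactly one of $c_A,c_B$ into any subset summing to $s$ or $s-1$.

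At target $s$, the units digit forces $c_A$. Since $c_A$ already fills the $h$-block, every $h$-item is excluded, and the remaining parts must form an exact cover of the $g$-instance. At target $s-1$, the units digit excludes $c_A$, so $c_B$ is taken. It fills the $g$-block, and what remains must be an exact cover of the $h$-instance. Mixed subsets therefore cannot hit either target, and this is the step your proposal needs in order to be a proof.
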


\begin{remark}
\label{pak:rem:scope}
The reduction below computes $f(x)$ exactly, in the many-one sense of the definition above, but it exhibits the value as a difference of two subset-sum counts rather than as a single unsigned count, so it gives no combinatorial interpretation of the character value.
\citet{ikenmeyer2024positivity} record a different obstruction to parsimoniousness in their own reduction, namely that its passage from matchings to counts of ordered set partitions multiplies the count by a fixed factor, since each matching arises from the same number of ordered set partitions, obtained from one another by permuting the bins and the repeated items.
The binary encoding is also essential.
For unary input the quantity $N_\mu(t)$ defined below obeys a subset-sum recursion over the parts of $\mu$ with $0\leq t\leq n$, so two-row character values are then computable in polynomial time, and the family of instances produced below can carry no hardness at all in the unary model unless every $\GapP$ function is polynomial-time computable.
An alternative reduction runs through the identities of \citet{ikenmeyer2024positivity}, which realize as a single character value the difference of two counts of ordered set partitions with the same item sizes and with bin sizes agreeing outside two distinguished bins, whose sizes are $2$ and $4$ in the first count and $1$ and $5$ in the second; the two-row route above avoids the padding that this requires.
\end{remark}

The proof rests on one classical identity and one gadget.
For a two-row shape the character value is a difference of two subset-sum counts at consecutive targets,
\[
\chi^{(n-s,s)}(\mu)=N_\mu(s)-N_\mu(s-1),
\]
where $N_\mu(t)$ is the number of subsets of the parts of $\mu$ with total size $t$.
It therefore suffices to manufacture, out of two given counting problems, a single multiset of binary integers whose subset sums realize the first count at $s$ and the second at $s-1$.
The key point is that the two targets differ by exactly $1$, so the two problems have to be separated inside a single units digit.
We take the two problems to be counts of exact covers of a finite set, write every part in a large base $Q$, give the two instances disjoint blocks of base-$Q$ digits, and adjoin two selector parts that differ in the units digit and each of which pre-fills the digit block of the other instance.

\subsubsection{Characters of two-row shape}

Throughout, a partition $\mu=(\mu_1,\ldots,\mu_m)\vdash n$ has positive parts, and parts of equal size are regarded as distinct, indexed by $[m]$.
For $t\in\mathbb{Z}$ set
\[
N_\mu(t):=\#\Big\{I\subseteq[m]:\sum_{i\in I}\mu_i=t\Big\},
\]
so that $N_\mu(t)=0$ for $t<0$ and $N_\mu(0)=1$.

\begin{lemma}
\label{pak:lem:tworow}
Let $\mu\vdash n$ and let $0\leq s\leq n/2$.
Then
\[
\chi^{(n-s,s)}(\mu)=N_\mu(s)-N_\mu(s-1).
\]
\end{lemma}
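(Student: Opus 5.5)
The plan is to express the two-row irreducible character through permutation characters and Young's rule, and then evaluate each permutation character by counting fixed points. For $0\le t\le n$, let $\pi_t$ be the permutation character of $\Sn_n$ acting on the $t$-element subsets of $[n]$; this is the character induced from the trivial character of the Young subgroup $\Sn_{n-t}\times\Sn_t$. Young's rule, or the Pieri rule applied to $h_{n-t}h_t$, gives for $t\le n/2$
\[
\pi_t=\sum_{j=0}^{t}\chi^{(n-j,j)},
\]
since the shapes occurring in $h_{n-t}h_t$ are exactly those $(n-j,j)$ with $0\le j\le t$, each appearing once. Subtracting consecutive instances then yields
\[
\chi^{(n-s,s)}=\pi_s-\pi_{s-1}
\]
for $1\le s\le n/2$. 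The case $s=0$ is covered separately: there $\chi^{(n)}=\pi_0=1$, while $N_\mu(0)-N_\mu(-1)=1-0=1$, so the identity still holds.

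Next I would compute $\pi_t(\sigma)$ for a permutation $\sigma$ of cycle type $\mu$. A $t$-subset $S\subseteq[n]$ is fixed by $\sigma$ exactly when $S$ is a union of cycles of $\sigma$. The cycles of $\sigma$ are in bijection with the indexed parts $\mu_1,\dots,\mu_m$, with each cycle's length equal to the corresponding part, and the paper's convention of indexing equal parts by $[m]$ matches this bijection. Hence the fixed $t$-subsets correspond bijectively to the sets $I\subseteq[m]$ with $\sum_{i\in I}\mu_i=t$, which gives $\pi_t(\mu)=N_\mu(t)$. Combining this with the previous step proves the lemma.

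The only point that needs care is justifying Young's rule in the form $\pi_t=\sum_{j\le t}\chi^{(n-j,j)}$, including the multiplicity-one claim and the restriction $t\le n/2$, since the two-row partition $(n-j,j)$ requires $j\le n-j$. I would cite this as the standard Pieri/Young decomposition of $M^{(n-t,t)}$ rather than reprove it. If a self-contained check is wanted instead, I would argue by induction on $t$, using that $\langle\pi_t,\pi_{t'}\rangle$ equals the number of $\Sn_{n-t}\times\Sn_t$-orbits on $t'$-subsets, namely $\min(t,t')+1$. This count is consistent with $\pi_t$ being a multiplicity-free sum of $t+1$ distinct irreducibles nested in $t$.
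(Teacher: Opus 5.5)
Your proof is correct and takes essentially the same route as the paper. Both arguments write $\chi^{(n-s,s)}$ via Young's rule as the difference of the permutation characters on $s$-subsets and $(s-1)$-subsets (the paper phrases these as two-letter words). Both then evaluate each permutation character at cycle type $\mu$ by observing that fixed objects are unions of cycles, which gives $N_\mu(t)$. The only difference is cosmetic: you treat $s=0$ separately, whereas the paper absorbs that case with the convention $\phi^\alpha=0$ when $\alpha$ has a negative entry.
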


\begin{proof}
For an integer vector $\alpha=(\alpha_1,\alpha_2,\ldots)$ with entries summing to $n$ let $\phi^\alpha$ be the character of the representation of $\Sn_n$ induced from the trivial representation of the Young subgroup $\Sn_{\alpha_1}\times\Sn_{\alpha_2}\times\cdots$, with the convention $\phi^\alpha=0$ when some $\alpha_i<0$.
Equivalently, $\phi^\alpha$ is the character of the action of $\Sn_n$ on the words containing exactly $\alpha_i$ letters equal to $i$.
Such a word is fixed by a permutation $\pi$ precisely when it is constant on each cycle of $\pi$, so for $\pi$ of cycle type $\mu$ the value $\phi^\alpha(\mu)$ is the number of ways to label the $m$ cycles by letters so that the cycles labeled $i$ have total length $\alpha_i$ \citep{ikenmeyer2024positivity}.
In particular, for a vector with two entries,
\[
\phi^{(n-t,t)}(\mu)=N_\mu(t).
\]
The Frobenius character formula \citep[Eq.~2.3.8]{james2006representation}, equivalently Young's rule \citep{sagan2001symmetric}, gives
\[
\chi^{(n-s,s)}=\phi^{(n-s,s)}-\phi^{(n-s+1,s-1)},
\]
and evaluating at cycle type $\mu$ yields the identity.
\end{proof}

\subsubsection{A difference of two exact cover counts}

An instance of $\EC$ is a pair $(X,C)$ in which $X=[k]$ with $k\geq 0$ and $C=(S_1,\ldots,S_r)$ is a list of nonempty subsets of $X$, members of equal content being regarded as distinct and indexed by $[r]$.
A \emph{selection} is a subset $F\subseteq[r]$, and it is an \emph{exact cover} if every $u\in X$ lies in $S_i$ for exactly one $i\in F$.
The value of the instance is the number
\[
\ec(X,C):=\#\{F\subseteq[r]:F\text{ is an exact cover}\}
\]
of exact covers, so that $\ec(\varnothing,())=1$, the empty selection covering the empty ground set.
Deciding whether an exact cover exists is NP-complete already when every member of $C$ has three elements \citep{garey2002computers}; we put no bound on the sizes of the members, which the gadget below does not need.
An element of $X$ lying in no member of $C$ is covered by no selection, so an instance containing such an element has value $0$.
Replacing every such instance by the fixed instance $\big([3],(\{1,2\},\{2,3\})\big)$, whose value is also $0$, we may and do assume
\[
k\leq\sum_{S\in C}|S|,
\]
so that $k$ is bounded by the length of the instance.

\begin{lemma}
\label{pak:lem:parsimonious}
There is a polynomial-time computable map sending a conjunctive normal form formula $\varphi$ with clauses of between one and three literals to an instance $(X_\varphi,C_\varphi)$ of $\EC$ with $\ec(X_\varphi,C_\varphi)$ equal to the number of satisfying assignments of $\varphi$.
\end{lemma}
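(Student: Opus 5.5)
We give a direct parsimonious gadget construction from a CNF formula $\varphi$ with clauses of one to three literals to an exact cover instance, rather than chaining through a general reduction. Let $\varphi$ have variables $x_1,\dots,x_v$ and clauses $c_1,\dots,c_w$. The ground set $X_\varphi$ consists of three kinds of elements: one element $\mathsf{var}_j$ for each variable; one element $\mathsf{occ}_{j,\ell}$ for each occurrence of a literal $\ell$ of variable $x_j$ in a clause; and one element $\mathsf{cl}_c$ for each clause $c$. The first two kinds encode the assignment, and the third enforces satisfaction.

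For each variable $x_j$ we introduce two sets. The set $T_j$ contains $\mathsf{var}_j$ together with all occurrence elements of the negative literal $\neg x_j$. The set $F_j$ contains $\mathsf{var}_j$ together with all occurrence elements of the positive literal $x_j$. Covering $\mathsf{var}_j$ exactly once forces exactly one of $T_j,F_j$ into the selection. If $x_j$ is true, $T_j$ is chosen, and it covers the occurrence elements of the \emph{false} literal $\neg x_j$. The occurrence elements of the true literal $x_j$ are then still uncovered.

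For each clause $c$ with literal occurrences $o_1,\dots,o_s$ ($1\le s\le 3$), every currently uncovered occurrence element must be covered by a clause set, and $\mathsf{cl}_c$ must be covered exactly once. We handle this by adding, for each nonempty subset $U\subseteq\{o_1,\dots,o_s\}$, the set $\{\mathsf{cl}_c\}\cup U$. The occurrences left uncovered by the variable sets are exactly the true-literal occurrences $U_c$ in $c$. Hence the clause part admits an exact completion iff $U_c\neq\varnothing$, i.e.\ iff $c$ is satisfied, and that completion is unique: the single set $\{\mathsf{cl}_c\}\cup U_c$. The set $U_c$ is determined by the assignment, and $\mathsf{cl}_c$ must be covered once, which leaves no other choice.

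Bijectivity follows directly. Every exact cover picks one of $T_j,F_j$ per variable, which defines an assignment. The clause sets are then uniquely determined and exist exactly when every clause is satisfied. Conversely, each satisfying assignment yields one exact cover. So $\ec(X_\varphi,C_\varphi)$ equals the number of satisfying assignments.

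Two checks remain.

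\begin{itemize}
\item \emph{Nonempty members.} Every member of $C_\varphi$ is nonempty, since each contains $\mathsf{var}_j$ or $\mathsf{cl}_c$.
\item \emph{Size.} The instance has polynomial size: at most $2v+7w$ sets over $v+w+3w$ elements.
\end{itemize}

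The one delicate point is a variable occurring twice in a clause, e.g.\ the clause $x\vee\neg x$ or the clause $x\vee x$. This is harmless because occurrence elements are indexed by occurrence, not by literal. If both $x$ and $\neg x$ occur in the same clause, exactly one of those occurrences is uncovered by the variable sets, so the clause is always satisfiable, as required. The step needing most care is therefore the uniqueness claim for the clause completion, which rests on covering $\mathsf{cl}_c$ exactly once.
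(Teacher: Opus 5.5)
Your proposal is correct and is essentially the paper's own construction: your $T_j$ and $F_j$ are its $V_{i,1}$ and $V_{i,0}$, your sets $\{\mathsf{cl}_c\}\cup U$ are its $W_{j,T}$, and the bijection between exact covers and satisfying assignments is argued the same way. One point you leave implicit is a variable occurring in no clause, where $T_j=F_j=\{\mathsf{var}_j\}$. The count is still correct there, because the members of an exact cover instance are a list and are counted as distinct, exactly as the paper's definition stipulates.
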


\begin{proof}
Let $\varphi$ have variables $x_1,\ldots,x_N$ and clauses $c_1,\ldots,c_M$, and let $c_j$ be the disjunction of the literals $\ell_{j,1},\ldots,\ell_{j,w_j}$ with $1\leq w_j\leq 3$; repeated and complementary literals inside a clause are allowed.
Take the ground set
\[
X_\varphi:=\{v_i:i\in[N]\}\cup\{z_j:j\in[M]\}\cup\{p_{j,q}:j\in[M],\ q\in[w_j]\},
\]
one element for each variable, one for each clause and one for each position inside a clause, and let $C_\varphi$ consist of the sets
\[
V_{i,t}:=\{v_i\}\cup\{p_{j,q}:\ell_{j,q}\in\{x_i,\neg x_i\}\text{ is falsified by }x_i=t\}
\qquad(i\in[N],\ t\in\{0,1\})
\]
together with the sets
\[
W_{j,T}:=\{z_j\}\cup\{p_{j,q}:q\in T\}
\qquad(j\in[M],\ \varnothing\neq T\subseteq[w_j]).
\]
All of them are nonempty, there are $2N+\sum_{j}(2^{w_j}-1)\leq 2N+7M$ of them, and the list is written down in time linear in the length of $\varphi$.

Let $F$ be an exact cover.
The element $v_i$ lies only in $V_{i,0}$ and $V_{i,1}$, so exactly one of the two is selected; let $\alpha(i)$ be the corresponding value of $x_i$.
The element $z_j$ lies only in the sets $W_{j,T}$, so for each $j$ exactly one $T=T_j$ is selected.
The selected sets $V_{i,\alpha(i)}$ cover between them precisely those $p_{j,q}$ whose literal is false under $\alpha$, so exactness at the elements $p_{j,q}$ forces $T_j$ to be the set of positions $q$ with $\ell_{j,q}$ true under $\alpha$.
As $T_j$ is nonempty, $\alpha$ satisfies every clause.
Conversely, let $\alpha$ satisfy $\varphi$ and let $T_j$ be the set of positions of $c_j$ holding a literal true under $\alpha$, which is nonempty.
The sets $V_{i,\alpha(i)}$ with $i\in[N]$ and $W_{j,T_j}$ with $j\in[M]$ then cover each element of $X_\varphi$ exactly once.
The two constructions are mutually inverse, so the exact covers correspond bijectively to the satisfying assignments of $\varphi$.
\end{proof}

Given two instances of $\EC$, write
\[
\DiffEC\big((X,C),(X',C')\big):=\ec(X,C)-\ec(X',C').
\]

\begin{lemma}
\label{pak:lem:diffec}
The function $\DiffEC$ is $\GapP$-hard under polynomial-time many-one reductions.
\end{lemma}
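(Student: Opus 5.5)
The plan is to use the standard characterization of $\GapP$: every $f\in\GapP$ can be written as $f=g-h$ with $g,h\in\SharpP$. It is in fact possible to arrange that both $g$ and $h$ count the satisfying assignments of polynomial-time constructible CNF formulas with clauses of at most three literals. Once that is in place, Lemma~\ref{pak:lem:parsimonious} is applied to each formula separately, and it yields the desired pair of exact cover instances.

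The concrete route runs as follows. By definition of $\SharpP$, there are a polynomial-time nondeterministic machine $M$ and a polynomial $p$ with $g(x)=\#\{w\in\{0,1\}^{p(|x|)}: M \text{ accepts }(x,w)\}$, and likewise a machine $M'$ for $h$. The Cook--Levin construction, in its parsimonious form, writes down in polynomial time a formula $\psi_x$ with the following properties. Its variables are the witness bits $w$ together with auxiliary variables describing the tableau of the deterministic verifier on $(x,w)$. Its clauses encode the local transition constraints and acceptance. Because the verifier is deterministic, each accepting witness $w$ extends to exactly one satisfying assignment of the auxiliary variables, so $\#\mathrm{SAT}(\psi_x)=g(x)$.

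The formula is then brought into 3-CNF without changing the count. Each gate of the verifier's circuit gets a fresh variable, and the Tseitin clauses for that gate force the variable's value. Since every introduced variable is functionally determined, the number of satisfying assignments is preserved. The clauses have between one and three literals, which is exactly the hypothesis of Lemma~\ref{pak:lem:parsimonious}; the unit clause asserting the output gate is allowed.

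Doing the same for $h$ gives a formula $\psi'_x$ with $\#\mathrm{SAT}(\psi'_x)=h(x)$. Lemma~\ref{pak:lem:parsimonious} then produces instances $(X_{\psi_x},C_{\psi_x})$ and $(X_{\psi'_x},C_{\psi'_x})$, and their $\DiffEC$ value equals $g(x)-h(x)=f(x)$. The whole map is a composition of polynomial-time maps, so it is a many-one reduction.

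One bookkeeping point remains. The preceding normalization replaces instances with an uncovered element by $([3],(\{1,2\},\{2,3\}))$. This does not interfere with the argument, because in the construction of Lemma~\ref{pak:lem:parsimonious} every ground element already lies in some set.

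The main obstacle is stating the parsimonious Cook--Levin/Tseitin step carefully enough, in particular the requirement that auxiliary variables be uniquely determined by the witness. I would cite it as the standard fact that $\#3\textsc{Sat}$ is $\SharpP$-complete under parsimonious reductions (Valiant; see Papadimitriou) rather than reprove it. The decomposition $f=g-h$ with $g,h\in\SharpP$ is the definition of $\GapP$ used by Fenner--Fortnow--Kurtz, so it needs no further argument.
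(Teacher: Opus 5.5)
Your proposal is correct and is essentially the paper's proof: write $f=g-h$ with $g,h\in\#\mathsf{P}$, simulate each verifier by a Boolean circuit with gates of fan-in at most two, and apply the Tseitin transformation to get CNF formulas with clauses of one to three literals. That transformation is parsimonious because every gate variable is determined by the inputs, and the resulting formulas are then fed to Lemma~\ref{pak:lem:parsimonious}. The differences are only presentational: the detour through the Cook--Levin tableau is unnecessary, since the paper goes straight from the circuit to Tseitin, and your remark that the normalization is vacuous here, because every ground element of the Lemma~\ref{pak:lem:parsimonious} instance lies in some set, is correct.
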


\begin{proof}
Let $f\in\GapP$ and write $f=g-h$ with $g,h\in\SharpP$.
If $g(x)=\#\{w\in\{0,1\}^{q(|x|)}:V(x,w)=1\}$ for a polynomial $q$ and a polynomial-time predicate $V$, then the standard simulation of a polynomial-time machine by a Boolean circuit \citep{papadimitriou2003computational} produces in polynomial time a circuit $\Gamma_x$ with gates of fan-in at most two and with $\CSat(\Gamma_x)=g(x)$; hence $\CSat$ is $\SharpP$-complete under parsimonious reductions.
The Tseitin transformation \citep{tseitin1983complexity} turns a circuit into a formula in conjunctive normal form with clauses of between one and three literals: it introduces one variable for each gate, adds the clauses forcing that variable to equal the value of the gate, and adds a unit clause forcing the output gate to take the value $1$.
The gate variables are determined by the input variables, so every satisfying assignment of the circuit extends to exactly one satisfying assignment of the formula, and the transformation is parsimonious.
Composing it with Lemma~\ref{pak:lem:parsimonious} and normalizing as above, we obtain polynomial-time computable instances with
\[
\ec(X_x,C_x)=g(x),
\qquad
\ec(X'_x,C'_x)=h(x),
\]
and therefore $f(x)=\DiffEC((X_x,C_x),(X'_x,C'_x))$.
\end{proof}

\subsubsection{The digit gadget}

The construction is a subset-sum encoding in a large base, in the style of the strong NP-hardness proof of $4$-Partition \citep{garey2002computers}.
Fix two instances $(X,C)$ and $(X',C')$ as above, with $X=[k]$, $X'=[k']$, $C=(S_1,\ldots,S_r)$ and $C'=(S'_1,\ldots,S'_{r'})$, and set
\[
Q:=r+r'+4.
\]
We index base-$Q$ digit positions by $0,1,\ldots,k+k'+1$, and call position $0$ the \emph{units digit}, positions $1,\ldots,k$ the \emph{$X$-block}, positions $k+1,\ldots,k+k'$ the \emph{$X'$-block}, and position $k+k'+1$ the \emph{selector digit}.
For $S\in C$ and $S'\in C'$ put
\[
a_S:=\sum_{u\in S}Q^{u},
\qquad
b_{S'}:=\sum_{u\in S'}Q^{k+u},
\]
so that $a_S$ marks the elements covered by $S$ inside the $X$-block, and likewise for $b_{S'}$.
Let
\[
A:=\sum_{u\in X}Q^{u},
\qquad
B:=\sum_{u\in X'}Q^{k+u}
\]
be the all-ones patterns on the two blocks, and let $D:=Q^{k+k'+1}$.
Introduce the two \emph{selectors} and the \emph{guard}
\[
c_A:=B+D+1,
\qquad
c_B:=A+D,
\qquad
H:=A+B+D+2,
\]
and set
\[
s:=A+B+D+1,
\]
so that $H=s+1$.
Let $\mu$ be the partition obtained by sorting the multiset
\[
\{a_S:S\in C\}\cup\{b_{S'}:S'\in C'\}\cup\{c_A,c_B,H\}
\]
into weakly decreasing order, let $n:=|\mu|$, and put
\[
\lambda:=(n-s,s).
\]
Table~\ref{pak:tab:digits} displays the base-$Q$ digits of every quantity involved.

\begin{table}[t]
\centering
\begin{tabular}{lcccc}
\toprule
& units digit & $X$-block & $X'$-block & selector digit\\
& $Q^0$ & $Q^1,\ldots,Q^{k}$ & $Q^{k+1},\ldots,Q^{k+k'}$ & $Q^{k+k'+1}$\\
\midrule
$a_S$ \ $(S\in C)$   & $0$ & $\mathbf{1}_S$ & $\mathbf{0}$ & $0$\\
$b_{S'}$ \ $(S'\in C')$ & $0$ & $\mathbf{0}$ & $\mathbf{1}_{S'}$ & $0$\\
$c_A$                & $1$ & $\mathbf{0}$ & $\mathbf{1}$ & $1$\\
$c_B$                & $0$ & $\mathbf{1}$ & $\mathbf{0}$ & $1$\\
$H$                  & $2$ & $\mathbf{1}$ & $\mathbf{1}$ & $1$\\
\midrule
$s$                  & $1$ & $\mathbf{1}$ & $\mathbf{1}$ & $1$\\
$s-1$                & $0$ & $\mathbf{1}$ & $\mathbf{1}$ & $1$\\
\bottomrule
\end{tabular}
\caption{The base-$Q$ digits of the parts of $\mu$ and of the two targets $s$ and $s-1$.
Here $\mathbf{1}$ and $\mathbf{0}$ denote the all-ones and all-zeros patterns on the block indicated, and $\mathbf{1}_S$ denotes the indicator pattern of $S$ on the $X$-block, with $\mathbf{1}_{S'}$ defined analogously on the $X'$-block.
No column can reach $Q$, so subset sums may be compared digit by digit: a subset of parts summing to $s$ must take $c_A$, hence miss the $X'$-block entirely and pick out an exact cover of $(X,C)$, while a subset summing to $s-1$ must take $c_B$ and pick out an exact cover of $(X',C')$.}
\label{pak:tab:digits}
\end{table}

\begin{lemma}
\label{pak:lem:nocarry}
Let $m=r+r'+3$ be the number of parts of $\mu$.
For every $I\subseteq[m]$ the base-$Q$ digits of $\sum_{i\in I}\mu_i$ are obtained by adding the digit patterns of the parts indexed by $I$, without carrying.
Consequently, for $t\in\{s-1,s\}$ one has $\sum_{i\in I}\mu_i=t$ if and only if those digit patterns sum to the digit pattern of $t$.
\end{lemma}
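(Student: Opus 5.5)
The plan is a column-by-column bound on the digit sums. Every part of $\mu$ has all of its base-$Q$ digits in $\{0,1,2\}$, and these digits sit in positions $0,1,\ldots,k+k'+1$. For a fixed position $j$ and any $I\subseteq[m]$, write $\sigma_j(I)$ for the sum over $i\in I$ of the $j$-th digit of $\mu_i$. We aim to show that $\sigma_j(I)\le Q-1$ for every $j$ and every $I$. Granting this, the expression $\sum_j\sigma_j(I)Q^j$ is already a legitimate base-$Q$ expansion. It equals $\sum_{i\in I}\mu_i$, so by uniqueness of base-$Q$ representations its digits are exactly the $\sigma_j(I)$, which is the first assertion.

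The bound comes from reading Table~\ref{pak:tab:digits} column by column.
\begin{itemize}
\item In the units column, only $c_A$ (digit $1$) and $H$ (digit $2$) are nonzero, so $\sigma_0(I)\le3$.
\item In an $X$-block position $u$, the parts with a nonzero digit are the $a_S$ with $u\in S$ (at most $r$ of them), together with $c_B$ and $H$. Each contributes digit $1$, so the column sum is at most $r+2$.
\item An $X'$-block position is symmetric, with column sum at most $r'+2$ (from the $b_{S'}$, $c_A$ and $H$).
\item In the selector column, only $c_A$, $c_B$ and $H$ have a nonzero digit, each equal to $1$, so the column sum is at most $3$.
\end{itemize}
All four bounds are at most $r+r'+3=Q-1$, because $Q=r+r'+4$.

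For the consequence, fix $t\in\{s-1,s\}$. Both targets have digits in $\{0,1\}\subseteq\{0,\ldots,Q-1\}$, as the last two rows of the table show. These rows can be checked directly from the definitions: $s=A+B+D+1$, and $s-1=A+B+D$, where $A$, $B$ and $D$ occupy disjoint positions with digit $1$. Two integers are equal if and only if their base-$Q$ digit strings coincide. We have shown that the digit string of $\sum_{i\in I}\mu_i$ is $(\sigma_j(I))_j$, so $\sum_{i\in I}\mu_i=t$ holds exactly when $\sigma_j(I)$ equals the $j$-th digit of $t$ for every $j$. That is the stated equivalence.

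The only real step is the per-column count. Even it needs care only at the units digit, where $H$ contributes $2$, and there the total of $3$ is still far below $Q$. No earlier lemma is needed beyond the explicit definitions of $a_S$, $b_{S'}$, $c_A$, $c_B$ and $H$, and of $Q$. One should also confirm that $Q\ge4$, so that the digits $1$ and $2$ are legitimate base-$Q$ digits; this holds trivially since $Q=r+r'+4$.
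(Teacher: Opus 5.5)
Your proof is correct and follows the paper's argument essentially verbatim. You bound each base-$Q$ column sum by $\max(3,\,r+2,\,r'+2)<Q=r+r'+4$: the units and selector columns receive at most $3$, and the block columns at most $r+2$ or $r'+2$. So no carrying occurs, and uniqueness of base-$Q$ representation gives the equivalence with the targets $s$ and $s-1$, whose digits lie in $\{0,1\}$.
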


\begin{proof}
By Table~\ref{pak:tab:digits}, every part of $\mu$ has all its base-$Q$ digits in $\{0,1,2\}$.
At a position of the $X$-block the parts with a nonzero digit there are among the $r$ parts $a_S$, the selector $c_B$ and the guard $H$, each contributing $1$, so the total over any $I$ is at most $r+2$.
Symmetrically a position of the $X'$-block receives at most $r'+2$.
The selector digit receives at most $3$, from $c_A$, $c_B$ and $H$, and the units digit receives at most $3$, namely $1$ from $c_A$ and $2$ from $H$.
Every other position receives $0$.
All of these totals are smaller than $Q=r+r'+4$, so no carrying occurs.
Finally $s$ has digit $1$ at the units digit, at every position of the two blocks, and at the selector digit, while $s-1$ has the same digits except $0$ at the units digit; both patterns have entries below $Q$, so two such integers agree if and only if their digit patterns agree.
\end{proof}

\begin{lemma}
\label{pak:lem:valid}
The pair $(\lambda,\mu)$ is a valid input to \ComputeCharBinary, with $\lambda$ a partition of $n$ having at most two parts and $0\leq s\leq n/2$, and it is computable from $(X,C)$ and $(X',C')$ in polynomial time.
\end{lemma}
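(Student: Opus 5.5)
We need to show that $\mu$ is a partition of $n$ into positive parts, that $0\le s\le n/2$, and that everything has polynomial bit length and is computable in polynomial time. The plan has three steps, taken in order: positivity of the parts, the inequality $2s\le n$, and a size bound.

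\emph{Positivity.} Every member of $C$ and $C'$ is nonempty, so each $a_S$ and each $b_{S'}$ is at least $Q\ge 1$. The selectors and guard are at least $D\ge 1$. Hence sorting the multiset gives a genuine partition $\mu\vdash n$ with $m=r+r'+3$ positive parts.

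\emph{The inequality $2s\le n$.} This is the only step with content, and I will handle it by comparing $n$ with $2s=2A+2B+2D+2$ directly. Summing the three special parts gives
\[
c_A+c_B+H=2A+2B+3D+3,
\]
so $n\ge 2A+2B+3D+3>2s$. Therefore $0\le s<n/2$ and $n-s>s$. It follows that $\lambda=(n-s,s)$ is a partition of $n$ with at most two parts, and the hypothesis of Lemma~\ref{pak:lem:tworow} holds. The case $s=0$ cannot occur, since $s\ge D\ge1$.

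\emph{Size and computability.} Since $k\le\sum_{S\in C}|S|$ and $k'\le\sum_{S'\in C'}|S'|$ by the normalization, the number of digit positions $k+k'+2$ is polynomial in the input length. Also $Q=r+r'+4$ has $O(\log(r+r'))$ bits. Each part, as well as $s$, $n$ and $n-s$, is a sum of at most polynomially many powers $Q^j$ with $j\le k+k'+1$ and small coefficients. Each therefore has bit length $O((k+k'+2)\log Q)$, which is polynomial, and can be written out by building its base-$Q$ digit vector and converting to binary. Sorting the $m$ parts and computing $n$ are polynomial-time as well. Composed with Lemma~\ref{pak:lem:diffec}, this gives the polynomial-time bound for the map $x\mapsto(\lambda_x,\mu_x)$.
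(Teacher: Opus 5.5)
Your proposal is correct and follows essentially the same route as the paper. You use the identity $c_A+c_B+H=2A+2B+3D+3$ to get $n-2s\ge D+1>0$, where the paper records the exact value $n-2s=\sum_{S}a_S+\sum_{S'}b_{S'}+D+1$; you then bound every bit length by $O((k+k'+2)\log Q)$ via the normalization $k\le\sum_{S\in C}|S|$, and your explicit check that all parts of $\mu$ are positive and $s\ge1$ is a small addition the paper leaves implicit.
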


\begin{proof}
Since $c_A+c_B+H=2A+2B+3D+3$, we get
\[
n=\sum_{S\in C}a_S+\sum_{S'\in C'}b_{S'}+2A+2B+3D+3,
\]
and therefore
\[
n-2s=\sum_{S\in C}a_S+\sum_{S'\in C'}b_{S'}+D+1>0.
\]
Hence $n-s>s\geq 0$, so $\lambda=(n-s,s)$ is a partition of $n$ with at most two parts and $s\leq n/2$.
For the running time, the normalization gives $k\leq\sum_{S\in C}|S|$ and $k'\leq\sum_{S'\in C'}|S'|$, so the largest exponent $k+k'+1$ is linear in the length of the input, while $\log_2 Q=O(\log(r+r'+4))$.
Thus each of the $r+r'+3$ parts of $\mu$, and each of $n$ and $s$, is an integer of $O\big((k+k'+1)\log(r+r'+4)\big)$ bits, and all of them are produced by polynomially many arithmetic operations on integers of that size.
\end{proof}

\begin{lemma}
\label{pak:lem:counts}
For the partition $\mu$ constructed above,
\[
N_\mu(s)=\ec(X,C),
\qquad
N_\mu(s-1)=\ec(X',C').
\]
\end{lemma}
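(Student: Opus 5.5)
By Lemma~\ref{pak:lem:nocarry}, a subset $I$ of the parts of $\mu$ sums to $t\in\{s-1,s\}$ exactly when the digit patterns of its parts add up, position by position and with no carrying, to the digit pattern of $t$. The proof is therefore a case analysis on which of the three special parts $c_A$, $c_B$, $H$ lie in $I$. We read off the constraints column by column in Table~\ref{pak:tab:digits}, and then biject the surviving subsets with exact covers.

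\emph{Target $s$.} The pattern of $s$ has selector digit $1$, and only $c_A$, $c_B$, $H$ have a nonzero selector digit, each equal to $1$. So $I$ contains exactly one of the three. The units digit of $s$ is $1$; among the parts, only $c_A$ (units digit $1$) and $H$ (units digit $2$) contribute there. This rules out $H$ and forces $c_A\in I$, and hence $c_B,H\notin I$.

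Now consider the $X'$-block. Every position there already receives $1$ from $c_A$, and $s$ has digit $1$ at each such position. So no $b_{S'}$ can be in $I$, because each $S'$ is nonempty and would push some position to $2$. On the $X$-block, $c_A$ contributes nothing, so the chosen parts $a_S$ must give total $1$ at every position $u\in X$. That is exactly the statement that the set of chosen indices is an exact cover of $(X,C)$.

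Conversely, take any exact cover $F$ and let $I=\{c_A\}\cup\{a_{S_i}:i\in F\}$. Its digits match those of $s$. The correspondence is a bijection because parts are indexed individually, so equal sets $S_i$ give distinct parts. This yields $N_\mu(s)=\ec(X,C)$.

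\emph{Target $s-1$.} Again the selector digit forces exactly one of $c_A$, $c_B$, $H$ into $I$. The units digit of $s-1$ is $0$, which excludes both $c_A$ and $H$, so $c_B\in I$. The part $c_B$ fills the $X$-block with ones, which excludes every $a_S$. On the $X'$-block the chosen $b_{S'}$ must then form an exact cover of $(X',C')$. The same bijection argument gives $N_\mu(s-1)=\ec(X',C')$.

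The only delicate point is the empty-block edge cases, $k=0$ or $k'=0$. When a block is empty it imposes no condition. The exact covers of an empty ground set are then exactly the selections that use no nonempty set, namely only $F=\varnothing$, and the argument above already handles this, since nonempty sets would add digits to positions that do not exist only if $k\ge1$. Beyond checking this, the proof is a careful column-by-column reading of the table, with Lemma~\ref{pak:lem:nocarry} justifying the digitwise comparison.
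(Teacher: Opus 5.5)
Your proposal is correct and follows essentially the same argument as the paper. The selector digit forces exactly one special part into the subset, the units digit identifies which one, the pre-filled block of the chosen selector excludes the other instance's parts (using that members are nonempty), and the remaining block condition is exactly the exact cover condition.

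The only differences are cosmetic. The paper excludes $H$ at the outset because $H=s+1$ exceeds both targets, whereas you exclude it through its units digit $2$. Your empty-block paragraph is unnecessary: if $k=0$ then $C$ is the empty list, since its members are nonempty subsets of $X$, and the general argument applies verbatim.
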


\begin{proof}
Let $I$ index a subset of the parts with $\sum_{i\in I}\mu_i\in\{s-1,s\}$, which by Lemma~\ref{pak:lem:nocarry} we may analyze digit by digit.
Since $H=s+1>s$, the guard is not taken.
Both $s$ and $s-1$ have selector digit $1$, and apart from $H$ only $c_A$ and $c_B$ have a nonzero selector digit, each equal to $1$; hence exactly one of $c_A$ and $c_B$ is taken.

Suppose first that $\sum_{i\in I}\mu_i=s$.
The units digit of $s$ is $1$, and among the remaining parts only $c_A$ has a nonzero units digit, so $c_A$ is taken and $c_B$ is not.
Now $c_A$ already contributes $1$ at every position of the $X'$-block, which is exactly the digit of $s$ there, so no part $b_{S'}$ is taken.
The parts taken are therefore $\{c_A\}\cup\{a_{S_i}:i\in F\}$ for some $F\subseteq[r]$, and the one remaining requirement is that $\sum_{i\in F}a_{S_i}$ have digit $1$ at every position of the $X$-block.
By the definition of $a_S$ this says exactly that the selected sets cover each element of the ground set exactly once, that is, that $F$ is an exact cover of $(X,C)$.
Hence the subsets of parts summing to $s$ correspond bijectively to the exact covers of $(X,C)$, and $N_\mu(s)=\ec(X,C)$.

Suppose now that $\sum_{i\in I}\mu_i=s-1$.
The units digit of $s-1$ is $0$, so $c_A$ is not taken, and therefore $c_B$ is.
Then $c_B$ contributes $1$ at every position of the $X$-block, so no part $a_S$ is taken, and the parts taken are $\{c_B\}\cup\{b_{S'_i}:i\in F'\}$ with $F'\subseteq[r']$ subject to the same condition on the $X'$-block.
Hence $N_\mu(s-1)=\ec(X',C')$.
\end{proof}

\subsubsection{Membership in \texorpdfstring{$\mathsf{GapP}$}{GapP}}

\begin{proposition}
\label{pak:prop:membership}
\ComputeCharBinary\ belongs to $\GapP$.
\end{proposition}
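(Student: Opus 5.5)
Membership is the statement that $\chi^\lambda(\mu)$ equals a difference of two functions in $\SharpP$ on inputs $(\lambda,\mu)$ given in binary. The plan is to use the Murnaghan--Nakayama rule in its signed-combinatorial form. The alternative route through the Frobenius/Jacobi--Trudi expansion and the permutation characters $\phi^\alpha$ from the proof of Lemma~\ref{pak:lem:tworow} serves as a backup.

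Write $\lambda=(\lambda_1,\dots,\lambda_\ell)$ and $\mu=(\mu_1,\dots,\mu_m)$. Note that $\ell,m$ are at most the input length, although $n$ itself may be exponential. By the Jacobi--Trudi identity $s_\lambda=\det(h_{\lambda_i-i+j})$ together with the power-sum expansion, we get
\[
\chi^\lambda(\mu)=\sum_{\sigma\in\Sn_\ell}\operatorname{sgn}(\sigma)\,\phi^{\lambda+\delta-\sigma(\delta)}(\mu),
\]
where $\delta=(\ell-1,\dots,0)$ and $\phi^\alpha=0$ if some entry is negative. This is the same Frobenius formula the paper already invoked in the two-row case. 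As noted in Lemma~\ref{pak:lem:tworow}, $\phi^\alpha(\mu)$ counts the maps $c:[m]\to[\ell]$ whose fibers have part-sums $\alpha_i$.

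The certificate is therefore a pair $(\sigma,c)$, with $\sigma\in\Sn_\ell$ and $c:[m]\to[\ell]$. It has polynomial size, since it consists of $\ell\log\ell+m\log\ell$ bits. Given such a pair, a polynomial-time machine computes $\alpha=\lambda+\delta-\sigma(\delta)$ using binary arithmetic on polynomially many numbers of polynomial bit length. It then checks $\alpha_i\ge0$ and $\sum_{c(j)=i}\mu_j=\alpha_i$ for all $i$, and computes $\operatorname{sgn}(\sigma)$ by counting inversions. Let $g(\lambda,\mu)$ be the number of valid certificates with $\operatorname{sgn}\sigma=+1$, and $h(\lambda,\mu)$ the number with $\operatorname{sgn}\sigma=-1$. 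Both are $\SharpP$ functions, since the predicate is polynomial-time and the witnesses have polynomial length. Then $\chi^\lambda(\mu)=g-h\in\GapP$. One must also handle malformed inputs, such as lists that are not weakly decreasing or have unequal sums; on these the function is defined as $0$, which is achieved by letting the predicate reject.

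The only delicate point is confirming the determinantal formula in the form stated, with the sign convention and the shifted compositions $\lambda+\delta-\sigma(\delta)$, for general $\ell$ rather than $\ell=2$. I would cite it as the standard Frobenius formula $\chi^\lambda=\sum_\sigma\operatorname{sgn}(\sigma)\phi^{\lambda+\delta-\sigma(\delta)}$, as in \citet[Eq.~2.3.8]{james2006representation}, which the paper already uses. Everything else is bookkeeping on bit lengths. Crucially, no step enumerates anything of size $n$: the witness ranges over functions on the $m$ parts, not over $\Sn_n$.
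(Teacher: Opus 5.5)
Your proof is correct and is essentially the paper's: your certificate $(\sigma,c)$ with $c\colon[m]\to[\ell]$ is the paper's pair $(\sigma,K)$ of a permutation and an ordered partition of the parts of $\mu$. Reading $(\sigma(\delta))_i=\delta_{\sigma(i)}$, your composition $\lambda+\delta-\sigma(\delta)$ has entries $\lambda_i+\sigma(i)-i$, which is exactly the paper's $\lambda+\sigma-\mathrm{id}$. The Murnaghan--Nakayama plan you announce is never used; the Frobenius expansion you call a backup is the argument you actually carry out, and it is all that is needed.
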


\begin{proof}
Let the input be $\lambda=(\lambda_1,\ldots,\lambda_p)$ and $\mu=(\mu_1,\ldots,\mu_m)$, both partitions of $n$ written as binary lists of parts, so that $p$, $m$ and all the bit lengths involved are bounded by the length of the input; an input not of this form is recognized in polynomial time and given the value $0$.
For an integer vector $\beta=(\beta_1,\ldots,\beta_p)$ let $P_\mu(\beta)$ denote the number of ordered tuples $(K_1,\ldots,K_p)$ of pairwise disjoint, possibly empty subsets of $[m]$ whose union is $[m]$, such that $\sum_{j\in K_i}\mu_j=\beta_i$ for every $i$; this is $0$ if some $\beta_i<0$, because the parts of $\mu$ are positive.
As in the proof of Lemma~\ref{pak:lem:tworow} one has $\phi^\beta(\mu)=P_\mu(\beta)$, so the Frobenius character formula \citep[Eq.~2.3.8]{james2006representation} reads
\[
\chi^\lambda(\mu)=\sum_{\sigma\in\Sn_p}\operatorname{sgn}(\sigma)\,P_\mu(\lambda+\sigma-\mathrm{id}),
\]
where $\lambda+\sigma-\mathrm{id}$ is the vector whose $i$th entry is $\lambda_i+\sigma(i)-i$; see also \citet{ikenmeyer2024positivity}.
Let $g$ and $h$ count the pairs $(\sigma,K)$ in which $\sigma\in\Sn_p$ has $\operatorname{sgn}(\sigma)=+1$ and $\operatorname{sgn}(\sigma)=-1$ respectively, and $K=(K_1,\ldots,K_p)$ is a tuple as above whose blocks have $\mu$-weights $\sum_{j\in K_i}\mu_j$ equal to the entries of $\lambda+\sigma-\mathrm{id}$.
Such a pair is specified by $O\big((p+m)\log(p+1)\big)$ bits and is verified by computing $\operatorname{sgn}(\sigma)$ and comparing $p$ sums of binary integers of $O(\log n)$ bits each, so $g,h\in\SharpP$.
Therefore $\chi^\lambda(\mu)=g-h$ lies in $\GapP$.
\end{proof}

\subsubsection{Proof of the main theorem}

\begin{proof}[Proof of Theorem~\ref{pak:thm:main}]
Membership in $\GapP$ is Proposition~\ref{pak:prop:membership}.
For hardness, fix $f\in\GapP$.
By Lemma~\ref{pak:lem:diffec} there is a polynomial-time computable map sending an input $x$ to two exact cover instances with
\[
f(x)=\ec(X_x,C_x)-\ec(X'_x,C'_x).
\]
Apply the digit gadget to those two instances, and let $(\lambda_x,\mu_x)$ be the resulting pair, with $\lambda_x=(n_x-s_x,s_x)$.
By Lemma~\ref{pak:lem:valid} the map $x\mapsto(\lambda_x,\mu_x)$ is computable in polynomial time, its values are partitions of $n_x$, the shape $\lambda_x$ has at most two parts, and $0\leq s_x\leq n_x/2$.
By Lemma~\ref{pak:lem:tworow} and Lemma~\ref{pak:lem:counts},
\[
\chi^{\lambda_x}(\mu_x)
=N_{\mu_x}(s_x)-N_{\mu_x}(s_x-1)
=\ec(X_x,C_x)-\ec(X'_x,C'_x)
=f(x).
\]
Hence every $f\in\GapP$ many-one reduces to \ComputeCharBinary, completing the proof.
\end{proof}

\FloatBarrier
\putbib

\end{bibunit}
\endgroup
\subsection[(Kahn) Matching variance versus the residual matching number]{Matching variance versus the residual matching number}
\label{sol:kahn_matchings}
\begingroup
\begin{bibunit}
\def\Mat{\mathcal{M}}
\def\Gna{G_{n,a}}
\def\Prob{\mathbb{P}}

Kahn's normal law for matchings characterizes asymptotic normality of the size of a uniformly random matching through five graph statistics that are bounded or unbounded together, and asks how tightly two of them, the variance $\sigma^2$ and the residual matching number $\lambda$, are tied to one another.
We show that in one direction they are not tied at all: along cliques carrying private pendant leaves the ratio $\sigma^2/\lambda$ grows at least linearly in the number of leaves at each clique vertex, while both parameters tend to infinity.
The mechanism is that extra leaves inflate the fluctuation of the clique matching without inflating the residue.

\subsubsection{Introduction}

For a finite simple graph $G$ let $\Mat(G)$ be the set of matchings of $G$, let $M$ be drawn uniformly from $\Mat(G)$, and put $\xi_G=|M|$.
Write $\mu(G)=\E[\xi_G]$ and $\sigma^2(G)=\Var[\xi_G]$, and write $\nu(G)$ and $\tau(G)$ for the matching number and the vertex cover number of $G$.
For $x\in V(G)$ let $p(x)$ denote the probability that $x$ is not covered by $M$.

By a theorem of \citet{godsil1981matching}, resting on the real-rootedness of the matching polynomial of \citet{heilmann1972theory}, the distribution of $\xi_{G_n}$ along a sequence of graphs is asymptotically normal if and only if $\sigma(G_n)\to\infty$.
\citet{kahn2000normal} identified four combinatorial statistics with exactly the same threshold behavior, two of which are introduced there for the purpose.
The first is the \emph{cover defect}
\[
  \kappa(G)=\min\Bigl\{\sum_{y\in Y}p(y):\ Y\text{ a vertex cover of }G\Bigr\},
\]
and the second, writing $F_M$ for the set of edges of $G$ meeting no edge of $M$, is the \emph{residual matching number}
\[
  \lambda(G)=\E\bigl[\nu(F_M)\bigr].
\]
Thus $F_M$ is the edge set induced by the vertices left uncovered by $M$, and $\lambda$ measures how much of a matching still fits into the residue.
Kahn's theorem states that for any sequence $(G_n)$ with $|V(G_n)|\to\infty$ and $\delta(G_n)\ge1$, and with $\sigma_n=\sigma(G_n)$ and so on, the five conditions
\[
  \sigma_n=O(1),
  \qquad
  \nu_n-\mu_n=O(1),
  \qquad
  \tau_n-\mu_n=O(1),
  \qquad
  \kappa_n=O(1),
  \qquad
  \lambda_n=O(1)
\]
are equivalent \citep[Theorem~1.10]{kahn2000normal}.
He then asks how tight the comparison between the first and the last of these is \citep[Question~7.3]{kahn2000normal}:

\medskip
\noindent
\emph{How closely related are $\sigma^2$ and $\lambda$?
In particular, is it true that $\lambda=\Theta(\sigma^2)$ (that is, are there bounds on the ratios $\lambda/\sigma^2$ and $\sigma^2/\lambda$)?}
\medskip

\noindent
We show that the answer is no, by breaking the direction $\sigma^2=O(\lambda)$.

What Kahn's own inequalities give is the following.
He proves $\sigma^2,\lambda\le\nu-\mu\le\tau-\mu\le\kappa^2/2+O(\kappa)$, and also $\kappa=O(\lambda^2+\lambda)$ and $\kappa=O(\sigma^4+\sigma^2)$ \citep{kahn2000normal}.
Combining these gives
\[
  \sigma^2=O(\lambda^4)\ \text{ whenever }\lambda=\Omega(1),
  \qquad
  \lambda\le\nu-\mu=O(\sigma^8)\ \text{ whenever }\sigma=\Omega(1),
\]
and Kahn conjectures that the second of these can be improved to $\nu-\mu=O(\sigma^6)$, which would be best possible \citep{kahn2000normal}.
The examples he records all have $\sigma^2\asymp\lambda$.
For his Example 7.1, a clique $K_n$ with one private pendant leaf at each clique vertex, one has $\sigma^2=\tfrac12\sqrt n+O(1)$ and $\lambda=\kappa=\sqrt n+O(1)$, and for his Example 7.2 both $\sigma^2$ and $\lambda$ are of order $n^{1/3}$ \citep{kahn2000normal}.
For a disjoint union of copies of $K_{1,m}$ Kahn computes $\lambda=\nu-\mu$ and $\sigma^2=m(\nu-\mu)/(m+1)$, so that $\sigma^2/\lambda=m/(m+1)<1$ \citep{kahn2000normal}; there the leaf count moves $\sigma^2$ and $\lambda$ together, and in the construction below it is the clique that decouples them.
General criteria for central limit theorems of this kind, in terms of the location of the zeros of graph-counting polynomials, were later given by \citet{lebowitz2016central}; they say nothing about the size of $\lambda$.
Apart from Kahn's own examples we are aware of no work that bears on Question 7.3.

\subsubsection{The construction}

Our graphs are the graphs of Kahn's Example 7.1 with more leaves.

\begin{definition}\label{kahn_matchings:def:family}
For integers $n\ge0$ and $a\ge2$ let $\Gna$ be the graph with vertex set
\[
  V(\Gna)=\{v_1,\dots,v_n\}\cup\{y_{i,t}:i\in[n],\ t\in[a-1]\}
\]
and edge set
\[
  E(\Gna)=\bigl\{\{v_i,v_j\}:1\le i<j\le n\bigr\}
   \cup\bigl\{\{v_i,y_{i,t}\}:i\in[n],\ t\in[a-1]\bigr\}.
\]
That is, $\Gna$ is a clique $K_n$ carrying $a-1$ private pendant leaves at each clique vertex.
We call $v_1,\dots,v_n$ the \emph{clique vertices}.
\end{definition}

The case $a=2$ is Kahn's Example 7.1.
For $n\ge1$ every leaf has degree $1$ and every clique vertex has degree $n-1+(a-1)\ge1$, so $\Gna$ is a finite simple graph with $\delta(\Gna)\ge1$ and the family satisfies the standing hypothesis of Kahn's theorem.
The parameter $a$ is the number of configurations available at a clique vertex that is not matched inside the clique: it may stay uncovered, or take any one of its $a-1$ leaves.
Figure~\ref{kahn_matchings:fig:residual} shows a matching of $G_{5,3}$ and the residual graph it leaves behind.

\begin{figure}[t]
\centering
\begin{tikzpicture}[scale=1.15,
  cl/.style={circle,draw,fill=black,inner sep=1.7pt},
  fr/.style={circle,draw,fill=white,inner sep=1.7pt},
  lc/.style={circle,draw,fill=black,inner sep=1.2pt},
  lu/.style={circle,draw,fill=white,inner sep=1.2pt},
  bg/.style={gray!55,line width=0.4pt},
  mt/.style={black,line width=1.7pt},
  rs/.style={blue!65!black,line width=1.4pt,dashed}]
\node[cl,label=below:{$v_1$}] (v1) at (90:1.6) {};
\node[cl,label=right:{$v_2$}] (v2) at (162:1.6) {};
\node[cl,label=above right:{$v_3$}] (v3) at (234:1.6) {};
\node[fr,label=above left:{$v_4$}] (v4) at (306:1.6) {};
\node[fr,label=left:{$v_5$}] (v5) at (18:1.6) {};
\node[lu] (l1a) at (77:2.6) {};
\node[lu] (l1b) at (103:2.6) {};
\node[lu] (l2a) at (149:2.6) {};
\node[lu] (l2b) at (175:2.6) {};
\node[lc,label=below left:{$y_{3,1}$}] (l3a) at (221:2.6) {};
\node[lu] (l3b) at (247:2.6) {};
\node[lu] (l4a) at (293:2.6) {};
\node[lu] (l4b) at (319:2.6) {};
\node[lu] (l5a) at (5:2.6) {};
\node[lu] (l5b) at (31:2.6) {};
\foreach \i/\j in {v1/v2,v1/v3,v1/v4,v1/v5,v2/v3,v2/v4,v2/v5,v3/v4,v3/v5,v4/v5}
  \draw[bg] (\i) -- (\j);
\foreach \i/\j in {v1/l1a,v1/l1b,v2/l2a,v2/l2b,v3/l3a,v3/l3b,v4/l4a,v4/l4b,v5/l5a,v5/l5b}
  \draw[bg] (\i) -- (\j);
\draw[mt] (v1) -- (v2);
\draw[mt] (v3) -- (l3a);
\draw[rs] (v4) -- (v5);
\draw[rs] (v4) -- (l4a);
\draw[rs] (v4) -- (l4b);
\draw[rs] (v5) -- (l5a);
\draw[rs] (v5) -- (l5b);
\end{tikzpicture}
\caption{The graph $G_{5,3}$, with its ten clique edges and its two private leaves at each clique vertex drawn in gray, and with the vertices covered by the matching drawn solid.
The matching $M$ consists of the two heavy black edges $v_1v_2$ and $v_3y_{3,1}$, so the set of clique vertices missed by the clique part of $M$ is $\{v_3,v_4,v_5\}$ and exactly one of them takes a leaf.
The residual graph $F_M$, the set of edges meeting no edge of $M$, is drawn heavy and dashed: it is the copy of $G_{2,3}$ carried by the two clique vertices that took no leaf, and $\nu(F_M)=2$.}
\label{kahn_matchings:fig:residual}
\end{figure}

\begin{theorem}\label{kahn_matchings:thm:main}
There are absolute constants $n_0$ and $c>0$ such that for all integers $n\ge n_0$ and $a$ with $4\le a\le n^{1/4}$,
\[
  \tfrac14\sqrt n\ \le\ \lambda(\Gna)\ \le\ 5\sqrt n,
  \qquad
  \sigma^2(\Gna)\ \ge\ c\,a\sqrt n .
\]
In particular
\[
  \frac{\sigma^2(\Gna)}{\lambda(\Gna)}\ \ge\ \frac{c}{5}\,a .
\]
\end{theorem}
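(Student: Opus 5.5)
The proof reduces everything to the law of the number $J$ of clique edges in a uniform matching $M$ of $\Gna$. I would first write down this law exactly, then read off $\lambda$ and $\sigma^2$ from it, and finally establish the analytic estimates on $J$.

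\textbf{Step 1: exact structure.} A matching of $\Gna$ consists of a matching of $K_n$ with $j$ edges, together with an independent choice at each of the $n-2j$ remaining clique vertices: either the vertex stays uncovered, or it takes one of its $a-1$ leaves. Every clique vertex has its own leaves, so these choices are unconstrained. The number of matchings with $J=j$ is therefore
\[
  w_j=\frac{n!}{j!\,2^j\,(n-2j)!}\,a^{\,n-2j}.
\]
Conditionally on $J=j$, the $n-2j$ free clique vertices are independently uncovered with probability $1/a$ and take a leaf otherwise. Let $U$ be the number of uncovered clique vertices. Then
\[
  U\mid J\sim\mathrm{Bin}(n-2J,1/a),\qquad \xi=J+(n-2J-U).
\]

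\textbf{Step 2: $\lambda$ and $\sigma^2$ in terms of $J$.} The residual $F_M$ is the clique on the $U$ uncovered clique vertices together with all their leaves, since every leaf of an uncovered vertex is itself uncovered. Because $a\ge2$, each such vertex can be matched to its own leaf, and $\nu(F_M)=U$ exactly. Hence
\[
  \lambda=\E[U]=\E[n-2J]/a .
\]
For the variance I would use the law of total variance. Since $\E[\xi\mid J]=n\tfrac{a-1}{a}-\tfrac{a-2}{a}J$, we get
\[
  \sigma^2\ \ge\ \Var\bigl(\E[\xi\mid J]\bigr)=\Bigl(\tfrac{a-2}{a}\Bigr)^2\Var(J)\ \ge\ \tfrac14\Var(J)
  \qquad(a\ge4).
\]
So the theorem reduces to two estimates on $R=n-2J$:
\[
  \E[R]\in\bigl[\tfrac14 a\sqrt n,\;5a\sqrt n\bigr],\qquad \Var(J)\ge c'\,a\sqrt n .
\]

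\textbf{Step 3: analysis of $w_j$.} The ratio is
\[
  \rho_j=\frac{w_{j+1}}{w_j}=\frac{(n-2j)(n-2j-1)}{2(j+1)a^2},
\]
which is strictly decreasing in $j$, so $(w_j)$ is log-concave. The mode $j^*$ sits where $\rho_j\approx1$. Writing $r=n-2j$ and using $j\approx n/2$ (valid because $r\ll n$), this is where $r^2\approx na^2$, that is, $r^*\approx a\sqrt n$. The hypothesis $a\le n^{1/4}$ gives $r^*\le n^{3/4}=o(n)$, which justifies $j+1=\tfrac n2(1+o(1))$ uniformly in the relevant range.

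For the mean, set $r_\pm=(1\pm\tfrac12)r^*$. For $r\ge r_+$ the ratio satisfies $\rho_j\ge 2$, and for $r\le r_-$ it satisfies $\rho_j\le \tfrac12$, up to $1+o(1)$ factors. Log-concavity then gives geometric tails, so $\E[R]=(1+O(1/\sqrt{r^*}))\,r^*$. This is far inside the window $[\tfrac14,5]\cdot a\sqrt n$, and $\lambda=\E[R]/a$ yields the claimed bounds on $\lambda$.

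For the variance, note that moving $j$ by one changes $r$ by $2$ and changes $\log\rho_j$ by about $4/r$. Over a window $|j-j^*|\le h$ with $h=\lfloor\sqrt{r^*}\rfloor$, the value $\log\rho_j$ therefore stays within $O(h^2/r^*)=O(1)$ of $0$. Consequently all $w_j$ in the window are within a constant factor of $w_{j^*}$. Beyond the window, log-concavity forces decay at least geometric with ratio $e^{-c/h}$ per step, so the total mass is $O(h\,w_{j^*})$. It follows that each of the two ranges $[j^*-h,j^*-h/2]$ and $[j^*+h/2,j^*+h]$ carries probability at least a constant, and therefore
\[
  \Var(J)\ \ge\ c''h^2\ \ge\ c'\,a\sqrt n .
\]
Combining with Step 2 gives $\sigma^2\ge c\,a\sqrt n$, and dividing by the upper bound $\lambda\le5\sqrt n$ gives $\sigma^2/\lambda\ge\tfrac c5\,a$.

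\textbf{Main obstacle.} I expect Step 3 to be the hard part: making the constant-factor comparison inside the window and the tail bounds fully rigorous with explicit constants. This requires controlling the $1+o(1)$ errors from $j+1$ versus $n/2$ and from $r-1$ versus $r$. The hypothesis $a\le n^{1/4}$ is exactly what keeps these errors uniform. I would handle them by sandwiching $\rho_j$ between $\tfrac{(r-1)^2}{na^2}$ and $\tfrac{r^2}{(n-r^*)a^2}$ throughout the range $r\le 2r^*$.
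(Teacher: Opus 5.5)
Your proposal is correct and follows the paper's architecture. It uses the exact law of the clique part (your $J$ is the paper's $(n-S)/2$, so $R=n-2J=S$), the identity $\lambda=\mathbb{E}[R]/a$, and the total-variance bound $\sigma^2\ge\bigl(\tfrac{a-2}{a}\bigr)^2\mathrm{Var}(J)=\bigl(\tfrac{a-2}{2a}\bigr)^2\mathrm{Var}(S)$. It also uses the strictly decreasing ratio $\rho_j$ at scale $L=a\sqrt n$, and geometric tails for the upper bound on the mean.

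The genuine difference is the anti-concentration step. The paper proves one ``no heavy atom'' lemma, $\max_s\mathbb{P}(S=s)\le 22/\sqrt L$, using only unimodality and a lower bound on $\rho$ over about $\sqrt L/4$ steps on \emph{one} side of the mode. It then applies the general inequality $\mathrm{Var}(X)\ge(1-\theta)^3/(12\theta^2)$ for integer-valued $X$ with largest atom $\theta$, and the same atom bound also gives $\mathbb{E}[S]\ge L/4$. Your two-sided window argument needs more: tail bounds on both sides to get $W=O(h\,w_{j^*})$, so that window mass becomes probability. In return it gives the full Gaussian-scale picture, e.g.\ $\mathbb{E}[R]=r^*+O(\sqrt{r^*})$, which the theorem does not need.

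A few small points to fix in a write-up:
\begin{itemize}
\item Over the window it is $\log(w_j/w_{j^*})$ that is $O(h^2/r^*)$; $\log\rho_j$ itself is $O(h/r^*)$.
\item Since exactly $\rho_j=r(r-1)/\bigl((n-r+2)a^2\bigr)$, your upper sandwich with denominator $(n-r^*)a^2$ fails for $r$ near $2r^*$. Use $n-2r^*$ instead, which is harmless because $r^*=o(n)$.
\item The geometric tail below $r_-$ only gives $\mathbb{P}(R\le r_-)\le 2\,\mathbb{P}(R=r_-)$, so the lower bound on $\lambda$ still needs this atom to be small. Unimodality supplies this, since the weight at $r_-$ is at most each of the $\Omega(r^*)$ weights between it and the mode; your window lower bound also does.
\item The precision $1+O(1/\sqrt{r^*})$ you assert for $\mathbb{E}[R]$ comes from the window analysis, not from the tails at $r_\pm$. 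Only the constant-factor statement is needed.
\end{itemize}
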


\begin{corollary}\label{kahn_matchings:cor:ratio}
Put $a_n=\lfloor n^{1/4}\rfloor$ and $H_n=G_{n,a_n}$ for $n\ge16$.
Then $\lambda(H_n)\to\infty$, $\sigma^2(H_n)\to\infty$, and
\[
  \frac{\sigma^2(H_n)}{\lambda(H_n)}=\Omega\bigl(n^{1/4}\bigr)\longrightarrow\infty .
\]
Hence there is no absolute constant $C$ with $\sigma^2(G)\le C\lambda(G)$ for all finite simple graphs $G$, and $\lambda=\Theta(\sigma^2)$ fails.
\end{corollary}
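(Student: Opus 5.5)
The corollary should follow directly from Theorem~\ref{kahn_matchings:thm:main}, with no new combinatorics. The only work is to check that the choice $a_n=\lfloor n^{1/4}\rfloor$ lies in the admissible range of the theorem for all large $n$, and then read off the three asymptotic statements.

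First I will fix the threshold. The theorem requires $n\ge n_0$ and $4\le a\le n^{1/4}$. The upper inequality $a_n\le n^{1/4}$ holds for every $n$ by definition of the floor. The lower inequality $a_n\ge4$ holds as soon as $n\ge 4^4=256$. Put $n_1=\max(n_0,256)$. For all $n\ge n_1$ the theorem then applies to $H_n=G_{n,a_n}$. The finitely many $n$ with $16\le n<n_1$ do not affect any asymptotic claim. The graphs $H_n$ are also legitimate inputs to Kahn's setting: by Definition~\ref{kahn_matchings:def:family} and the remark after it they are finite and simple with $\delta(H_n)\ge1$, and $|V(H_n)|=na_n\to\infty$.

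Next I will read off the bounds for $n\ge n_1$:
\[
\lambda(H_n)\ge\tfrac14\sqrt n\to\infty,\qquad \sigma^2(H_n)\ge c\,a_n\sqrt n\ge c\sqrt n\to\infty .
\]
For the ratio,
\[
\frac{\sigma^2(H_n)}{\lambda(H_n)}\ge\frac c5\,a_n\ge\frac c5\bigl(n^{1/4}-1\bigr)\ge\frac{c}{10}\,n^{1/4},
\]
where the last step holds once $n^{1/4}\ge2$, which is already true for $n\ge16$. This gives $\Omega(n^{1/4})$, which tends to infinity.

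Finally, the non-existence of $C$ is immediate. Given any $C$, choose $n\ge n_1$ with $\tfrac{c}{10}n^{1/4}>C$; then $\sigma^2(H_n)>C\lambda(H_n)$. Since $\lambda(H_n)\ge\frac14\sqrt n>0$ there is no division by zero, and the ratio $\sigma^2/\lambda$ is unbounded, so $\lambda=\Theta(\sigma^2)$ fails.

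I expect no real obstacle here, since all the difficulty sits in Theorem~\ref{kahn_matchings:thm:main}. The one point needing care is that $a_n$ drops below $4$ for $16\le n<256$, which is why the argument passes to $n\ge n_1$.
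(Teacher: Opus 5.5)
Your proposal is correct and follows the paper's proof essentially verbatim: restrict to $n\ge\max\{n_0,256\}$ so that $4\le a_n\le n^{1/4}$, apply Theorem~\ref{kahn_matchings:thm:main}, read off $\lambda(H_n)\ge\sqrt n/4$, $\sigma^2(H_n)\ge c\,a_n\sqrt n$ and $\sigma^2/\lambda\ge c\,a_n/5=\Omega(n^{1/4})$, and conclude that no uniform constant $C$ can exist. Your explicit estimate $a_n\ge\tfrac12 n^{1/4}$ and your checks that $H_n$ satisfies Kahn's standing hypotheses are small additions to the same argument, not a different route.
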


Both parameters diverge along $(H_n)$, so this is not the degenerate kind of counterexample in which $\sigma^2$ and $\lambda$ are both $O(1)$.

\begin{remark}\label{kahn_matchings:rem:onesided}
Corollary~\ref{kahn_matchings:cor:ratio} says nothing about the reverse bound $\lambda=O(\sigma^2)$, which remains open, and the family $(H_n)$ gives no information about it: there $\lambda/\sigma^2\to0$, so the family is consistent with that direction.
The best bound we know in that direction is still the one implied by Kahn's inequalities, namely $\lambda\le\nu-\mu=O(\sigma^8)$ for $\sigma=\Omega(1)$.
\end{remark}

\begin{remark}\label{kahn_matchings:rem:exponent}
The family also limits how much the bound $\sigma^2=O(\lambda^4)$ recorded above can be improved.
Indeed $a_n\ge\tfrac12n^{1/4}$, so for $n$ large Theorem~\ref{kahn_matchings:thm:main} gives $\sigma^2(H_n)\ge\tfrac12c\,n^{3/4}$ while $\lambda(H_n)\le5\sqrt n$, whence
\[
  \sigma^2(H_n)\ \ge\ \frac{c}{2\cdot5^{3/2}}\,\lambda(H_n)^{3/2}.
\]
Since $\lambda(H_n)\to\infty$, no bound of the form $\sigma^2=O(\lambda^{\beta})$ can hold for graphs with $\lambda=\Omega(1)$ unless $\beta\ge3/2$.
\end{remark}

The proof is a direct computation with the exact law of $\xi_{\Gna}$.
A matching of $\Gna$ consists of a matching of the clique together with, at each clique vertex left uncovered by it, one of $a$ choices: no leaf, or one of the $a-1$ leaves.
Writing $S$ for the number of clique vertices missed by the clique part of the matching and $B$ for the number of those that do take a leaf, one has $\xi=(n-S)/2+B$ and $\nu(F_M)=S-B$, and $B$ is binomial with parameters $S$ and $(a-1)/a$.
Hence $\lambda=\E[S]/a$, while for $a\ge4$ the variance $\sigma^2$ retains a fixed fraction of $\Var[S]$, and we show that $\E[S]\asymp a\sqrt n$ and $\Var[S]\gg a\sqrt n$.
The key point is that the leaf count $a$ divides $\lambda$ but not $\sigma^2$: multiplying the number of admissible configurations at a free clique vertex by $a$ shifts the equilibrium of the clique matching, pushing $\E[S]$ up to order $a\sqrt n$ and $\Var[S]$ up to at least that order, whereas the residue is governed by the free clique vertices that took \emph{no} leaf, and their expected number $\E[S]/a$ stays of order $\sqrt n$.

\subsubsection{The law of a uniform matching}

Fix $n$ and $a\ge2$, write $G=\Gna$, and abbreviate $\xi=\xi_G=|M|$.
Given a matching $M$ of $G$, let
\[
  S_M=\{i\in[n]:\ v_i\text{ is not covered by an edge of }M\cap E(K_n)\},
  \qquad S=|S_M|,
\]
and let $B$ be the number of $i\in S_M$ for which $v_i$ is matched by $M$ to one of its leaves.
Set $q=(a-1)/a$.

\begin{lemma}\label{kahn_matchings:lem:law}
Let $M$ be uniform on $\Mat(G)$.
Then $S\equiv n\pmod 2$, and for $0\le s\le n$ with $s\equiv n\pmod2$,
\begin{equation}\label{kahn_matchings:eq:weights}
  \Prob(S=s)=\frac{w_s}{W},
  \qquad
  w_s:=\frac{n!}{s!\,2^{k}\,k!}\,a^{s},
  \qquad
  k:=\frac{n-s}{2},
  \qquad
  W:=\sum_{s}w_s .
\end{equation}
Conditionally on $S=s$, the variable $B$ is binomial $\mathrm{Bin}(s,q)$.
Moreover, for every matching $M$,
\begin{equation}\label{kahn_matchings:eq:decomp}
  |M|=\frac{n-S}{2}+B,
  \qquad
  \nu(F_M)=S-B .
\end{equation}
\end{lemma}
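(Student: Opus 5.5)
The plan is to prove the lemma by writing down a bijection between matchings of $G=\Gna$ and triples (clique matching, leaf choices), and then reading off all four assertions from it. A matching $M$ of $G$ splits uniquely as $M=M_K\cup M_L$, where $M_K=M\cap E(K_n)$ is a matching of the clique and $M_L$ consists of pendant edges. A pendant edge $\{v_i,y_{i,t}\}$ can lie in $M$ only if $v_i$ is not covered by $M_K$, and at most one leaf edge can be chosen at each $v_i$. Conversely, take any matching $M_K$ of $K_n$ leaving the set $S_M$ of clique vertices uncovered, and independently choose for each $i\in S_M$ one of the $a$ options: no leaf, or one of $y_{i,1},\dots,y_{i,a-1}$. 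The result is a matching of $G$, because leaves are private, so distinct leaf edges never share a vertex. This gives a bijection, and $|S_M|=n-2|M_K|$ shows $S\equiv n\pmod 2$.

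Next I count. For fixed $s$, a clique matching with $k=(n-s)/2$ edges is determined by choosing its covered vertex set together with a perfect matching on it. The number of these is $\binom{n}{s}\frac{(2k)!}{2^k k!}=\frac{n!}{s!\,2^k k!}$. Each such clique matching extends in $a^s$ ways, so the number of matchings of $G$ with $S=s$ is exactly $w_s$. Since $M$ is uniform, $\Prob(S=s)=w_s/W$.

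For the conditional law, I condition further on $M_K$ itself, which is finer than conditioning on $S=s$. Given $M_K$, the leaf choices at the $s$ vertices of $S_M$ are independent and uniform over $a$ options, and exactly $a-1$ of those options take a leaf. Hence $B\sim\mathrm{Bin}(s,q)$ given $M_K$. This law depends only on $s$, so averaging over all $M_K$ with $S=s$ gives the same binomial law conditionally on $S=s$.

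The decomposition \eqref{kahn_matchings:eq:decomp} is the only step needing a little care. First, $|M|=|M_K|+|M_L|=(n-S)/2+B$. For $\nu(F_M)$, I identify $F_M$, the set of edges avoiding every vertex covered by $M$. Let $U\subseteq S_M$ be the $S-B$ clique vertices that take no leaf. Every clique vertex outside $U$ is covered, and every leaf of a vertex outside $U$ either is covered or hangs from a covered vertex. The leaves of vertices in $U$ are all uncovered. So $F_M$ is exactly the copy of $G_{|U|,a}$ on $U$ and its leaves, matching the picture in Figure~\ref{kahn_matchings:fig:residual}. Since $a\ge2$, each vertex of $U$ has a private leaf, so matching each $v_i\in U$ to one of its leaves gives a matching of size $|U|$. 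Conversely, every edge of $F_M$ contains a vertex of $U$, so no matching of $F_M$ has more than $|U|$ edges. Therefore $\nu(F_M)=S-B$.
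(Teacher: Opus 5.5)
Your proposal is correct and follows essentially the same route as the paper: you split a matching into its clique part and independent leaf choices, count the clique matchings with a prescribed uncovered set to obtain $w_s$, and identify $F_M$ as the copy of $G_{S-B,a}$ on the free clique vertices that took no leaf, where those clique vertices form a vertex cover and are matched to private leaves. Your extra step of conditioning on $M_K$ before averaging over $\{S=s\}$ is just a slightly more explicit version of the paper's observation that the leaf choices are independent and uniform among $a$ options.
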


\begin{proof}
A matching of $G$ is determined by two successive choices: a matching $M_0\subseteq E(K_n)$ of the clique, and then, for each clique vertex left uncovered by $M_0$, either nothing or one of its $a-1$ leaves.
Indeed leaf edges at distinct clique vertices are disjoint, and a leaf edge at $v_i$ is compatible with $M_0$ exactly when $v_i$ is uncovered by $M_0$.

The number of matchings $M_0$ of $K_n$ leaving a prescribed set of $s$ clique vertices uncovered is the number of perfect matchings of $K_{n-s}$, namely $(n-s)!/(2^{k}k!)$ with $k=(n-s)/2$; this forces $s\equiv n\pmod 2$.
Choosing the set of uncovered vertices in $\binom ns$ ways and then making the leaf choices in $a^{s}$ ways gives the number of matchings $M$ with $S=s$ as
\[
  \binom ns\frac{(n-s)!}{2^{k}k!}\,a^{s}=\frac{n!}{s!\,2^{k}k!}\,a^{s}=w_s,
\]
which is \eqref{kahn_matchings:eq:weights}.
Since the $s$ leaf choices are made independently and uniformly among $a$ options, of which $a-1$ produce a leaf edge, $B$ is $\mathrm{Bin}(s,q)$ given $S=s$.

For the first identity in \eqref{kahn_matchings:eq:decomp}, note that $M$ consists of $(n-S)/2$ clique edges and $B$ leaf edges.
For the second, a clique vertex $v_i$ is uncovered by $M$ precisely when $i\in S_M$ and $v_i$ took no leaf, which happens for exactly $S-B$ indices, while a leaf $y_{i,t}$ is uncovered precisely when it was not chosen.
An edge lies in $F_M$ if and only if both its ends are uncovered, so $F_M$ consists of all clique edges between the $S-B$ uncovered clique vertices together with all $a-1$ leaf edges at each of them: that is, $F_M\cong G_{S-B,\,a}$.
In $G_{r,a}$ with $a\ge2$ the $r$ clique vertices form a vertex cover, so $\nu\le r$, while matching each clique vertex to a private leaf shows $\nu\ge r$.
Hence $\nu(F_M)=S-B$.
\end{proof}

\begin{corollary}\label{kahn_matchings:cor:formulas}
With the notation of Lemma~\ref{kahn_matchings:lem:law},
\begin{equation}\label{kahn_matchings:eq:lambda-sigma}
  \lambda(\Gna)=\frac{\E[S]}{a},
  \qquad
  \sigma^2(\Gna)=\Bigl(\frac{a-2}{2a}\Bigr)^{2}\Var[S]
    +\frac{a-1}{a^{2}}\,\E[S] .
\end{equation}
\end{corollary}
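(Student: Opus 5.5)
The plan is to read both formulas off Lemma~\ref{kahn_matchings:lem:law}, computing $\lambda$ as a plain expectation and $\sigma^2$ by the law of total variance with conditioning on $S$. Throughout, $M$ is uniform on $\Mat(\Gna)$ and $q=(a-1)/a$.

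For $\lambda$, the definition together with \eqref{kahn_matchings:eq:decomp} gives $\lambda(\Gna)=\E[\nu(F_M)]=\E[S-B]$. Given $S=s$, the variable $B$ is $\mathrm{Bin}(s,q)$, so $\E[B\mid S]=qS$. Taking expectations, $\E[B]=q\,\E[S]$, and therefore
\[
\lambda(\Gna)=(1-q)\,\E[S]=\E[S]/a .
\]

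For $\sigma^2$, the first identity in \eqref{kahn_matchings:eq:decomp} gives $\xi=(n-S)/2+B$, so I will use
\[
\Var[\xi]=\Var\bigl[\E[\xi\mid S]\bigr]+\E\bigl[\Var[\xi\mid S]\bigr].
\]
The conditional mean is
\[
\E[\xi\mid S]=\frac n2+\Bigl(q-\frac12\Bigr)S=\frac n2+\frac{a-2}{2a}\,S .
\]
Its variance is therefore $\bigl(\frac{a-2}{2a}\bigr)^2\Var[S]$, because the constant $n/2$ does not contribute. Conditionally on $S$, the term $(n-S)/2$ is deterministic, so the conditional variance is
\[
\Var[\xi\mid S]=\Var[B\mid S]=S\,q(1-q)=\frac{a-1}{a^2}\,S .
\]
Its expectation is $\frac{a-1}{a^2}\E[S]$. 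Adding the two contributions gives the stated formula for $\sigma^2(\Gna)$.

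There is no real obstacle here. The one point to check is that conditioning on $S$ is legitimate, and this holds because $S$ takes only finitely many values, all with positive probability by \eqref{kahn_matchings:eq:weights}. Given that, everything follows from the binomial mean and variance.
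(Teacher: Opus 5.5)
Your proposal is correct and follows the paper's own proof: you compute $\lambda(\Gna)=\E[S-B]=(1-q)\E[S]$ from $\E[B\mid S]=qS$, and obtain $\sigma^2(\Gna)$ by the law of total variance conditioned on $S$, using $\E[\xi\mid S]=\tfrac n2+(q-\tfrac12)S$ and $\Var[\xi\mid S]=q(1-q)S$. Your simplifications $q-\tfrac12=(a-2)/(2a)$ and $q(1-q)=(a-1)/a^2$ are also correct.
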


\begin{proof}
By \eqref{kahn_matchings:eq:decomp} and $\E[B\mid S]=qS$ we get $\lambda(\Gna)=\E[\nu(F_M)]=\E[S-B]=(1-q)\E[S]=\E[S]/a$.
Also $\E[\xi\mid S]=\tfrac n2+(q-\tfrac12)S$ and $\Var[\xi\mid S]=q(1-q)S$, so decomposing the variance over $S$,
\[
  \sigma^2=\bigl(q-\tfrac12\bigr)^{2}\Var[S]+q(1-q)\E[S],
\]
and $q-\tfrac12=(a-2)/(2a)$ while $q(1-q)=(a-1)/a^{2}$.
\end{proof}

Everything therefore reduces to the mean and the variance of the weight sequence \eqref{kahn_matchings:eq:weights}.

\subsubsection{The distribution of the free set}

Throughout this subsection and the next we assume
\begin{equation}\label{kahn_matchings:eq:regime}
  4\le a\le n^{1/4},
  \qquad
  L:=a\sqrt n,
\end{equation}
and that $n$ is larger than a suitable absolute constant; all implied constants below are absolute.
Since $a\le n^{1/4}$ gives $a^{3}\le a\sqrt n$, and $a\ge4$, we have
\begin{equation}\label{kahn_matchings:eq:asq}
  a^{2}\le \frac La\le\frac L4,
  \qquad
  L\le n^{3/4},
  \qquad
  L\ge 4\sqrt n .
\end{equation}
In particular $L\to\infty$ with $n$, so every hypothesis of the form ``$L$ larger than an absolute constant'' below is implied by ``$n$ larger than an absolute constant''; we use this without further comment.
We also record that $2L+4\le n/2$ for $n$ large, by the middle bound in \eqref{kahn_matchings:eq:asq}.

From \eqref{kahn_matchings:eq:weights}, for $0\le s\le n-2$ with $s\equiv n\pmod 2$,
\begin{equation}\label{kahn_matchings:eq:ratio}
  \rho(s):=\frac{w_{s+2}}{w_s}=\frac{a^{2}(n-s)}{(s+1)(s+2)} .
\end{equation}
The function $\rho$ is strictly decreasing in $s$, so the sequence $(w_s)$ is strictly log-concave along the arithmetic progression $s\equiv n\pmod2$, and in particular unimodal.
Fix a mode $s_0$, that is, an admissible $s_0$ with $w_{s_0}=\max_s w_s$.

\begin{lemma}\label{kahn_matchings:lem:mode}
Assume \eqref{kahn_matchings:eq:regime} and $n$ large.
Then $\tfrac12 L\le s_0\le 2L+2$.
\end{lemma}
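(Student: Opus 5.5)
Since $\rho$ is strictly decreasing, the mode is pinned down by the crossing of $\rho$ through $1$. Every admissible $s$ with $\rho(s)>1$ satisfies $w_{s+2}>w_s$, so it cannot be a mode. Every admissible $s\ge2$ with $\rho(s-2)<1$ satisfies $w_s<w_{s-2}$, so it cannot be a mode either. The plan is therefore to show two things. First, $\rho(s)>1$ for every admissible $s<\tfrac12L$, which gives $s_0\ge\tfrac12 L$. Second, $\rho(s-2)<1$ for every admissible $s>2L+2$, which gives $s_0\le 2L+2$. Here $L=a\sqrt n$ as in \eqref{kahn_matchings:eq:regime}, and $\rho$ is the ratio \eqref{kahn_matchings:eq:ratio}.

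For the lower bound, take $s<\tfrac12L$. Then $(s+1)(s+2)<(\tfrac12L+2)^2\le(\tfrac34L)^2$ once $L\ge 8$, which holds for $n$ large. By the middle bound in \eqref{kahn_matchings:eq:asq}, $s<L\le n^{3/4}\le n/2$, so $n-s\ge n/2$. Hence
\[
\rho(s)\ge\frac{a^2n/2}{\tfrac{9}{16}L^2}=\frac{8}{9}\cdot\frac{a^2n}{a^2n}\cdot 2>1 .
\]
This gives $s_0\ge\tfrac12L$ for admissible $s_0$. The same inequality also covers the edge case where no admissible $s$ lies below $\tfrac12L$.

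For the upper bound, take $s>2L+2$ and set $s'=s-2>2L$. If $s>n$ there is nothing to prove. Otherwise $n-s'\le n$ and $(s'+1)(s'+2)>s'^2>4L^2=4a^2n$. Hence
\[
\rho(s')=\frac{a^2(n-s')}{(s'+1)(s'+2)}<\frac{a^2n}{4a^2n}<1 ,
\]
so $w_s<w_{s-2}$ and $s$ is not a mode.

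The only mildly delicate point is parity and range. The relevant neighbours $s\pm2$ must be admissible, meaning they lie in $[0,n]$ with $s\equiv n\pmod 2$. The recorded bound $2L+4\le n/2$ guarantees that all indices involved lie well inside $[0,n]$. I expect no real obstacle: this lemma is a direct consequence of the monotone ratio, and the constants $\tfrac12$ and $2$ leave generous slack.
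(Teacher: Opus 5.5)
Your upper bound is the paper's argument, and your lower-bound strategy (show $\rho(s)>1$ for every admissible $s<\tfrac12L$, so no such $s$ is a mode) is logically sound. The lower-bound computation as written is wrong, though. We have $\frac{a^2n/2}{\frac{9}{16}L^2}=\frac{16}{9}\cdot\frac12=\frac89$, not $\frac{16}{9}$. So your two estimates only give $\rho(s)\ge\frac89$, which does not yield $w_{s+2}>w_s$. The true ratio near $s=\tfrac12L$ is about $4$, but the bounds $n-s\ge n/2$ and $(s+1)(s+2)<(\tfrac34L)^2$ together lose a factor $\tfrac92>4$. The requirement $L\ge8$ is therefore not enough to close the step.

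The repair is immediate because $L$ is large. For admissible $s<\tfrac12L$ you have $(s+1)(s+2)<(\tfrac12L+2)^2\le\tfrac13L^2$ once $L\ge26$. Hence $\rho(s)\ge\frac{L^2/2}{L^2/3}=\frac32>1$; the paper uses the same estimate later, in the mean lower bound. Alternatively, keep $(\tfrac34L)^2$ and sharpen the other factor: $n-s\ge n-\tfrac12n^{3/4}\ge\tfrac{9}{10}n$ for $n\ge625$, giving $\rho(s)\ge\frac{16}{10}$.

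For comparison, the paper argues at the mode itself. Having first shown $s_0\le2L+2\le n/2$, maximality against $s_0+2$ gives $\rho(s_0)\le1$, i.e. $(s_0+2)^2\ge a^2(n-s_0)\ge\tfrac12L^2$. Hence $s_0\ge L/\sqrt2-2\ge\tfrac12L$. This is the same inequality read in the opposite direction. Proving the upper bound first lets the paper reuse it for $n-s_0\ge n/2$, while your route needs the separate (equally easy) bound $s<L\le n/2$.
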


\begin{proof}
If $s\ge2L$ is admissible then $(s+1)(s+2)>s^{2}\ge4L^{2}=4a^{2}n$, so \eqref{kahn_matchings:eq:ratio} gives
\begin{equation}\label{kahn_matchings:eq:tail-ratio}
  \rho(s)\ \le\ \frac{a^{2}n}{s^{2}}\ \le\ \frac14 .
\end{equation}
Hence $w_{s+2}<w_s$ for every admissible $s\ge2L$.
If we had $s_0>2L+2$ then $s_0-2\ge2L$ would be admissible and $w_{s_0}<w_{s_0-2}$, contradicting maximality; therefore $s_0\le2L+2$.

Since $s_0+2\le2L+4\le n$, maximality also gives $w_{s_0+2}\le w_{s_0}$, that is $\rho(s_0)\le1$, that is
\[
  (s_0+1)(s_0+2)\ \ge\ a^{2}(n-s_0)\ \ge\ \tfrac12a^{2}n=\tfrac12L^{2},
\]
where we used $s_0\le2L+2\le n/2$.
Hence $s_0+2\ge L/\sqrt2$, and since $L$ is large this yields $s_0\ge L/\sqrt2-2\ge L/2$.
\end{proof}

\begin{lemma}\label{kahn_matchings:lem:mean-ub}
Assume \eqref{kahn_matchings:eq:regime} and $n$ large.
Then $\E[S]\le 5L$, and consequently $\lambda(\Gna)\le5\sqrt n$.
\end{lemma}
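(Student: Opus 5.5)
The plan is to bound $\E[S]$ by splitting it at a threshold just above the mode and showing that the weights decay geometrically beyond that threshold. Lemma~\ref{kahn_matchings:lem:mode} already gives $s_0\le 2L+2$. Inequality \eqref{kahn_matchings:eq:tail-ratio} gives $\rho(s)\le 1/4$ for every admissible $s\ge 2L$. We write
\[
  \E[S]\le 3L+\sum_{s>3L}s\,\frac{w_s}{W},
\]
so it suffices to show that the tail above $3L$ contributes at most $2L$, and in fact it will contribute $o(L)$.

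For the tail, let $s_1$ be the smallest admissible $s$ with $s\ge 2L$. Then $s_1\le 2L+2$ and $w_{s_1}\le W$. For every admissible $s\ge s_1$, iterating \eqref{kahn_matchings:eq:tail-ratio} gives
\[
  w_s\le 4^{-(s-s_1)/2}\,w_{s_1}\le 2^{-(s-s_1)}\,W .
\]
Hence
\[
  \sum_{s>3L}s\,\frac{w_s}{W}\le\sum_{j\ge0}(s_1+j)\,2^{-j}\,2^{-(3L-s_1)}.
\]
The inner series $\sum_{j\ge0}(s_1+j)2^{-j}$ is $O(L)$. The prefactor satisfies $2^{-(3L-s_1)}\le 2^{-(L-2)}$, which is exponentially small in $L$. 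The tail is therefore $O(L\,2^{-L})\le 2L$ once $n$, and hence $L$, is large. A simpler version of the same bound is to majorize $\E[S]\le s_1+\sum_{s\ge s_1}(s-s_1)w_s/W$ and bound the second sum by a geometric series that is $O(1)$. Either way, $\E[S]\le 2L+O(1)\le 5L$.

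Finally, Corollary~\ref{kahn_matchings:cor:formulas} gives $\lambda(\Gna)=\E[S]/a$. With $L=a\sqrt n$ this yields $\lambda(\Gna)\le 5L/a=5\sqrt n$.

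The only step needing care is the geometric tail bound. Its main obstacle is checking that the ratio bound $\rho(s)\le 1/4$ applies all the way up to $s=n$. It does, because \eqref{kahn_matchings:eq:tail-ratio} was proved for every admissible $s\ge 2L$ with no upper restriction: for $s$ near $n$ the factor $n-s$ only makes $\rho$ smaller.
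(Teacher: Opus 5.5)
Your proof is correct and is essentially the paper's argument: split $\E[S]$ at the first admissible $s\ge 2L$, use $w_{s_1}\le W$ together with the ratio bound $\rho(s)\le\frac14$ for admissible $s\ge 2L$ to dominate the tail by a geometric series, and then divide by $a$. Your ``simpler version'' is exactly this with slightly tighter bookkeeping, giving $s_1+\frac89$ where the paper gets $\frac73T+\frac89$. One small slip: the split at $3L$ actually gives $3L+o(1)$, not the $2L+O(1)$ you state for it, but this is harmless since both are at most $5L$.
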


\begin{proof}
Let $T$ be the least integer with $T\ge2L$ and $T\equiv n\pmod2$, so that $T\le2L+2$.
By \eqref{kahn_matchings:eq:tail-ratio} we have $\rho(s)\le\frac14$ for every admissible $s\ge T$, whence $w_{T+2j}\le4^{-j}w_T$ for all $j\ge0$.
Splitting the expectation at $T$ and using $w_T\le W$,
\[
  \E[S]\ \le\ T+\sum_{j\ge0}(T+2j)\frac{w_{T+2j}}{W}
       \ \le\ T+\sum_{j\ge0}(T+2j)4^{-j}
       = \tfrac73T+\tfrac89 .
\]
Since $T\le2L+2$ this is at most $\tfrac{14}3L+\tfrac{50}9\le5L$, because $L$ is large.
Finally $\lambda(\Gna)=\E[S]/a\le5L/a=5\sqrt n$ by Corollary~\ref{kahn_matchings:cor:formulas}.
\end{proof}

The next lemma is the technical heart of the argument: the distribution of $S$ has no heavy atom.
It is what forces $\Var[S]$ to be large, and it also rules out the degenerate possibility that $\lambda$ stays bounded.

\begin{lemma}\label{kahn_matchings:lem:maxatom}
Assume \eqref{kahn_matchings:eq:regime} and $n$ large.
Then
\[
  \max_s\Prob(S=s)\ \le\ \frac{22}{\sqrt L},
  \qquad L=a\sqrt n .
\]
\end{lemma}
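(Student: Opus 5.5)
The plan is to use the log-concavity of $(w_s)$ along the progression $s\equiv n\pmod 2$ to show that the ratio $\rho(s)$ stays close to $1$ over a window of width about $\sqrt L$ around the mode $s_0$. Then every weight in that window is comparable to $w_{s_0}$, and so $W\gtrsim \sqrt L\, w_{s_0}$. Since $\max_s\Prob(S=s)=w_{s_0}/W$, this is exactly the bound we want.

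First, quantify how fast $\rho$ changes near $s_0$. From \eqref{kahn_matchings:eq:ratio},
\[
\log\rho(s)=2\log a+\log(n-s)-\log(s+1)-\log(s+2).
\]
Moving from $s$ to $s+2j$ changes $\log(s+1)+\log(s+2)$ by roughly $4j/s$ and changes $\log(n-s)$ by roughly $2j/n$. For $s$ of order $L$, with $L\le n^{3/4}\ll n$ by \eqref{kahn_matchings:eq:asq}, the first change dominates. Lemma~\ref{kahn_matchings:lem:mode} gives $s_0\ge L/2$. So for $|s-s_0|\le s_0/2$ we expect
\[
\bigl|\log\rho(s)-\log\rho(s_0)\bigr|\le C\,|s-s_0|/L
\]
with an absolute constant $C$. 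Maximality of $w_{s_0}$ pins $\rho(s_0)\le 1\le\rho(s_0-2)$, with the second inequality holding when $s_0\ge 2$, which is automatic since $L$ is large. These two facts together, plus the derivative bound, give $|\log\rho(s)|\le C'|s-s_0|/L$ on the window.

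Next, telescope. For $s=s_0+2j$ with $j\ge 0$ we have $\log(w_s/w_{s_0})=\sum_{i<j}\log\rho(s_0+2i)\ge -C''j^2/L$, and the same holds for $j<0$ using $\rho(s_0-2i)\ge 1$ shifted. Take $j$ ranging over $|j|\le \sqrt{L/C''}$; these indices stay inside the window, since $\sqrt L\ll L/4$. On this range each $w_s\ge e^{-1}w_{s_0}$, and the range contains at least $2\sqrt{L/C''}$ admissible values of $s$. Hence
\[
W\ge 2e^{-1}\sqrt{L/C''}\,w_{s_0},
\]
and therefore $\max_s\Prob(S=s)\le c_0/\sqrt L$ for an explicit $c_0$.

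The main obstacle is getting the constant down to $22$. This requires bookkeeping of explicit bounds of the form $\log(1+x)\le x$ and $\log(1-x)\ge -2x$, together with $s_0\ge L/2$. For example, one can show that $\rho(s_0+2i)\ge \exp(-(8i+4)/L)$ up to a $2i/n$ correction, absorbed because $L^2/n\le L/a^2$ is small relative to $L$. Then one sums a Gaussian-type lower bound over $|j|\le \sqrt{L}/2$ or a similar range. If this estimate is slightly lossy, I would fall back to one-sided summation, using only $j\ge 0$ or only $j\le 0$ (whichever lies inside $[L/2,2L+2]$ automatically). That alone gives $W\ge \tfrac12\sqrt L\,e^{-c}w_{s_0}$, which should comfortably fit under $22/\sqrt L$.
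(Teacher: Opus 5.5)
Your proposal is correct and is essentially the paper's argument; the paper takes your one-sided fallback, bounding $W\ge (D+1)\,w_{s_0+2D}$ with $D=\lfloor\sqrt L/4\rfloor$, and in place of telescoping a derivative bound it uses monotonicity of $\rho$ to reduce to the single ratio $\rho(s_0+2D)\ge 1-3/\sqrt L$ (derived from the mode condition $\rho(s_0-2)\ge 1$), which gives $8e/\sqrt L\le 22/\sqrt L$. One harmless slip: your estimate $|\log\rho(s)|\le C'|s-s_0|/L$ should read $C'(|s-s_0|+2)/L$, since $\rho(s_0)$ need not equal $1$, and this only adds a lower-order term to your telescoped bound.
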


\begin{proof}
Put $D=\lfloor\sqrt L/4\rfloor$; since $L$ is large we have $D\ge2$ and $D\ge\sqrt L/8$.
By Lemma~\ref{kahn_matchings:lem:mode} and \eqref{kahn_matchings:eq:asq}, $s_0+2D\le2L+2+\sqrt L/2\le n$, so the weights $w_{s_0},\dots,w_{s_0+2D}$ are all defined and positive.
As $s_0$ is a mode and $(w_s)$ is unimodal, $w_{s_0+2j}\ge w_{s_0+2D}$ for $0\le j\le D$, whence
\[
  W\ \ge\ \sum_{j=0}^{D}w_{s_0+2j}\ \ge\ (D+1)\,w_{s_0+2D}
\]
and therefore
\begin{equation}\label{kahn_matchings:eq:atom-bound}
  \max_s\Prob(S=s)=\frac{w_{s_0}}{W}
   \ \le\ \frac1{D+1}\cdot\frac{w_{s_0}}{w_{s_0+2D}}
   =\frac1{D+1}\prod_{j=0}^{D-1}\rho(s_0+2j)^{-1}.
\end{equation}
Since $\rho$ is decreasing, $\prod_{j=0}^{D-1}\rho(s_0+2j)\ge\rho(s_0+2D)^{D}$, so it remains to bound $\rho(s_0+2D)$ from below.

As $s_0$ is a mode and $s_0\ge\tfrac12L\ge2$, we have $w_{s_0-2}\le w_{s_0}$, that is $\rho(s_0-2)\ge1$, which reads $a^{2}(n-s_0+2)\ge(s_0-1)s_0$ and hence
\[
  a^{2}(n-s_0)\ \ge\ s_0^{2}-s_0-2a^{2}.
\]
Subtracting $2a^2D$ from both sides,
\[
  a^{2}(n-s_0-2D)\ \ge\ s_0^{2}-s_0-2a^{2}(D+1).
\]
Now $s_0\ge L/2$ gives $s_0\le 2s_0^{2}/L$, while $a^{2}\le L/4$ and $D\le\sqrt L/4$ give
\[
  2a^{2}(D+1)\ \le\ \frac L2\Bigl(\frac{\sqrt L}4+1\Bigr)
  =\frac{L^{3/2}}8+\frac L2
  \ \le\ s_0^{2}\Bigl(\frac1{2\sqrt L}+\frac2L\Bigr),
\]
using $s_0^{2}\ge L^{2}/4$.
Therefore
\[
  a^{2}(n-s_0-2D)\ \ge\ s_0^{2}\Bigl(1-\frac1{2\sqrt L}-\frac4L\Bigr).
\]
On the other hand $(2D+2)/s_0\le(\tfrac12\sqrt L+2)/(\tfrac12L)=1/\sqrt L+4/L$, so
\[
  (s_0+2D+1)(s_0+2D+2)\ \le\ (s_0+2D+2)^{2}
  \ \le\ s_0^{2}\Bigl(1+\frac1{\sqrt L}+\frac4L\Bigr)^{2}.
\]
Combining the last two displays,
\[
  \rho(s_0+2D)\ \ge\
  \frac{1-\tfrac1{2\sqrt L}-\tfrac4L}{\bigl(1+\tfrac1{\sqrt L}+\tfrac4L\bigr)^{2}}
  \ \ge\ 1-\frac{3}{\sqrt L}
\]
for $L$ large.
Using $\log(1-x)\ge-\tfrac43x$ for $0\le x\le\tfrac14$ together with $D\le\sqrt L/4$, we get
\[
  \prod_{j=0}^{D-1}\rho(s_0+2j)\ \ge\ \Bigl(1-\frac3{\sqrt L}\Bigr)^{D}
  \ \ge\ \exp\Bigl(-\frac{4D}{\sqrt L}\Bigr)\ \ge\ e^{-1}.
\]
Substituting into \eqref{kahn_matchings:eq:atom-bound} and using $D+1\ge\sqrt L/8$ gives $\max_s\Prob(S=s)\le 8e/\sqrt L\le22/\sqrt L$.
\end{proof}

\begin{lemma}\label{kahn_matchings:lem:mean-lb}
Assume \eqref{kahn_matchings:eq:regime} and $n$ large.
Then $\E[S]\ge L/4$, and consequently $\lambda(\Gna)\ge\sqrt n/4$.
\end{lemma}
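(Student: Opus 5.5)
The plan is to combine the no-heavy-atom bound of Lemma~\ref{kahn_matchings:lem:maxatom} with the location of the mode from Lemma~\ref{kahn_matchings:lem:mode}. A distribution with no heavy atom cannot concentrate near $0$, so its mean must be of order $L$. Concretely, let $t$ be a threshold of order $L$ and write
\[
  \E[S]\ \ge\ t\,\Prob(S\ge t)\ =\ t\bigl(1-\Prob(S<t)\bigr).
\]
Only admissible values $s\equiv n\pmod 2$ carry mass, so at most $t/2+1$ of them lie in $[0,t)$. By Lemma~\ref{kahn_matchings:lem:maxatom} this gives
\[
  \Prob(S<t)\ \le\ \Bigl(\frac t2+1\Bigr)\frac{22}{\sqrt L}.
\]
This bound is only useful for $t\lesssim\sqrt L$, which is far too small. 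So the atom bound alone will not suffice, and I will use the log-concave structure instead.

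The better route is the ratio \eqref{kahn_matchings:eq:ratio}. For admissible $s\le L/2-2$ (with $L$ large), $s\le n/2$ gives
\[
  \rho(s)=\frac{a^2(n-s)}{(s+1)(s+2)}\ \ge\ \frac{a^2 n/2}{(L/2)^2}\ =\ 2.
\]
Hence the weights at least double with each step of $2$ in the range below $L/2$. Summing the geometric series backwards from the last admissible index $s^*$ with $s^*\le L/2$, every $s\le L/4$ satisfies
\[
  w_s\ \le\ 2^{-(s^*-s)/2}\,w_{s^*}.
\]
Therefore
\[
  \Prob\bigl(S\le L/4\bigr)\ \le\ \frac{w_{s^*}}{W}\sum_{j\ge L/8-1}2^{-j}\ \le\ 2^{-L/8+2},
\]
which is tiny.

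From there, $\E[S]\ge (L/4)\,\Prob(S>L/4)\ge (L/4)(1-2^{-L/8+2})$. This falls just short of $L/4$, so I will take the cutoff slightly higher, for instance $t=L/3$ with $\rho(s)\ge 2$ still holding for $s\le L/2-2$. Then the mass below $L/3$ is at most $2^{-(L/2-L/3)/2+O(1)}$, exponentially small. This gives
\[
  \E[S]\ \ge\ \frac L3\bigl(1-e^{-cL}\bigr)\ \ge\ \frac L4
\]
for $n$ large. The consequence $\lambda(\Gna)=\E[S]/a\ge L/(4a)=\sqrt n/4$ then follows from Corollary~\ref{kahn_matchings:cor:formulas}.

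The only care needed is bookkeeping: the parity restriction, making sure $L/2\le n/2$ (which holds by \eqref{kahn_matchings:eq:asq}), and handling $s^*$ near the boundary. I expect no real obstacle beyond checking these constants.
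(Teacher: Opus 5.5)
Your proof is correct. In its final form it differs from the paper's in one substantive way: it never uses Lemma~\ref{kahn_matchings:lem:maxatom}.

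The paper puts its threshold at $T'$, the largest admissible $s\le L/2$. It shows $\rho(s)\ge 3/2$ below $T'$ and so gets $\mathbb{P}(S\le T')\le 3w_{T'}/W$. Because that threshold sits at the lower edge of the bulk of the distribution, this bound is not small by itself. The paper therefore invokes the no-heavy-atom lemma to get $\mathbb{P}(S\le T')\le 66/\sqrt L$, hence $\mathbb{E}[S]\ge (L/2-2)(1-66/\sqrt L)$.

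You instead place the cutoff at $L/3$, a constant factor below the range $s\le L/2-2$ where $\rho(s)\ge 2$. Between the cutoff and $s^*$ the weights at least double $L/12-O(1)$ times. The trivial bound $w_{s^*}\le W$ then already gives a lower tail of size $2^{-L/12+O(1)}$.

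Your route is more elementary and self-contained. It needs only the ratio formula for $\rho$ and $L\le n$, not the mode or atom lemmas, and its error term is exponentially small rather than $O(L^{-1/2})$. The price is a leading constant of $L/3$ instead of $L/2$, which does not matter here since only $L/4$ is needed; any cutoff $cL$ with $c<1/2$ would also work.

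Your argument also shows that the atom bound is needed in this section only for the variance estimate, not for keeping $\lambda$ unbounded. In a write-up you can drop the two exploratory attempts (the direct atom bound and the cutoff at $L/4$) and present only the $L/3$ argument.
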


\begin{proof}
Let $T'$ be the largest admissible $s$ with $s\le L/2$, so that $T'\ge L/2-2$.
For admissible $s\le T'$ we have $n-s\ge n-L/2\ge n/2$ by \eqref{kahn_matchings:eq:asq}, hence
\[
  \rho(s)=\frac{a^{2}(n-s)}{(s+1)(s+2)}
  \ \ge\ \frac{a^{2}n/2}{(L/2+2)^{2}}
  =\frac{L^{2}/2}{(L/2+2)^{2}}\ \ge\ \frac32
\]
for $L$ large.
Therefore $w_{T'-2j}\le(2/3)^{j}w_{T'}$ for all $j\ge0$, so that
\[
  \Prob(S\le T')=\sum_{j\ge0}\frac{w_{T'-2j}}{W}\ \le\ 3\,\frac{w_{T'}}{W}
  \ \le\ 3\max_s\Prob(S=s)\ \le\ \frac{66}{\sqrt L}
\]
by Lemma~\ref{kahn_matchings:lem:maxatom}.
Consequently $\E[S]\ge T'\Prob(S>T')\ge(L/2-2)(1-66/\sqrt L)\ge L/4$ for $L$ large.
Dividing by $a$ gives $\lambda(\Gna)=\E[S]/a\ge\sqrt n/4$.
\end{proof}

\subsubsection{A variance lower bound}

We use the following elementary fact, which converts a bound on the largest atom of an integer-valued random variable into a lower bound on its variance.

\begin{lemma}\label{kahn_matchings:lem:atomvar}
Let $X$ be an integer-valued random variable and put $\theta:=\max_{k\in\mathbb Z}\Prob(X=k)$.
Then
\[
  \Var[X]\ \ge\ \frac{(1-\theta)^{3}}{12\,\theta^{2}} .
\]
\end{lemma}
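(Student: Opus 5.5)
The plan is to compare $X$ with the integer-valued distribution that has every atom at most $\theta$ and the smallest possible spread around the mean. That extremal law is essentially uniform on a block of about $1/\theta$ consecutive integers, and its variance is of order $1/(12\theta^2)$.

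Write $m=\E[X]$. If the variance is infinite there is nothing to prove, so assume it is finite. Fix $r>0$. The interval $(m-r,m+r)$ contains at most $2r+1$ integers, so
\[
\Prob\bigl(|X-m|<r\bigr)\le\theta(2r+1).
\]
Chebyshev-type counting is too lossy here. Instead I will use the layer-cake formula
\[
\Var[X]=\int_0^\infty 2r\,\Prob\bigl(|X-m|\ge r\bigr)\,dr
\]
together with the bound
\[
\Prob\bigl(|X-m|\ge r\bigr)\ge\bigl(1-\theta(2r+1)\bigr)_+ .
\]
This gives
\[
\Var[X]\ge\int_0^{R}2r\bigl(1-\theta-2\theta r\bigr)\,dr,
\qquad R=\frac{1-\theta}{2\theta}.
\]
A routine evaluation gives
\[
(1-\theta)R^2-\frac{4\theta R^3}{3}=\frac{(1-\theta)^3}{4\theta^2}-\frac{(1-\theta)^3}{6\theta^2}=\frac{(1-\theta)^3}{12\theta^2},
\]
which is exactly the bound claimed.

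The only step that needs care is the count of integers in an open interval of length $2r$, namely that there are at most $2r+1$ of them. I will also check that using $\ge r$ inside the probability but $<r$ in the complement is consistent with the layer-cake identity. It is, because the two differ only on the event $|X-m|=r$, which has positive probability for at most countably many $r$ and so does not affect the integral. If $\theta=1$ the bound is $0$ and holds trivially, so no separate case split is needed.
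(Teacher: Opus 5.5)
Your proposal is correct and is essentially the paper's own proof. Both arguments use the layer-cake identity for $\mathbb{E}[(X-c)^2]$ at $c=\mathbb{E}[X]$, the count of at most $2r+1$ integers within distance $r$ of the center, and truncation of the integral at $(1-\theta)/(2\theta)$. The only differences are cosmetic: you use open intervals and $\mathbb{P}(|X-m|\ge r)$ where the paper uses closed intervals and $\mathbb{P}(|X-c|>u)$, and, as you note, this does not affect the integral.
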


\begin{proof}
Fix $c\in\mathbb R$.
For $u\ge0$ the interval $[c-u,c+u]$ contains at most $2u+1$ integers, so $\Prob(|X-c|\le u)\le(2u+1)\theta$ and $\Prob(|X-c|>u)\ge1-(2u+1)\theta$.
Put $U=(1-\theta)/(2\theta)$, the point at which the last expression vanishes, so that the integrand below is nonnegative on $[0,U]$.
Then
\[
  \E\bigl[(X-c)^{2}\bigr]=\int_0^{\infty}2u\,\Prob(|X-c|>u)\,du
   \ \ge\ \int_0^{U}2u\bigl(1-(2u+1)\theta\bigr)\,du
   =U^{2}(1-\theta)-\tfrac43\theta U^{3}.
\]
Substituting $U=(1-\theta)/(2\theta)$ gives
\[
  \frac{(1-\theta)^{3}}{4\theta^{2}}-\frac{(1-\theta)^{3}}{6\theta^{2}}=\frac{(1-\theta)^{3}}{12\theta^{2}},
\]
and taking $c=\E[X]$ proves the claim.
\end{proof}

\begin{corollary}\label{kahn_matchings:cor:varS}
Assume \eqref{kahn_matchings:eq:regime} and $n$ large.
Then $\Var[S]\ge L/8000$.
\end{corollary}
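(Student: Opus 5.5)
The plan is to feed the atom bound of Lemma~\ref{kahn_matchings:lem:maxatom} into Lemma~\ref{kahn_matchings:lem:atomvar}, with one wrinkle: $S$ takes values only in the progression $s\equiv n\pmod 2$. Lemma~\ref{kahn_matchings:lem:atomvar} applied directly to $S$ would lose the factor coming from the parity gaps. So I will instead apply it to the integer-valued variable $X:=(S-\epsilon)/2$, where $\epsilon\in\{0,1\}$ is the parity of $n$. Then $\Var[S]=4\Var[X]$, and $X$ has exactly the same atoms as $S$.

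By Lemma~\ref{kahn_matchings:lem:maxatom},
\[
\theta:=\max_k\Prob(X=k)=\max_s\Prob(S=s)\le \frac{22}{\sqrt L}.
\]
Since $L\to\infty$ with $n$, for $n$ large we have $\theta\le\tfrac1{10}$, and hence $(1-\theta)^3\ge 0.729$. Lemma~\ref{kahn_matchings:lem:atomvar} then gives
\[
\Var[X]\ \ge\ \frac{0.729}{12\,\theta^{2}}\ \ge\ \frac{0.729\,L}{12\cdot 484}.
\]
Multiplying by $4$,
\[
\Var[S]\ \ge\ \frac{4\cdot 0.729}{5808}\,L\ \approx\ \frac{L}{1992}\ \ge\ \frac{L}{8000}.
\]

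The constant has ample slack, so the halving step is not even needed: applying Lemma~\ref{kahn_matchings:lem:atomvar} to $S$ itself gives
\[
\Var[S]\ \ge\ \frac{0.729\,L}{5808}\ \approx\ \frac{L}{7967}\ \ge\ \frac{L}{8000},
\]
which only requires $(1-\theta)^3\ge 5808/8000=0.726$, i.e.\ $\theta\le 0.1$, again true for $n$ large. I will present the direct version and remark on the parity improvement.

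The only real content is already in Lemma~\ref{kahn_matchings:lem:maxatom}. What remains here is to check that $(1-\theta)^3$ is bounded below by an absolute constant once $L$ exceeds a fixed threshold, and that threshold is guaranteed by taking $n$ larger than an absolute constant, via $L\ge4\sqrt n$ from \eqref{kahn_matchings:eq:asq}.
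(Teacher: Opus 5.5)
Your direct version is exactly the paper's argument: bound $\theta\le 22/\sqrt L$ by Lemma~\ref{kahn_matchings:lem:maxatom}, take $n$ large so that $\theta\le 1/10$, and then Lemma~\ref{kahn_matchings:lem:atomvar} gives $\Var[S]\ge (9/10)^3 L/(12\cdot 484)\ge L/8000$. Your parity rescaling $X=(S-\epsilon)/2$ is also correct, since $S\equiv n \pmod 2$ makes $X$ integer-valued with the same atoms and $\Var[S]=4\Var[X]$; it improves the constant by a factor of $4$, which the paper does not need.
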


\begin{proof}
Write $\theta=\max_s\Prob(S=s)$, so $\theta\le22/\sqrt L$ by Lemma~\ref{kahn_matchings:lem:maxatom}, and $\theta\le\tfrac1{10}$ since $L$ is large.
The function $x\mapsto(1-x)^{3}/(12x^{2})$ is decreasing on $(0,1)$, so Lemma~\ref{kahn_matchings:lem:atomvar} gives
\[
  \Var[S]\ \ge\ \frac{(1-\theta)^{3}}{12\theta^{2}}
  \ \ge\ \frac{(9/10)^{3}}{12}\cdot\frac{L}{484}\ \ge\ \frac L{8000}. \qedhere
\]
\end{proof}

\subsubsection{Proof of the main theorem}

\begin{proof}[Proof of Theorem~\ref{kahn_matchings:thm:main}]
Let $n_0$ be an absolute constant large enough for the finitely many hypotheses ``$n$ large'' invoked above, and assume $n\ge n_0$ and $4\le a\le n^{1/4}$.
Lemmas~\ref{kahn_matchings:lem:mean-ub} and~\ref{kahn_matchings:lem:mean-lb} give $\tfrac14\sqrt n\le\lambda(\Gna)\le5\sqrt n$.
For the variance, $a\ge4$ gives $\bigl((a-2)/(2a)\bigr)^{2}\ge(1/4)^{2}=1/16$, so \eqref{kahn_matchings:eq:lambda-sigma} and Corollary~\ref{kahn_matchings:cor:varS} give
\[
  \sigma^2(\Gna)\ \ge\ \frac1{16}\Var[S]\ \ge\ \frac{L}{128000}\ \ge\ 10^{-6}a\sqrt n,
\]
which is the bound on $\sigma^2(\Gna)$ with $c=10^{-6}$.
Dividing the two bounds gives $\sigma^2(\Gna)/\lambda(\Gna)\ge 2\cdot10^{-7}a$.
\end{proof}

\begin{proof}[Proof of Corollary~\ref{kahn_matchings:cor:ratio}]
For $n\ge\max\{n_0,256\}$ the choice $a=a_n=\lfloor n^{1/4}\rfloor$ satisfies $4\le a_n\le n^{1/4}$, so Theorem~\ref{kahn_matchings:thm:main} applies to $H_n=G_{n,a_n}$.
It gives $\lambda(H_n)\ge\sqrt n/4\to\infty$, then $\sigma^2(H_n)\ge c\,a_n\sqrt n\to\infty$, and finally $\sigma^2(H_n)/\lambda(H_n)\ge c\,a_n/5=\Omega(n^{1/4})$.
A bound $\sigma^2(G)\le C\lambda(G)$ valid for all finite simple graphs $G$ would force $a_n\le5C/c$ for every $n$, which is absurd.
\end{proof}

\putbib

\end{bibunit}
\endgroup
\subsection[(Klazar) An ordered hypergraph extremal function of order \texorpdfstring{$n\log n$}{n log n}]{An ordered hypergraph extremal function of order \texorpdfstring{$n\log n$}{n log n}}
\label{sol:hypergraph_turan}
\begingroup
\begin{bibunit}
\def\exe{\operatorname{ex}_{e}}
\def\exi{\operatorname{ex}_{i}}
\def\gex{\operatorname{gex}}

For a fixed pattern $F$, Klazar's extremal function $\exe(F,n)$ is the largest number of edges of a simple hypergraph on at most $n$ linearly ordered vertices that contains no order-preserving Berge copy of $F$.
For the five-vertex ordered graph $G_1$ below, the corresponding ordered \emph{graph} extremal function is $\Theta(n\log n)$, while Klazar's bounds for the hypergraph function differ by a factor of $\log n(\log\log n)^3$.
We show that the two functions have the same order of magnitude, so that $\exe(G_1,n)=\Theta(n\log n)$.
The proof compresses a $G_1$-free hypergraph into a single $G_1$-free ordered graph, at the cost of a factor of five and an additive $n$.

\subsubsection{Introduction}

We follow the setup of \citet{klazar2004extremal}.
A \emph{hypergraph} is a finite list $H=(E_j:j\in I)$ of finite nonempty subsets of $\mathbb{N}$, called \emph{edges}.
Its vertex set is $V(H)=\bigcup_{j\in I}E_j$, so that $H$ has no isolated vertices, and we write $e(H)=|I|$ for the number of edges and $i(H)=\sum_{j\in I}|E_j|$ for the number of vertex--edge incidences.
The hypergraph $H$ is \emph{simple} if its edges are pairwise distinct, and it is a \emph{graph} if every edge has exactly two elements.
Vertices always carry the linear order inherited from $\mathbb{N}$.

Given hypergraphs $H=(E_j:j\in I)$ and $F=(F_k:k\in K)$, we write $H\succ F$, and say that $H$ \emph{contains} $F$, if there are an increasing injection $\phi\colon V(F)\to V(H)$ and an injection $\psi\colon K\to I$ such that
\[
\phi(F_k)\subseteq E_{\psi(k)}\qquad\text{for every }k\in K.
\]
Otherwise $H$ is \emph{$F$-free}, written $H\not\succ F$.
Thus a copy of $F$ in $H$ consists of an order-preserving image of the vertices of $F$ together with a choice of pairwise distinct hyperedges of $H$, one for each edge of $F$, each containing the image of the edge assigned to it; the chosen hyperedges are allowed to contain further vertices.
This is a Berge-type containment that respects the vertex order, and it restricts to ordinary ordered subgraph containment when $H$ and $F$ are both graphs, since then $\phi(F_k)$ and $E_{\psi(k)}$ both have two elements and the inclusion forces equality.
The associated extremal functions are
\begin{align*}
\exe(F,n)&=\max\{e(H):\ H\text{ simple},\ |V(H)|\le n,\ H\not\succ F\},\\
\exi(F,n)&=\max\{i(H):\ H\text{ simple},\ |V(H)|\le n,\ H\not\succ F\},
\end{align*}
and $\gex(F,n)$ denotes the same maximum of $e(G)$ taken over simple ordered \emph{graphs} $G$ alone.
Of course $\gex(F,n)\le\exe(F,n)\le\exi(F,n)$ for every $F$ and $n$.

Throughout, $G_1$ denotes the ordered graph
\[
G_1=\bigl(\{1,3\},\{1,5\},\{2,3\},\{2,4\}\bigr)
\]
on the vertices $1<2<3<4<5$.
Reading $\{1,2\}$ as rows and $\{3,4,5\}$ as columns, $G_1$ is the ordered bipartite graph of the $2\times3$ zero-one matrix $\left(\begin{smallmatrix}1&0&1\\1&1&0\end{smallmatrix}\right)$, and \citet{furedi1990maximum} determined the extremal function of that matrix in the course of bounding the number of unit distances among the vertices of a convex $n$-gon.
The upper bound was obtained independently by \citet{bienstock1991extremal}.
\citet{furedi1992davenport} began the systematic study of the resulting matrix extremal problems, and \citet{tardos2019extremal} surveys the ordered graph theory that grew out of them.
As \citet{klazar2004extremal} records, in this particular case the bounds pass from ordered bipartite graphs to all ordered graphs, so that
\begin{equation}\label{hypergraph_turan:eq:gex}
\gex(G_1,n)=\Theta(n\log n).
\end{equation}

\citet{klazar2004extremal} introduced the containment $\succ$ and the functions $\exe$ and $\exi$, and asked how much of \eqref{hypergraph_turan:eq:gex} survives the passage from ordered graphs to ordered hypergraphs.
His Theorem~3.3 gives, in its two parts,
\[
n\log n\ll\exe(G_1,n)\le\exi(G_1,n)\ll n(\log n)^2(\log\log n)^3,
\]
the upper bound being obtained by iterating a recursion that controls $\exe(G_1,n)$ in terms of the ordered graph extremal function of blow-ups of $G_1$.
He then asked, as Problem~3.4:

\medskip
\noindent
\emph{What is the exact asymptotics of $\exe(G_1,n)$?}
\medskip

We answer this up to the implied constants: the hypergraph function has the same order of magnitude as the graph function.

The closest previous work we are aware of on the hypergraph function is Klazar's own.
In a companion paper, \citet{klazar2004extremalb} determines $\exe(F,n)$ and $\exi(F,n)$ exactly for the $55$ patterns $F$ with at most four incidences, whereas $G_1$ has eight.
That paper normalizes by $|V(H)|=n$ in place of $|V(H)|\le n$; by its Proposition~2.4 the two conventions give the same $\exe(F,n)$ unless $F$ consists of distinct singleton edges.
\citet{klazar2007extensions} and, independently, \citet{balogh2006hereditary} carry the linear bound of \citet{marcus2004excluded} for excluded permutation matrices from matrices over to ordered hypergraphs, but that theorem applies to permutation patterns, and $G_1$ is not one: by \eqref{hypergraph_turan:eq:gex} its extremal function is already superlinear at the graph level.

\begin{theorem}\label{hypergraph_turan:thm:main}
For every $n\ge1$,
\[
\gex(G_1,n)\ \le\ \exe(G_1,n)\ \le\ n+5\gex(G_1,n).
\]
In particular $\exe(G_1,n)=\Theta(n\log n)$.
\end{theorem}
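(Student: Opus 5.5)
The plan is to compress a $G_1$-free simple hypergraph $H$ with $|V(H)|\le n$ into at most five $G_1$-free ordered graphs on the same vertex set, after discarding singleton edges. The lower bound $\gex(G_1,n)\le\exe(G_1,n)$ is immediate, since a simple ordered graph is a simple hypergraph and $\succ$ restricts to ordered subgraph containment on graphs. Edges with one element are pairwise distinct singletons, so there are at most $n$ of them. The content is therefore the inequality
\[
e(H_{\ge2})\le 5\,\gex(G_1,n),
\]
where $H_{\ge2}$ is the list of edges of size at least $2$.

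The basic transfer principle comes first. Suppose $\mathcal C$ is a set of edges of $H_{\ge2}$ and $\pi\colon\mathcal C\to\binom{V(H)}{2}$ is an \emph{injective} choice of a pair $\pi(E)\subseteq E$. Then the ordered graph $G_\pi=\pi(\mathcal C)$ is $G_1$-free. Indeed, a copy of $G_1$ in $G_\pi$ uses four distinct pairs. These pull back under $\pi^{-1}$ to four distinct hyperedges, each containing the corresponding image pair, which is a copy of $G_1$ in $H$. Hence $|\mathcal C|=e(G_\pi)\le\gex(G_1,n)$. It therefore suffices to partition $H_{\ge2}$ into at most five classes, each admitting such an injective pair choice (a system of distinct representatives). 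By the standard matroid-union form of Hall's theorem (Rado), this holds exactly when every subfamily $\mathcal S\subseteq H_{\ge2}$ satisfies
\[
|\mathcal S|\ \le\ 5\,\Bigl|\bigcup_{E\in\mathcal S}\tbinom{E}{2}\Bigr| .
\]
Here one may restrict the admissible pairs, for instance to pairs $\{\min E,x\}$ with $x\in E$, if that makes the deficiency argument cleaner.

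The main obstacle is verifying this Hall-type condition from $G_1$-freeness. Suppose some $\mathcal S$ violates it, and let $P$ be the set of pairs covered by $\mathcal S$. Then more than $5|P|$ distinct hyperedges have all their pairs inside $P$, so on average each pair of $P$ lies in more than five hyperedges of $\mathcal S$. Moreover, $\mathcal S$ must contain large hyperedges: a hyperedge of size $r$ covers $\binom r2$ pairs, and distinct hyperedges on a small ground set can only be numerous if many of them are large. I would try to exhibit $G_1$ directly from this configuration. Take a hyperedge $E_0\in\mathcal S$ with at least five vertices, or five vertices spread across a few overlapping hyperedges, and choose $v_1<\dots<v_5$ inside it. Then pick four \emph{distinct} hyperedges of $\mathcal S$ containing the pairs $\{v_1,v_3\},\{v_1,v_5\},\{v_2,v_3\},\{v_2,v_4\}$, using that each relevant pair has multiplicity greater than five and that only four distinct edges are needed. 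The first attempt would be a minimal counterexample $\mathcal S$. Minimality should force every pair of $P$ to have multiplicity greater than five in $\mathcal S$, since otherwise one could delete that pair's hyperedges and keep the violation. With multiplicities larger than the four edges of $G_1$, the distinctness required by $\psi$ is guaranteed.

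The delicate point is ensuring that five suitably ordered vertices $v_1<\dots<v_5$ exist with all four required pairs in $P$. I would handle this by a short case analysis on the largest hyperedge in $\mathcal S$. If it has at least five vertices, take its five smallest elements. Otherwise all hyperedges in $\mathcal S$ have at most four vertices, so each covers at most six pairs, and distinctness bounds how many such edges can live on $P$. If that count cannot reach the threshold $5|P|$, the violation is impossible, which explains the constant $5$. Finally, the asymptotic statement $\exe(G_1,n)=\Theta(n\log n)$ then follows by combining the sandwich inequality with \eqref{hypergraph_turan:eq:gex}.
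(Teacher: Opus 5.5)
Your reduction is sound up to the last step. The transfer principle (an injective choice of pairs yields a $G_1$-free graph) is correct. So is the $5$-fold Hall criterion for splitting the edges of size at least two into five classes with distinct representatives. The minimality argument, showing that every pair of $P$ has multiplicity at least $6$ in a minimal violator $\mathcal S$, is correct, and so is Case~1, where some hyperedge has at least five vertices.

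The gap is Case~2. When all hyperedges of $\mathcal S$ have at most four vertices, you claim that ``distinctness bounds how many such edges can live on $P$'' below $5|P|$. Distinctness and small size alone give no such bound. If $P$ is the complete graph on $m$ vertices, all $\binom{m}{3}+\binom{m}{4}$ triples and quadruples have their pairs in $P$, which for large $m$ far exceeds $5\binom{m}{2}$. The remark that each such hyperedge covers at most six pairs bounds $|P|$ in terms of $|\mathcal S|$, which is the wrong direction. So Case~2 cannot be closed by counting; it must use $G_1$-freeness, and that is exactly where the content of the theorem lies. The constant $5$ is not explained by this step.

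The missing ingredient is the paper's key lemma. In a $G_1$-free ordered graph, an edge $\{a,b\}$ with $a<b$ has at most two common neighbors greater than $b$. Three of them, $c<d<f$, would carry the edges $\{a,c\},\{a,f\},\{b,c\},\{b,d\}$. Hence at most four cliques have $a,b$ as their two smallest vertices, and there are at most $4e$ cliques of size at least two.

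The hyperedges of $\mathcal S$ are distinct cliques of the graph $P$, so $|\mathcal S|>4|P|$ forces $P\succ G_1$. Multiplicity at least $6$ then lets you lift the four pairs to four distinct hyperedges, a contradiction that needs no case split. Concretely, take $a=\min V(P)$ and let $b$ be its least $P$-neighbor. Every hyperedge through $\{a,b\}$ has the form $\{a,b\}\cup T$ with $T$ inside the common neighborhood of $a$ and $b$, all of whose elements exceed $b$. Since $6>2^2$ such hyperedges exist, there are at least three common neighbors.

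With this repair your Hall route works. Run with $4$ in place of $5$, minimality still gives multiplicity at least $5$, which suffices to lift, so it even yields $n+4\,\mathrm{gex}(G_1,n)$.

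The paper avoids Hall entirely. It greedily assigns to each hyperedge of size at least two a pair not yet used, producing a single $G_1$-free graph $B$. Each unassigned hyperedge is a clique of $B$, so there are at most $4e(B)$ of them, which gives $e(H)\le n+5e(B)$.
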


The key point is that a $G_1$-free simple ordered hypergraph is controlled by a single $G_1$-free ordered graph.
Process the hyperedges one at a time and record, for each, one pair of its vertices that has not been recorded before.
The recorded pairs form an ordered graph $B$, and a copy of $G_1$ in $B$ pulls back to four distinct hyperedges forming a copy of $G_1$ in the host, so $B$ is $G_1$-free.
The hyperedges from which nothing new was recorded are cliques of $B$, and the crucial observation is that a $G_1$-free ordered graph $B$ has at most $4e(B)$ cliques of size at least two, because in such a graph any edge $\{a,b\}$ with $a<b$ has at most two common neighbors to the right of $b$.

\subsubsection{Cliques in a \texorpdfstring{$G_1$}{G1}-free ordered graph}

We use ``clique'' to mean any set of pairwise adjacent vertices, not necessarily a maximal one.

\begin{lemma}\label{hypergraph_turan:lem:cliques}
Let $B$ be a $G_1$-free simple ordered graph.
Then for every edge $\{a,b\}$ of $B$ with $a<b$, the vertices $a$ and $b$ have at most two common neighbors greater than $b$.
Consequently $B$ has at most $4e(B)$ cliques of size at least two.
\end{lemma}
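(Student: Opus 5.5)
We prove the first assertion by contradiction. Suppose $a<b$ is an edge of $B$ and $c_1<c_2<c_3$ are three common neighbors of $a$ and $b$, all greater than $b$. Map the vertices $1<2<3<4<5$ of $G_1$ to $a<b<c_1<c_2<c_3$. The edges of $G_1$ are $\{1,3\},\{1,5\},\{2,3\},\{2,4\}$, and their images are $\{a,c_1\},\{a,c_3\},\{b,c_1\},\{b,c_2\}$. Each image is an edge of $B$, since every $c_i$ is adjacent to both $a$ and $b$. The map is increasing and the four image edges are distinct, so $B\succ G_1$, a contradiction. The edge $\{a,b\}$ itself is not even used; only the two vertices and three common right neighbors matter. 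Hence at most two common neighbors lie to the right of $b$.

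For the clique count, we charge each clique $K$ with $|K|\ge2$ to its two smallest vertices $a<b$, which span an edge of $B$. All remaining vertices of $K$ exceed $b$ and are adjacent to both $a$ and $b$. So $K\setminus\{a,b\}$ is a subset of the set of common neighbors of $a,b$ greater than $b$, which has at most two elements. The map $K\mapsto(\{a,b\},K\setminus\{a,b\})$ is injective, so each edge receives at most $2^2=4$ cliques. Summing over the $e(B)$ edges gives at most $4e(B)$ cliques of size at least two.

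Neither step is a real obstacle. The one point to check is that the four image edges are pairwise distinct and that the vertex order matches the pattern exactly: $c_1$ serves vertex $3$, $c_2$ serves vertex $4$, and $c_3$ serves vertex $5$.
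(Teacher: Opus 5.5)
Your proof is correct and follows the paper's argument essentially verbatim: the same increasing embedding $1,\dots,5\mapsto a<b<c_1<c_2<c_3$ rules out a third common right neighbor, and each clique is charged to its two smallest vertices in the same way, giving at most $2^2=4$ cliques per edge. Your side remark is also accurate: the edge $\{a,b\}$ plays no role in the embedding, and it matters only in the counting step, where cliques are charged to the edges of $B$.
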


\begin{proof}
Fix an edge $\{a,b\}$ of $B$ with $a<b$, and suppose that $a$ and $b$ have three common neighbors greater than $b$.
Choose three such vertices $c<d<f$, so that the five vertices
\[
a<b<c<d<f
\]
of $B$ carry in particular the four edges
\[
\{a,c\},\quad\{a,f\},\quad\{b,c\},\quad\{b,d\}.
\]
The increasing injection $1\mapsto a$, $2\mapsto b$, $3\mapsto c$, $4\mapsto d$, $5\mapsto f$ carries the four edges of $G_1$ to these four edges of $B$, so $B\succ G_1$, a contradiction.

Now let $K$ be a clique of $B$ with $|K|\ge2$ and let $a<b$ be its two smallest vertices.
Then $\{a,b\}$ is an edge of $B$, and every remaining vertex of $K$ is a common neighbor of $a$ and $b$ greater than $b$.
Hence $K$ is determined by the edge $\{a,b\}$ together with a subset of the set of common neighbors of $a$ and $b$ to the right of $b$, a set of size at most two.
So at most $2^2=4$ cliques of size at least two have $a$ and $b$ as their two smallest vertices, and summing over the edges of $B$ gives the bound.
\end{proof}

\subsubsection{Proof of the main theorem}

\begin{proof}[Proof of Theorem~\ref{hypergraph_turan:thm:main}]
The lower bound is immediate: a simple ordered graph is a simple ordered hypergraph, and the containment relation $\succ$ used to define $\gex$ is the one used to define $\exe$, so the maximum defining $\exe(G_1,n)$ is taken over a larger family.

For the upper bound, let $H$ be a simple ordered hypergraph with $|V(H)|\le n$ and $H\not\succ G_1$.
Since $H$ is simple, its singleton edges are pairwise distinct one-element subsets of $V(H)$, so there are at most $n$ of them.

We build an ordered graph $B$ from the remaining edges.
List the edges of $H$ of size at least two in an arbitrary order and start with $B$ empty.
Processing them one at a time, if the current edge $E$ contains a pair $\{a,b\}$ that is not yet an edge of $B$, then add one such pair to $B$ and call $E$ \emph{assigned} to that pair; otherwise call $E$ \emph{unassigned} and change nothing.
Every assigned edge creates exactly one new edge of $B$, so assignment is a bijection between the assigned edges of $H$ and the edges of $B$, and in particular the number of assigned edges is $e(B)$.
Also $V(B)\subseteq V(H)$, so $|V(B)|\le n$.

We claim that $B$ is $G_1$-free.
Suppose instead that there are vertices $x_1<x_2<x_3<x_4<x_5$ of $B$ such that
\[
\{x_1,x_3\},\quad\{x_1,x_5\},\quad\{x_2,x_3\},\quad\{x_2,x_4\}
\]
are all edges of $B$.
These are four distinct edges of $B$, so the hyperedges assigned to them are four distinct edges $D_1,D_2,D_3,D_4$ of $H$ with
\[
\{x_1,x_3\}\subseteq D_1,\quad\{x_1,x_5\}\subseteq D_2,\quad\{x_2,x_3\}\subseteq D_3,\quad\{x_2,x_4\}\subseteq D_4.
\]
Taking $\phi(i)=x_i$ for $1\le i\le5$ and sending the edges $\{1,3\},\{1,5\},\{2,3\},\{2,4\}$ of $G_1$ to $D_1,D_2,D_3,D_4$ respectively gives $H\succ G_1$, contrary to hypothesis.
Hence $B$ is $G_1$-free, and therefore
\[
e(B)\le\gex(G_1,n).
\]

It remains to count the unassigned edges.
If an edge $E$ of $H$ with $|E|\ge2$ is unassigned, then at the moment $E$ was processed every pair contained in $E$ was already an edge of $B$; since edges are only ever added to $B$, the same holds at the end of the process, so $E$ is a clique of $B$ of size at least two.
Distinct edges of $H$ are distinct sets because $H$ is simple, so distinct unassigned edges give distinct cliques of $B$.
By Lemma~\ref{hypergraph_turan:lem:cliques} the number of unassigned edges is therefore at most $4e(B)$.

Collecting the three types of edge,
\[
e(H)\ \le\ n+e(B)+4e(B)\ \le\ n+5\gex(G_1,n),
\]
which is the upper bound.
The final assertion follows by combining the two bounds with \eqref{hypergraph_turan:eq:gex}.
\end{proof}

\begin{remark}\label{hypergraph_turan:rem:incidences}
The argument above bounds only the number of edges, since the edges assigned in the auxiliary construction may have large cardinality.
The incidence count is nevertheless controlled by \citet[Theorem~2.3]{klazar2004extremal}, which states that $\exi(F,n)\le(2p-1)(q-1)\exe(F,n)$ whenever $F$ has no two separated edges, where $p=|V(F)|$ and $q=e(F)>1$; here two edges are separated if every vertex of the first precedes every vertex of the second.
Every edge of $G_1$ has largest vertex at least $3$ and smallest vertex at most $2$, so no edge of $G_1$ lies entirely to the left of another and the hypothesis holds; with $p=5$ and $q=4$ it gives $\exi(G_1,n)\le27\exe(G_1,n)$.
This is Klazar's own route from part~1 to part~2 of his Theorem~3.3, and applied to Theorem~\ref{hypergraph_turan:thm:main} it yields $\exi(G_1,n)=\Theta(n\log n)$ as well.
\end{remark}

\putbib

\end{bibunit}
\endgroup
\subsection[(Kumar--Mohar--Pragada--Zhan) Eigenvalues below \texorpdfstring{$-2$}{-2} under repeated subdivision]{Eigenvalues below \texorpdfstring{$-2$}{-2} under repeated subdivision}
\label{sol:subdivision_eigenvalue}
\begingroup
\begin{bibunit}

This problem concerns the eigenvalues that a graph retains below $-2$ when a fixed set of its edges is subdivided over and over.
We show that their number is eventually constant, and that the constant is the number of negative eigenvalues of a single fixed matrix on the original vertex set, namely $2I_n+A_R-D_S$, where $A_R$ records the edges that are never subdivided and $D_S$ the degrees in the edges that are.
The proof eliminates the subdivision vertices by a Schur complement, which is legitimate because the block they contribute is positive definite, and then compares the resulting $n\times n$ matrices in the Loewner order.

\subsubsection{Introduction}

Let $G$ be a finite simple graph on $n=|V(G)|$ vertices, let $S\subseteq E(G)$ be a fixed set of edges, and write $R=E(G)\setminus S$ for the rest.
For $t\ge1$ let $G_t=G_t(S)$ be the graph obtained from $G$ by replacing every edge $uv\in S$ with a path of length $t$ from $u$ to $v$, the \emph{$t$-stretch} of $uv$, whose $t-1$ internal vertices are new and lie on no other stretch.
The edges of $R$ are left untouched, and $G_1=G$.
We write $\lambda_1(X)\ge\cdots\ge\lambda_{|V(X)|}(X)$ for the adjacency eigenvalues of a graph $X$ and
\[
m_X(a,b):=\#\{i:\lambda_i(X)\in(a,b)\}
\]
for the number of them in an interval, counted with multiplicity.
For a real symmetric matrix $M$ we write $n_-(M)$ for its number of negative eigenvalues, counted with multiplicity, that is, for its \emph{negative index of inertia}.
Finally $A_R$ and $A_S$ denote the adjacency matrices of the spanning subgraphs $(V(G),R)$ and $(V(G),S)$, and $D_S$ the diagonal matrix of the degrees $\deg_S(v)$ in $(V(G),S)$, all indexed by $V(G)$; the degree of $v$ in $(V(G),R)$ is written $\deg_R(v)$.

The window $[-2,2]$ is the one that matters here.
Deleting $Q=\{v\in V(G):\deg_G(v)\ge3\}$ from $G_t$ leaves a disjoint union of paths and cycles, whose eigenvalues all lie in $[-2,2]$, so interlacing bounds the number of eigenvalues of $G_t$ above $2$, and likewise the number below $-2$, by $|Q|$, uniformly in $t$ \citep{kumar2025subdivision}.
The value $-2$ is also the classical threshold on the negative side: a connected graph with $m_G(-\infty,-2)=0$ is a generalized line graph or one of finitely many exceptional graphs representable in the root system $E_8$, by the classification of \citet{cameron1991line}, and the structure theory of graphs with least eigenvalue $-2$ is built around that dichotomy \citep{cvetkovic2004spectral}.
So $m_{G_t}(-\infty,-2)$ measures how far the subdivided graph is from being a generalized line graph, and the question is whether repeated subdivision settles this quantity down.

Subdivision has been studied spectrally since the theorem of \citet{hoffman1974spectral} on the spectral radii of topologically equivalent graphs, and it is the basic operation in Hoffman's program on limit points of spectral radii, surveyed by \citet{wang2025developments}.
Subdividing a \emph{subset} of the edges is exactly the operation used by \citet{haiman2022graphs} to construct graphs with high approximate second eigenvalue multiplicity, showing that the multiplicity bound of \citet{jiang2021equiangular} behind the resolution of the equiangular lines problem is sharp in that relaxed sense.
Motivated by this, \citet{kumar2025subdivision} analyze the whole spectrum of $G_t$ as $t\to\infty$.
They study the four sequences $m_{G_t}(2,\infty)$, $m_{G_t}(-\infty,-2)$, $m_{H_t}(2,\infty)$, and $m_{H_t}(-\infty,-2)$, where $H_t=H_t(S)$ is obtained from $G_{2t+1}(S)$ by deleting the middle edge of every stretch.
Since $H_t$ is an induced subgraph of $H_{t+1}$, the two $H$-sequences are nondecreasing, and subdividing a single edge cannot decrease the number of eigenvalues above $2$, so $m_{G_t}(2,\infty)$ is nondecreasing as well.
Being nondecreasing and bounded by $|Q|$, those three sequences are eventually constant.
The fourth sequence is not monotone, and is left open as \citet[Conjecture~20]{kumar2025subdivision}:

\medskip
\noindent
\emph{There exists $t_0\in\mathbb{N}$ such that $m_{G_t}(-\infty,-2)$ is constant for all $t\ge t_0$.}
\medskip

\noindent
The failure of monotonicity is genuine and is caused by parity: subdividing changes the length of every cycle through a stretch, and Figure~1 of \citet{kumar2025subdivision} exhibits a graph whose successive subdivisions have $1$, $0$, and $1$ eigenvalues below $-2$.
Example~\ref{subdivision_eigenvalue:ex:pendant} below returns to that graph and follows the sequence to the end.

In the special case $S=E(G)$ the conjecture already follows from the same paper.
There \citet{kumar2025subdivision} show that for each fixed $k\le n$ the eigenvalue $\lambda_{|V(G_t)|-k+1}(G_t)$ tends to $-d_k/\sqrt{d_k-1}$ when $d_k\ge3$ and to $-2$ otherwise, where $d_1\ge\cdots\ge d_n$ is the degree sequence of $G$, so at least $|Q|$ eigenvalues of $G_t$ lie below $-2$ once $t$ is large, while their interlacing bound caps the count at $|Q|$ for every $t$.
For a general $S$ the upper half of this argument survives, since that cap is proved for an arbitrary $S$; the lower half does not.
They do prove that each individual eigenvalue $\lambda_{|V(G_t)|-k+1}(G_t)$ converges, but for a general $S$ the limit is not known in closed form and may be $-2$ itself, and an eigenvalue converging to $-2$ is free to cross the threshold again and again.
That boundary case is the whole difficulty.
Since the count never exceeds $|Q|$, it equals the number of $k\le|Q|$ with $\lambda_{|V(G_t)|-k+1}(G_t)<-2$; the indices whose limit differs from $-2$ contribute a constant from some point on, and Conjecture~20 asserts that the remaining ones do too.
We are aware of no work on the conjecture itself; the papers citing \citet{kumar2025subdivision} pursue other lines, among them the Hoffman program \citep{wang2025developments} and the effect of subdivision on other spectral parameters, such as the Perron eigenvalue of the Ricci matrix of a tree \citep{bai2026edge}, which is also handled by eliminating the internal structure with a Schur complement.

We prove the conjecture for every $G$ and every $S$, and identify the constant.

\begin{theorem}
\label{subdivision_eigenvalue:thm:main}
Let $G$ be a finite simple graph on $n$ vertices, let $S\subseteq E(G)$, let $R=E(G)\setminus S$, and put
\begin{equation}
\label{subdivision_eigenvalue:eq:kinf}
K_\infty:=2I_n+A_R-D_S.
\end{equation}
Then
\[
m_{G_t}(-\infty,-2)\le n_-(K_\infty)
\qquad\text{for every }t\ge2,
\]
with equality for all sufficiently large $t$.
In particular $m_{G_t}(-\infty,-2)$ is eventually constant, and its eventual value is $n_-(K_\infty)$.
\end{theorem}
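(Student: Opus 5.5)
The plan is to turn the count below $-2$ into an inertia count and then remove the subdivision vertices with a Schur complement. For any graph $X$ we have $m_X(-\infty,-2)=n_-(2I+A(X))$. So the task is to compute $n_-(2I+A(G_t))$.

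Order the vertices of $G_t$ with $V(G)$ first and then, stretch by stretch, the $t-1$ internal vertices of each stretch in path order. This gives
\[
2I+A(G_t)=\begin{pmatrix}2I_n+A_R & C\\ C^{\mathsf T} & N\end{pmatrix},
\]
where $N$ is block diagonal with one block $M_m:=2I_m+A(P_m)$ per edge of $S$, with $m=t-1$. The column block of $C$ belonging to the stretch of $uv$ is $e_ue_1^{\mathsf T}+e_ve_m^{\mathsf T}$. The eigenvalues of $P_m$ are $2\cos(\pi j/(m+1))\in(-2,2)$, so $N$ is positive definite whenever $t\ge2$. By Haynsworth's inertia additivity (equivalently, Sylvester's law applied to the block $LDL^{\mathsf T}$ factorization),
\[
n_-\bigl(2I+A(G_t)\bigr)=n_-(N)+n_-(K_t)=n_-(K_t),\qquad K_t:=2I_n+A_R-CN^{-1}C^{\mathsf T}.
\]

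The main computational step is to evaluate $CN^{-1}C^{\mathsf T}$. Only the corner entries of $M_m^{-1}$ are needed, and I will get them from the standard formula for the inverse of this tridiagonal Toeplitz matrix: for $i\le j$,
\[
(M_m^{-1})_{ij}=(-1)^{i+j}\,\frac{i(m+1-j)}{m+1}.
\]
This can be checked directly by multiplying by $M_m$. In particular $(M_m^{-1})_{11}=(M_m^{-1})_{mm}=m/(m+1)$ and $(M_m^{-1})_{1m}=(-1)^{m+1}/(m+1)$.

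Each stretch $uv\in S$ then subtracts $\tfrac{t-1}{t}$ at the diagonal entries $u$ and $v$, and $\tfrac{(-1)^t}{t}$ at the off-diagonal pair $(u,v),(v,u)$. Summing over $S$ gives
\[
K_t=2I_n+A_R-\tfrac{t-1}{t}D_S-\tfrac{(-1)^t}{t}A_S=K_\infty+\tfrac1t\bigl(D_S-(-1)^tA_S\bigr).
\]
I would double-check the signs and the case $t=2$ (where $m=1$) by hand, since this formula is where any error would sit.

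The conclusion then follows from the Loewner order. Both $D_S-A_S$ (the Laplacian of $(V(G),S)$) and $D_S+A_S$ (the signless Laplacian) are positive semidefinite. Hence $K_t\succeq K_\infty$ for every $t\ge2$. By Weyl monotonicity each eigenvalue of $K_t$ is at least the corresponding eigenvalue of $K_\infty$, so $n_-(K_t)\le n_-(K_\infty)$, which is the inequality in Theorem~\ref{subdivision_eigenvalue:thm:main}.

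For equality, note that $\|K_t-K_\infty\|\le\tfrac1t\|D_S\pm A_S\|\to0$. The negative eigenvalues of $K_\infty$ are strictly below $0$, at distance at least some $\varepsilon>0$ from it. By Weyl's perturbation bound, once $\tfrac1t\|D_S\pm A_S\|<\varepsilon$ the same number of eigenvalues of $K_t$ remain negative. So $n_-(K_t)=n_-(K_\infty)$ for all large $t$.

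The boundary difficulty described in the introduction, namely eigenvalues converging to $-2$, disappears in this formulation. Zero eigenvalues of $K_\infty$ can only move upward under the perturbation $K_t\succeq K_\infty$, so they never contribute to the negative count.
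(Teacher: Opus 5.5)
Your proposal is correct and follows the paper's proof essentially step for step. The steps are the inertia reformulation, the elimination of the positive definite path blocks via Haynsworth's inertia additivity, the Schur complement $K_t=K_\infty+\frac1t\bigl(D_S-(-1)^tA_S\bigr)$, and the Loewner-order comparison with Weyl's inequalities. The only differences are cosmetic: you get positive definiteness of $2I_m+A(P_m)$ from the path spectrum and the corner entries from the closed-form tridiagonal inverse, whereas the paper uses a sum-of-squares identity and Cramer's rule; the paper also records the explicit threshold $t>2\Delta(G)/\gamma$.
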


When every edge is subdivided the matrix $K_\infty$ is diagonal and the eventual value can be read off the degree sequence.

\begin{corollary}
\label{subdivision_eigenvalue:cor:alledges}
If $S=E(G)$, then $m_{G_t}(-\infty,-2)=|Q|$ for all sufficiently large $t$, where $Q=\{v\in V(G):\deg_G(v)\ge3\}$.
\end{corollary}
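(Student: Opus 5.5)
The statement to prove is Corollary~\ref{subdivision_eigenvalue:cor:alledges}, and the plan is to specialize Theorem~\ref{subdivision_eigenvalue:thm:main} to $S=E(G)$.

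\emph{Specializing the matrix.} When $S=E(G)$ we have $R=\varnothing$, so $A_R=0$. Also $\deg_S(v)=\deg_G(v)$ for every vertex, so $D_S$ is the diagonal degree matrix $D_G$. Hence the matrix \eqref{subdivision_eigenvalue:eq:kinf} becomes
\[
K_\infty=2I_n-D_G=\operatorname{diag}\bigl(2-\deg_G(v)\bigr)_{v\in V(G)}.
\]

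\emph{Counting negative eigenvalues.} $K_\infty$ is diagonal, so its eigenvalues are its diagonal entries $2-\deg_G(v)$. Such an entry is negative exactly when $\deg_G(v)\ge3$, because degrees are integers. Therefore $n_-(K_\infty)=|Q|$.

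\emph{Conclusion.} Theorem~\ref{subdivision_eigenvalue:thm:main} gives $m_{G_t}(-\infty,-2)=n_-(K_\infty)$ for all sufficiently large $t$. Substituting the count above yields $m_{G_t}(-\infty,-2)=|Q|$ for all large $t$.

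No step here is a real obstacle; all the analytic content lies in the theorem. The one point to check is that vertices of degree exactly $2$ give a zero diagonal entry, not a negative one. These zero entries are excluded from $n_-$, which matches the definition of $Q$ with the threshold $\deg_G(v)\ge3$.

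As a consistency check, the result agrees with the argument of \citet{kumar2025subdivision} quoted in the introduction. Their interlacing cap gives $m_{G_t}(-\infty,-2)\le|Q|$ for every $t$. Their limits $-d_k/\sqrt{d_k-1}<-2$ for $d_k\ge3$ give at least $|Q|$ eigenvalues below $-2$ once $t$ is large. This second route could be written out as an independent proof, but the specialization above is shorter.
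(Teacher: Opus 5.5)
Your proposal is correct and is exactly the paper's proof: with $S=E(G)$ you have $A_R=0$ and $D_S=D$, so $K_\infty=2I_n-D$ is diagonal with entries $2-\deg_G(v)$. These entries are negative precisely when $\deg_G(v)\ge3$, and Theorem~\ref{subdivision_eigenvalue:thm:main} then gives the eventual value $n_-(K_\infty)=|Q|$; your side remark that this case also follows from Kumar, Mohar, Pragada and Zhan's interlacing cap and eigenvalue limits matches what the paper says in its introduction.
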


This is the value predicted by \citet{kumar2025subdivision}, and it makes explicit the negative-side case of the tightness they assert for their interlacing bound.

\begin{remark}
\label{subdivision_eigenvalue:rem:effective}
The threshold in Theorem~\ref{subdivision_eigenvalue:thm:main} is effective.
Write $r=n_-(K_\infty)$, and note that for $r=0$ the upper bound already forces equality for every $t\ge2$.
For $r\ge1$ let $\gamma:=-\lambda_{n-r+1}(K_\infty)>0$ be the smallest absolute value of a negative eigenvalue of $K_\infty$.
The proof gives equality for every $t\ge2$ with $t>2\Delta(G)/\gamma$, where $\Delta(G)$ is the maximum degree of $G$.
Moreover $K_\infty$ has integer entries and spectral radius at most $2+\Delta(G)$, which forces $\gamma\ge(2+\Delta(G))^{-(n-1)}$.
Hence equality holds for every $t\ge2$ with $t>2\Delta(G)(2+\Delta(G))^{n-1}$.
\end{remark}

\begin{example}
\label{subdivision_eigenvalue:ex:pendant}
Let $G$ be the cycle $v_1v_2v_3v_4$ together with a pendant edge $v_1v_5$, and let $S=\{v_2v_3\}$, so that $G_t$ is the cycle $C_{t+3}$ with a pendant edge attached.
This is the family in Figure~1 of \citet{kumar2025subdivision}, whose first three members have $1$, $0$, and $1$ eigenvalues below $-2$.
Here $D_S=\operatorname{diag}(0,1,1,0,0)$ and
\[
K_\infty=
\begin{pmatrix}
2&1&0&1&1\\
1&1&0&0&0\\
0&0&1&1&0\\
1&0&1&2&0\\
1&0&0&0&2
\end{pmatrix},
\]
whose eigenvalues are approximately $-0.1388$, $0.5401$, $1.5102$, $2.3835$ and $3.7050$, so $n_-(K_\infty)=1$.
Theorem~\ref{subdivision_eigenvalue:thm:main} therefore predicts the value $1$, and indeed the sequence $m_{G_t}(-\infty,-2)$ for $t=1,2,3,\dots$ is
\[
1,\;0,\;1,\;0,\;1,\;1,\;1,\;1,\;\dots,
\]
oscillating with the parity of the cycle length until $t=5$ and constant thereafter.
The effective threshold of Remark~\ref{subdivision_eigenvalue:rem:effective} gives only $t>2\cdot3/\gamma$ with $\gamma\approx0.1388$, that is $t\ge44$, so the bound is far from sharp on this example.
\end{example}

The proof has two steps.
The first is exact and holds for every $t\ge2$: eliminating the subdivision vertices replaces $A(G_t)+2I$ with an $n\times n$ matrix
\begin{equation}
\label{subdivision_eigenvalue:eq:kt}
K_t=K_\infty+\frac1t\bigl(D_S-(-1)^tA_S\bigr)
\end{equation}
of the same negative inertia.
The second is a comparison: the perturbation in \eqref{subdivision_eigenvalue:eq:kt} is positive semidefinite for every parity of $t$, being the Laplacian of $(V(G),S)$ for even $t$ and its signless Laplacian for odd $t$, and it tends to $0$.
So $K_t$ approaches $K_\infty$ from above in the Loewner order, which pins the negative inertia from both sides.

\subsubsection{Eliminating the subdivision vertices}

Write $P_\ell$ for the path on $\ell$ vertices and put $T_\ell:=2I_\ell+A(P_\ell)$.

\begin{lemma}
\label{subdivision_eigenvalue:lem:path}
For every $\ell\ge1$ the matrix $T_\ell$ is positive definite, $\det T_\ell=\ell+1$, and
\[
(T_\ell^{-1})_{1,1}=(T_\ell^{-1})_{\ell,\ell}=\frac{\ell}{\ell+1},
\qquad
(T_\ell^{-1})_{1,\ell}=(T_\ell^{-1})_{\ell,1}=\frac{(-1)^{\ell+1}}{\ell+1}.
\]
\end{lemma}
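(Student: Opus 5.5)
Positive definiteness comes first. The eigenvalues of $A(P_\ell)$ are $2\cos\bigl(j\pi/(\ell+1)\bigr)$ for $j=1,\dots,\ell$. These lie strictly inside $(-2,2)$, so every eigenvalue $2+2\cos\bigl(j\pi/(\ell+1)\bigr)$ of $T_\ell$ is positive. An alternative, which avoids the spectrum and also feeds the determinant computation, is to observe that the leading principal minors of $T_\ell$ are positive.

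For the determinant I will write $d_\ell:=\det T_\ell$ with the conventions $d_0=1$ and $d_1=2$. Expanding along the last row of the tridiagonal matrix gives the recurrence
\[
d_\ell=2d_{\ell-1}-d_{\ell-2}.
\]
Hence $(d_\ell)$ is an arithmetic progression, and $d_\ell=\ell+1$. Since every leading principal minor is $T_k$ itself, with determinant $k+1>0$, Sylvester's criterion gives positive definiteness at once. I will present it this way so that both claims rest on the same computation.

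For the inverse entries I will use the cofactor formula $(T^{-1})_{ij}=(-1)^{i+j}\det T^{(j,i)}/\det T$. The $(1,1)$ cofactor is the determinant of $T_{\ell-1}$, namely $\ell$, which gives $\ell/(\ell+1)$. The $(\ell,\ell)$ entry agrees with it by the symmetry $i\mapsto \ell+1-i$, which preserves $T_\ell$.

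The only step needing care is the $(1,\ell)$ entry. Deleting row $\ell$ and column $1$ (or row $1$ and column $\ell$) leaves an $(\ell-1)\times(\ell-1)$ matrix that is triangular. Its diagonal consists of the off-diagonal entries of $T_\ell$, all equal to $1$, so its determinant is $1$. The cofactor sign is $(-1)^{1+\ell}$, which yields
\[
(T_\ell^{-1})_{1,\ell}=\frac{(-1)^{\ell+1}}{\ell+1}.
\]
Symmetry of $T_\ell$ gives the $(\ell,1)$ entry. For $\ell=1$ the formulas read $1/2$ for every entry, consistent with $T_1=(2)$. I do not expect any real obstacle, only bookkeeping of the cofactor sign.
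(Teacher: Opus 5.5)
Your proposal is correct and follows the paper's proof step by step: the recurrence $\det T_\ell=2\det T_{\ell-1}-\det T_{\ell-2}$, the cofactor (Cramer) formula for the $(1,1)$ entry, and the triangular minor with unit diagonal for the corner entry. The only difference is that the paper proves positive definiteness with the sum-of-squares identity $y^{\mathsf T}T_\ell y=y_1^2+y_\ell^2+\sum_{i=1}^{\ell-1}(y_i+y_{i+1})^2$, whereas you apply Sylvester's criterion to the leading minors $\det T_k=k+1$; either works.
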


\begin{proof}
For $y=(y_1,\dots,y_\ell)^{\mathsf T}\in\R^\ell$,
\[
y^{\mathsf T}T_\ell y
=
2\sum_{i=1}^\ell y_i^2+2\sum_{i=1}^{\ell-1}y_iy_{i+1}
=
y_1^2+y_\ell^2+\sum_{i=1}^{\ell-1}(y_i+y_{i+1})^2 .
\]
If this vanishes then $y_1=0$ and $y_{i+1}=-y_i$ for all $i$, so $y=0$; hence $T_\ell$ is positive definite.
Expanding $\det T_\ell$ along the first row gives $\det T_\ell=2\det T_{\ell-1}-\det T_{\ell-2}$ with $\det T_0=1$ and $\det T_1=2$, whence $\det T_\ell=\ell+1$.
By Cramer's rule, $(T_\ell^{-1})_{1,1}$ is the determinant of the matrix obtained by deleting the first row and column, namely $\det T_{\ell-1}=\ell$, divided by $\det T_\ell=\ell+1$; the entry $(T_\ell^{-1})_{\ell,\ell}$ is the same by symmetry.
For the corner entry, $(T_\ell^{-1})_{1,\ell}=(-1)^{\ell+1}\det(N')/\det T_\ell$, where $N'$ is $T_\ell$ with its last row and first column deleted.
The rows of $N'$ are indexed by $1\le i\le\ell-1$ and its columns by $2\le j\le\ell$, and the entry in position $(i,j)$ vanishes unless $|i-j|\le1$; reindexing the columns by $k=j-1$ makes $N'$ lower triangular with every diagonal entry equal to $(T_\ell)_{i,i+1}=1$.
Hence $\det N'=1$ and $(T_\ell^{-1})_{1,\ell}=(-1)^{\ell+1}/(\ell+1)$.
\end{proof}

\begin{figure}[t]
\centering
\begin{tikzpicture}[
  orig/.style={circle,fill=black,inner sep=1.7pt},
  sub/.style={circle,draw,fill=white,inner sep=1.9pt}
]
\node[orig,label=below:$u$] (u) at (0,0) {};
\node[sub,label=below:$w_1$] (w1) at (1.1,0) {};
\node[sub,label=below:$w_2$] (w2) at (2.2,0) {};
\node (mid) at (3.3,0) {$\cdots$};
\node[sub,label=below:$w_{t-1}$] (wm) at (4.4,0) {};
\node[orig,label=below:$v$] (v) at (5.5,0) {};
\draw (u) -- (w1) -- (w2) -- (mid) -- (wm) -- (v);
\node at (2.75,0.95) {$T_{t-1}$};

\node[orig,label=below:$u$] (u2) at (8.2,0) {};
\node[orig,label=below:$v$] (v2) at (10.4,0) {};
\draw[dashed] (u2) to[bend left=45] (v2);
\node at (9.3,1.05) {$-(-1)^t/t$};
\node at (8.2,-0.95) {$-(t-1)/t$};
\node at (10.4,-0.95) {$-(t-1)/t$};
\end{tikzpicture}
\caption{The $t$-stretch of an edge $uv\in S$ inside $G_t$, on the left, and its contribution to the Schur complement, on the right.
The $t-1$ internal vertices, drawn hollow, carry the positive definite block $T_{t-1}$ of $A(G_t)+2I$; eliminating them subtracts $(t-1)/t$ from each of the diagonal entries at $u$ and at $v$ and subtracts $(-1)^t/t$ from the entries at $uv$ and at $vu$, which is the content of Proposition~\ref{subdivision_eigenvalue:prop:schur}.}
\label{subdivision_eigenvalue:fig:stretch}
\end{figure}

\begin{proposition}
\label{subdivision_eigenvalue:prop:schur}
For every $t\ge2$,
\[
m_{G_t}(-\infty,-2)=n_-(K_t),
\qquad
K_t:=2I_n+A_R-\frac{t-1}{t}D_S-\frac{(-1)^t}{t}A_S,
\]
and $K_t$ satisfies \eqref{subdivision_eigenvalue:eq:kt}.
\end{proposition}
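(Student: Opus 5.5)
We will work with the symmetric matrix $M_t:=A(G_t)+2I$, whose negative index of inertia is exactly $m_{G_t}(-\infty,-2)$, since $\lambda$ is an eigenvalue of $A(G_t)$ if and only if $\lambda+2$ is one of $M_t$. We order the vertices of $G_t$ so that the $n$ original vertices come first, followed by the internal vertices of the stretches, one stretch at a time. In this order $M_t$ has the block form
\[
M_t=\begin{pmatrix} 2I_n+A_R & C\\ C^{\mathsf T} & T\end{pmatrix}.
\]
The lower right block $T$ is block diagonal, with one copy of $T_{t-1}$ for each edge of $S$. This uses two facts: internal vertices of distinct stretches are nonadjacent, and internal vertices of one stretch induce the path $P_{t-1}$. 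The upper left block is $2I_n+A_R$ because the edges of $S$ no longer join original vertices in $G_t$. For $t\ge2$ every stretch has $t-1\ge1$ internal vertices, so $T$ is nonempty.

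By Lemma~\ref{subdivision_eigenvalue:lem:path}, $T$ is positive definite. Block Gaussian elimination, which is a congruence, then gives
\[
M_t\cong \bigl(2I_n+A_R-CT^{-1}C^{\mathsf T}\bigr)\oplus T.
\]
Sylvester's law of inertia therefore yields $n_-(M_t)=n_-(2I_n+A_R-CT^{-1}C^{\mathsf T})$. The remaining work is to identify this Schur complement with $K_t$.

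Since $T^{-1}$ is block diagonal, $CT^{-1}C^{\mathsf T}$ is a sum over $uv\in S$ of the contributions of the individual stretches. For a single stretch $u,w_1,\dots,w_{t-1},v$, the only nonzero entries of the corresponding columns of $C$ are a $1$ joining $u$ to $w_1$ and a $1$ joining $v$ to $w_{t-1}$. So the contribution is the $2\times2$ matrix on $\{u,v\}$ built from the corner entries of $T_{t-1}^{-1}$. Applying Lemma~\ref{subdivision_eigenvalue:lem:path} with $\ell=t-1$ gives the diagonal entries $(t-1)/t$ at $u$ and at $v$, and the off-diagonal entry $(-1)^{t}/t$ at $uv$ and $vu$.

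Summing over $S$ gives
\[
CT^{-1}C^{\mathsf T}=\tfrac{t-1}{t}D_S+\tfrac{(-1)^t}{t}A_S,
\]
which is the stated formula for $K_t$. Finally, rewriting $\frac{t-1}{t}D_S=D_S-\frac1tD_S$ turns this into $K_\infty+\frac1t\bigl(D_S-(-1)^tA_S\bigr)$, which is \eqref{subdivision_eigenvalue:eq:kt}.

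The one delicate point is the bookkeeping of the corner entry. Its sign $(-1)^{\ell+1}$ with $\ell=t-1$ becomes $(-1)^t$, and it must be checked that the edge $uv$ contributes to both off-diagonal positions while each endpoint receives exactly one diagonal term per incident $S$-edge. Summing these diagonal terms over the $S$-edges at a vertex produces $D_S$. Everything else is standard.
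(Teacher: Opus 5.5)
Your proposal is correct and follows the paper's proof essentially step for step. The paper cites Haynsworth's inertia additivity formula where you derive it directly from block Gaussian elimination and Sylvester's law of inertia, and it then identifies the Schur complement stretch by stretch from the corner entries of $T_{t-1}^{-1}$, exactly as you do; your description of $C$ also correctly covers $t=2$, where $w_1=w_{t-1}$ and the single column is $e_u+e_v$, and there all three corner entries equal $\tfrac12$.
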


\begin{proof}
Put $B_t:=A(G_t)+2I$.
An eigenvalue $\lambda$ of $A(G_t)$ satisfies $\lambda<-2$ exactly when $\lambda+2<0$, so
\[
m_{G_t}(-\infty,-2)=n_-(B_t).
\]
Partition $V(G_t)$ into the original vertices $V=V(G)$ and the set $W_t$ of internal vertices of the stretches.
Two internal vertices are adjacent in $G_t$ only if they lie on the same stretch and are consecutive on it, so $B_t[W_t,W_t]$ is block diagonal with one block $T_{t-1}$ for each edge of $S$, as in Figure~\ref{subdivision_eigenvalue:fig:stretch}.
By Lemma~\ref{subdivision_eigenvalue:lem:path} this block is positive definite, so Haynsworth's inertia additivity formula \citep{haynsworth1968determination} applies and gives
\[
n_-(B_t)=n_-\bigl(B_t[V,V]-B_t[V,W_t]\,B_t[W_t,W_t]^{-1}\,B_t[W_t,V]\bigr).
\]
It remains to identify the Schur complement on the right.

No edge of $S$ survives in $G_t$ for $t\ge2$, so $B_t[V,V]=2I_n+A_R$.
Label the stretch of $uv\in S$ as $u,w_1,\dots,w_\ell,v$ with $\ell=t-1$, so that the column of $B_t[V,W_t]$ at $w_1$ is $e_u$, the column at $w_\ell$ is $e_v$, and all other columns of that stretch vanish; here $e_u,e_v\in\R^{V}$ are standard basis vectors, and for $\ell=1$ the single column is $e_u+e_v$.
Writing $C=T_\ell^{-1}$, the stretch of $uv$ therefore contributes
\[
C_{1,1}e_ue_u^{\mathsf T}+C_{\ell,\ell}e_ve_v^{\mathsf T}+C_{1,\ell}\bigl(e_ue_v^{\mathsf T}+e_ve_u^{\mathsf T}\bigr)
\]
to $B_t[V,W_t]B_t[W_t,W_t]^{-1}B_t[W_t,V]$, and this formula is correct for $\ell=1$ as well, since then $C_{1,1}=C_{\ell,\ell}=C_{1,\ell}=\tfrac12$.
By Lemma~\ref{subdivision_eigenvalue:lem:path} with $\ell=t-1$ we have $C_{1,1}=C_{\ell,\ell}=(t-1)/t$ and $C_{1,\ell}=(-1)^t/t$.
Summing over $S$ and using $\sum_{uv\in S}(e_ue_u^{\mathsf T}+e_ve_v^{\mathsf T})=D_S$ and $\sum_{uv\in S}(e_ue_v^{\mathsf T}+e_ve_u^{\mathsf T})=A_S$, the Schur complement equals $K_t$ as displayed.
Finally
\[
K_t
=
2I_n+A_R-D_S+\frac1tD_S-\frac{(-1)^t}{t}A_S
=
K_\infty+\frac1t\bigl(D_S-(-1)^tA_S\bigr),
\]
which is \eqref{subdivision_eigenvalue:eq:kt}.
\end{proof}

\begin{remark}
\label{subdivision_eigenvalue:rem:subdivisiongraph}
For $t=2$ and $S=E(G)$ the graph $G_2$ is the subdivision graph of $G$ and Proposition~\ref{subdivision_eigenvalue:prop:schur} reads $K_2=\tfrac12\bigl(4I_n-(D+A(G))\bigr)$, where $D$ is the degree matrix of $G$ and $D+A(G)$ is the signless Laplacian of $G$.
So $m_{G_2}(-\infty,-2)$ is the number of signless Laplacian eigenvalues of $G$ exceeding $4$.
Indeed, if $N$ is the vertex-edge incidence matrix of $G$, then in the block form given by $V(G)$ and the subdivision vertices
\[
A(G_2)=
\begin{pmatrix}
0&N\\
N^{\mathsf T}&0
\end{pmatrix},
\]
so the nonzero eigenvalues of $A(G_2)$ are the numbers $\pm\sqrt{q}$ with $q$ a nonzero eigenvalue of $NN^{\mathsf T}=D+A(G)$, and $-\sqrt{q}<-2$ exactly when $q>4$.
\end{remark}

\subsubsection{Comparison in the Loewner order}

\begin{lemma}
\label{subdivision_eigenvalue:lem:psd}
For every $t\ge2$ we have $K_t\succeq K_\infty$ in the Loewner order and $\|K_t-K_\infty\|\le2\Delta(G)/t$, where $\|\cdot\|$ is the spectral norm.
\end{lemma}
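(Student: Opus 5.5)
By \eqref{subdivision_eigenvalue:eq:kt}, the whole lemma reduces to the single matrix
\[
K_t-K_\infty=\frac1t\bigl(D_S-(-1)^tA_S\bigr).
\]
We need to show that this difference is positive semidefinite and that its spectral norm is at most $2\Delta(G)/t$. Since $1/t>0$, it suffices to prove two things about $M_\pm:=D_S\pm A_S$: each is positive semidefinite, and each has spectral norm at most $2\Delta(G)$. For even $t$ the relevant matrix is $M_-$, the Laplacian of $(V(G),S)$; for odd $t$ it is $M_+$, the signless Laplacian.

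For positive semidefiniteness, write $M_\pm$ as a sum over edges. For $uv\in S$, the edge contributes $e_ue_u^{\mathsf T}+e_ve_v^{\mathsf T}$ to $D_S$ and $e_ue_v^{\mathsf T}+e_ve_u^{\mathsf T}$ to $A_S$, exactly the decomposition already used in the proof of Proposition~\ref{subdivision_eigenvalue:prop:schur}. Hence
\[
M_\pm=\sum_{uv\in S}(e_u\pm e_v)(e_u\pm e_v)^{\mathsf T},
\]
and for any $x\in\R^V$ this gives $x^{\mathsf T}M_\pm x=\sum_{uv\in S}(x_u\pm x_v)^2\ge0$. This proves $K_t\succeq K_\infty$ for both parities of $t$.

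For the norm bound, $M_\pm$ is symmetric and positive semidefinite, so its spectral norm equals its largest eigenvalue. That eigenvalue is at most the maximum absolute row sum. The row of $v$ has diagonal entry $\deg_S(v)$ and $\deg_S(v)$ off-diagonal entries of absolute value $1$, so the row sum is $2\deg_S(v)\le2\Delta(G)$. An alternative route is the triangle inequality: $\|D_S\|\le\Delta(G)$, and $\|A_S\|\le\Delta(G)$ since the spectral radius of a graph is at most its maximum degree. Dividing by $t$ gives $\|K_t-K_\infty\|\le2\Delta(G)/t$.

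I do not expect a real obstacle here. The only point needing care is that the sign $-(-1)^t$ yields $M_-$ for even $t$ and $M_+$ for odd $t$, and the rank-one decomposition above covers both cases at once.
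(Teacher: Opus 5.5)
Your proof is correct and follows the paper's argument essentially verbatim. Positive semidefiniteness comes from the sum-of-squares identity $x^{\mathsf T}\bigl(D_S-(-1)^tA_S\bigr)x=\sum_{uv\in S}\bigl(x_u-(-1)^tx_v\bigr)^2$, which gives the Laplacian for even $t$ and the signless Laplacian for odd $t$. The norm bound comes from the maximum absolute row sum $2\deg_S(v)\le2\Delta(G)$.
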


\begin{proof}
Fix an arbitrary orientation of each edge of $S$.
For $x\in\R^{V(G)}$,
\[
x^{\mathsf T}\bigl(D_S-(-1)^tA_S\bigr)x
=
\sum_{uv\in S}\bigl(x_u-(-1)^tx_v\bigr)^2
\ge0,
\]
since expanding the squares returns $x^{\mathsf T}D_Sx-(-1)^tx^{\mathsf T}A_Sx$ and the summands do not depend on the chosen orientation.
Thus $D_S-(-1)^tA_S$ is positive semidefinite, being the Laplacian of $(V(G),S)$ for even $t$ and its signless Laplacian for odd $t$, and \eqref{subdivision_eigenvalue:eq:kt} gives $K_t-K_\infty=\frac1t\bigl(D_S-(-1)^tA_S\bigr)\succeq0$.
For the norm bound, the spectral norm of a real symmetric matrix is at most its largest absolute row sum, and the row of $D_S-(-1)^tA_S$ indexed by $v$ has diagonal entry $\deg_S(v)$ and a further $\deg_S(v)$ entries of absolute value $1$, one for each $S$-neighbor of $v$, so that row sum is $2\deg_S(v)\le2\Delta(G)$, and the factor $1/t$ yields the stated bound.
\end{proof}

\subsubsection{Proof of the main theorem}

\begin{proof}[Proof of Theorem~\ref{subdivision_eigenvalue:thm:main}]
Put $r=n_-(K_\infty)$ and fix $t\ge2$.
By Proposition~\ref{subdivision_eigenvalue:prop:schur} it suffices to prove that $n_-(K_t)\le r$ always, and that $n_-(K_t)\ge r$ once $t$ is large.

For the upper bound we may assume $r<n$, since otherwise there is nothing to prove.
By definition of $r$ we have $\lambda_{n-r}(K_\infty)\ge0$, and $K_t\succeq K_\infty$ gives $\lambda_j(K_t)\ge\lambda_j(K_\infty)$ for every $j$ by Weyl monotonicity \citep{horn2012matrix}.
Hence $\lambda_{n-r}(K_t)\ge0$, so at most $r$ eigenvalues of $K_t$ are negative, that is, $n_-(K_t)\le r$.
This holds for every $t\ge2$ and is the asserted inequality.

For the lower bound we may assume $r\ge1$, since for $r=0$ the upper bound already gives $n_-(K_t)=0$ for every $t\ge2$.
Let $\gamma:=-\lambda_{n-r+1}(K_\infty)>0$ be the gap of Remark~\ref{subdivision_eigenvalue:rem:effective}, so that $\lambda_j(K_\infty)\le-\gamma$ for every $j\ge n-r+1$.
Weyl's perturbation inequality together with Lemma~\ref{subdivision_eigenvalue:lem:psd} gives
\[
\lambda_j(K_t)\le\lambda_j(K_\infty)+\|K_t-K_\infty\|\le-\gamma+\frac{2\Delta(G)}{t}
\]
for those $j$, which is negative as soon as $t>2\Delta(G)/\gamma$.
For such $t$ at least $r$ eigenvalues of $K_t$ are negative, so $n_-(K_t)\ge r$.

Combining the two bounds, $n_-(K_t)=r$ for every $t>\max\{2,2\Delta(G)/\gamma\}$, and Proposition~\ref{subdivision_eigenvalue:prop:schur} turns this into $m_{G_t}(-\infty,-2)=r=n_-(K_\infty)$, as desired.
\end{proof}

\begin{proof}[Proof of Remark~\ref{subdivision_eigenvalue:rem:effective}]
The first assertion is the threshold $t>2\Delta(G)/\gamma$ displayed in the proof above.
For the second, the spectral norm of $K_\infty$ is at most its largest absolute row sum, which is $\max_v\bigl(|2-\deg_S(v)|+\deg_R(v)\bigr)\le2+\Delta(G)$, so every eigenvalue of $K_\infty$ has absolute value at most $2+\Delta(G)$.
On the other hand $K_\infty$ has integer entries, so the product of its nonzero eigenvalues is, up to sign, the last nonvanishing coefficient of its characteristic polynomial and hence a nonzero integer.
The absolute values of those at most $n$ eigenvalues therefore multiply to at least $1$, and each factor is at most $2+\Delta(G)$, so the smallest of them, and in particular $\gamma$, is at least $(2+\Delta(G))^{-(n-1)}$.
\end{proof}

\begin{proof}[Proof of Corollary~\ref{subdivision_eigenvalue:cor:alledges}]
When $S=E(G)$ we have $A_R=0$ and $D_S$ equal to the degree matrix $D$ of $G$, so \eqref{subdivision_eigenvalue:eq:kinf} makes $K_\infty=2I_n-D$ diagonal with entries $2-\deg_G(v)$.
Such an entry is negative exactly when $\deg_G(v)\ge3$, so $n_-(K_\infty)=|Q|$.
\end{proof}

\putbib
\end{bibunit}
\endgroup
\subsection[(Lund--Saraf--Wolf) Small unions of lines closing a route to the Nikodym bound]{Small unions of lines closing a route to the Nikodym bound}
\label{sol:finkakeya}
\begingroup
\begin{bibunit}
\def\F{\mathbb{F}}
\def\PG{\mathrm{PG}}
\def\sq{(\F_q^{*})^{2}}

\citet{lund2018finite} conjectured that a family of $\Omega(q^3)$ lines in $\F_q^3$, no plane of which contains a superlinear number of them, must cover all but a vanishing proportion of $\F_q^3$.
We show that the union can have density $1/2+o(1)$.
For every odd prime power $q$ there is a family $L$ of $q^2(q+1)/2$ lines, at most $q+1$ of which lie in any single plane, whose union has exactly $q^2(q+1)/2$ points.
The family consists of one half of the tangent lines to the paraboloid $z=x^2-\nu y^2$, where $\nu$ is a fixed nonsquare, the half being selected by the quadratic character of the direction.
The same family also refutes the sharper quantitative conjecture that they state alongside the first one.

\subsubsection{Introduction}

For a set $L$ of affine lines in $\F_q^3$, write
\[
P(L)=\bigcup_{\ell\in L}\ell
\]
for the set of points lying on some line of $L$.
The problem is to bound $|P(L)|$ from below when $L$ is large and no plane of $\F_q^3$ carries too much of it.
In Section~1.2.1 of their paper on Kakeya and Nikodym sets in three dimensions, \citet{lund2018finite} propose the following, their Conjecture~1.4.

\medskip
\noindent
\emph{Let $C>0$ be a constant independent of $q$, and let $\alpha(q)\in\omega(q)$.
If $L$ is a set of at least $Cq^3$ lines and no plane contains $\alpha(q)$ lines of $L$, then $|P(L)|\ge(1-o(1))q^3$.
The $o(1)$ is a function of $q$ that depends on $C$ and $\alpha$.}
\medskip

\noindent
In the same place they propose the following sharper form, their Conjecture~1.5.

\medskip
\noindent
\emph{Let $\varepsilon>0$ be any constant and let $q$ be a sufficiently large prime power.
Let $L$ be a set of at least $q^{5/2+\varepsilon}$ lines in $\F_q^3$ such that no plane contains more than $(1/2)q^{3/2}$ lines of $L$.
Then, $|P(L)|\ge q^3-O(q^{5/2})$.}
\medskip

\noindent
These two conjectures are the geometric heart of \citet{lund2018finite}.
For $q$ sufficiently large, their Theorem~1.1 gives $|K|\ge0.2107q^3$ for every Kakeya set $K\subseteq\F_q^3$ and their Theorem~1.3 gives $|\mathcal N|\ge0.38q^3$ for every Nikodym set $\mathcal N\subseteq\F_q^3$, while their Theorem~3.8 shows that Conjecture~1.4 would upgrade the second of these to the conjectured optimal bound $(1-o(1))q^3$.
Conjecture~1.5 is carefully calibrated against two constructions of \citet{lund2018finite}: from a nondegenerate Hermitian variety, for square $q=p^2$ and each $0<\theta<1$, they produce a set $L$ of $(\theta+o(1))q^{7/2}$ lines with no plane containing more than $(\theta+o(1))q^{3/2}$ of them and with $|P(L)|\le q^3-(1-\theta+o(1))q^{5/2}$, so the error term $O(q^{5/2})$ in the conclusion of Conjecture~1.5 cannot be improved; a second construction of theirs, the union of $O(q^{1/2})$ Hermitian varieties, shows that the hypothesis $|L|\ge q^{5/2+\varepsilon}$ cannot be substantially relaxed.
They also prove that any $0.62q^3$ lines in $\F_q^3$ already satisfy $|P(L)|\ge(0.38-o(1))q^3$ with no hypothesis on planes at all, and that without such a hypothesis this is close to sharp: for large $q$, they take all lines lying in a union of $0.62q$ planes through a common point and obtain $(1-o(1))0.62q^3$ lines whose union has fewer than $0.43q^3$ points.
It is this flat example that the plane hypothesis of Conjecture~1.4 is designed to exclude, and our family shows that excluding it does not help: the half-tangent family has at most $q+1$ lines in any plane, against order $q^2$ for that example.

The same question for the much smaller count $|L|=q^2$ has a longer history, coming from work on Kakeya sets.
\citet{wolff1999recent} showed that a set $L$ of $q^2$ lines in $\F_q^3$, at most $O(q)$ of which lie in any plane, has $|P(L)|=\Omega(q^{5/2})$, and \citet{mockenhaupt2004restriction} showed that this is sharp when $q$ is a square.
Over prime fields the exponent improves: \citet{ellenberg2016incidence} prove that such an $L$ has $|P(L)|\ge cq^3$ for an absolute constant $c>0$, once no plane contains more than $q$ of its lines.
Conjecture~1.4 assumes many more lines, $Cq^3$ rather than $q^2$, allows $\omega(q)$ rather than $O(q)$ of them in a plane, and asks for the constant $1-o(1)$ in place of $c$.
We are aware of no counterexample in the literature respecting its plane hypothesis, and \citet{tao2025new}, who constructs a Nikodym set in $\F_q^d$ of size $q^d-\bigl((d-2)/\log2+1+o(1)\bigr)q^{d-1}\log q$ for each fixed $d\ge3$ and each odd prime power $q$, still records Conjecture~1.4 as the route to the three-dimensional case of the Nikodym conjecture.

The construction below is not new as a piece of finite geometry.
The projective closure of the paraboloid is the quadric $X^2-\nu Y^2-ZW=0$ of $\PG(3,q)$, an elliptic quadric with $q^2+1$ points, and the splitting of the $q+1$ tangent lines at each of its points into two halves of size $(q+1)/2$, according to whether the quadratic form takes square or nonsquare values on the tangent line away from the quadric, is precisely the splitting used by \citet{bruen1999construction} to build the first infinite family of Cameron--Liebler line classes of $\PG(3,q)$, $q$ odd, with parameter $(q^2+1)/2$.
\citet{gavrilyuk2018derivation} write the splitting out explicitly as a partition of the tangent lines into two classes, either of which yields a Cameron--Liebler line class of that parameter when adjoined to the secant lines or to the external lines, and they derive a further such family from it.
\citet{cossidente2017new} construct yet more families with the same parameter for odd $q\ge7$.
What is new here is only the observation that this classical half-tangent family, read in affine coordinates, is a counterexample to the line-union conjecture.

Our counterexample is at the opposite extreme from the Hermitian family of \citet{lund2018finite}: its union misses a positive proportion of $\F_q^3$ rather than a $q^{-1/2}$ proportion, and it exists for every odd prime power, including the prime fields for which no Hermitian variety is available.

\begin{theorem}\label{finkakeya:thm:main}
Let $q$ be an odd prime power.
There is a set $L$ of affine lines in $\F_q^3$ such that
\[
|L|=\frac{q^2(q+1)}{2},
\qquad
\max_{\Pi}\bigl|\{\ell\in L:\ell\subseteq\Pi\}\bigr|\le q+1,
\qquad
|P(L)|=\frac{q^2(q+1)}{2},
\]
where the maximum is taken over all affine planes $\Pi\subseteq\F_q^3$.
\end{theorem}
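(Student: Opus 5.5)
The plan is to verify directly the half-tangent construction described in the introduction. Fix a nonsquare $\nu\in\F_q^*$ and put $\Phi(x,y,z)=x^2-\nu y^2-z$, so the paraboloid is $\mathcal Q=\{\Phi=0\}$, with $|\mathcal Q|=q^2$. The basic identity is a Taylor expansion. For $p=(a,b,c)\in\mathcal Q$ and a direction $d=(u,v,w)$ in the tangent plane, i.e.\ $w=2au-2\nu bv$, one has
\[
\Phi(p+sd)=s^2N(u,v),\qquad N(u,v)=u^2-\nu v^2 .
\]
Since $\nu$ is a nonsquare, $N(u,v)\neq0$ for $(u,v)\neq(0,0)$. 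Hence the tangent lines at $p$ correspond to the $q+1$ points $(u:v)$ of $\PG(1,q)$, and each meets $\mathcal Q$ only at $p$. Let $L$ consist of the tangent lines at points of $\mathcal Q$ whose direction satisfies $N(u,v)\in\sq$. This condition is well defined on $(u:v)$.

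First I count $L$. The form $N$ is the norm form of $\F_{q^2}/\F_q$, so it takes each nonzero value exactly $q+1$ times. The nonzero squares therefore account for $(q-1)(q+1)/2$ vectors, i.e.\ $(q+1)/2$ projective directions at each point. Each line is tangent at exactly one point of $\mathcal Q$, so $|L|=q^2(q+1)/2$.

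Second I compute the union. The identity shows $P(L)\subseteq\mathcal Q\cup\{r:\Phi(r)\in\sq\}$, a set of size $q^2+q^2(q-1)/2=q^2(q+1)/2$. The reverse inclusion needs every $r$ with $\Phi(r)=\mu^2\neq0$ to lie on a line of $L$. I would use the maps $(x,y,z)\mapsto(x+\alpha,y+\beta,z+2\alpha x-2\nu\beta y+\alpha^2-\nu\beta^2)$. They preserve $\Phi$, preserve tangency, and preserve $N$ of the horizontal direction, so they preserve $L$. They act transitively on each level set of $\Phi$, which reduces the check to $r=(0,0,-\mu^2)$. There $r$ lies in the tangent plane at $p=(a,b,c)$ exactly when $c=\mu^2$. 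The direction from $p$ to $r$ has horizontal part $(-a,-b)$ with $N=\mu^2$, a square. So $r$ lies on $q+1\ge1$ lines of $L$, which gives equality.

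Third, and this is where the care goes, I bound the number of lines of $L$ in a plane $\Pi$. A line of $L$ inside $\Pi$ is tangent at a point $p\in\Pi\cap\mathcal Q$ and lies in $\Pi\cap T_p$, where $T_p$ is the tangent plane at $p$. If $\Pi\neq T_p$, this is a single line, so each point of $\Pi\cap\mathcal Q$ contributes at most one line. A plane section of $\mathcal Q$ is an affine part of a conic of the elliptic quadric, so it has at most $q+1$ points. If $\Pi=T_{p_0}$ for some $p_0$, then $\Pi\cap\mathcal Q=\{p_0\}$: indeed $\Phi$ restricted to $T_{p_0}$ is $N$ of the horizontal offset, which is anisotropic. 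Also $\Pi$ is tangent at no other point. So only the $(q+1)/2$ lines at $p_0$ lie in $\Pi$. Vertical planes fit the first case, since $\Pi\cap\mathcal Q$ is then a parabola with $q$ points or is empty. Either way at most $q+1$ lines lie in $\Pi$.

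The main obstacle is making this plane count airtight, in particular the claim that a plane tangent at one point of $\mathcal Q$ is tangent nowhere else and meets $\mathcal Q$ in that point alone. I would settle it by the explicit restriction of $\Phi$ to $T_{p_0}$ rather than by appealing to projective quadric theory.
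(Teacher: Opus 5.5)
Your proposal is correct and is essentially the paper's proof: the same half-tangent family on the paraboloid $z=x^2-\nu y^2$, the same norm-form count of $(q+1)/2$ square directions at each point, and the same split of planes into tangent planes (meeting the paraboloid in one point and containing only the $(q+1)/2$ lines there) and non-tangent planes (at most one line per point of a section of size at most $q+1$). The differences are local. You get the reverse inclusion for $P(L)$ from the transitive group of $\Phi$-preserving affine maps, whereas the paper fixes a square direction $(u,v)$ with $\Phi(x,y,z)=\lambda^2(u^2-\nu v^2)$ and takes the tangent line at the point of the paraboloid above $(x-\lambda u,\,y-\lambda v)$. You also quote the conic bound where the paper completes the square to get $|\Pi\cap\mathcal Q|\in\{1,q,q+1\}$. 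One harmless slip: a vertical plane always meets the paraboloid in exactly $q$ points, never in the empty set.
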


\begin{corollary}\label{finkakeya:cor:refutation}
Conjecture~1.4 of \citet{lund2018finite} is false, and so is Conjecture~1.5 of \citet{lund2018finite} for every fixed $0<\varepsilon<1/2$.
\end{corollary}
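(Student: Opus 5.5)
The corollary follows directly from Theorem~\ref{finkakeya:thm:main}, applied along the infinite sequence of odd prime powers $q$. For each such $q$, fix the family $L=L_q$ given by the theorem. The only work is to check that $L_q$ meets the hypotheses of each conjecture for all large odd $q$, while its union $P(L_q)$ is far smaller than either conjecture demands. The statements are asymptotic in $q$ with constants fixed in advance, so it suffices to exhibit, for each admissible choice of constants, infinitely many $q$ at which the conclusion fails.

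\textbf{Conjecture~1.4.} I would take $C=1/2$ and any $\alpha(q)\in\omega(q)$, for instance $\alpha(q)=q\log q$.
\begin{itemize}
\item The line count is large enough: $|L_q|=q^2(q+1)/2\ge q^3/2=Cq^3$.
\item The plane hypothesis holds: by the theorem no plane contains more than $q+1$ lines of $L_q$, and $q+1<\alpha(q)$ once $q$ is large, because $\alpha(q)/q\to\infty$.
\item The conclusion fails: $|P(L_q)|=q^2(q+1)/2=(1/2+o(1))q^3$.
\end{itemize}
So along the odd prime powers $|P(L_q)|/q^3\to1/2$, which is incompatible with $|P(L)|\ge(1-o(1))q^3$ for any function $o(1)$ tending to $0$. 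The conjecture fixes one such function depending only on $C$ and $\alpha$, and it would have to hold for all large $q$; this sequence violates it.

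\textbf{Conjecture~1.5.} Fix $0<\varepsilon<1/2$.
\begin{itemize}
\item The line count is large enough: since $5/2+\varepsilon<3$, we have $|L_q|\ge q^3/2\ge q^{5/2+\varepsilon}$ for all large $q$.
\item The plane hypothesis holds: $q+1\le\tfrac12q^{3/2}$ as soon as $q\ge 7$, say.
\item The conclusion fails: $q^3-|P(L_q)|=q^3/2-q^2/2$, and this exceeds $Kq^{5/2}$ for every fixed constant $K$ once $q$ is large.
\end{itemize}
Hence no bound $|P(L)|\ge q^3-O(q^{5/2})$ can hold for arbitrarily large odd $q$.

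The only delicate point is the role of the restriction $\varepsilon<1/2$. For $\varepsilon\ge1/2$ the hypothesis $|L|\ge q^{5/2+\varepsilon}\ge q^3$ exceeds $|L_q|\approx q^3/2$, so this family no longer applies. That is exactly why the corollary is stated only for $0<\varepsilon<1/2$. Beyond this, the argument is bookkeeping with the quantifiers; all the substance lies in Theorem~\ref{finkakeya:thm:main}.
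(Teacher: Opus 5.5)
Your proposal is correct and matches the paper's proof essentially step for step. For Conjecture~1.4 it takes $C=1/2$ with an arbitrary $\alpha\in\omega(q)$ and uses $q+1<\alpha(q)$ for large $q$. For Conjecture~1.5 it checks $|L_q|\ge q^3/2\ge q^{5/2+\varepsilon}$ for large $q$, $q+1\le\tfrac12q^{3/2}$ for $q\ge7$, and that $q^3-|P(L_q)|=q^3/2-q^2/2$ is not $O(q^{5/2})$.
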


\begin{remark}\label{finkakeya:rem:odd}
The construction requires $q$ odd, since it selects half of the directions by a quadratic character and in characteristic $2$ every element of $\F_q$ is a square.
This is no restriction for the purpose at hand: both conjectures are asymptotic statements as $q\to\infty$ over prime powers, so a counterexample along the odd prime powers refutes them.
\end{remark}

\begin{remark}\label{finkakeya:rem:nikodym}
Theorem~\ref{finkakeya:thm:main} closes the route to the three-dimensional Nikodym conjecture through Conjecture~1.4 of \citet{lund2018finite}, but says nothing about the Nikodym conjecture itself, which remains open.
It shows only that the constant $1$ in the conclusion of Conjecture~1.4 cannot be replaced by anything larger than $1/2$; whether the conclusion survives with some absolute constant $c>0$ in place of $1$ is not addressed here, though \citet{lund2018finite} already give $c=0.38$ once $C\ge0.62$.
\end{remark}

We now briefly summarize the construction.
Fix a nonsquare $\nu\in\F_q^{*}$ and let $Q(x,y)=x^2-\nu y^2$, an anisotropic binary form, and slice $\F_q^3$ into the $q$ level sets of $Q(x,y)-z$.
The key point is that the tangent line to the paraboloid $z=Q(x,y)$ in a direction $[u:v]$ meets only the level sets $\{Q(x,y)-z=s\}$ with $s=0$ or $s$ in the square class of $Q(u,v)$, so keeping only the $(q+1)/2$ directions with $Q(u,v)$ a square confines the whole family to the level sets indexed by $0$ and by the nonzero squares.
The plane bound comes from the fact that a plane meets the paraboloid in at most $q+1$ points and, unless it is a tangent plane, determines the direction of a tangent line at each of them.

\subsubsection{The half-tangent family}

Fix an odd prime power $q$ and a nonsquare $\nu\in\F_q^{*}$, and set
\[
Q(x,y)=x^2-\nu y^2.
\]
The binary form $Q$ is \emph{anisotropic}: if $Q(x,y)=0$ with $(x,y)\ne(0,0)$, then $y\ne0$ and $\nu=(x/y)^2$ would be a square.
Let
\[
S=\{(a,b,Q(a,b)):a,b\in\F_q\}\subseteq\F_q^3
\]
be the associated affine paraboloid, a set of $q^2$ points.

Call a projective direction $[u:v]\in\PG(1,q)$ \emph{square} if $Q(u,v)$ is a nonzero square in $\F_q$, and let $D\subseteq\PG(1,q)$ be the set of square directions.
This is well defined: $Q(u,v)\ne0$ for $(u,v)\ne(0,0)$ by anisotropy, and replacing $(u,v)$ by $\lambda(u,v)$ multiplies $Q(u,v)$ by the square $\lambda^2$.

\begin{lemma}\label{finkakeya:lem:directions}
We have $|D|=(q+1)/2$.
\end{lemma}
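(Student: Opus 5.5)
The plan is to count the values taken by the anisotropic form $Q$ on $\F_q^2\setminus\{(0,0)\}$, and then to pass to projective directions. Everything rests on one fact: $Q(x,y)=x^2-\nu y^2$ is the norm form of the quadratic extension $\F_{q^2}=\F_q(\sqrt\nu)$. Concretely, $Q(x,y)=N(x+y\sqrt\nu)$ with $N(w)=w^{q+1}$.

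First, I will check that $N\colon\F_{q^2}^*\to\F_q^*$ is a surjective group homomorphism with kernel of size $q+1$. Surjectivity follows because $w\mapsto w^{q+1}$ on the cyclic group of order $q^2-1$ has image of order $(q^2-1)/(q+1)=q-1$. Consequently, each $c\in\F_q^*$ is attained by exactly $q+1$ nonzero vectors $(x,y)$.

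If one prefers to avoid the field extension, there is an elementary alternative via character sums. One computes $\sum_{x,y}\eta(Q(x,y))$ using \eqref{ellipticcurves:eq:quad}-style substitutions, or argues directly as follows. For fixed $c\ne0$, the number of solutions of $x^2-\nu y^2=c$ is $\sum_y\bigl(1+\eta(c+\nu y^2)\bigr)=q+\sum_y\eta(c+\nu y^2)$. This sum equals $q-\eta(\nu)=q+1$ by the standard evaluation of $\sum_y\eta(ay^2+c)=-\eta(a)$ for $ac\ne0$, and that evaluation is itself the computation \eqref{ellipticcurves:eq:quad} after factoring. I would present the norm-map argument as the main route, since it is shortest.

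Next, the nonzero squares form a subset of $\F_q^*$ of size $(q-1)/2$. Since each value is hit $q+1$ times, the number of nonzero vectors $(u,v)$ with $Q(u,v)$ a nonzero square is $(q+1)(q-1)/2$. The property of being square is invariant under scaling by $\F_q^*$, because $Q(\lambda u,\lambda v)=\lambda^2Q(u,v)$. Each projective direction therefore corresponds to exactly $q-1$ such vectors, and dividing gives $|D|=(q+1)/2$.

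The only step needing care is the uniform fiber count $q+1$. Everything else is bookkeeping, and the fiber count is immediate from the norm-map description.
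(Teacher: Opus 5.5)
Your proposal is correct and follows the paper's proof essentially verbatim. You identify $Q$ as the norm form of $\mathbb{F}_{q^2}/\mathbb{F}_q$, use that the norm $\mathbb{F}_{q^2}^{*}\to\mathbb{F}_q^{*}$ is surjective with kernel of size $q+1$ to get $\tfrac{q-1}{2}(q+1)$ nonzero vectors with square value, and divide by $q-1$ for scaling. Your character-sum aside is also fine, except that when $\nu y^2+c$ is irreducible over $\mathbb{F}_q$ the evaluation $\sum_y\eta(\nu y^2+c)=-\eta(\nu)$ does not come from factoring; it comes from the substitution $t=y^2$, which reduces it to $\sum_t\eta\bigl(t(\nu t+c)\bigr)$.
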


\begin{proof}
Since $\nu$ is a nonsquare we have $\F_q(\sqrt\nu)=\F_{q^2}$, and $Q$ is the norm form of this extension:
\[
N(u+v\sqrt\nu)=(u+v\sqrt\nu)(u-v\sqrt\nu)=u^2-\nu v^2=Q(u,v).
\]
The norm map $N:\F_{q^2}^{*}\to\F_q^{*}$ is surjective with kernel of size $q+1$, so every $c\in\F_q^{*}$ has exactly $q+1$ preimages.
Hence exactly $\tfrac{q-1}{2}(q+1)$ pairs $(u,v)\ne(0,0)$ have $Q(u,v)$ a nonzero square.
Each projective direction accounts for exactly $q-1$ such pairs, so $|D|=(q+1)/2$.
\end{proof}

For $(a,b)\in\F_q^2$ and $[u:v]\in\PG(1,q)$ set
\[
\ell_{a,b,[u:v]}=\bigl\{\bigl(a+tu,\ b+tv,\ Q(a,b)+2t(au-\nu bv)\bigr):t\in\F_q\bigr\}.
\]
Replacing $(u,v)$ by $\lambda(u,v)$ only reparametrizes this set, so it depends on $[u:v]$ alone, and it is a line because $(u,v)\ne(0,0)$.
Expanding $Q$ gives the identity
\begin{equation}
\label{finkakeya:eq:tangent}
Q(a+tu,\,b+tv)-\bigl(Q(a,b)+2t(au-\nu bv)\bigr)=t^2Q(u,v),
\end{equation}
so along $\ell_{a,b,[u:v]}$ the function $Q(x,y)-z$ equals $t^2Q(u,v)$ and vanishes to order two at $t=0$.
In other words, $\ell_{a,b,[u:v]}$ is the tangent line to $S$ at $(a,b,Q(a,b))$ in the direction $[u:v]$.
Define the \emph{half-tangent family}
\[
L=\bigl\{\ell_{a,b,[u:v]}:(a,b)\in\F_q^2,\ [u:v]\in D\bigr\}.
\]

\begin{lemma}\label{finkakeya:lem:size}
Every $\ell\in L$ meets $S$ exactly in its point of tangency, and $|L|=q^2(q+1)/2$.
\end{lemma}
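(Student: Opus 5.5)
For the first assertion I will use identity \eqref{finkakeya:eq:tangent}. A point of $\ell=\ell_{a,b,[u:v]}$ with parameter $t$ lies on $S$ exactly when $Q(x,y)-z=0$ there, that is, when $t^2Q(u,v)=0$. The form $Q$ is anisotropic and $(u,v)\ne(0,0)$, so $Q(u,v)\neq0$, and therefore $t=0$. Hence $\ell\cap S=\{(a,b,Q(a,b))\}$, the point of tangency.

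For the count, the map $(a,b,[u:v])\mapsto\ell_{a,b,[u:v]}$ sends $\F_q^2\times D$ onto $L$, and I want to show it is injective. Suppose $\ell_{a,b,[u:v]}=\ell_{a',b',[u':v']}$. By the first part this line meets $S$ in a single point, which must be both $(a,b,Q(a,b))$ and $(a',b',Q(a',b'))$, so $(a,b)=(a',b')$. The direction of the line is the projective class of $(u,v,2(au-\nu bv))$, and its projection to the first two coordinates is $[u:v]$. This projection is well defined because $(u,v)\ne(0,0)$, so $[u:v]=[u':v']$.

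Injectivity then gives $|L|=q^2\cdot|D|$, and Lemma~\ref{finkakeya:lem:directions} gives $|D|=(q+1)/2$, so $|L|=q^2(q+1)/2$. No step here looks hard. The only point needing care is that a line can be tangent at only one point, and that is exactly what the first assertion gives.
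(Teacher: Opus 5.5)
Your proposal is correct and follows essentially the same route as the paper's proof: the tangent identity gives $\ell\cap S=\{(a,b,Q(a,b))\}$, so the line determines its point of tangency and its direction, which makes $(a,b,[u:v])\mapsto\ell_{a,b,[u:v]}$ injective on $\F_q^2\times D$. Your explicit remark that $[u:v]$ is recovered by projecting the direction vector to its first two coordinates fills in a step the paper leaves implicit.
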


\begin{proof}
By \eqref{finkakeya:eq:tangent}, the point of $\ell_{a,b,[u:v]}$ with parameter $t$ lies on $S$ if and only if $t^2Q(u,v)=0$.
Since $Q(u,v)\ne0$, this forces $t=0$.
So $\ell\cap S$ is the single point $(a,b,Q(a,b))$, which is therefore determined by $\ell$, as is the direction $[u:v]$.
Hence $(a,b,[u:v])\mapsto\ell_{a,b,[u:v]}$ is injective on $\F_q^2\times D$, and Lemma~\ref{finkakeya:lem:directions} gives $|L|=q^2\cdot\tfrac{q+1}{2}$.
\end{proof}

Figure~\ref{finkakeya:fig:halftangent} shows the selected directions at a point of $S$, and the resulting union, in the case $q=7$.

\begin{figure}[t]
\centering
\begin{tikzpicture}[scale=1.0]
\begin{scope}
  \draw[dashed,gray] (202.5:1.6) -- (22.5:1.6);
  \draw[dashed,gray] (225:1.6) -- (45:1.6);
  \draw[dashed,gray] (292.5:1.6) -- (112.5:1.6);
  \draw[dashed,gray] (315:1.6) -- (135:1.6);
  \draw[thick] (180:1.6) -- (0:1.6);
  \draw[thick] (247.5:1.6) -- (67.5:1.6);
  \draw[thick] (270:1.6) -- (90:1.6);
  \draw[thick] (337.5:1.6) -- (157.5:1.6);
  \fill (0,0) circle (2.2pt);
  \node[below right] at (0.04,-0.04) {\small $p$};
  \node at (0:2.05) {\scriptsize $[1{:}0]$};
  \node at (22.5:2.05) {\scriptsize $[1{:}1]$};
  \node at (45:2.05) {\scriptsize $[1{:}2]$};
  \node at (67.5:2.05) {\scriptsize $[1{:}3]$};
  \node at (90:1.9) {\scriptsize $[1{:}4]$};
  \node at (112.5:2.05) {\scriptsize $[1{:}5]$};
  \node at (135:2.05) {\scriptsize $[1{:}6]$};
  \node at (157.5:2.05) {\scriptsize $[0{:}1]$};
\end{scope}
\begin{scope}[xshift=6.4cm]
  \draw[gray] (0,-1.75) -- (0,1.75);
  \fill (0,1.5) circle (2.4pt);
  \node[right] at (0.18,1.5) {\scriptsize $s=0$};
  \fill (0,1.0) circle (2.4pt);
  \node[right] at (0.18,1.0) {\scriptsize $s=1$};
  \fill (0,0.5) circle (2.4pt);
  \node[right] at (0.18,0.5) {\scriptsize $s=2$};
  \draw (0,0) circle (2.4pt);
  \node[right] at (0.18,0) {\scriptsize $s=3$};
  \fill (0,-0.5) circle (2.4pt);
  \node[right] at (0.18,-0.5) {\scriptsize $s=4$};
  \draw (0,-1.0) circle (2.4pt);
  \node[right] at (0.18,-1.0) {\scriptsize $s=5$};
  \draw (0,-1.5) circle (2.4pt);
  \node[right] at (0.18,-1.5) {\scriptsize $s=6$};
\end{scope}
\end{tikzpicture}
\caption{The two counts behind Theorem~\ref{finkakeya:thm:main}, drawn for $q=7$ and $\nu=3$.
On the left are the eight tangent lines to $S$ at a point $p$, all lying in the tangent plane at $p$; the four solid ones are those whose direction $[u:v]$ has $Q(u,v)=u^2-3v^2$ a nonzero square, and these are exactly the lines of $L$ through $p$.
On the right are the seven points of $\F_q^3$ lying over a fixed $(x,y)$, labeled by the value $s=Q(x,y)-z$; the four solid ones are those belonging to $P(L)$, namely those with $s\in\{0,1,2,4\}$.}
\label{finkakeya:fig:halftangent}
\end{figure}

\subsubsection{Lines contained in a plane}

\begin{lemma}\label{finkakeya:lem:plane}
Every affine plane $\Pi\subseteq\F_q^3$ contains at most $q+1$ lines of $L$.
\end{lemma}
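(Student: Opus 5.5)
Every line $\ell_{a,b,[u:v]}$ of $L$ is tangent to $S$ at its unique point $p=(a,b,Q(a,b))\in S$ (Lemma~\ref{finkakeya:lem:size}), and I will bound the lines of $L$ inside a plane $\Pi$ by counting tangency points and the lines at each. First I identify the tangent plane at $p$:
\[
T_p=\{z=Q(a,b)+2a(x-a)-2\nu b(y-b)\}.
\]
By \eqref{finkakeya:eq:tangent}, $Q(x,y)-z$ restricted to $T_p$ equals $Q(x-a,y-b)$. This vanishes only at $(x,y)=(a,b)$ because $Q$ is anisotropic, so $T_p\cap S=\{p\}$. All tangent lines at $p$ lie in $T_p$, since the direction vector $(u,v,2(au-\nu bv))$ satisfies the defining linear equation.

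Now fix an affine plane $\Pi$. There are three cases.

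\emph{Vertical planes.} If $\Pi$ is parallel to the $z$-axis, it contains no line of $L$. Indeed, a vertical plane is $\{\alpha x+\beta y=\gamma\}$ and contains a line exactly when $\alpha u+\beta v=0$. Such a plane projects to a line in the $(x,y)$-plane, and the line of $L$ would project onto it; this can happen. So vertical planes must be handled by the general argument below, not discarded.

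\emph{Non-vertical planes.} Write $\Pi=\{z=\alpha x+\beta y+\gamma\}$. Then $\Pi\cap S$ is the set of $(x,y)$ with $Q(x,y)-\alpha x-\beta y-\gamma=0$. Completing the square turns this into $Q(x-x_0,y-y_0)=c$. If $c\ne0$, the norm count in Lemma~\ref{finkakeya:lem:directions} shows it has exactly $q+1$ solutions. If $c=0$, it has one solution, and $\Pi$ is then the tangent plane at that point.

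\emph{Vertical planes, revisited.} For a vertical plane, $\Pi\cap S$ lies over an affine line in the $(x,y)$-plane. Restricting $z=Q$ to that line gives a curve with $q$ points, so $|\Pi\cap S|=q$.

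In every case $|\Pi\cap S|\le q+1$.

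\emph{The key step: at most one line per tangency point.} Suppose $\ell\in L$ is tangent at $p$ and $\ell\subseteq\Pi$. If $\Pi=T_p$, then $\Pi\cap S=\{p\}$. Otherwise $\Pi\ne T_p$, and both planes contain $\ell$, so $\Pi\cap T_p=\ell$. Hence $\ell$ is the only tangent line at $p$ lying in $\Pi$.

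Summing over $p\in\Pi\cap S$ with at most one line each gives at most $q+1$ lines when $\Pi$ is not a tangent plane. When $\Pi=T_p$, every line of $L$ in $\Pi$ is tangent at a point of $\Pi\cap S=\{p\}$. There are then at most $|D|=(q+1)/2\le q+1$ such lines.

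The main point needing care is the count $|\Pi\cap S|\le q+1$ across the plane types. Vertical planes give exactly $q$, tangent planes give $1$, and the remaining non-vertical planes give a conic $Q(x-x_0,y-y_0)=c\neq0$ with $q+1$ points. This last count rests on the norm map of Lemma~\ref{finkakeya:lem:directions} having fibres of size $q+1$.
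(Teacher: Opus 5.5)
Your proof is correct and follows the paper's route: you bound $|\Pi\cap S|\le q+1$ by the same vertical versus completing-the-square case split, then show that a non-tangent plane contains at most one line of $L$ through each point of $\Pi\cap S$, while a tangent plane meets $S$ in a single point and so contains at most $(q+1)/2$ lines of $L$. For the middle step you argue synthetically (all tangent lines at $p$ lie in $T_p$, and $\Pi\ne T_p$ forces $\Pi\cap T_p=\ell$), which cleanly replaces the paper's direction equation $(c_1+2c_3a)u+(c_2-2\nu c_3b)v=0$; but delete the opening assertion that vertical planes contain no line of $L$, which is false (you retract it yourself) and is never used.
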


\begin{proof}
Write
\[
\Pi=\{(x,y,z)\in\F_q^3:c_1x+c_2y+c_3z=c_0\},
\qquad
(c_1,c_2,c_3)\ne(0,0,0).
\]
If $\ell_{a,b,[u:v]}\subseteq\Pi$, then its point of tangency lies in $\Pi\cap S$ and its direction vector $(u,v,2(au-\nu bv))$ lies in the direction plane of $\Pi$, the latter condition reading
\begin{equation}
\label{finkakeya:eq:direction}
(c_1+2c_3a)u+(c_2-2\nu c_3b)v=0.
\end{equation}

\smallskip
\noindent\textit{The size of $\Pi\cap S$.}
If $c_3=0$, then $\Pi\cap S$ is parametrized by the $q$ solutions $(x,y)$ of $c_1x+c_2y=c_0$, so $|\Pi\cap S|=q$.
If $c_3\ne0$, then substituting $z=Q(x,y)$ into the equation of $\Pi$ and completing the square gives
\begin{equation}
\label{finkakeya:eq:section}
Q\Bigl(x+\frac{c_1}{2c_3},\ y-\frac{c_2}{2\nu c_3}\Bigr)
=\frac{c_0}{c_3}+\frac{c_1^2}{4c_3^2}-\frac{c_2^2}{4\nu c_3^2}.
\end{equation}
If the right-hand side of \eqref{finkakeya:eq:section} is zero, then anisotropy of $Q$ gives exactly one solution, and otherwise the norm count in the proof of Lemma~\ref{finkakeya:lem:directions} gives exactly $q+1$ solutions.
Hence $|\Pi\cap S|\in\{1,q,q+1\}$, and in particular $|\Pi\cap S|\le q+1$.

\smallskip
\noindent\textit{Tangent planes.}
Since $Q(x,y)-z$ has gradient $(2a,-2\nu b,-1)$ at $(a,b,Q(a,b))$, the tangent plane to $S$ at that point is
\[
T_{a,b}:\qquad 2ax-2\nu by-z=Q(a,b).
\]
We claim that, for $(a,b,Q(a,b))\in\Pi\cap S$, the two coefficients in \eqref{finkakeya:eq:direction} both vanish if and only if $\Pi=T_{a,b}$.
Indeed, they vanish exactly when $c_1=-2c_3a$ and $c_2=2\nu c_3b$, which forces $c_3\ne0$; dividing the equation of $\Pi$ by $-c_3$ then turns it into $2ax-2\nu by-z=-c_0/c_3$, and evaluating at $(a,b,Q(a,b))\in\Pi$ gives $-c_0/c_3=Q(a,b)$, so $\Pi=T_{a,b}$.
Conversely, if $\Pi=T_{a,b}$ then $(c_1,c_2,c_3)=\lambda(2a,-2\nu b,-1)$ for some $\lambda\in\F_q^{*}$, whence $c_1=-2c_3a$ and $c_2=2\nu c_3b$.
Comparing with \eqref{finkakeya:eq:section}, whose right-hand side vanishes exactly when the unique point of $\Pi\cap S$ has $(a,b)=(-c_1/(2c_3),\,c_2/(2\nu c_3))$, we conclude that $\Pi$ is a tangent plane of $S$ if and only if $|\Pi\cap S|=1$.

\smallskip
\noindent\textit{The two cases.}
Suppose first that $\Pi$ is not a tangent plane of $S$.
Then for every $(a,b,Q(a,b))\in\Pi\cap S$ the two coefficients in \eqref{finkakeya:eq:direction} are not both zero, so at most one $[u:v]\in\PG(1,q)$ satisfies \eqref{finkakeya:eq:direction}, and therefore at most one line of $L$ tangent at that point is contained in $\Pi$.
Since every line of $L$ contained in $\Pi$ is tangent at some point of $\Pi\cap S$, the plane $\Pi$ contains at most $|\Pi\cap S|\le q+1$ lines of $L$.
Suppose instead that $\Pi=T_{p}$ is the tangent plane at a point $p\in S$.
Then $\Pi\cap S=\{p\}$, so every line of $L$ inside $\Pi$ is tangent at $p$, and there are exactly $|D|=(q+1)/2$ of these.
This exhausts all possibilities, and in every case $\Pi$ contains at most $q+1$ lines of $L$.
\end{proof}

\subsubsection{The union}

\begin{lemma}\label{finkakeya:lem:union}
We have
\[
P(L)=\bigl\{(x,y,z)\in\F_q^3:\ Q(x,y)-z\in\{0\}\cup\sq\bigr\},
\]
and consequently $|P(L)|=q^2(q+1)/2$.
\end{lemma}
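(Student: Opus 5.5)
Write $\phi(x,y,z)=Q(x,y)-z$ and $\Sigma=\{0\}\cup\sq$; we prove the two inclusions separately and then count. The count is immediate once the equality is established. For each fixed $(x,y)$ the map $z\mapsto\phi(x,y,z)$ is a bijection $\F_q\to\F_q$, so the fibre over $(x,y)$ contains exactly $|\Sigma|=1+(q-1)/2=(q+1)/2$ points of the set. Summing over the $q^2$ choices of $(x,y)$ gives $q^2(q+1)/2$.

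\emph{The inclusion $P(L)\subseteq\{\phi\in\Sigma\}$.} Take a point of $\ell_{a,b,[u:v]}$ with parameter $t$. By \eqref{finkakeya:eq:tangent}, $\phi$ at that point equals $t^2Q(u,v)$. Because $[u:v]\in D$, the value $Q(u,v)$ is a nonzero square. Hence $t^2Q(u,v)$ is $0$ when $t=0$ and a nonzero square otherwise.

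\emph{The reverse inclusion.} Fix $(x_0,y_0,z_0)$ with $s:=\phi(x_0,y_0,z_0)\in\Sigma$.

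If $s=0$, the point lies on $S$. It is then the point of tangency of every line of $L$ tangent there, and $D\neq\varnothing$ by Lemma~\ref{finkakeya:lem:directions}.

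If $s$ is a nonzero square, we look for $(a,b)$, $[u:v]\in D$ and $t\ne0$ with $(a+tu,b+tv)=(x_0,y_0)$ such that the third coordinate matches. By \eqref{finkakeya:eq:tangent}, matching the third coordinate is the same as requiring $t^2Q(u,v)=s$. So the plan is to choose any $(u,v)$ with $Q(u,v)=s$. Such a pair exists by surjectivity of the norm, as in the proof of Lemma~\ref{finkakeya:lem:directions}. Its direction is square because $s$ is a square. Then put $t=1$ and $(a,b)=(x_0-u,y_0-v)$.

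The line $\ell_{a,b,[u:v]}$ at $t=1$ passes through $(x_0,y_0,z')$, where $Q(x_0,y_0)-z'=Q(u,v)=s$. This forces $z'=z_0$, so the point lies on $P(L)$.

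The main point requiring care is this existence step. We need a representation $s=t^2Q(u,v)$ in which $(u,v)$ is simultaneously in a square direction, and the norm-form surjectivity provides exactly that. Everything else is direct substitution into \eqref{finkakeya:eq:tangent}.
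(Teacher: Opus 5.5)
Your proof is correct and follows the paper's route: both inclusions come from the tangent identity $Q(a+tu,b+tv)-\bigl(Q(a,b)+2t(au-\nu bv)\bigr)=t^2Q(u,v)$, and the count uses the same fibre-by-fibre bijection $z\mapsto Q(x,y)-z$. The only difference is cosmetic: the paper fixes an arbitrary $[u:v]\in D$ and reaches the point at parameter $t=\lambda$ with $\lambda^2=s/Q(u,v)$, whereas you rescale the representative so that $Q(u,v)=s$ and take $t=1$, which is the same construction.
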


\begin{proof}
Write $s=Q(x,y)-z$ for the value of the defining function at a point $(x,y,z)$.
By \eqref{finkakeya:eq:tangent}, at the point of $\ell_{a,b,[u:v]}$ with parameter $t$ we have $s=t^2Q(u,v)$.
For $[u:v]\in D$ the value $Q(u,v)$ is a nonzero square, so $s\in\{0\}\cup\sq$, which gives one inclusion.

Conversely, suppose $s=Q(x,y)-z\in\{0\}\cup\sq$, and fix any $[u:v]\in D$, which exists by Lemma~\ref{finkakeya:lem:directions}.
If $s=0$, then $(x,y,z)\in S$ and $(x,y,z)$ is the point of $\ell_{x,y,[u:v]}$ with $t=0$.
If $s\ne0$, then $s/Q(u,v)$ is a nonzero square, say $s/Q(u,v)=\lambda^2$ with $\lambda\in\F_q^{*}$, and we set
\[
a=x-\lambda u,
\qquad
b=y-\lambda v.
\]
At parameter $t=\lambda$ the line $\ell_{a,b,[u:v]}$ has first two coordinates $(a+\lambda u,b+\lambda v)=(x,y)$, and by \eqref{finkakeya:eq:tangent} its third coordinate is
\[
Q(a,b)+2\lambda(au-\nu bv)=Q(x,y)-\lambda^2Q(u,v)=Q(x,y)-s=z.
\]
Hence $(x,y,z)\in\ell_{a,b,[u:v]}\subseteq P(L)$, which gives the other inclusion.

Finally, for each fixed $(x,y)\in\F_q^2$ the map $z\mapsto Q(x,y)-z$ is a bijection of $\F_q$, and
\[
\bigl|\{0\}\cup\sq\bigr|=1+\frac{q-1}{2}=\frac{q+1}{2}.
\]
Therefore $|P(L)|=q^2\cdot\tfrac{q+1}{2}$, as desired.
\end{proof}

\subsubsection{Proof of the main theorem}

\begin{proof}[Proof of Theorem~\ref{finkakeya:thm:main}]
Let $L$ be the half-tangent family.
Lemma~\ref{finkakeya:lem:size} gives $|L|=q^2(q+1)/2$, Lemma~\ref{finkakeya:lem:plane} gives $\max_\Pi|\{\ell\in L:\ell\subseteq\Pi\}|\le q+1$, and Lemma~\ref{finkakeya:lem:union} gives $|P(L)|=q^2(q+1)/2$.
\end{proof}

\begin{proof}[Proof of Corollary~\ref{finkakeya:cor:refutation}]
Let $q$ range over the odd prime powers and let $L=L_q$ be the family of Theorem~\ref{finkakeya:thm:main}, so that
\[
|L_q|=\frac{q^2(q+1)}{2}\ge\frac{q^3}{2},
\qquad
|P(L_q)|=\frac{q^2(q+1)}{2}=\Bigl(\frac12+o(1)\Bigr)q^3.
\]
For Conjecture~1.4, take $C=1/2$.
Given any function $\alpha$ with $\alpha(q)/q\to\infty$ we have $\alpha(q)>q+1$ for all large $q$, so by Theorem~\ref{finkakeya:thm:main} no plane contains $\alpha(q)$ lines of $L_q$.
The hypotheses therefore hold along the odd prime powers, while the conclusion $|P(L_q)|\ge(1-o(1))q^3$ fails.

For Conjecture~1.5, fix $0<\varepsilon<1/2$.
Then $|L_q|\ge q^3/2\ge q^{5/2+\varepsilon}$ as soon as $q^{1/2-\varepsilon}\ge2$, and $q+1\le\tfrac12q^{3/2}$ for every $q\ge7$, so the hypotheses hold for all sufficiently large odd $q$.
But
\[
q^3-|P(L_q)|=\frac{q^3}{2}-\frac{q^2}{2},
\]
which is not $O(q^{5/2})$, so the conclusion fails.
\end{proof}

\putbib

\end{bibunit}
\endgroup
\subsection[(Naserasr--Wang--Zhu) High-girth graphs attaining \texorpdfstring{$\chi^s_c(G)=2\chi(G)$}{chi\textasciicircum s\_c(G)=2chi(G)}]{High-girth graphs attaining \texorpdfstring{$\chi^s_c(G)=2\chi(G)$}{chi\textasciicircum s\_c(G)=2chi(G)}}
\label{sol:signed_circular}
\begingroup
\begin{bibunit}
\def\cs{\chi^{s}_{c}}
\def\cc{\chi_{c}}

For every graph $G$ the signed circular chromatic number satisfies $\cc(G)\le\cs(G)\le 2\cc(G)$, and \citet{naserasr2020circular} showed that the upper bound is approached by $k$-chromatic graphs of arbitrarily large girth.
Whether it is reached by a finite such graph was left open.
We show that it is: for all integers $k,g\ge2$ there is a finite simple graph of chromatic number $k$ and girth at least $g$ whose signed circular chromatic number is exactly $2k$.
The witness is a sparse random $k$-partite signed graph with its short cycles deleted, and the point of the proof is a union bound that rules out every admissible ratio $p/q<2k$ at once.

\subsubsection{Introduction}

A \emph{signed graph} $(G,\sigma)$ is a graph $G$ together with a \emph{signature} $\sigma:E(G)\to\{+,-\}$.
Following \citet[Definition~6]{naserasr2020circular}, for an even integer $p$ and an integer $q$ with $1\le q\le p/2$, a \emph{$(p,q)$-coloring} of $(G,\sigma)$ is a map $f:V(G)\to\mathbb{Z}_p$ such that
\[
  d_p\bigl(f(u),f(v)\bigr)\ge q \quad\text{for every positive edge } uv,
\]
\[
  d_p\bigl(f(u),f(v)+\tfrac p2\bigr)\ge q \quad\text{for every negative edge } uv,
\]
where $d_p(a,b)=\min\{|a-b|,\,p-|a-b|\}$ is the distance in the cycle $\mathbb{Z}_p$.
The \emph{circular chromatic number} of $(G,\sigma)$ is $\cc(G,\sigma)=\inf\{p/q:(G,\sigma)\text{ has a }(p,q)\text{-coloring}\}$, and the \emph{signed circular chromatic number} of a graph $G$ is
\[
  \cs(G)=\max\bigl\{\cc(G,\sigma):\sigma\text{ a signature of }G\bigr\}.
\]
For the all-positive signature one recovers the ordinary circular chromatic number surveyed by \citet{zhu2001circular}, so $\cc(G)\le\cs(G)$, and \citet{naserasr2020circular} give the matching upper bound $\cs(G)\le2\cc(G)$.
Switching at a vertex set $A$, that is, reversing the signs of the edges of the cut $(A,V(G)\setminus A)$, in the sense of \citet{zaslavsky1982signed}, does not change $\cc(G,\sigma)$, since it amounts to replacing $f(v)$ by $f(v)+p/2$ on $A$; quantifying over \emph{all} maps $f:V(G)\to\mathbb{Z}_p$ therefore already accounts for switching.
Throughout, the girth of a signed graph is the girth of its underlying graph, and all graphs are simple, so girth at least $3$ is automatic and digons do not arise.

Naserasr, Wang and Zhu proved that the bound $\cs(G)\le2\cc(G)$ remains tight when the girth is prescribed: for all integers $k,g\ge2$ and every $\varepsilon>0$ there is a graph $G$ of girth at least $g$ with $\chi(G)=k$ and $\cs(G)>2k-\varepsilon$ \citep[Theorem~30]{naserasr2020circular}.
For each integer $p$ they produce a graph on the vertex set of a bipartite augmented tree of \citet{alon2016coloring}, with one new edge for each leaf joining two of that leaf's ancestors, carrying a signature that admits no $(2kp,p+1)$-coloring.
The resulting lower bounds $2kp/(p+1)$ increase to $2k$ but never equal it.
In the remark immediately following that proof they ask whether the supremum is attained:

\medskip
\noindent
\emph{It is not known whether there is a finite $k$-chromatic graph of girth at least $g$ and with $\cs(G)=2k$.}
\medskip

\noindent
We answer this affirmatively, for every pair $k,g\ge2$, by a random construction.

Attainment questions of this shape go both ways in this subject.
On the one hand, \citet{naserasr2020circular} show that the supremum of $\cc(G,\sigma)$ over signed $d$-degenerate simple graphs equals $2\lfloor d/2\rfloor+2$, and that it is attained by $(K_{d+1},+)$ for odd $d$ and by an explicit signed graph $\Omega_d$ for even $d\ge4$; for $d=2$ they produce only a sequence of signed graphs whose circular chromatic numbers tend to $4$.
\citet{kardovs2023circular} then showed that in the case $d=2$ the supremum is genuinely never attained, by proving that every signed $2$-degenerate simple graph on $n$ vertices has circular chromatic number at most $4-2/\lfloor (n+1)/2\rfloor$, and that this bound is tight for every $n\ge2$.
On the other hand \citet{naserasr2020circular} attain the supremum $10/3$ for signed series-parallel simple graphs with an explicit signed outerplanar graph, and \citet{pan2022circular} went on to show that every rational in $[2,10/3]$ occurs; \citet{zhu2023circular} carry the same analysis out for signed series-parallel graphs in which every cycle with an odd number of positive edges is long, a signed analogue of odd girth rather than the girth of the underlying graph.
The girth family of \citet{naserasr2020circular} sat on neither side of this divide.

Coloring of signed graphs goes back to \citet{zaslavsky1982coloring}, whose $0$-free $2k$-colorings are exactly the circular $2k$-colorings \citep{naserasr2020circular}, and to \citet{mavcajova2014chromatic}, who proposed a chromatic number for signed graphs and conjectured that every signed planar simple graph is $4$-colorable, equivalently that $\cs(G)\le4$ for every planar $G$ \citep{naserasr2020circular}; \citet{kardovs20214} refuted this, and \citet{naserasr2020circular} exhibit a signed planar simple graph with $\cc=4+\tfrac23$.
A different circular refinement of signed graph coloring was introduced earlier by \citet{kang2018circular}; the two notions are compared by \citet{naserasr2020circular}, and the area as a whole is surveyed in \citet{wang2022circular}.
We are not aware of any work that settles the attainment question above for every pair $k,g$.
Our construction adapts the deletion argument of \citet{erdos1959graph} to a random $k$-partite signed graph.

\subsubsection{Statement and small cases}

One half of the problem is immediate.

\begin{lemma}\label{signed_circular:lem:upper}
If $\chi(G)\le k$ then every signature $\sigma$ of $G$ admits a $(2k,1)$-coloring, and hence $\cs(G)\le 2k$.
\end{lemma}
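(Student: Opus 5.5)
Take a proper coloring $c:V(G)\to\{0,1,\dots,k-1\}$, which exists because $\chi(G)\le k$, and set $p=2k$, $q=1$. For a signature $\sigma$, the plan is to build $f$ from $c$ by mapping the color classes into $\mathbb{Z}_{2k}$ so that positive and negative edges are both satisfied. With $q=1$, the condition $d_p(a,b)\ge1$ just says $a\ne b$. So we need $f(u)\ne f(v)$ on positive edges and $f(u)\ne f(v)+k$ on negative edges.

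Define $f(v)=2c(v)$, so that the values are the even residues $0,2,\dots,2k-2$. On a positive edge, $c(u)\ne c(v)$ gives $f(u)\ne f(v)$, because $2c(u)$ and $2c(v)$ are distinct residues modulo $2k$ when $c(u),c(v)\in\{0,\dots,k-1\}$ are distinct. A negative edge needs $2c(u)\not\equiv 2c(v)+k\pmod{2k}$, i.e.\ $2(c(u)-c(v))\not\equiv k\pmod{2k}$. If $k$ is odd this holds automatically by parity. If $k$ is even, the bad case is $c(u)-c(v)\equiv k/2\pmod k$, and it can occur.

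The fix for the even case is to use the other half of $\mathbb{Z}_{2k}$ and choose $f(v)\in\{c(v),\,c(v)+k\}$ instead. Every value of $f$ then reduces modulo $k$ to $c(v)$. On any edge $uv$, $c(u)\ne c(v)$ gives $f(u)\not\equiv f(v)\pmod k$, so in $\mathbb{Z}_{2k}$ we have both $f(u)\ne f(v)$ and $f(u)\ne f(v)+k$. Thus every edge is satisfied whatever its sign, for any choice of the lifts; the simplest choice is $f(v)=c(v)$. This works for every $k$, so I will present only this construction and drop the parity discussion.

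Hence every signature of $G$ admits a $(2k,1)$-coloring, which gives $\cc(G,\sigma)\le 2k$. Taking the maximum over $\sigma$ gives $\cs(G)\le2k$. The only real step is to notice that reducing modulo $k$ handles both edge types at once; nothing else presents an obstacle.
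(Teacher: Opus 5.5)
Your proof is correct and is essentially the paper's argument. The paper likewise takes $f(v)=i-1\in\mathbb{Z}_{2k}$ on the $i$th color class. It checks the negative-edge condition by noting that $f(v)+k\in\{k,\dots,2k-1\}$ is disjoint from the range $\{0,\dots,k-1\}$ of $f$, whereas you phrase the same observation as $f(u)\not\equiv f(v)\pmod k$, so your remark that either lift $c(v)$ or $c(v)+k$ works is a harmless extra and the discarded $f=2c$ attempt can simply be omitted.
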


\begin{proof}
Let $V_1,\dots,V_k$ be the color classes of a proper $k$-coloring of $G$ and put $f(v)=i-1\in\mathbb{Z}_{2k}$ for $v\in V_i$.
Let $uv\in E(G)$.
Then $f(u)\ne f(v)$, so $d_{2k}(f(u),f(v))\ge1$; and $f(v)+k\in\{k,\dots,2k-1\}$ is distinct from $f(u)\in\{0,\dots,k-1\}$, so $d_{2k}(f(u),f(v)+k)\ge1$.
Both edge conditions hold regardless of $\sigma$, so $\cc(G,\sigma)\le 2k$ for every $\sigma$.
\end{proof}

For small $k$ and $g$ the remaining half can already be settled by a finite search.

\begin{example}\label{signed_circular:ex:k34}
Take $k=2$ and $g=4$, and let $K_{3,4}$ have parts $\{u_1,u_2,u_3\}$ and $\{w_1,w_2,w_3,w_4\}$.
The signature whose negative edges are exactly $u_2w_2$, $u_2w_4$, $u_3w_2$ and $u_3w_3$ has circular chromatic number $4$, so $\cs(K_{3,4})=4=2\chi(K_{3,4})$.
Such a value is certified by an exhaustive check over the finitely many pairs $(p,q)$ left admissible by \citet[Corollary~23]{naserasr2020circular}, which states that $\cc(G,\sigma)=p/q$ for some even $p\le 2|V(G)|$ and that the infimum in its definition is a minimum.
No graph on fewer than seven vertices has $\chi=2$ and $\cs=4$.
Indeed, if $H\subseteq G$ then every signature of $H$ extends to $G$ and every $(p,q)$-coloring of the extension restricts to $H$, so $\cs(H)\le\cs(G)$; every bipartite graph on at most six vertices is a subgraph of $K_{3,3}$, of $K_{2,4}$ or of $K_{1,5}$; and $\cs(K_{3,3})=3$, $\cs(K_{2,4})=8/3$ and $\cs(K_{1,5})=2$.
The value $\cs(K_{3,4})=4$ is also due to \citet{gujgiczer2023winding}, whose signed graph $\widehat{BQ}(2,3)$ is $K_{3,4}$ with a maximum matching positive and the remaining nine edges negative; naming that matching $u_1w_1$, $u_2w_3$, $u_3w_4$ and switching at $\{u_1,w_1\}$ turns it into the signature above.
At $k=3$ and $g=3$ the same happens: if $K_{3,3,4}$ has parts $\{x_1,x_2,x_3\}$, $\{y_1,y_2,y_3\}$ and $\{z_1,z_2,z_3,z_4\}$, then the signature whose negative edges are exactly
\[
  x_1y_1,\ x_1y_3,\ x_3y_3,\ x_1z_4,\ x_2z_1,\ x_2z_4,\ y_1z_2,\ y_1z_4,\ y_2z_2
\]
has circular chromatic number $6$, so $\cs(K_{3,3,4})=6=2\chi(K_{3,3,4})$.
\end{example}

\begin{theorem}\label{signed_circular:thm:main}
For all integers $k,g\ge2$ there is a finite simple graph $G$ with $\chi(G)=k$, girth at least $g$, and $\cs(G)=2k$.
\end{theorem}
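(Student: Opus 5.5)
By Lemma~\ref{signed_circular:lem:upper} it suffices to build a graph $G$ with $\chi(G)\le k$, girth at least $g$, and a signature $\sigma$ with $\cc(G,\sigma)\ge 2k$. Then $\chi(G)=k$ follows once $G$ has a $k$-chromatic subgraph, or simply from $\cs(G)=2k\le 2\cc(G)\le 2\chi(G)$. The plan is an Erd\H{o}s-style deletion argument on a random $k$-partite signed graph.

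\textbf{The random model.} Take parts $V_1,\dots,V_k$ of size $n$. Join each cross pair independently with probability $\rho=n^{\theta-1}$, where $0<\theta<1/g$, and give each edge an independent uniform sign. The expected number of cycles of length less than $g$ is $O(n^{\theta g})=o(n)$. So with probability at least $1/2$ we may delete one vertex from each short cycle and lose only $o(n)$ vertices per part. The result is $k$-partite, so $\chi\le k$, and it has girth at least $g$.

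\textbf{The finite list of ratios.} To get $\cc(G,\sigma)\ge 2k$ we must rule out every $(p,q)$-coloring with $p/q<2k$. By \citet[Corollary~23]{naserasr2020circular} the circular chromatic number equals $p/q$ for some even $p\le 2|V(G)|$, and the infimum is a minimum. This list grows with $n$, so we need a bound that works uniformly. The plan is to reduce every coloring with $p/q<2k$ to a coarse combinatorial witness. Partition $\mathbb{Z}_p$ into $4k$ arcs of length $p/(4k)$ (rounded). Because $p/q<2k$, we have $q>p/(2k)$, so forbidden windows have length more than two arcs. Then a pigeonhole argument applies: some two of the $k$ classes, viewed through the antipodal identification $x\sim x+p/2$, have sets $A\subseteq V_i$ and $B\subseteq V_j$ of linear size $cn$ ($c=c(k)>0$) whose colors lie in arcs at circular distance less than $q$ from each other or from each other's antipodes. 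Concretely, among the $2k$ half-circle arcs modulo antipody, the $k$ parts and the average density give two parts with heavy mass in arcs closer than $q$.

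\textbf{The union bound (main obstacle).} For the chosen $A,B$ with $|A|,|B|\ge cn$, the positive edges constrain the pair one way and the negative edges the other, according to whether the arc pair is near or anti-near. Every edge between $A$ and $B$ of the offending sign violates the coloring. Each edge has that sign with probability $1/2$, independently. The event that no such edge exists among the roughly $\rho c^2n^2=c^2n^{1+\theta}$ present edges has probability $\exp(-\Omega(n^{1+\theta}))$. A union bound over the choices of $A,B$ (at most $4^{kn}$) and of the coarse arc assignments (at most $(4k)^{kn}$) gives a total that is still $o(1)$. This holds simultaneously for all $p,q$, since the witness no longer depends on $(p,q)$.

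The delicate points are two. First, the pigeonhole step must genuinely use $p/q<2k$ strictly, to force a bad sign class on one pair of linear sets. Second, the sets must survive the deletion. Since only $o(n)$ vertices per part are deleted, working with sets of size $cn$ in the surviving graph handles this.
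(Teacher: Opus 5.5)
\textbf{The gap is the pigeonhole step that is supposed to produce the coarse witness, and it cannot be repaired at bounded resolution.} The only consequence of $p/q<2k$ you use is $q>p/(2k)$. Since $p$ ranges up to $2|V(G)|$, $q$ can exceed $p/(2k)$ by a single unit while $q$ itself is of order $n$. No partition of $\mathbb{Z}_p$ into a bounded number of arcs can see a margin of one unit.

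Concretely, take $k=3$, $p=24M$ with $M\ge2$, and $q=4M+1$, so $p/q<6$. Your arcs have length $2M$, and the antipodal quotient $\mathbb{Z}_{12M}$ is cut into the six arcs $J_t=[2Mt,2M(t+1))$. Color all of $V_1,V_2,V_3$ by $M,5M,9M$ respectively.
\begin{itemize}
\item Every cross pair is blocking, since all three quotient distances equal $4M<q$.
\item Yet the heavy arcs $J_0,J_2,J_4$ are pairwise two apart. Any two arcs two apart contain color pairs such as $0\in J_0$ and $6M-1\in J_2$, whose quotient distance $6M-1$ exceeds $q$.
\end{itemize}
So if ``arcs at distance less than $q$'' means that every cross pair of colors is blocking, your pigeonhole conclusion is false here. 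If it only means that the arcs come within distance $q$ of each other, the next step (every edge of the offending sign between $A$ and $B$ violates the coloring) is false: a coloring with the same arc data may put $A$ at $0$ and $B$ at $6M-1$.

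Using $N$ arcs instead of $4k$ changes nothing. Put the $k$ parts at equally spaced points of the quotient with $q$ one more than the spacing. The heavy arcs then sit about $N/k$ indices apart, where the arc data again cannot distinguish blocking from neutral. The strictness of $p/q<2k$, which you rightly flag as delicate, is exactly what a $(p,q)$-independent arc witness throws away.

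\textbf{The uniformity you were after is unnecessary; what is missing is an exact count for each fixed $(p,q,f)$.} The paper observes that $\{a,b\}$ is neutral iff $q\le d_p(a,b)\le p/2-q$. In that case the $2q$-element sets $[a,a+q)\cup[a+p/2,a+p/2+q)$ and $[b,b+q)\cup[b+p/2,b+p/2+q)$ are disjoint. Hence neutral cliques in $\mathbb{Z}_p$ have at most $\lfloor p/(2q)\rfloor\le k-1$ elements, and this is where strictness enters. Therefore:
\begin{itemize}
\item Every transversal of $V_1\times\cdots\times V_k$ contains a blocking pair.
\item Each cross pair lies in $n^{k-2}$ transversals, so at least $n^2$ cross pairs are blocking.
\item Each blocking pair carries its own violating signed edge independently with probability $\rho/2$.
\end{itemize}
Chernoff bounds the probability that fewer than $\tfrac14 n^{1+\theta}$ edges violate $f$ by $\exp(-\Omega(n^{1+\theta}))$. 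There are only $\exp(O(n\log n))$ triples $(p,q,f)$ with $p\le 2|V|$, so a plain union bound over all of them suffices.

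The paper then removes one \emph{edge} per short cycle, which keeps the vertex set, and hence this union bound, fixed. With your vertex deletion you would also have to bound the violated edges destroyed, for instance through the maximum degree.

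Alternatively, you could keep your biclique witnesses. Then you must prove that $\Omega(n^2)$ same-sign blocking pairs between two parts contain a biclique with linear-size sides. This is true for these circular-interval relations, but it needs an argument at the resolution of the coloring, not of $4k$ fixed arcs.

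Your derivation of $\chi(G)=k$ from $\chi^{s}_{c}(G)=2k$ via the upper-bound lemma is correct. It spares the paper's separate count over $(k-1)$-colorings.
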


\begin{remark}\label{signed_circular:rem:scope}
The proof is a probabilistic existence argument and exhibits no explicit graph.
All three estimates it uses are effective, so a bound on the least admissible $|V(G)|$ in terms of $k$ and $g$ could be extracted.
Beyond the witnesses of Example~\ref{signed_circular:ex:k34}, which have girth $3$ and $4$, we know of no explicit witness of girth at least $5$.
The same remark of \citet{naserasr2020circular} asks a second question, namely whether for every rational $p/q$, every integer $g$ and every $\varepsilon>0$ there is a graph $G$ of girth at least $g$ with $\cc(G)\le p/q$ and $\cs(G)>2p/q-\varepsilon$.
The construction below is tied to an integer $k$ through the $k$-partition and through Lemma~\ref{signed_circular:lem:upper}, and does not address that question.
\end{remark}

By Lemma~\ref{signed_circular:lem:upper} the whole content of Theorem~\ref{signed_circular:thm:main} is the lower bound: one must exhibit a \emph{single} signature $\sigma$ on a $k$-chromatic graph of girth at least $g$ with $\cc(G,\sigma)\ge2k$, that is, with no $(p,q)$-coloring for any admissible $(p,q)$ with $p/q<2k$.
Two features of the problem make this delicate.
First, a $k$-critical graph satisfies $\cs(G)\le 2k-2$ \citep{naserasr2020circular}, so a witness must be far from critical; the construction below in fact produces graphs in which every single edge can be deleted without lowering the chromatic number.
Second, by \citet[Corollary~23]{naserasr2020circular} the value $\cc(G,\sigma)$ is a ratio $p/q$ with $p$ even and $p\le 2|V(G)|$, so the number of ratios below $2k$ that have to be excluded grows with the graph.
Hence an argument that excludes one fixed target ratio per construction cannot suffice, and the union bound has to range over all admissible $(p,q)$ at once.

The proof splits into two independent pieces.
The first is a deterministic statement about colors: if $p/q<2k$ then among any $k$ colors in $\mathbb{Z}_p$ there are two that one of the two signs forbids, so every $k$-tuple of colors carries a blocking pair.
The second is the usual Erd\H{o}s deletion argument, applied to a $k$-partite random signed graph sparse enough that short cycles are rare but dense enough that the failure probability for a fixed coloring beats the number of colorings.

\subsubsection{Neutral color pairs}

Fix an even integer $p$ and an integer $q$ with $1\le q\le p/2$.
Call an unordered pair $\{a,b\}\subseteq\mathbb{Z}_p$ \emph{neutral} if it obstructs neither sign, that is, if $d_p(a,b)\ge q$ and $d_p(a,b+\frac p2)\ge q$.
Since $d_p(a,b+\frac p2)=\frac p2-d_p(a,b)$, neutrality says exactly that
\begin{equation}\label{signed_circular:eq:neutral}
  q\ \le\ d_p(a,b)\ \le\ \tfrac p2-q .
\end{equation}
A pair that is not neutral is \emph{blocking}: at least one of the two signs placed on it violates the $(p,q)$-condition.
Note that $\{a,a\}$ is blocking, because $d_p(a,a)=0<q$.

\begin{figure}[t]
\centering
\begin{tikzpicture}[scale=1.7]
  \draw[gray!55] (0,0) circle (1);
  \draw[line width=2.2pt,blue!70!black] (25:1) arc (25:75:1);
  \draw[line width=2.2pt,blue!70!black] (205:1) arc (205:255:1);
  \draw[line width=2.2pt,orange!85!black] (110:1) arc (110:160:1);
  \draw[line width=2.2pt,orange!85!black] (290:1) arc (290:340:1);
  \fill (25:1) circle (1.2pt);
  \fill (110:1) circle (1.2pt);
  \node[font=\small] at (18:1.13) {$a$};
  \node[font=\small] at (103:1.13) {$b$};
  \node[font=\small] at (50:1.22) {$A_a$};
  \node[font=\small] at (230:1.22) {$A_a$};
  \node[font=\small] at (135:1.22) {$A_b$};
  \node[font=\small] at (315:1.22) {$A_b$};
\end{tikzpicture}
\caption{The two arcs making up $A_a$ and the two arcs making up $A_b$, drawn on the cycle $\mathbb{Z}_p$.
Each arc has length $q$, so $|A_a|=|A_b|=2q$; as in the proof of Lemma~\ref{signed_circular:lem:neutral}, the pair $\{a,b\}$ is neutral precisely when the four arcs are pairwise disjoint, which is the situation drawn.}
\label{signed_circular:fig:arcs}
\end{figure}

\begin{lemma}\label{signed_circular:lem:neutral}
The graph on vertex set $\mathbb{Z}_p$ whose edges are the neutral pairs has clique number at most $\lfloor p/(2q)\rfloor$.
In particular, if $p/q<2k$ then it contains no clique of size $k$.
\end{lemma}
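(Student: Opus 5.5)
The plan is to follow the picture in Figure~\ref{signed_circular:fig:arcs} and turn neutrality into a disjointness statement for arcs. Then a clique of neutral pairs becomes a family of pairwise disjoint sets of size $2q$ in $\mathbb{Z}_p$, and counting gives the bound.

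For $a\in\mathbb{Z}_p$ let
\[
A_a=\{a,a+1,\dots,a+q-1\}\cup\{a+\tfrac p2,\dots,a+\tfrac p2+q-1\}.
\]
This is the union of two arcs of length $q$. Since $q\le p/2$, the two arcs are disjoint, so $|A_a|=2q$, and the set $A_a$ is invariant under translation by $p/2$.

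The key step is to check that a neutral pair $\{a,b\}$ gives disjoint sets $A_a$ and $A_b$. Suppose $A_a\cap A_b\neq\varnothing$. Then some arc of $A_a$, which starts at $a$ or $a+\frac p2$, meets some arc of $A_b$, which starts at $b$ or $b+\frac p2$. Two arcs of length $q$ meet exactly when their starting points are at cyclic distance at most $q-1$. So either $d_p(a,b)\le q-1$ or $d_p(a,b+\frac p2)\le q-1$. The second case is equivalent to $d_p(a,b)\ge \frac p2-q+1$. Either way \eqref{signed_circular:eq:neutral} fails.

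Now let $C$ be a clique of neutral pairs. Its elements are distinct, because $\{a,a\}$ is blocking, and by the key step the sets $A_c$ for $c\in C$ are pairwise disjoint subsets of $\mathbb{Z}_p$. Hence $2q|C|\le p$, so $|C|\le\lfloor p/(2q)\rfloor$.

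For the second assertion, $p/q<2k$ gives $p/(2q)<k$, so $\lfloor p/(2q)\rfloor\le k-1$ and there is no clique of size $k$.

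The only real obstacle is the cyclic-arc intersection check, which means handling the wrap-around in $\mathbb{Z}_p$ carefully. An alternative is to work directly with the intervals $[a,a+q)$ modulo $p/2$: these are pairwise disjoint in $\mathbb{Z}_{p/2}$, which gives $|C|\,q\le p/2$ at once. I would use whichever formulation is cleaner to write down.
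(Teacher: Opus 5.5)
Your proof is correct and follows essentially the same route as the paper: the same two-arc sets $A_a$ of size $2q$, shown to be pairwise disjoint for neutral pairs via the conditions $d_p(a,b)<q$ and $d_p(a,b+\frac p2)<q$, followed by the count $2q|C|\le p$. Your alternative of reducing modulo $p/2$ is the same picture seen in the quotient, since $A_a$ is the preimage of the arc $[a,a+q)$ in $\mathbb{Z}_{p/2}$, and it works equally well.
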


\begin{proof}
For $a\in\mathbb{Z}_p$ let $A_a=[a,a+q)\cup[a+\frac p2,a+\frac p2+q)$, a union of two arcs of $\mathbb{Z}_p$ of length $q$ each, as in Figure~\ref{signed_circular:fig:arcs}.
The two arcs are disjoint because $q\le p/2$, so $|A_a|=2q$.
Suppose $\{a,b\}$ is neutral.
The arcs $[a,a+q)$ and $[b,b+q)$ meet only if $d_p(a,b)<q$, which \eqref{signed_circular:eq:neutral} forbids.
The arcs $[a,a+q)$ and $[b+\frac p2,b+\frac p2+q)$ meet only if $d_p(a,b+\frac p2)<q$, likewise forbidden, and the two remaining pairs of arcs give back these same two conditions.
Hence $A_a\cap A_b=\varnothing$.
If $a_1,\dots,a_t$ is a neutral clique, the sets $A_{a_1},\dots,A_{a_t}$ are therefore pairwise disjoint subsets of $\mathbb{Z}_p$, so $2tq\le p$ and $t\le\lfloor p/(2q)\rfloor$.
Finally $p/q<2k$ gives $\lfloor p/(2q)\rfloor\le k-1$.
\end{proof}

Lemma~\ref{signed_circular:lem:neutral} is the only place where the hypothesis $p/q<2k$ is used, and it is what forces every $k$-tuple of colors to contain a blocking pair.

\subsubsection{The random construction}

Fix $k\ge2$ and $g\ge2$, and fix a real number $\alpha$ with
\[
  0<\alpha<1,\qquad \alpha(g-2)<1 .
\]
For a large integer $m$ set $n=km$ and $\rho=m^{-1+\alpha}\in(0,1]$.
Let $V=V_1\cup\dots\cup V_k$ with $|V_i|=m$, so $|V|=n$, and call a pair of vertices lying in distinct parts a \emph{cross pair}.
Independently for each cross pair $\{u,v\}$, place
\begin{gather*}
\text{a positive edge with probability }\tfrac\rho2,
\qquad
\text{a negative edge with probability }\tfrac\rho2,\\
\text{no edge with probability }1-\rho .
\end{gather*}
No edge is placed inside a part.
This yields a random signed simple graph $(G_0,\sigma_0)$ on $V$ whose underlying graph is $k$-partite.

\begin{lemma}\label{signed_circular:lem:count}
Let $p$ be even, let $1\le q\le p/2$ with $p/q<2k$, and let $f:V\to\mathbb{Z}_p$ be arbitrary.
Then at least $m^2$ cross pairs $\{u,v\}$ have $\{f(u),f(v)\}$ blocking.
Likewise, for every map $c:V\to[k-1]$ at least $m^2$ cross pairs are monochromatic under $c$.
\end{lemma}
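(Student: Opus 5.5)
The argument is a Ramsey/pigeonhole count over $k$-tuples of vertices, one from each part. Both assertions have the same shape. The first says that any map $f\colon V\to\mathbb{Z}_p$ has many cross pairs with blocking color pair. The second says that any map $c\colon V\to[k-1]$ has many monochromatic cross pairs. In each case we have a symmetric relation on colors, \emph{blocking} for $f$ and \emph{equal} for $c$. This relation has no independent set of size $k$: for $f$ this is Lemma~\ref{signed_circular:lem:neutral}, since a set of $k$ pairwise non-blocking colors is a neutral $k$-clique, impossible when $p/q<2k$. For $c$ it is the pigeonhole principle, since among $k$ colors from $[k-1]$ two coincide. Colors may repeat among the $k$ chosen vertices, and a repeated color is automatically blocking or monochromatic, so this case needs no separate treatment.

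The count goes as follows. Consider all $m^k$ transversals $(v_1,\dots,v_k)\in V_1\times\dots\times V_k$. By the previous paragraph, every transversal contains at least one cross pair $\{v_i,v_j\}$ with $i\ne j$ whose colors form a blocking pair (respectively, are equal). Call such a pair \emph{bad}. A fixed cross pair $\{u,v\}$ with $u\in V_i$ and $v\in V_j$ lies in exactly $m^{k-2}$ transversals, obtained by choosing the remaining $k-2$ coordinates freely. Double counting the incidences between transversals and the bad pairs they contain gives
\[
m^k\le \#\{\text{bad cross pairs}\}\cdot m^{k-2},
\]
so there are at least $m^2$ bad cross pairs. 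For $k=2$ a transversal is itself a single cross pair, so the count holds trivially.

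No real obstacle arises, because all the content sits in Lemma~\ref{signed_circular:lem:neutral}. The one point to check is that the clique statement of that lemma applies to multisets of colors. This is fine: if two of the $k$ colors coincide, the pair $\{a,a\}$ is blocking by the remark before that lemma, so the lemma is only needed for $k$ distinct colors, where it applies directly.
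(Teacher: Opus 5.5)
Your proposal is correct and follows the paper's proof essentially verbatim: you double count the $m^k$ transversals, each of which contains a blocking (respectively monochromatic) cross pair by Lemma~\ref{signed_circular:lem:neutral} (respectively by pigeonhole), against the $m^{k-2}$ transversals through each cross pair. You also handle repeated colors the same way the paper does, by noting that $\{a,a\}$ is always blocking.
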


\begin{proof}
Call a tuple $(v_1,\dots,v_k)\in V_1\times\dots\times V_k$ a transversal; there are $m^k$ of them.
If all $\binom k2$ pairs of a transversal were neutral, then the colors $f(v_1),\dots,f(v_k)$ would be pairwise distinct, since a repeated color gives a blocking pair, and they would form a neutral clique of size $k$, contradicting Lemma~\ref{signed_circular:lem:neutral}.
So every transversal contains a blocking cross pair.
Each cross pair lies in exactly $m^{k-2}$ transversals, obtained by choosing one vertex from each of the remaining $k-2$ parts.
Hence the number of blocking cross pairs is at least $m^k/m^{k-2}=m^2$.
The second statement is the same count, with the pigeonhole principle in place of Lemma~\ref{signed_circular:lem:neutral}.
\end{proof}

\subsubsection{Proof of the main theorem}

\begin{proof}[Proof of Theorem~\ref{signed_circular:thm:main}]
We consider three events for $(G_0,\sigma_0)$.

\smallskip
\noindent\textit{No cheap circular coloring.}
Fix an even $p$ and an integer $1\le q\le p/2$ with $p/q<2k$, and fix $f:V\to\mathbb{Z}_p$.
Let $X_f$ be the number of edges of $(G_0,\sigma_0)$ violating $f$.
By Lemma~\ref{signed_circular:lem:count} we may fix a set of exactly $m^2$ blocking cross pairs; for each of them one of the two signs violates $f$, and that particular signed edge is present with probability $\rho/2$, independently over pairs.
Hence $X_f$ stochastically dominates a $\mathrm{Bin}(m^2,\rho/2)$ variable, whose mean is $\tfrac12 m^{1+\alpha}$, and the Chernoff bound gives
\[
  \mathbb{P}\Bigl[X_f<\tfrac14 m^{1+\alpha}\Bigr]\ \le\ \exp\bigl(-\tfrac1{16}m^{1+\alpha}\bigr).
\]
By \citet[Corollary~23]{naserasr2020circular} the circular chromatic number of any signed graph on $n$ vertices equals $p/q$ for some even $p\le 2n$, and this applies to every spanning subgraph of $G_0$, so it suffices to range over even $p\le 2n$.
The number of triples $(p,q,f)$ with $p$ even, $p\le 2n$, $1\le q\le p/2$ and $f:V\to\mathbb{Z}_p$ is at most $n\cdot n\cdot(2n)^{n}=\exp(O(m\log m))$, and $m^{1+\alpha}\gg m\log m$, so with probability $1-o(1)$ every such triple with $p/q<2k$ satisfies $X_f\ge\frac14m^{1+\alpha}$.

\smallskip
\noindent\textit{No cheap $(k-1)$-coloring.}
For a map $c:V\to[k-1]$ let $Y_c$ be the number of $c$-monochromatic edges of $G_0$.
By Lemma~\ref{signed_circular:lem:count} and the same argument, $Y_c$ stochastically dominates $\mathrm{Bin}(m^2,\rho)$, whose mean is $m^{1+\alpha}$, so $\mathbb{P}[Y_c<\frac14m^{1+\alpha}]\le\exp(-\frac18m^{1+\alpha})$.
There are $(k-1)^{n}=\exp(O(m))$ such maps, so with probability $1-o(1)$ every $c$ has $Y_c\ge\frac14m^{1+\alpha}$.

\smallskip
\noindent\textit{Few short cycles.}
Let $Z$ be the number of cycles of $G_0$ of length less than $g$.
Since a cycle of length $\ell$ is present only if all $\ell$ of its pairs receive edges,
\[
  \E[Z]\ \le\ \sum_{\ell=3}^{g-1}\frac{n^{\ell}\rho^{\ell}}{2\ell}
  \ \le\ \sum_{\ell=3}^{g-1}(km^{\alpha})^{\ell}
  \ =\ O\bigl(m^{\alpha(g-1)}\bigr),
\]
where the implicit constant depends on $k$ and $g$ only.
Now $\alpha(g-1)<1+\alpha$ precisely because $\alpha(g-2)<1$, so $\E[Z]=o(m^{1+\alpha})$, and Markov's inequality gives $Z<\frac18m^{1+\alpha}$ with probability $1-o(1)$.

\medskip
For $m$ large all three events hold simultaneously, and we fix such an outcome.
Delete one edge from each cycle of $G_0$ of length less than $g$, obtaining a signed graph $(G,\sigma)$ on the same vertex set $V$ with fewer than $\frac18m^{1+\alpha}$ edges removed.
A cycle of $G$ of length less than $g$ would be a cycle of $G_0$ of length less than $g$ all of whose edges survived the deletion, and there is none, so $G$ has girth at least $g$; moreover $G$ is simple and $k$-partite.

Let $p$ be even with $p\le 2n$, let $1\le q\le p/2$ satisfy $p/q<2k$, and let $f:V\to\mathbb{Z}_p$ be arbitrary.
Then $f$ still violates at least $\frac14m^{1+\alpha}-\frac18m^{1+\alpha}>0$ edges of $(G,\sigma)$, so $(G,\sigma)$ has no $(p,q)$-coloring for any such pair.
Since $|V(G)|=n$, \citet[Corollary~23]{naserasr2020circular} gives $\cc(G,\sigma)=p_0/q_0$ for a pair $(p_0,q_0)$ with $p_0$ even, $p_0\le2n$ and $1\le q_0\le p_0/2$ for which $(G,\sigma)$ has a $(p_0,q_0)$-coloring, and the previous sentence forces $p_0/q_0\ge 2k$, so $\cc(G,\sigma)\ge2k$.
Similarly every $c:V\to[k-1]$ leaves a monochromatic edge, so $\chi(G)\ge k$, while $\chi(G)\le k$ because $G$ is $k$-partite.
Finally Lemma~\ref{signed_circular:lem:upper} gives $\cs(G)\le2k$, whence
\[
  2k\ \le\ \cc(G,\sigma)\ \le\ \cs(G)\ \le\ 2k . \qedhere
\]
\end{proof}

\begin{remark}\label{signed_circular:rem:noncritical}
The graph produced above is not merely non-critical.
Indeed, every map $c:V\to[k-1]$ leaves at least $\tfrac18m^{1+\alpha}>1$ monochromatic edges of $G$, so $\chi(G-e)=k$ for every edge $e$ of $G$.
\end{remark}

\putbib

\end{bibunit}
\endgroup
\subsection[(R\"odl--Siggers) Counting 4-critical linear triple systems]{Counting 4-critical linear triple systems}
\label{sol:count_critical}
\begingroup
\begin{bibunit}
\def\cA{\mathcal{A}}

R\"odl and Siggers conjectured that the number of $k$-critical $(r,l)$-systems on $n$ vertices is exponential in $n^{l}$.
We disprove this for $(k,r,l)=(4,3,2)$: there is a constant $c>0$ such that for infinitely many $n$ there are at least $\exp(cn^{2}\log n)$ pairwise nonisomorphic $4$-critical linear triple systems on $n$ vertices.
The construction lifts a dense $4$-critical graph to a triple system along a proper edge coloring of that graph, in such a way that the edge coloring can be read off from any minimal non-$3$-colorable subsystem of the lift.
The extra $\log n$ in the exponent is exactly the entropy of the edge coloring, and it survives the passage from labeled systems to isomorphism classes.

\subsubsection{Introduction}

A hypergraph $H$ is \emph{$k$-colorable} if its vertices can be colored with $k$ colors so that no edge is monochromatic, and \emph{$k$-chromatic} if $k$ is the least such number.
Following \citet{rodl2006color}, $H$ is \emph{$k$-critical} if it is $k$-chromatic, has no isolated vertices, and $H-e$ is $(k-1)$-colorable for every edge $e$ of $H$.
An \emph{$(r,l)$-system} is an $r$-uniform hypergraph in which no $l$-set of vertices lies in more than one edge; a $(3,2)$-system is a \emph{linear triple system}.
Write $T(k,r,l,n)$ for the number of nonisomorphic $k$-critical $(r,l)$-systems on $n$ vertices.

The first bound on this count is due to \citet{abbott1980enumeration}, who showed that for all $k,r\ge3$ there is a constant $b=b(k,r)>1$ with $T(k,r,2,n)>b^{\,n}$ for all large $n$.
\citet{rodl2006color} proved that for all $k\ge3$ and $r>l\ge2$ and all large $n$ there is a $k$-critical $(r,l)$-system on $n$ vertices with at least $c_0\,n^{l}$ edges, which is optimal up to the constant since an $(r,l)$-system on $n$ vertices has at most $\binom{n}{l}\big/\binom{r}{l}$ edges.
Feeding that density into a construction indexed by the bipartitions of the edge set, they improved the Abbott--Liu--Toft bound to
\[
T(k,r,l,n)>\alpha^{\,n^{l}}
\]
for all $k\ge3$, $r>l\ge2$ and all large $n$, with a constant $\alpha=\alpha(k,r,l)>1$ \citep[Theorem~6.1]{rodl2006color}.
Every system they produce has at least $c'n^{l}$ edges for a constant $c'>0$, and Section~6 of \citet{rodl2006color} closes with a trivial upper bound and a conjecture:

\medskip
\noindent
\emph{A trivial upper bound for the number of $(r,l)$-systems with $c'n^{l}$ edges is $\binom{\binom{n}{r}}{c'n^{l}}<\binom{n^{r}}{c'n^{l}}\approx (n^{r-l})^{n^{l}}=O(d^{\,n^{l}\log n^{r-l}})$. We conjecture that the actual number is in fact exponential in $n^{l}$.}
\medskip

\noindent
We read ``the actual number'' as $T(k,r,l,n)$, which the preceding theorem bounds from below by $\alpha^{\,n^{l}}$; the content of the conjecture is then the matching upper bound $T(k,r,l,n)\le C^{n^{l}}$ for some constant $C=C(k,r,l)$, that is, that the factor $\log n^{r-l}$ in the trivial bound is an artifact of the counting.
For graphs the corresponding assertion is immediate, since there are only $2^{\binom{n}{2}}$ graphs on $n$ labeled vertices.
Once $r>l$, the trivial count of $(r,l)$-systems with $\Theta(n^{l})$ edges carries a logarithm in the exponent, and the conjecture asserts that criticality removes it.
We show that criticality does not remove it, already for linear triple systems and $k=4$.

On the competing reading, in which ``the actual number'' counts all $(r,l)$-systems on $n$ vertices with $\Omega(n^{l})$ edges, criticality plays no role and the logarithm is present for a direct counting reason, so it is the reading above that is at issue.

Beyond the lower bound $\alpha^{\,n^{l}}$ of \citet{rodl2006color} we are aware of no further work on the growth of $T(k,r,l,n)$, and the problem is not recorded in the survey of color-critical hypergraphs of \citet{kostochka2006color}, which concerns critical systems with few edges rather than their number.

\begin{theorem}\label{count_critical:thm:main}
There is a constant $c>0$ and there are infinitely many integers $n$ such that
\[
T(4,3,2,n)\ge \exp\bigl(c\,n^{2}\log n\bigr).
\]
In particular there is no constant $C$ for which $T(4,3,2,n)\le C^{n^{2}}$ for all $n$.
\end{theorem}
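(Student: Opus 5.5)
The plan is a lifting construction followed by a counting argument. For the counting, I would fix a \emph{dense} $4$-critical graph $F$ on $N$ vertices with $\Theta(N^{2})$ edges; such graphs exist, by Toft's classical construction or by the graph case of the density theorem of \citet{rodl2006color}. Proper edge colorings $\varphi$ of $F$ with a palette of about $2N$ colors are abundant. Coloring the edges greedily in a fixed order, each edge has at least $2N-2\Delta(F)\ge cN$ admissible colors at every step, once the palette is slightly larger than $2\Delta(F)$. This gives at least $(cN)^{e(F)}=\exp(c'N^{2}\log N)$ proper edge colorings. This is the entropy the theorem is after.

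Given $\varphi$, the lift $H_\varphi$ has vertex set $V(F)\cup\{w_1,\dots,w_{2N}\}$ together with a bounded number of fixed gadget vertices. Each edge $uv$ of $F$ becomes the triple $\{u,v,w_{\varphi(uv)}\}$. Linearity is immediate. A pair $\{u,v\}$ lies in at most one triple because $F$ is simple. A pair $\{u,w_i\}$ lies in at most one triple because $\varphi$ is proper at $u$. Pairs $\{w_i,w_j\}$ lie in no lifted triple.

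The triples must then behave like the edges of $F$ in any $3$-coloring. For this I would add a fixed linear gadget $\Gamma$, attached to the $w_i$ and the vertices of $F$ by triples that share at most one vertex with each other and with the lifted triples. In every proper $3$-coloring of $H_\varphi$, $\Gamma$ should force all $w_i$ to receive a common color $0$ and all vertices of $F$ to avoid a colour that makes the triple $\{u,v,w_i\}$ automatically non-monochromatic. The intended effect is that $\{u,v,w_i\}$ is satisfied if and only if $u$ and $v$ get different colors. Then $3$-colorings of $H_\varphi$ correspond to $3$-colorings of $F$ (or to $2$-colorings of a suitable $3$-critical part), so $H_\varphi$ is not $3$-colorable by the choice of $F$. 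Criticality of a lifted triple follows from criticality of $F$: a $3$-coloring of $F-uv$ lifts, the gadget being colored as forced. For gadget triples, I would design $\Gamma$ so that each of them is individually necessary, for instance by building $\Gamma$ from several copies of a small $4$-critical linear triple system with one triple opened up at the attachment points. Finally, $H_\varphi$ has no isolated vertices, since every $w_i$ is used, the palette being chosen so that each color occurs.

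The counting step is then short. The map $\varphi\mapsto H_\varphi$ is injective on labeled systems, because $\varphi(uv)$ is the third vertex of the unique triple containing $\{u,v\}$. Each isomorphism class contains at most $n!=\exp(O(n\log n))$ labeled systems, where $n=N+2N+O(1)$. Hence
\[
T(4,3,2,n)\ge \exp\bigl(c'N^{2}\log N-O(n\log n)\bigr)\ge\exp\bigl(c\,n^{2}\log n\bigr)
\]
for every $n$ arising from such an $F$, which gives infinitely many $n$.

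I expect the main obstacle to be the gadget. It must force the $w_i$ to a common color while keeping every one of its own triples critical. It must also not interact with the lifted triples in a way that either creates a $3$-coloring or destroys linearity; note that no pair $\{w_i,w_j\}$ may lie in two triples. My first attempt would be a ``equality chain'': for consecutive $w_i,w_{i+1}$, a small linear configuration that is $3$-colorable precisely when they share a color. The non-$3$-colorability would then come from $F$ via a distinguished vertex joined to the color vertices, and minimality would be checked triple by triple.
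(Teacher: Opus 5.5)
\paragraph{The lift does not encode the edges of $F$.}
The gap is at the step you call the intended effect. Once every $w_i$ receives the common color $0$, the triple $\{u,v,w_{\varphi(uv)}\}$ is monochromatic exactly when $\kappa(u)=\kappa(v)=0$. So the lifted triples only say that the vertices of $F$ colored $0$ form an independent set of $F$.

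No restriction on the colors available on $V(F)$ turns this into $\kappa(u)\neq\kappa(v)$. If some color $c\neq0$ is available, the pair $(c,c)$ passes. If only $0$ is available, every lifted triple fails at once. The $2$-coloring variant has the same problem.

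Your own criticality step exhibits the tension. A $3$-coloring of $F-uv$ must give $u$ and $v$ the same color, since $F$ is $4$-chromatic. If the gadget accepts its lift with that common color different from $0$, the triple over $uv$ is satisfied too, and $H_\varphi$ is $3$-colorable. So $3$-colorings of $H_\varphi$ do not correspond to $3$-colorings of $F$, and $4$-criticality of $F$ gives you nothing. Adding a second triple $\{u,v,w'\}$ with a palette vertex of another color is not an option, since $\{u,v\}$ would then lie in two triples.

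The missing idea, which is what the paper does, is a one-hot encoding:
\begin{itemize}
\item take three copies $v^1,v^2,v^3$ of each vertex of $F$;
\item attach a gadget forcing those three copies to receive three distinct colors;
\item put the three triples $\{u^i,v^i,w_{\beta(uv)}\}$, $i\in[3]$, over each edge $uv$.
\end{itemize}
Then each $v$ has a unique index $\phi(v)$ such that $v^{\phi(v)}$ carries the palette color, and the triples over $uv$ say exactly $\phi(u)\neq\phi(v)$. Linearity survives because a pair $\{v^i,w_j\}$ determines both the edge (as $\beta$ is proper) and the layer $i$.

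\paragraph{The gadget-criticality obstacle can be avoided.}
You do not need to make every gadget triple individually necessary; the paper sidesteps this entirely. Instead of proving the whole lift $J_\beta$ critical, take an edge-minimal non-$3$-colorable subsystem $K_\beta$ and delete its isolated vertices. This is automatically $4$-critical.

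Since $F-uv$ has a proper $3$-coloring $\phi$, you can $3$-color $J_\beta$ minus the three triples over $uv$: give the palette color $1$, give color $1$ to $z^{\phi(z)}$ for each $z$, and extend over the gadgets using their converse (extension) properties. Hence $K_\beta$ keeps some triple $\{u^i,v^i,w_j\}$ over every edge $uv$. Because $\{u^i,v^i\}$ lies only in triples over $uv$, that triple reveals $j=\beta(uv)$. This makes $\beta\mapsto K_\beta$ injective with no analysis of gadget triples at all.

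The price is that the $K_\beta$ may have different numbers of vertices. The paper pigeonholes over that number $q$ and bounds it below using $e(K_\beta)\ge e(F)$ together with the linear-system bound $e\le\binom{q}{2}/3$.

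\paragraph{Minor points.}
A palette of about $2N$ colors does not give $cN$ free colors at each greedy step when $\Delta(F)$ is close to $N$. Use, say, $4N$ colors, which leaves at least $2N$ free at every step. Also, the equality chain you describe uses $\Theta(N)$ gadget vertices, not a bounded number; this is harmless, since $n$ remains $\Theta(N)$.
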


We now summarize the construction.
Fix a $4$-critical graph $G$ on $m$ vertices with $\Omega(m^{2})$ edges, which exists by a theorem of Toft, and fix a palette $[t]$ with $t=4m$.
Every proper edge coloring $\beta\colon E(G)\to[t]$ is turned into a linear triple system $J_\beta$ on a vertex set that does not depend on $\beta$: take three copies $v^{1},v^{2},v^{3}$ of each vertex $v$ of $G$ together with one \emph{palette vertex} $w_j$ for each $j\in[t]$, and place over each edge $uv$ of $G$ the three triples $\{u^{i},v^{i},w_{\beta(uv)}\}$.
Two forcing gadgets of \citet{rodl2006color} are then attached: one chains the palette vertices so that they all receive a common color, and one ties $v^{1},v^{2},v^{3}$ together so that they receive three distinct colors.
A proper $3$-coloring of $J_\beta$ would therefore select, for each $v$, the unique copy $v^{i}$ carrying the palette color, and $v\mapsto i$ would be a proper $3$-coloring of $G$.
The key point is that a minimal non-$3$-colorable subsystem $K_\beta$ of $J_\beta$ must retain a triple over every edge of $G$, since $G$ is $4$-critical; that triple exhibits the value $\beta(uv)$, so $\beta\mapsto K_\beta$ is injective on labeled systems.
There are $\exp(\Omega(m^{2}\log m))$ choices of $\beta$, while $J_\beta$ has only $O(m)$ vertices, so passing to isomorphism classes costs a factor $\exp(O(m\log m))$ and the bound survives.

\subsubsection{The construction}

We use two known ingredients.
The first is a theorem of \citet{toft1970maximal}: there is a constant $a>0$ such that for every sufficiently large $m$ there is a $4$-critical graph on $m$ vertices with at least $a\,m^{2}$ edges.
As recorded by \citet{rodl2006color}, one may take $a=1/16$ here.

The second is the pair of forcing gadgets of \citet{rodl2006color}.
We recall only the properties used below.

\begin{lemma}[Forcing gadgets]\label{count_critical:lem:gadgets}
There are fixed finite linear triple systems $S$ and $D$ with the following properties.
\begin{enumerate}
\item $S$ has two designated vertices $x,y$, and no edge of $S$ contains both of them.
In every proper $3$-coloring of $S$ the vertices $x,y$ receive the same color, and conversely every assignment of a common color to $x$ and $y$ extends to a proper $3$-coloring of $S$.
\item $D$ has three designated vertices $u_1,u_2,u_3$.
In every proper $3$-coloring of $D$ these three vertices receive three distinct colors, and conversely every assignment of three distinct colors to $u_1,u_2,u_3$ extends to a proper $3$-coloring of $D$.
\end{enumerate}
\end{lemma}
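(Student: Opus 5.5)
The plan is to build both gadgets from a single \emph{linear} $4$-critical triple system $K$, meaning a $4$-chromatic linear triple system without isolated vertices in which deleting any edge leaves a $3$-colorable system. To obtain $K$, I would start from any $4$-chromatic linear triple system. Such systems exist by the classical high-girth constructions of Erd\H{o}s--Hajnal and Lov\'asz, and also by the dense $k$-critical $(r,l)$-systems of \citet{rodl2006color} with $(k,r,l)=(4,3,2)$. I would then pass to an edge-minimal $4$-chromatic subsystem and discard isolated vertices. Linearity is inherited by subsystems, so $K$ is as required.

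\emph{Construction of $S$.} Fix an edge $e=\{a,b,c\}$ of $K$ and set $S=K-e$, with designated vertices $x=a$ and $y=b$.
\begin{itemize}
\item Since $K$ is linear, $e$ is the only edge of $K$ containing the pair $\{a,b\}$. Hence no edge of $S$ contains both $x$ and $y$.
\item By criticality, $S$ is $3$-colorable.
\item In any proper $3$-coloring of $S$, the edge $e$ must be monochromatic, since otherwise $K$ would be $3$-colorable. In particular $x$ and $y$ receive the same color.
\item For the converse, take one proper $3$-coloring of $S$. Permuting the three colors moves the common color of $x,y$ to any prescribed color, so every assignment of a common color to $x$ and $y$ extends.
\end{itemize}

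\emph{Construction of $D$.} I would build $D$ from vertex-disjoint copies of $S$, used as ``equality wires'', glued only along their designated vertices. Start with $u_1,u_2,u_3$. For each pair $\{i,j\}$, add three fresh vertices $p_{ij},q_{ij},r_{ij}$ and the triple $\{p_{ij},q_{ij},r_{ij}\}$. Then attach copies of $S$ that identify
\begin{itemize}
\item $(x,y)$ with $(u_i,p_{ij})$,
\item $(x,y)$ with $(u_j,q_{ij})$,
\item $(x,y)$ with $(u_j,r_{ij})$.
\end{itemize}
In any proper $3$-coloring of $D$, the vertex $p_{ij}$ gets the color of $u_i$, and $q_{ij},r_{ij}$ get the color of $u_j$. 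So the new triple is monochromatic exactly when $u_i$ and $u_j$ share a color, which forces $u_1,u_2,u_3$ to receive three distinct colors.

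For the converse, suppose $u_1,u_2,u_3$ are given distinct colors. Color each $p_{ij},q_{ij},r_{ij}$ as dictated by its wire. Each new triple then uses two colors, so it is not monochromatic. Finally, extend each copy of $S$ independently using part~1, which is possible because each copy meets the rest of $D$ only in its two designated vertices and those already carry a common color.

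The step needing care is linearity of $D$, which I would check pair by pair.
\begin{itemize}
\item Pairs inside one copy of $S$ lie in at most one edge, since $S$ is linear.
\item Different copies share at most designated vertices. Two copies sharing two vertices would need both designated vertices identified, which never happens, since each wire joins a distinct pair of vertices.
\item A pair consisting of the two designated vertices of one copy lies in no edge of that copy, by part~1. Such a pair also lies in no new triple, since each new triple consists of fresh vertices lying in different wires.
\item Two new triples are disjoint from each other.
\end{itemize}

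I expect the main obstacle to be the existence of the linear $4$-chromatic triple system rather than the gadget assembly. I would simply cite the high-girth constructions, or Theorem~6.1's dense critical systems in \citet{rodl2006color}, for it.
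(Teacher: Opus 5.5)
Your proof is correct, and for $S$ it matches the paper's. The paper takes $S=S(3,3)$ from Construction~3.1 of R\"odl--Siggers, which is a $4$-critical linear triple system with an edge through $x,y$ removed (and possibly further edges deleted). It gets the ``no edge through both'' property from linearity, as you do, and it gets the converse by permuting colors.

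The difference is $D$. The paper does not build $D$. It cites $D(3,3)$ from the same construction for the forcing property and for linearity, and proves the converse by composing one fixed proper $3$-coloring with a permutation of $[3]$. You instead assemble $D$ explicitly:
\begin{itemize}
\item nine copies of $S$ serve as equality wires, glued only at their designated pairs, which are pairwise distinct and so meet in at most one vertex;
\item one triple $\{p_{ij},q_{ij},r_{ij}\}$ is added for each pair $\{i,j\}$;
\item linearity is checked pair by pair, using exactly the condition in part~1 that no edge of $S$ contains both designated vertices;
\item the converse is proved directly, by extending each wire independently via part~1.
\end{itemize}

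Your route is self-contained except for one input, the existence of a $4$-chromatic linear triple system. That follows from the high-girth hypergraphs of Erd\H{o}s--Hajnal or Lov\'asz, or from Abbott--Hare as the paper cites. It also shows why the side condition in part~1 matters: it is what keeps the glued system linear, just as it does for the palette chain in $J_\beta$. The paper's version is shorter, but it leaves the forcing and linearity of $D(3,3)$ to the cited construction.
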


\begin{proof}
Take $S=S(3,3)$ and $D=D(3,3)$ from \citet[Construction~3.1]{rodl2006color}.
Both are $3$-chromatic $(3,2)$-systems, so each has at least one proper $3$-coloring, and the two forward implications are exactly the forcing properties for which those gadgets are built.
That no edge of $S$ contains both designated vertices is noted in the proof of that construction: $S(3,3)$ is obtained by deleting edges from the system left when one edge through $x$ and $y$ is removed from a $4$-critical linear triple system, and such a system exists by \citet[Theorem~1]{abbott1978existence}.

For the converse statements, fix a proper $3$-coloring $\psi_0$ of $S$.
Then $\psi_0(x)=\psi_0(y)$, and given any target color $\gamma$ we may compose $\psi_0$ with a permutation of $[3]$ carrying $\psi_0(x)$ to $\gamma$.
Likewise fix a proper $3$-coloring $\psi_0$ of $D$; then $\psi_0(u_1),\psi_0(u_2),\psi_0(u_3)$ are distinct, and given distinct target colors $\gamma_1,\gamma_2,\gamma_3$ the assignment $\psi_0(u_i)\mapsto\gamma_i$ is a well-defined permutation of $[3]$, with which we compose $\psi_0$.
\end{proof}

Fix from now on an integer $m$ large enough for Toft's theorem, and let $G$ be a $4$-critical graph on $m$ vertices with
\[
e(G)\ge a\,m^{2}.
\]
Set $t=4m$, and let $\cA$ be the set of proper edge colorings $\beta\colon E(G)\to[t]$, that is, of maps assigning distinct colors to any two edges sharing an endpoint.

\begin{lemma}\label{count_critical:lem:colorings}
$|\cA|\ge (2m)^{e(G)}$.
\end{lemma}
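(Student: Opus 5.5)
We will count proper edge colorings greedily, coloring the edges of $G$ one at a time in an arbitrary fixed order $e_1,e_2,\dots,e_{e(G)}$ and bounding from below the number of admissible colors available at each step. The product of these lower bounds is then a lower bound for $|\cA|$, since distinct sequences of choices give distinct maps $\beta\colon E(G)\to[t]$, and every sequence produced this way is a proper edge coloring by construction.

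Concretely, when we come to color the edge $e_i=uv$, the only constraints are that its color must differ from the colors already assigned to previously colored edges sharing the endpoint $u$ or the endpoint $v$. Since $G$ is a simple graph on $m$ vertices, each vertex has degree at most $m-1$. So at most $m-2$ other edges meet $u$, and at most $m-2$ other edges meet $v$. Hence at most $2(m-2)<2m$ colors are forbidden for $e_i$. Out of the palette $[t]=[4m]$, at least
\[
4m-2(m-2)=2m+4\ \ge\ 2m
\]
colors therefore remain available at every step, regardless of the earlier choices.

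Multiplying over all $e(G)$ edges gives at least $(2m)^{e(G)}$ distinct complete sequences of choices. Each sequence yields a distinct proper edge coloring, since the coloring records the choice made at every edge. This gives $|\cA|\ge(2m)^{e(G)}$.

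There is no real obstacle here. The only point needing care is that the bound on available colors is uniform over the earlier choices, which holds because the degree bound $m-1$ is used rather than any dependence on the partial coloring. The choice $t=4m$ was made precisely to leave this slack.
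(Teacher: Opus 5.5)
Your proposal is correct and is essentially the paper's own argument: color $E(G)$ greedily in a fixed order, note that at most $(\deg u-1)+(\deg v-1)\le 2m-4$ colors are forbidden at each step, so at least $2m+4\ge 2m$ colors of $[4m]$ remain regardless of earlier choices. Since the edge order is fixed, distinct choice sequences give distinct colorings, which yields $|\cA|\ge(2m)^{e(G)}$.
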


\begin{proof}
Order $E(G)$ arbitrarily and color greedily.
When the edge $uv$ is reached, the colors already used on edges meeting $uv$ number at most
\[
(\deg u-1)+(\deg v-1)\le 2m-4 ,
\]
since every degree is at most $m-1$.
As $t=4m$, at least $2m+4$ colors are available at every step.
Distinct sequences of choices produce distinct colorings, since the edge order is fixed, so $|\cA|\ge(2m)^{e(G)}$.
\end{proof}

We now define the ambient vertex set.
Let
\[
V^{\ast}=\{v^{1},v^{2},v^{3}:v\in V(G)\}\cup\{w_1,\dots,w_t\}\cup W,
\]
where $W$ consists of $t-1$ disjoint sets of fresh internal vertices, one for a copy of $S$ attached to each consecutive pair $w_j,w_{j+1}$, together with $m$ further disjoint sets of fresh internal vertices, one for a copy of $D$ attached to each triple $v^{1},v^{2},v^{3}$.
The set $V^{\ast}$ does not depend on $\beta$.

For $\beta\in\cA$ let $J_\beta$ be the hypergraph on $V^{\ast}$ whose edges are the following.
For every $uv\in E(G)$ and every $i\in[3]$ there is the \emph{lifted triple}
\begin{equation}\label{count_critical:eq:lift}
\{u^{i},v^{i},w_{\beta(uv)}\}.
\end{equation}
For every $j\in[t-1]$ there are the edges of a copy of $S$ whose designated vertices are identified with $w_j$ and $w_{j+1}$ and whose remaining vertices are the corresponding fresh set in $W$.
For every $v\in V(G)$ there are the edges of a copy of $D$ whose designated vertices are identified with $v^{1},v^{2},v^{3}$ and whose remaining vertices are the corresponding fresh set in $W$.
Figure~\ref{count_critical:fig:lift} shows the three pieces.

\begin{figure}[t]
\centering
\begin{tikzpicture}[
  pt/.style={circle,fill=black,inner sep=1.6pt},
  gad/.style={rectangle,draw,rounded corners=2pt,fill=orange!15,inner sep=3pt,font=\small},
  cptn/.style={font=\small,align=center}
]
\begin{scope}
  \coordinate (u1) at (0,2.0);
  \coordinate (v1) at (1.5,2.0);
  \coordinate (u2) at (0,1.0);
  \coordinate (v2) at (1.5,1.0);
  \coordinate (u3) at (0,0.0);
  \coordinate (v3) at (1.5,0.0);
  \coordinate (wa) at (3.5,1.0);
  \draw[blue!65!black,line width=0.8pt] (u1) -- (v1) -- (wa) -- cycle;
  \draw[orange!70!black,line width=0.8pt] (u2) -- (v2) -- (wa) -- cycle;
  \draw[green!45!black,line width=0.8pt] (u3) -- (v3) -- (wa) -- cycle;
  \node[pt,label=left:{\small $u^{1}$}] at (u1) {};
  \node[pt,label=left:{\small $u^{2}$}] at (u2) {};
  \node[pt,label=left:{\small $u^{3}$}] at (u3) {};
  \node[pt,label=above:{\small $v^{1}$}] at (v1) {};
  \node[pt,label=above:{\small $v^{2}$}] at (v2) {};
  \node[pt,label=below:{\small $v^{3}$}] at (v3) {};
  \node[pt,label=right:{\small $w_{\beta(uv)}$}] at (wa) {};
  \node[cptn] at (1.6,-0.95) {the three triples over $uv$};
\end{scope}

\begin{scope}[xshift=6.8cm,yshift=1.6cm]
  \coordinate (w1) at (0,0);
  \coordinate (w2) at (1.5,0);
  \coordinate (w3) at (3.0,0);
  \coordinate (w4) at (4.7,0);
  \draw[line width=0.8pt] (w1) -- (w2) -- (w3);
  \draw[line width=0.8pt,dotted] (w3) -- (4.0,0);
  \draw[line width=0.8pt] (4.0,0) -- (w4);
  \node[gad] at (0.75,0) {$S$};
  \node[gad] at (2.25,0) {$S$};
  \node[gad] at (4.35,0) {$S$};
  \node[pt,label=below:{\small $w_1$}] at (w1) {};
  \node[pt,label=below:{\small $w_2$}] at (w2) {};
  \node[pt,label=below:{\small $w_3$}] at (w3) {};
  \node[pt,label=below:{\small $w_t$}] at (w4) {};
  \node[cptn] at (2.35,0.85) {one common color on the palette};
\end{scope}

\begin{scope}[xshift=8.0cm,yshift=-1.1cm]
  \coordinate (d1) at (0,0.7);
  \coordinate (d2) at (0,0.0);
  \coordinate (d3) at (0,-0.7);
  \coordinate (dc) at (1.5,0.0);
  \draw[line width=0.8pt] (d1) -- (dc);
  \draw[line width=0.8pt] (d2) -- (dc);
  \draw[line width=0.8pt] (d3) -- (dc);
  \node[gad] at (dc) {$D$};
  \node[pt,label=left:{\small $v^{1}$}] at (d1) {};
  \node[pt,label=left:{\small $v^{2}$}] at (d2) {};
  \node[pt,label=left:{\small $v^{3}$}] at (d3) {};
  \node[cptn] at (0.7,-1.35) {three distinct colors on the copies of $v$};
\end{scope}
\end{tikzpicture}
\caption{The three ingredients of $J_\beta$.
On the left, the three lifted triples \eqref{count_critical:eq:lift} sitting over one edge $uv$ of $G$, drawn as triangles sharing the palette vertex $w_{\beta(uv)}$.
On the upper right, the chain of copies of $S$ that forces $w_1,\dots,w_t$ to receive a single common color.
On the lower right, the copy of $D$ that forces the three copies of a vertex $v$ to receive three distinct colors.}
\label{count_critical:fig:lift}
\end{figure}

\begin{lemma}\label{count_critical:lem:linear}
For every $\beta\in\cA$ the hypergraph $J_\beta$ is a linear triple system, and $|V^{\ast}|\le C_0m$ for a constant $C_0$ depending only on $S$ and $D$.
\end{lemma}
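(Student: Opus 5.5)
The plan is to check that $J_\beta$ is $3$-uniform and that any two of its distinct edges share at most one vertex. I will split the edges into three kinds: lifted triples, edges of a copy of $S$, and edges of a copy of $D$. Uniformity is immediate, since lifted triples have three distinct vertices ($u\ne v$ and the palette vertex is a different kind of vertex) and $S$, $D$ are triple systems by Lemma~\ref{count_critical:lem:gadgets}. For linearity I will fix a pair $\{x,y\}$ of vertices of $V^\ast$ and show it lies in at most one edge, casing on which kinds of edges could contain it.

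\emph{Two lifted triples.} Suppose $\{u^i,v^i,w_{\beta(uv)}\}$ and $\{u'^{i'},v'^{i'},w_{\beta(u'v')}\}$ share two vertices. If both shared vertices are copy vertices, then $i=i'$ and $\{u,v\}=\{u',v'\}$, so the triples coincide. Otherwise one shared vertex is the palette vertex, so $\beta(uv)=\beta(u'v')$, and the other is a copy vertex, say $u^i$ (up to renaming). Then $i=i'$ and $uv$, $u'v'$ are edges of $G$ that share the endpoint $u$ and have the same color. Because $\beta$ is proper, $uv=u'v'$, and again the triples coincide. This step is where properness of $\beta$ is used.

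\emph{A gadget edge against a lifted triple, or two gadget edges.} Each copy of $S$ or $D$ meets the rest of $J_\beta$ only in its designated vertices, since its internal vertices are fresh. A copy of $S$ at $w_j,w_{j+1}$ contains no lifted-triple pair, because a lifted triple holds exactly one palette vertex. A copy of $D$ at $v^1,v^2,v^3$ shares at most one vertex with any lifted triple, because a lifted triple contains only one copy of $v$. Two distinct gadget copies share at most one vertex: consecutive $S$-copies share one palette vertex, $D$-copies are disjoint from each other and from the $S$-copies, and non-consecutive $S$-copies are disjoint. Within a single copy, linearity holds because $S$ and $D$ are linear.

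The vertex count follows directly. We have $|V^\ast|=3m+t+(t-1)(|V(S)|-2)+m(|V(D)|-3)$, and with $t=4m$ this is at most $C_0m$ for $C_0=7+4|V(S)|+|V(D)|$. I expect no real obstacle here. The only point needing care is the mixed case of two lifted triples, which I will handle through properness as above.
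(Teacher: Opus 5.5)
Your proof is correct and follows essentially the same route as the paper: the same vertex count (your $C_0=7+4|V(S)|+|V(D)|$ is looser than the paper's $4|V(S)|+|V(D)|-4$ but valid), and the same case analysis. You organize it by pairs of edge types rather than by types of vertex pairs, and you use properness of $\beta$ exactly where the paper does, for a pair $\{v^i,w_j\}$. You also correctly do without the property that no edge of $S$ contains both designated vertices, which the paper invokes for pairs of palette vertices but which linearity does not need, since $\{w_j,w_{j+1}\}$ lies inside only one gadget copy.
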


\begin{proof}
Write $s=|V(S)|$ and $d_0=|V(D)|$.
Counting the vertices of $V^{\ast}$ layer by layer,
\[
|V^{\ast}|=3m+t+(t-1)(s-2)+m(d_0-3)\le (4s+d_0-4)\,m ,
\]
since $t=4m$; take $C_0=4s+d_0-4$.

Every edge of $J_\beta$ has three vertices, so it remains to check that no pair of vertices lies in two edges.
Consider first a pair of the form $\{u^{i},v^{i}\}$ with $u\ne v$.
No edge of a gadget copy contains copies of two distinct vertices of $G$, and a lifted triple containing both $u^{i}$ and $v^{i}$ lies over the edge $uv$ in layer $i$, so at most one edge contains the pair.
Next consider a pair $\{v^{i},w_j\}$.
Gadget copies contribute no such edge, and a lifted triple containing this pair lies over an edge of $G$ incident to $v$ and colored $j$; since $\beta$ is proper there is at most one such edge, and the layer $i$ is determined as well.
A pair $\{w_j,w_{j'}\}$ lies in no lifted triple, and it lies in no gadget edge either, because the copies of $S$ meet each other only in single palette vertices and no edge of $S$ contains both designated vertices.
A pair $\{v^{i},v^{i'}\}$ with $i\ne i'$ lies in no lifted triple, and the only gadget copy containing it is the copy of $D$ attached to $v$.
A pair $\{u^{i},v^{i'}\}$ with $u\ne v$ and $i\ne i'$ lies in no edge at all: every lifted triple uses a single layer, and no gadget edge contains copies of two distinct vertices of $G$.
Every remaining pair contains a fresh internal vertex, which lies in a single gadget copy, so every edge containing the pair is an edge of that copy; each copy is itself a linear triple system, so there is at most one such edge.
Hence no pair of vertices lies in two edges of $J_\beta$.
\end{proof}

\begin{lemma}\label{count_critical:lem:nothree}
For every $\beta\in\cA$ the hypergraph $J_\beta$ is not $3$-colorable.
\end{lemma}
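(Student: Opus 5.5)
The plan is to argue by contradiction: from a proper $3$-coloring of $J_\beta$ we extract a proper $3$-coloring of $G$, which cannot exist because $G$ is $4$-critical and hence $4$-chromatic. So suppose $\psi\colon V^{\ast}\to[3]$ is a proper $3$-coloring of $J_\beta$. The first step uses the palette chain. For each $j\in[t-1]$, the restriction of $\psi$ to the copy of $S$ attached to $w_j,w_{j+1}$ is a proper $3$-coloring of $S$. Part~1 of Lemma~\ref{count_critical:lem:gadgets} then gives $\psi(w_j)=\psi(w_{j+1})$. Chaining these equalities along $j=1,\dots,t-1$, all palette vertices receive one common color $\gamma$.

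The second step uses the copies of $D$. For each $v\in V(G)$, the restriction of $\psi$ to the copy of $D$ at $v^{1},v^{2},v^{3}$ is proper. By part~2 of Lemma~\ref{count_critical:lem:gadgets}, the colors $\psi(v^{1}),\psi(v^{2}),\psi(v^{3})$ are distinct, so they exhaust $[3]$. Hence there is exactly one index $c(v)\in[3]$ with $\psi(v^{c(v)})=\gamma$.

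The third step checks that $c\colon V(G)\to[3]$ is a proper coloring of $G$. Suppose some edge $uv\in E(G)$ has $c(u)=c(v)=i$. Consider the lifted triple $\{u^{i},v^{i},w_{\beta(uv)}\}$ from \eqref{count_critical:eq:lift}. Its three vertices all have color $\gamma$, so it is monochromatic, contradicting the properness of $\psi$. Therefore $c$ is a proper $3$-coloring of $G$, whereas $G$ is $4$-chromatic. This contradiction shows $J_\beta$ has no proper $3$-coloring.

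No step is a serious obstacle. The only care needed is that restricting $\psi$ to a gadget copy yields a proper coloring of that gadget. This holds because each copy's edges are edges of $J_\beta$. The gadgets' internal vertices are fresh, so the copy is an isomorphic image of $S$ or $D$ with its designated vertices identified as stated.
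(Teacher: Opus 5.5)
Your proof is correct and follows the paper's argument essentially verbatim: the chain of copies of $S$ forces one common palette color, the copies of $D$ single out a unique layer index per vertex of $G$, and a monochromatic lifted triple rules out any edge of $G$ receiving equal indices, contradicting $\chi(G)=4$. The only cosmetic difference is that the paper relabels the common palette color to $1$, where you keep it as $\gamma$.
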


\begin{proof}
Suppose $\psi$ is a proper $3$-coloring of $J_\beta$.
The copies of $S$ give $\psi(w_j)=\psi(w_{j+1})$ for every $j\in[t-1]$ by Lemma~\ref{count_critical:lem:gadgets}, so
\[
\psi(w_1)=\psi(w_2)=\cdots=\psi(w_t),
\]
and after relabeling the colors we may assume this common value is $1$.
For each $v\in V(G)$ the copy of $D$ attached to $v^{1},v^{2},v^{3}$ forces those three vertices to receive three distinct colors, so there is a unique index $\phi(v)\in[3]$ with $\psi(v^{\phi(v)})=1$.

Let $uv\in E(G)$ and suppose $\phi(u)=\phi(v)=i$.
Then the lifted triple $\{u^{i},v^{i},w_{\beta(uv)}\}$ is monochromatic of color $1$, a contradiction.
Hence $\phi$ is a proper $3$-coloring of $G$, which is impossible because $G$ is $4$-chromatic.
\end{proof}

\subsubsection{Recovering the edge coloring}

For $uv\in E(G)$ write
\[
L_{uv}=\bigl\{\{u^{i},v^{i},w_{\beta(uv)}\}:i\in[3]\bigr\}
\]
for the set of three lifted triples over $uv$.
These sets are pairwise disjoint as $uv$ ranges over $E(G)$.

\begin{lemma}\label{count_critical:lem:retain}
Let $K\subseteq J_\beta$ be a subhypergraph that is not $3$-colorable.
Then $K$ contains a triple from $L_{uv}$ for every $uv\in E(G)$.
\end{lemma}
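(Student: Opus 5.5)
We argue by contraposition. Suppose that for some edge $uv\in E(G)$ the subhypergraph $K$ contains none of the three triples in $L_{uv}$. Then $K$ is a subhypergraph of $J_\beta-L_{uv}$, and a proper $3$-coloring of the latter restricts to one of $K$. It therefore suffices to construct a proper $3$-coloring $\psi$ of $J_\beta-L_{uv}$, and we build it by running the argument of Lemma~\ref{count_critical:lem:nothree} in reverse.

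First, since $G$ is $4$-critical, the graph $G-uv$ has a proper $3$-coloring $\phi\colon V(G)\to[3]$. Next we fix the colors of the distinguished vertices:
\[
\psi(w_j)=1 \quad (j\in[t]),\qquad \psi(v'^{\,i})=\pi_{v'}(i)\quad (v'\in V(G),\ i\in[3]),
\]
where $\pi_{v'}$ is any permutation of $[3]$ with $\pi_{v'}(\phi(v'))=1$. Thus the copies of each vertex of $G$ receive three distinct colors, and the copy carrying color $1$ is the one with index $\phi(v')$.

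Now we check the lifted triples that remain in $J_\beta-L_{uv}$. Such a triple has the form $\{x^i,y^i,w_{\beta(xy)}\}$ for some edge $xy\neq uv$ of $G$. Its palette vertex has color $1$. If the triple were monochromatic, we would have $\pi_x(i)=\pi_y(i)=1$, so $\phi(x)=i=\phi(y)$. This is impossible, because $xy$ is an edge of $G-uv$ and $\phi$ is proper on that graph. So no remaining lifted triple is monochromatic.

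Finally we extend $\psi$ over the gadgets. Each copy of $S$ has both of its designated vertices colored $1$, and each copy of $D$ has its three designated vertices colored distinctly. By the converse parts of Lemma~\ref{count_critical:lem:gadgets}, each copy extends to a proper $3$-coloring of its own edges. The copies use pairwise disjoint sets of internal vertices from $W$ and meet the rest of $J_\beta$ only in designated vertices, which are already colored. Hence the extensions can be made independently without conflict.

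The resulting $\psi$ is a proper $3$-coloring of $J_\beta-L_{uv}$, and therefore of $K$, which contradicts the hypothesis that $K$ is not $3$-colorable.

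The one point that needs care is that all three lifted triples over $uv$ must be absent. Under $\phi$ the endpoints $u,v$ may receive the same color $i$, in which case the triple in layer $i$ over $uv$ would be monochromatic. The contrapositive hypothesis removes exactly these triples, so the argument goes through.
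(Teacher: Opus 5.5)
Your proof is correct and is essentially the paper's argument: you $3$-color $G-uv$, give every palette vertex color $1$, give color $1$ to the copy $z^{\phi(z)}$, check that no surviving lifted triple is monochromatic, and extend over the gadget copies by Lemma~\ref{count_critical:lem:gadgets}. One small sharpening of your closing remark: since $G$ is $4$-chromatic, $\phi(u)=\phi(v)$ is forced rather than merely possible, so the triple over $uv$ in layer $\phi(u)$ is always monochromatic.
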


\begin{proof}
Suppose instead that $K$ contains no triple of $L_{uv}$ for some $uv\in E(G)$, so that $K$ is a subhypergraph of $J_\beta-L_{uv}$.
We exhibit a proper $3$-coloring of $J_\beta-L_{uv}$, which restricts to one of $K$.
Since $G$ is $4$-critical, $G-uv$ has a proper $3$-coloring $\phi\colon V(G)\to[3]$.
Give every palette vertex $w_j$ the color $1$, and for each $z\in V(G)$ color $z^{1},z^{2},z^{3}$ with the three colors in such a way that $z^{\phi(z)}$ receives the color $1$.

Let $\{x^{i},y^{i},w_{\beta(xy)}\}$ be a retained lifted triple, so $xy\in E(G)$ and $xy\ne uv$.
If this triple were monochromatic then, as $w_{\beta(xy)}$ has color $1$, we would have $\phi(x)=\phi(y)=i$, contradicting $\phi(x)\ne\phi(y)$.
So no retained lifted triple is monochromatic.
On each copy of $S$ the two designated vertices have received the same color, and on each copy of $D$ the three designated vertices have received three distinct colors, so by Lemma~\ref{count_critical:lem:gadgets} the coloring extends over the fresh internal vertices of every gadget copy.
Thus $J_\beta-L_{uv}$ is $3$-colorable, contradicting the hypothesis on $K$.
\end{proof}

For each $\beta\in\cA$ choose an edge-minimal non-$3$-colorable subhypergraph of $J_\beta$ and delete its isolated vertices; call the result $K_\beta$.
Such a subhypergraph exists by Lemma~\ref{count_critical:lem:nothree}.

\begin{lemma}\label{count_critical:lem:critical}
Each $K_\beta$ is a $4$-critical linear triple system with at least $e(G)$ edges.
Moreover the map $\beta\mapsto K_\beta$ is injective.
\end{lemma}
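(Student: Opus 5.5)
The proof has three parts: $K_\beta$ is $4$-critical, it has many edges, and $\beta$ can be read back from it.

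\emph{Criticality.} $K_\beta$ is obtained from a subhypergraph of $J_\beta$ by deleting isolated vertices. So it is a linear triple system by Lemma~\ref{count_critical:lem:linear}, and it has no isolated vertices by construction. It is not $3$-colorable, and by edge-minimality $K_\beta-e$ is $3$-colorable for every edge $e$. To see that $K_\beta$ is $4$-chromatic, I will exhibit a proper $4$-coloring of $J_\beta$ itself and restrict it. Take a proper $3$-coloring of $J_\beta-L_{uv}$ for any edge $uv$, which exists by the proof of Lemma~\ref{count_critical:lem:retain}. Then recolor the palette vertex $w_{\beta(uv)}$ with a fourth color. This keeps every edge non-monochromatic, since no edge can be monochromatic in the new color. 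Alternatively, simply take $K_\beta-e$ with its proper $3$-coloring and give one vertex of $e$ a fourth color. Either way $\chi(K_\beta)=4$, and $K_\beta$ is $4$-critical.

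\emph{Edge count.} By Lemma~\ref{count_critical:lem:retain}, $K_\beta$ contains a triple of $L_{uv}$ for every $uv\in E(G)$. The sets $L_{uv}$ are pairwise disjoint, so $e(K_\beta)\ge e(G)$.

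\emph{Injectivity.} Since all $J_\beta$ share the vertex set $V^{\ast}$, each $K_\beta$ is a labeled hypergraph on a subset of $V^{\ast}$, and I claim $\beta$ can be recovered from it. For each $uv\in E(G)$, Lemma~\ref{count_critical:lem:retain} gives some $i$ with $\{u^{i},v^{i},w_{\beta(uv)}\}\in K_\beta$. I must check that the only edges of $K_\beta$ containing a pair $\{u^{i},v^{i}\}$ are lifted triples. This holds because gadget copies never contain copies of two distinct vertices of $G$, which is already used in the proof of Lemma~\ref{count_critical:lem:linear}. Hence $\beta(uv)$ is the index $j$ of the palette vertex in any edge of $K_\beta$ containing $\{u^{i},v^{i}\}$ for some $i$. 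Linearity makes this edge unique for each $i$, and all such edges agree on $j$, because every lifted triple over $uv$ carries the same palette vertex. So $\beta$ is a function of $K_\beta$, and distinct $\beta$ give distinct $K_\beta$.

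The main obstacle is the precise sense of injectivity: it is injectivity as labeled hypergraphs on $V^{\ast}$, not up to isomorphism. The later counting of isomorphism classes will divide by at most $|V^{\ast}|!$, and that is not part of this lemma. A second point to handle carefully is that deleting isolated vertices changes only the vertex set and not the edge set, so the recovery argument is unaffected.
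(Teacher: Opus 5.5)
Your proposal is correct and follows essentially the same route as the paper. Criticality comes from edge-minimality together with the one-vertex recoloring that makes $K_\beta$ $4$-colorable; your first variant, recoloring $w_{\beta(uv)}$ in the $3$-coloring of $J_\beta-L_{uv}$, also works, because every edge of $L_{uv}$ contains that vertex. The edge bound comes from Lemma~\ref{count_critical:lem:retain} and the disjointness of the $L_{uv}$, and injectivity comes from reading $\beta(uv)$ off the palette vertex of a retained triple through $\{u^i,v^i\}$, since only lifted triples over $uv$ contain that pair.
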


\begin{proof}
Linearity is inherited from $J_\beta$ by Lemma~\ref{count_critical:lem:linear}.
By the choice of $K_\beta$ it is not $3$-colorable while $K_\beta-e$ is $3$-colorable for every edge $e$, and it has no isolated vertices.
It is $4$-colorable: delete an edge $e$, take a proper $3$-coloring of $K_\beta-e$, and if $e$ is monochromatic recolor one vertex of $e$ with a fourth color.
The recolored vertex then lies in no monochromatic edge, since it is the only vertex of that color, and every other edge is unaffected.
Hence $K_\beta$ is $4$-chromatic and $4$-critical.
By Lemma~\ref{count_critical:lem:retain} it contains a triple of $L_{uv}$ for each of the $e(G)$ edges $uv$ of $G$, and these sets are pairwise disjoint, so $K_\beta$ has at least $e(G)$ edges.

For injectivity, fix $uv\in E(G)$ and read off $\beta(uv)$ from the labeled hypergraph $K_\beta$ as follows.
By Lemma~\ref{count_critical:lem:retain} some triple of the form $\{u^{i},v^{i},w_j\}$ belongs to $K_\beta$, and as observed in the proof of Lemma~\ref{count_critical:lem:linear} the only edges of $J_\beta$ containing $u^{i}$ and $v^{i}$ are the lifted triples over $uv$, all of which use the palette vertex $w_{\beta(uv)}$.
So $j=\beta(uv)$.
Therefore $K_\beta$ determines $\beta$, and distinct edge colorings give distinct labeled hypergraphs on $V^{\ast}$.
\end{proof}

\subsubsection{Proof of the main theorem}

\begin{proof}[Proof of Theorem~\ref{count_critical:thm:main}]
Let $m$ be large, let $G$, $\cA$ and $\{K_\beta:\beta\in\cA\}$ be as above, and put $M=|V^{\ast}|$, so that $M\le C_0m$ by Lemma~\ref{count_critical:lem:linear}.
By Lemmas~\ref{count_critical:lem:colorings} and~\ref{count_critical:lem:critical} the family $\{K_\beta:\beta\in\cA\}$ consists of at least $(2m)^{e(G)}$ distinct $4$-critical linear triple systems, all labeled inside the single vertex set $V^{\ast}$.
Each has between $1$ and $M$ vertices, so there is an integer $q\le M$ such that at least
\[
\frac{(2m)^{e(G)}}{M}
\]
of them have exactly $q$ vertices.
A labeled system inside $V^{\ast}$ isomorphic to a fixed $q$-vertex system is determined by an injection of the latter's vertex set into $V^{\ast}$, so each isomorphism class accounts for at most $M(M-1)\cdots(M-q+1)\le M^{M}$ of them.
Therefore
\[
T(4,3,2,q)\ \ge\ \frac{(2m)^{e(G)}}{M\cdot M^{M}} .
\]
Using $e(G)\ge am^{2}$ and $M\le C_0m$,
\[
\log\left(\frac{(2m)^{e(G)}}{M\cdot M^{M}}\right)
\ \ge\ a\,m^{2}\log (2m)-(C_0m+1)\log (C_0m)
\ \ge\ \frac{a}{2}\,m^{2}\log m
\]
for all sufficiently large $m$.

It remains to compare $q$ with $m$.
On one hand $q\le M\le C_0m$.
On the other hand a linear triple system on $q$ vertices has at most $\binom{q}{2}\big/3$ edges, since its edges contain three vertex pairs each and no pair is repeated, so Lemma~\ref{count_critical:lem:critical} gives
\[
\frac{q^{2}}{6}\ \ge\ \frac{1}{3}\binom{q}{2}\ \ge\ e(G)\ \ge\ a\,m^{2},
\]
whence $q\ge\sqrt{6a}\,m$.
Thus $q=q(m)=\Theta(m)$, and in particular $q(m)\to\infty$ as $m\to\infty$, so the integers $q(m)$ take infinitely many distinct values.
Finally $m\ge q/C_0$ and $\log m\ge \log q-\log C_0\ge\tfrac12\log q$ once $q\ge C_0^{2}$, so
\[
T(4,3,2,q)\ \ge\ \exp\left(\frac{a}{2}m^{2}\log m\right)\ \ge\ \exp\left(\frac{a}{4C_0^{2}}\,q^{2}\log q\right).
\]
Writing $n=q(m)$ and $c=a/(4C_0^{2})$ completes the proof.
\end{proof}

\begin{remark}\label{count_critical:rem:optimize}
The constant $c$ is not optimized here, and the argument settles only the case $(k,r,l)=(4,3,2)$, and only along a sequence of values of $n$.
Any palette of bounded size would give only $\exp(O(m^{2}))$ colorings and would not contradict the conjecture.
\end{remark}

\FloatBarrier
\putbib
\end{bibunit}
\endgroup
\subsection[(Spiro) Strongly connected digraphs with no small \texorpdfstring{$k$}{k}-kernel]{Strongly connected digraphs with no small \texorpdfstring{$k$}{k}-kernel}
\label{sol:digraph_kernels}
\begingroup
\begin{bibunit}

This problem asks how small a $k$-kernel a strongly connected digraph is guaranteed to have.
Spiro asked whether $|V(G)|/(k+1)+O_k(1)$ is always enough once $k\ge3$; we show that it is not, and that the coefficient $1/(k+1)$ is itself wrong, the truth lying between $1/k$ and $1/(k-1)$.
The digraphs witnessing this are a hub feeding $m$ parallel directed paths that all return through one common tail, long enough to force the hub out of every $k$-kernel.
Two refutations of the same question were published after our run, and the construction below was obtained independently of both.

\subsubsection{Introduction}

All digraphs here are finite and have no loops or parallel arcs; directed cycles of length two are allowed.
A set $X\subseteq V(G)$ is \emph{stable} if no arc of $G$ has both ends in $X$, and for an integer $k\ge1$ a \emph{$k$-kernel} of $G$ is a stable set $X\subseteq V(G)$ such that every vertex of $G$ can be joined from $X$ by a directed path of length at most $k$.
Paths of length $0$ are allowed, so a $k$-kernel covers itself.
A $1$-kernel is a kernel and a $2$-kernel is a quasikernel. In a tournament every stable set is a single vertex, so a quasikernel is exactly a king \citep{post2023common}.
We write $\kappa_k(G)$ for the minimum size of a $k$-kernel of $G$.

Not every digraph has a kernel, but every digraph has a quasikernel by the theorem of \citet{chvatal2006every}, so $\kappa_k(G)$ is defined for every $k\ge2$.
How small a quasikernel one is entitled to expect is the subject of the \emph{small quasi-kernel conjecture}, that $\kappa_2(G)\le|V(G)|/2$ whenever $G$ has no source.
It was posed by P.~L.~Erd\H{o}s and Sz\'ekely in 1976 and stated in print by \citet{erdos2010two}; \citet{erdHos2023small} survey what is known.
After reversing every arc, $k$-kernels become the $(2,k)$-kernels of \citet{kwasnik2006k}, that is, the stable sets that absorb every vertex within distance $k$.
\citet{spiro2026generalized} initiated their extremal study and asked what the source-free hypothesis buys when it is strengthened to strong connectivity.

\medskip
\noindent
\emph{If $D$ is a strongly connected digraph and $q\ge3$ is an integer, does there exist a $q$-kernel $Q$ of $D$ such that $|Q|\le\frac{|V(D)|}{q+1}+O_q(1)$?}
\medskip

\noindent
This is Question~7.7 of \citet{spiro2026generalized}, and it is recorded as Conjecture~1.4 by \citet{nguyen2024distant}, who attribute it to Spiro and leave it open.
We write $k$ throughout for the integer called $q$ there.
Spiro observes that the coefficient $1/(k+1)$ would be best possible, by considering a collection of directed cycles $C_{k+2}$ all sharing a single vertex.
The restriction to $k\ge3$ is necessary, and is what distinguishes the published form of the question from the first preprint version, which asked it for $k\ge2$: Example~17 of \citet{erdHos2023small} exhibits strongly connected digraphs whose smallest quasikernel has size $(\tfrac12-o(1))|V(G)|$, which is far above $|V(G)|/3$.

On the positive side, \citet{spiro2026generalized} showed that under the hypotheses of the question there is always a $k$-kernel of size at most about $|V(G)|/\log k$, by producing about $\log k$ pairwise disjoint $k$-kernels.
This was improved by \citet[Theorem~1.5]{nguyen2024distant}, who proved that every digraph $G$ with $|V(G)|>1$ admitting a spanning out-arborescence, and in particular every strongly connected digraph, satisfies
\begin{equation}\label{digraph_kernels:eq:nss}
  \kappa_k(G)\ \le\ 1+\frac{|V(G)|-2}{k-1}
  \qquad(k\ge2).
\end{equation}
They deduce \eqref{digraph_kernels:eq:nss} from the acyclic case, where the coefficient improves to $1/k$: an acyclic digraph on at least two vertices with only one source has a $k$-kernel of size at most $1+(|V(G)|-2)/k$.
The digraph of their Figure~1, a source $y$ with a single arc to a hub $x$ together with $m$ disjoint directed paths on $k$ vertices leaving $x$, shows that this is tight.
We show that the answer to Question~7.7 is negative for every $k\ge3$, and that it fails by a linear rather than a constant margin.

\begin{theorem}\label{digraph_kernels:thm:main}
For every integer $k\ge2$ and every integer $m\ge1$ there is a strongly connected digraph $G_{k,m}$ on $mk+k+2$ vertices whose smallest $k$-kernel has size exactly
\[
  \kappa_k(G_{k,m})\ =\ m+1\ =\ \frac{|V(G_{k,m})|-2}{k}.
\]
Consequently, for every $k\ge2$ and every constant $c$ there is a strongly connected digraph $G$ with $\kappa_k(G)>|V(G)|/(k+1)+c$.
\end{theorem}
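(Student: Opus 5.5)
The plan is to build $G_{k,m}$ explicitly and then prove matching upper and lower bounds on $\kappa_k(G_{k,m})$. The vertex set will consist of a hub $h$, an auxiliary vertex $y$, $m$ directed paths $P_i=p_{i,1}\to\cdots\to p_{i,k}$ on $k$ vertices each, and a common tail $t_1\to\cdots\to t_k$. This gives $mk+k+2$ vertices. For the arcs, the hub feeds every path (through $y$, so that the hub is at distance more than $k$ from each endpoint $p_{i,k}$). Every $p_{i,k}$ sends an arc to $t_1$. The tail returns to the hub through $t_k$. Strong connectivity is then immediate, since every vertex lies on a directed cycle through $h$.

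The arcs around $h$ and $y$ are the delicate point, and the checks below suggest they may need a further gadget beyond the skeleton just described. With only the skeleton, the pair $\{y,t_1\}$ (or $\{h,t_1\}$, if the hub feeds the paths directly) already covers everything within distance $k$. So $\kappa_k$ would be $2$, not $m+1$. I will therefore add arcs into $h$ and $y$, for instance a $2$-cycle between the hub and a vertex that every $k$-kernel is forced to contain. The aim of the added arcs is that stability forbids any hub-side vertex that reaches all the $p_{i,k}$ within $k$ steps.

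For the upper bound I exhibit a concrete $k$-kernel of size $m+1$. It consists of the first vertex of each path, $p_{1,1},\dots,p_{m,1}$, which cover their own paths and $t_1,\dots$ to the extent the distances allow, together with one tail vertex chosen so that it covers the rest of the tail and the hub region within $k$ steps. I then check stability and the distance bounds directly.

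For the lower bound the key step is to show that no $k$-kernel contains the vertex that would cover all path endpoints at once. Assuming this, I list the in-neighbourhoods within distance $k$ of each endpoint $p_{i,k}$. These sets are pairwise disjoint apart from the hub region, so each path must contribute its own vertex to $X$, which gives $m$ vertices. A separate argument then shows that the tail end (or $y$) cannot be covered by any of those $m$ path vertices, because their distance to it exceeds $k$. This forces one more vertex, for a total of $m+1$. The final consequence of the theorem then follows by comparing $m+1=(|V|-2)/k$ with $|V|/(k+1)+c$ as $m\to\infty$.

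I expect the main obstacle to be the hub-exclusion step, that is, choosing the arcs at $h$ and $y$ so that stability genuinely rules out every cheap kernel while the count $mk+k+2$ and strong connectivity are preserved. I would tune the local gadget there first, by checking small cases such as $k=3$, $m=1,2$ by hand, before writing the general distance verification.
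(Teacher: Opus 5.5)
\textbf{There is a genuine gap: the digraph is never fixed.} You correctly observe that your skeleton has the $2$-element $k$-kernel $\{y,t_1\}$. But the repair is left as ``the main obstacle'' to be tuned on small cases, so neither bound is actually proved.

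The suggested gadget also points the wrong way. You give no mechanism that forces one specific vertex into every $k$-kernel; a strongly connected digraph has no source to do this. The paper does not do it either. It forces a \emph{set} into every kernel and makes every vertex of that set adjacent to the vertex to be excluded. In your labelling:
\begin{itemize}
\item The vertex to exclude is $y$, which is at distance exactly $k$ from every $p_{i,k}$.
\item The forcing vertex is $h$. The only arc into $h$ is $t_k\to h$, so the vertices within distance $k$ of $h$ are exactly $T=\{t_1,\dots,t_k,h\}$, and each $p_{i,k}$ is at distance $k+1$. Hence every $k$-kernel meets $T$.
\item Now add the $k$ arcs $t_j\to y$. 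Together with $h\to y$, every vertex of $T$ is adjacent to $y$, so stability keeps $y$ out of every $k$-kernel.
\item The vertices within distance $k$ of $p_{i,k}$ are $P_i\cup\{y\}$. Hence every kernel meets each $P_i$.
\item Since $T,P_1,\dots,P_m$ are pairwise disjoint, $|K|\ge m+1$.
\end{itemize}
This is exactly the paper's $G_{k,m}$: your $y$ is its hub $x$, your $h$ is its $y$, and the new arcs are its dashed arcs $p_j\to x$. The vertex count and strong connectivity are unchanged. Your upper-bound set $\{p_{1,1},\dots,p_{m,1},t_2\}$ remains a $k$-kernel, since $t_2$ reaches $y$ in one step and $h$ in $k-1$ steps. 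In your notation the paper uses $\{h\}\cup\{p_{1,k},\dots,p_{m,k}\}$.

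\textbf{A second error is in your ``separate argument'' for the extra vertex.} You claim the tail end is at distance more than $k$ from the path vertices. But $p_{i,k}\to t_1\to\cdots\to t_k$ has length exactly $k$, so $t_k$ can be covered from a path. It is $h$, one step further, that can be covered only from within the tail. Moreover, the extra vertex is not a separate issue from hub exclusion. The single vertex forced into $T$ both supplies the $+1$ and, through its arc to $y$, excludes the hub.
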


Theorem~\ref{digraph_kernels:thm:main} does more than defeat the additive term $O_k(1)$: it shows that the coefficient $1/(k+1)$ is itself wrong.
Write $c_k^{*}$ for the infimum of the constants $c$ for which $\kappa_k(G)\le c|V(G)|+O_k(1)$ holds for all strongly connected $G$.
Theorem~\ref{digraph_kernels:thm:main} gives $c_k^{*}\ge1/k$ and \eqref{digraph_kernels:eq:nss} gives $c_k^{*}\le1/(k-1)$.
For $k\ge3$, whether the value $1/k$ is itself admissible is asked by \citet{wang2026counterexamples}; for $k=2$ the corresponding statement $c_2^{*}=1/2$ would follow from the small quasi-kernel conjecture, since every strongly connected digraph is source-free.
The excess is already substantial for moderate parameters: $\kappa_3(G_{3,50})=51$ on $155$ vertices against $155/4=38.75$, and $\kappa_{10}(G_{10,50})=51$ on $512$ vertices against $512/11\approx46.55$.
Taking $k=2$ gives strongly connected digraphs whose smallest quasikernel has size exactly $\tfrac12|V(G)|-1$, one less than the bound of the small quasi-kernel conjecture; digraphs achieving this up to $o(|V(G)|)$ were already given by \citet{erdHos2023small}.

Two refutations have since appeared.
\citet{wang2026counterexamples} give for each $k\ge3$ strongly connected digraphs every $k$-kernel of which has size at least $(|V(G)|-2)/k$; this is the bound of Theorem~\ref{digraph_kernels:thm:main}, which also covers $k=2$.
\citet{penev2026small} obtain the weaker bound $\kappa_k(G)>(1/k-\varepsilon)|V(G)|$, but with the extra feature that the digraphs may be taken of arbitrarily large directed girth.
By contrast, every directed cycle of $G_{k,m}$ passes through the hub $x$ and hence traverses a whole arm, so these cycles have exactly the lengths $k+2,k+3,\dots,2k+2$ and $G_{k,m}$ has directed girth only $k+2$.
The same paper shows that for odd $k\ge3$ every source-free bipartite digraph with a spanning out-arborescence, and in particular every strongly connected bipartite digraph, has a $k$-kernel of size at most $|V(G)|/(k+1)+1$.
For odd $k$, then, no bipartite digraph can refute Question~7.7, and indeed the lengths just listed include odd ones, so $G_{k,m}$ is not bipartite.

The construction is the acyclic example of Figure~1 of \citet{nguyen2024distant} with a return path attached: the $m$ parallel directed paths leaving the hub $x$, each on $k$ vertices, are brought back to $x$ through one common tail.
In the acyclic example the hub is unusable because the source $y$ lies in every $k$-kernel and sends an arc to $x$; here $y$ is no longer a source, and the tail takes over that role.
The tail is long enough that some vertex of it must be used to cover its own far end, and every vertex of the tail sends an arc to the hub; this forces the hub out of every $k$-kernel and thereby forces each of the $m$ paths to pay for itself.

\subsubsection{The construction}

Fix integers $k\ge2$ and $m\ge1$.
The digraph $G_{k,m}$ has vertex set
\[
  V(G_{k,m})=\{x,y,p_1,\dots,p_k\}\ \cup\ \{a_{i,j}:1\le i\le m,\ 1\le j\le k\}
\]
and arc set consisting of
\[
  y\to x,\qquad
  p_j\to x\ \ (1\le j\le k),\qquad
  p_j\to p_{j+1}\ \ (1\le j<k),\qquad
  p_k\to y,
\]
together with, for each $i\in\{1,\dots,m\}$, the \emph{arm}
\[
  x\to a_{i,1}\to a_{i,2}\to\cdots\to a_{i,k}\to p_1 .
\]
There are no other arcs.
Write $A_i:=\{a_{i,1},\dots,a_{i,k}\}$ for the $i$th arm and $P:=\{y,p_1,\dots,p_k\}$ for the \emph{tail}.
Thus $V(G_{k,m})$ is the disjoint union of $\{x\}$, $P$ and $A_1,\dots,A_m$, and
\begin{equation}\label{digraph_kernels:eq:size}
  |V(G_{k,m})|=1+(k+1)+mk=mk+k+2 .
\end{equation}
Figure~\ref{digraph_kernels:fig:construction} shows the digraph.

\begin{figure}[t]
\centering
\begin{tikzpicture}[
    v/.style={circle,draw,inner sep=1pt,font=\scriptsize},
    a/.style={->,line width=0.7pt},
    b/.style={->,line width=0.6pt,gray,dashed}
]
\node[v] (x)   at (0,0)      {$x$};
\node[v] (a11) at (2.4,3.0)  {$a_{1,1}$};
\node[v] (a12) at (4.2,3.0)  {$a_{1,2}$};
\node[inner sep=2pt] (d1) at (5.7,3.0) {$\cdots$};
\node[v] (a1k) at (7.2,3.0)  {$a_{1,k}$};
\node[inner sep=2pt] at (4.2,2.1) {$\vdots$};
\node[v] (am1) at (2.4,1.2)  {$a_{m,1}$};
\node[v] (am2) at (4.2,1.2)  {$a_{m,2}$};
\node[inner sep=2pt] (dm) at (5.7,1.2) {$\cdots$};
\node[v] (amk) at (7.2,1.2)  {$a_{m,k}$};
\node[v] (p1)  at (9.3,-2.4) {$p_1$};
\node[v] (p2)  at (7.2,-2.4) {$p_2$};
\node[inner sep=2pt] (dp) at (5.7,-2.4) {$\cdots$};
\node[v] (pk)  at (4.2,-2.4) {$p_k$};
\node[v] (yy)  at (2.3,-2.4) {$y$};

\draw[a] (x)   -- (a11);
\draw[a] (a11) -- (a12);
\draw[a] (a12) -- (d1);
\draw[a] (d1)  -- (a1k);
\draw[a] (a1k) -- (p1);
\draw[a] (x)   -- (am1);
\draw[a] (am1) -- (am2);
\draw[a] (am2) -- (dm);
\draw[a] (dm)  -- (amk);
\draw[a] (amk) -- (p1);
\draw[a] (p1)  -- (p2);
\draw[a] (p2)  -- (dp);
\draw[a] (dp)  -- (pk);
\draw[a] (pk)  -- (yy);
\draw[a] (yy)  -- (x);

\draw[b] (p1) -- (x);
\draw[b] (p2) -- (x);
\draw[b] (pk) -- (x);
\end{tikzpicture}
\caption{The digraph $G_{k,m}$.
The hub $x$ feeds $m$ arms, each a directed path with $k$ internal vertices running from $x$ to $p_1$, and the tail $p_1\to p_2\to\cdots\to p_k\to y\to x$ closes the digraph up.
The dashed arcs $p_j\to x$, present for every $j$, are what keep $x$ out of every $k$-kernel.}
\label{digraph_kernels:fig:construction}
\end{figure}

\begin{lemma}\label{digraph_kernels:lem:strong}
$G_{k,m}$ is strongly connected.
\end{lemma}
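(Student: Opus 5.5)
To prove Lemma~\ref{digraph_kernels:lem:strong}, I will exhibit a single closed directed walk through the hub $x$ that meets every vertex. Strong connectivity is equivalent to every vertex reaching $x$ and $x$ reaching every vertex, since any $u,v$ can then be joined through $x$. So I check these two reachability properties separately, directly from the arc list of $G_{k,m}$.

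\emph{From $x$ to everything.} For each arm $i$, the path $x\to a_{i,1}\to\cdots\to a_{i,k}$ reaches every vertex of $A_i$. Continuing along the same arm with $a_{i,k}\to p_1\to p_2\to\cdots\to p_k\to y$ reaches every tail vertex. Since $m\ge1$, at least one arm exists, so the tail is indeed reachable from $x$.

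\emph{From everything to $x$.} Each tail vertex $p_j$ has the arc $p_j\to x$, and $y\to x$ is an arc. Each arm vertex $a_{i,j}$ follows its arm forward to $a_{i,k}\to p_1$ and then uses $p_1\to x$.

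Combining the two directions, any ordered pair $(u,v)$ is joined by a walk $u\leadsto x\leadsto v$, and a directed walk contains a directed path. There is no real obstacle here. The only point needing care is that $m\ge1$ and $k\ge2$ guarantee that the arm and tail paths used above are nonempty and well defined, and the case $k\ge2$ only affects the range of $j$ in $p_j\to p_{j+1}$, which is not needed beyond reaching $y$.
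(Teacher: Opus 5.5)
Your proof is correct and follows the paper's argument: every vertex reaches the hub $x$ (via $y\to x$, the arcs $p_j\to x$, or an arm followed by $p_1\to x$), and $x$ reaches every vertex along an arm and then the tail, so any two vertices are joined through $x$. One small blemish: your opening sentence promises a single closed walk that you never construct, but the hub argument you actually give is complete and suffices.
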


\begin{proof}
Every vertex reaches $x$: the vertex $y$ and each $p_j$ do so along a single arc, and $a_{i,j}$ reaches $p_1$ along its own arm and then uses $p_1\to x$.
Conversely $x$ reaches every vertex: it reaches all of $A_i$ along the $i$th arm and then $p_1$, then $p_2,\dots,p_k$ along the tail, and finally $y$.
Hence any vertex reaches any other through $x$.
\end{proof}

\subsubsection{A small \texorpdfstring{$k$}{k}-kernel}

We first exhibit a $k$-kernel of the size claimed in Theorem~\ref{digraph_kernels:thm:main}.

\begin{lemma}\label{digraph_kernels:lem:upper}
The set $K_0:=\{y\}\cup\{a_{i,k}:1\le i\le m\}$ is a $k$-kernel of $G_{k,m}$, and $|K_0|=m+1$.
\end{lemma}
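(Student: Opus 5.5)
The plan is to verify directly the two defining properties of a $k$-kernel, stability and coverage within distance $k$, and then count. The count is immediate: $K_0$ consists of $y$ and the $m$ distinct arm endpoints $a_{1,k},\dots,a_{m,k}$, so $|K_0|=m+1$.

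\emph{Stability.} Every arc with an end in $K_0$ appears in the arc list. The out-arcs of $a_{i,k}$ are only $a_{i,k}\to p_1$, and its in-arc is $a_{i-}$-internal, namely $a_{i,k-1}\to a_{i,k}$ (or $x\to a_{i,1}$ when $k=1$, excluded since $k\ge2$). The vertex $y$ has out-arc $y\to x$ and in-arc $p_k\to y$. Hence no arc joins two members of $K_0$: the arms are pairwise disjoint, and $y$ is adjacent only to $x$ and $p_k$.

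\emph{Coverage.} I will exhibit, for every vertex $z$, a directed path of length at most $k$ from $K_0$ to $z$.
\begin{itemize}
\item The vertices of $K_0$ cover themselves by paths of length $0$.
\item From $y$, the arc $y\to x$ covers $x$ at distance $1$. The path $y\to x\to a_{i,1}\to\cdots\to a_{i,j}$ covers $a_{i,j}$ at distance $j+1$, which is at most $k$ for $j\le k-1$. The remaining arm vertices $a_{i,k}$ are already in $K_0$.
\item From $a_{1,k}$, which exists since $m\ge1$, the path $a_{1,k}\to p_1\to\cdots\to p_j$ covers $p_j$ at distance $j\le k$.
\end{itemize}
This exhausts $\{x\}\cup P\cup A_1\cup\dots\cup A_m$.

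The only point needing care is the distance bookkeeping at the two ends of the range. The arm vertex $a_{i,k-1}$ sits at distance exactly $k$ from $y$, and the tail vertex $p_k$ sits at distance exactly $k$ from $a_{1,k}$. Both are within the bound, and $y$ itself is in $K_0$, so no vertex of the tail needs distance $k+1$. No other step presents an obstacle.
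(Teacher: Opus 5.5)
Your proposal is correct and follows the paper's proof essentially step for step. Stability comes from listing the only arcs incident with $y$ and with each $a_{i,k}$, and coverage uses the paths $y\to x\to a_{i,1}\to\cdots\to a_{i,j}$ of length $j+1\le k$ together with $a_{i,k}\to p_1\to\cdots\to p_j$ of length $j\le k$; the only blemish is the garbled phrase ``$a_{i-}$-internal'', which should simply say that the unique in-neighbour of $a_{i,k}$ is $a_{i,k-1}$.
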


\begin{proof}
First, $K_0$ is stable.
Indeed the only arcs incident with $y$ are $y\to x$ and $p_k\to y$, and the only arcs incident with $a_{i,k}$ are $a_{i,k-1}\to a_{i,k}$ and $a_{i,k}\to p_1$; none of $x$, $p_k$, $p_1$, $a_{i,k-1}$ lies in $K_0$.
In particular distinct vertices $a_{i,k}$ and $a_{i',k}$ are nonadjacent, as they lie on different arms.

For the covering condition, $y\to x$ gives $\mathrm{dist}(y,x)=1$, and $y\to x\to a_{i,1}\to\cdots\to a_{i,j}$ is a directed path of length $j+1$, so $\mathrm{dist}(y,a_{i,j})\le k$ for every $j\le k-1$.
Each $a_{i,k}$ covers itself, and $a_{i,k}\to p_1\to p_2\to\cdots\to p_j$ has length $j\le k$, so any one $a_{i,k}$ covers all of $p_1,\dots,p_k$.
Every vertex of $G_{k,m}$ is therefore joined from $K_0$ by a directed path of length at most $k$.
\end{proof}

\subsubsection{Every \texorpdfstring{$k$}{k}-kernel is large}

We now show that no $k$-kernel of $G_{k,m}$ can be smaller.

\begin{lemma}\label{digraph_kernels:lem:lower}
Every $k$-kernel $K$ of $G_{k,m}$ satisfies $|K|\ge m+1$.
\end{lemma}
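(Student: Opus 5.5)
The argument has two steps. First show that the hub $x$ lies in no $k$-kernel. Then show that each arm needs its own kernel vertex, and that one further vertex is needed beyond those.

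\emph{Step 1: the tail forces $x\notin K$.} Suppose $x\in K$. Every vertex of the tail, $y,p_1,\dots,p_k$, sends an arc to $x$, so stability gives $K\cap P=\varnothing$. We then ask which vertex of $K$ covers $y$. The in-neighbours of $y$ are only $p_k$, and the in-neighbours of $p_j$ are $p_{j-1}$ for $j\ge2$, while those of $p_1$ are $a_{1,k},\dots,a_{m,k}$. Tracing backwards, a directed path into $y$ that avoids $P$ at its start has to enter the tail at $p_1$ from some $a_{i,k}$. From there it traverses $p_1,\dots,p_k,y$, so it has length at least $k+1$ from $a_{i,k}$, and longer from any earlier arm vertex or from $x$. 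Hence $y$ is not covered, a contradiction, and so $x\notin K$.

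\emph{Step 2: each arm contains a kernel vertex.} Consider $a_{i,k}$. Its only in-neighbour is $a_{i,k-1}$, and continuing backwards along the arm we reach $x$, then $y$ or some $p_j$, since those are the in-neighbours of $x$. If no vertex of $A_i$ lies in $K$, then $a_{i,k}$ must be reached from a vertex outside the arm. Since $x\notin K$, the path must pass through $x$ and then along all of $a_{i,1},\dots,a_{i,k}$, which gives length at least $k+1$. So $|K\cap A_i|\ge1$ for every $i$, and this accounts for $m$ vertices in disjoint sets.

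\emph{Step 3: one more vertex, needed to cover $x$.} The in-neighbours of $x$ are $y$ and $p_1,\dots,p_k$, all lying in $P$. Any vertex covering $x$ with a path of positive length therefore reaches $x$ through $P$, and the last vertex of $P$ on that path lies in $P$. It remains to show that some vertex of $P$ itself is in $K$. If $K\cap P=\varnothing$, then covering $y$ needs a path from an arm vertex through $p_1,\dots,p_k$, which is too long as in Step 1. So $|K\cap P|\ge1$. Because $\{x\}$, $P$ and $A_1,\dots,A_m$ are disjoint, we get $|K|\ge m+1$.

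The main obstacle is only careful bookkeeping of the in-neighbourhoods, so that the path-length bounds of $k+1$ in Steps 1 through 3 hold. Note that the Step 3 observation that $y$ is uncovered whenever $K\cap P=\varnothing$ does not use Step 1. It already gives $|K\cap P|\ge1$ directly, and then stability together with the arcs from $P$ to $x$ gives $x\notin K$, which streamlines the order of the argument.
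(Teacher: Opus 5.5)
Your proof is correct and is essentially the paper's argument. The paper likewise shows that only the vertices of $P$ lie within distance $k$ of $y$, so $K\cap P\neq\varnothing$. It then uses the arcs from $P$ to $x$ together with stability to exclude $x$, and finally notes that only $A_i\cup\{x\}$ lies within distance $k$ of $a_{i,k}$, which is exactly the streamlined order in your closing remark.

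One presentational point: the extra vertex in your Step~3 is forced by covering $y$, not $x$. A path such as $a_{i,k}\to p_1\to x$ covers $x$ with no vertex of $P$ in $K$, so the heading and the opening remarks of that step about covering $x$ are not needed.
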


\begin{proof}
The crucial observation is that the far end of the tail and the far end of each arm can each be covered from only one place.

First consider $y$.
The unique in-neighbor of $y$ is $p_k$, the unique in-neighbor of $p_j$ is $p_{j-1}$ for $2\le j\le k$, and the in-neighbors of $p_1$ are the $m$ vertices $a_{i,k}$.
Walking backwards from $y$ therefore gives $\mathrm{dist}(p_{k+1-\ell},y)=\ell$ for $1\le\ell\le k$, while every remaining vertex is at distance at least $k+1$ from $y$.
So the vertices $u$ with $\mathrm{dist}(u,y)\le k$ are exactly the $k+1$ vertices of $P$, and hence $K\cap P\ne\varnothing$.

Next, every vertex of $P$ has an arc to $x$, by $y\to x$ and $p_j\to x$.
Picking $u\in K\cap P$, the pair $u,x$ is joined by an arc, so stability of $K$ forces $x\notin K$.

Now fix $i$ and consider $a_{i,k}$.
The unique in-neighbor of $a_{i,j}$ is $a_{i,j-1}$ for $2\le j\le k$, and the unique in-neighbor of $a_{i,1}$ is $x$.
Walking backwards from $a_{i,k}$ therefore gives $\mathrm{dist}(a_{i,k-\ell},a_{i,k})=\ell$ for $1\le\ell\le k-1$ and $\mathrm{dist}(x,a_{i,k})=k$, while every remaining vertex is at distance at least $k+1$ from $a_{i,k}$.
So the vertices $u$ with $\mathrm{dist}(u,a_{i,k})\le k$ are exactly the $k+1$ vertices of $A_i\cup\{x\}$, and since $x\notin K$ we get $K\cap A_i\ne\varnothing$.

The sets $P,A_1,\dots,A_m$ are pairwise disjoint, so $|K|\ge m+1$.
\end{proof}

\subsubsection{Proof of the main theorem}

\begin{proof}[Proof of Theorem~\ref{digraph_kernels:thm:main}]
The digraph $G_{k,m}$ is strongly connected by Lemma~\ref{digraph_kernels:lem:strong} and has $mk+k+2$ vertices by \eqref{digraph_kernels:eq:size}.
Lemmas~\ref{digraph_kernels:lem:upper} and~\ref{digraph_kernels:lem:lower} give $\kappa_k(G_{k,m})=m+1$, and
\[
  m+1=\frac{mk+k}{k}=\frac{|V(G_{k,m})|-2}{k}
\]
by \eqref{digraph_kernels:eq:size}.
For the last assertion, fix $k\ge2$ and compute
\[
  \kappa_k(G_{k,m})-\frac{|V(G_{k,m})|}{k+1}
  =(m+1)-\frac{mk+k+2}{k+1}
  =\frac{(m+1)(k+1)-mk-k-2}{k+1}
  =\frac{m-1}{k+1}.
\]
This tends to infinity as $m\to\infty$, so given $c$ we may choose $m$ with $(m-1)/(k+1)>c$ and take $G=G_{k,m}$.
\end{proof}

\begin{remark}\label{digraph_kernels:rem:variant}
The hub-and-arms mechanism is not tied to the particular tail used above.
Fix integers $k\ge2$ and $t\ge1$, and write $C=\{c_0,\dots,c_k\}$, $R=\{r_1,\dots,r_{k-1}\}$ and $B_i=\{b_{i,1},\dots,b_{i,k}\}$ for $1\le i\le t$.
The digraph $H_{k,t}$ has vertex set
\[
  V(H_{k,t})=C\cup\{x,z\}\cup R\cup B_1\cup\cdots\cup B_t
\]
and arc set consisting of
\[
  c_0\to c_1\to\cdots\to c_k\to c_0,\qquad
  c_j\to x\ \ (0\le j\le k),\qquad
  z\to r_1\to\cdots\to r_{k-1}\to c_0,
\]
together with, for each $i\in\{1,\dots,t\}$, the arm
\[
  x\to b_{i,1}\to\cdots\to b_{i,k}\to z .
\]
Then $H_{k,t}$ is strongly connected on $kt+2k+2$ vertices, and $\kappa_k(H_{k,t})=t+1$.
The upper bound comes from the $k$-kernel $\{c_0\}\cup\{b_{i,k}:1\le i\le t\}$, and the lower bound repeats the proof of Lemma~\ref{digraph_kernels:lem:lower}.
Indeed the only arc entering $C$ from outside is $r_{k-1}\to c_0$, so a path from outside $C$ to $c_k$ has length at least $1+\mathrm{dist}(c_0,c_k)=k+1$ and $c_k$ is covered only from within $C$; every $c_j$ has an arc to $x$, so $x$ lies in no $k$-kernel; and the only arc entering $B_i$ from outside is $x\to b_{i,1}$, so $b_{i,k}$ is covered only from $B_i\cup\{x\}$ and every $k$-kernel meets every $B_i$.
Since $t+1=(|V(H_{k,t})|-k-2)/k$, this family gives the same coefficient $1/k$ as Theorem~\ref{digraph_kernels:thm:main} but loses $1$ in the additive constant.
\end{remark}

\putbib

\end{bibunit}
\endgroup
\subsection[(Taylor) \texorpdfstring{$F$}{F}-positivity of chromatic symmetric functions of hypertrees]{\texorpdfstring{$F$}{F}-positivity of chromatic symmetric functions of hypertrees}
\label{sol:hypertree}
\begingroup
\begin{bibunit}
\def\N{\mathbb{N}}
\def\QSym{\mathrm{QSym}}
\def\Des{\operatorname{Des}}
\def\one{\mathbf{1}}

The chromatic symmetric function of a hypergraph is the generating function for the colorings under which no hyperedge is monochromatic.
Unlike its graph analogue, it need not expand nonnegatively in Gessel's fundamental quasisymmetric basis, but \citet{taylor2015chromatic} proved that it does whenever the hypergraph is a hypertree all of whose hyperedges have prime size, and conjectured that primality is superfluous.
We prove the conjecture.
The prime case rests on partitioning the nonconstant colorings of an $r$-element hyperedge into $r!$ blocks indexed by the linear orders of that hyperedge, and no general construction of such a partition is known.
We replace the partition by a decomposition of the indicator function of ``nonconstant'' into chain conditions with nonnegative rational weights; such a decomposition exists for every $r$, and it glues across a hypertree exactly as a partition would.

\subsubsection{Introduction}

A \emph{hypergraph} is a pair $H=(V,E)$ with $V$ finite and $E$ a family of subsets of $V$, each of size at least $2$, called \emph{hyperedges}.
A \emph{coloring} of $H$ is a map $\kappa:V\to\N$, where $\N=\{1,2,\dots\}$, and $\kappa$ is \emph{proper} if no hyperedge is monochromatic, that is, if $\kappa$ is nonconstant on every $e\in E$.
The \emph{chromatic symmetric function} of $H$ is
\[
   X_H=\sum_{\kappa\ \text{proper}}\ \prod_{v\in V}x_{\kappa(v)},
\]
introduced for ordinary graphs by \citet{stanley1995symmetric} and extended to hypergraphs by \citet{stanley1998graph}.

Write $n=|V|$ and $[n]=\{1,\dots,n\}$.
For $S\subseteq[n-1]$ the \emph{fundamental quasisymmetric function} of \citet{gessel1984multipartite} is
\[
   F_S^{(n)}=\sum_{\substack{i_1\leq\cdots\leq i_n\\ j\in S\Rightarrow i_j<i_{j+1}}}x_{i_1}\cdots x_{i_n},
\]
and the $F_S^{(n)}$ with $S\subseteq[n-1]$ form a basis of the degree-$n$ component of the ring $\QSym$ of quasisymmetric functions.
A homogeneous quasisymmetric function of degree $n$ is \emph{$F$-positive} if all of its coefficients in this basis are nonnegative.

A \emph{path} in $H$ is a sequence $v_1,e_1,v_2,e_2,\dots,e_m,v_{m+1}$ with $v_i,v_{i+1}\in e_i$ for each $i$, in which the hyperedges $e_i$ are distinct and the vertices $v_i$ are distinct except that $v_1=v_{m+1}$ is allowed.
It is a \emph{cycle} if $v_1=v_{m+1}$ and $m\geq2$.
The hypergraph $H$ is \emph{connected} if any two vertices are joined by a path, and a \emph{hypertree} is a connected hypergraph with no cycles; this is the convention of \citet{gessel2005hypergraphs} adopted by \citet{taylor2015chromatic}.

For an ordinary graph, $X_G$ is always $F$-positive: \citet{stanley1995symmetric} splits the proper colorings according to the acyclic orientation each one induces and identifies every piece as a $(P,\omega)$-partition enumerator.
For hypergraphs this fails.
\citet{taylor2015chromatic} records that the hypergraph on $V=[4]$ with hyperedges $\{1,2,3\}$ and $\{2,3,4\}$ has
\[
   X_H=2F_{\{1\}}+6F_{\{2\}}+2F_{\{3\}}+4F_{\{1,2\}}+8F_{\{1,3\}}+4F_{\{2,3\}}-2F_{\{1,2,3\}},
\]
and that the hypergraph with hyperedges $\{1,2,3\},\{1,4\},\{2,4\},\{3,4,5\}$ is not $F$-positive either, although distinct hyperedges of it meet in at most one vertex.
Both of these contain cycles.
On the other hand \citet{taylor2015chromatic} proves that $X_H$ is $F$-positive whenever $H$ is a hypertree each of whose hyperedges has prime size, and in that case obtains the explicit expansion $X_H=\sum_{\pi}F^{(n)}_{\Des_H(\pi)}$ over all $n!$ bijections $\pi:V\to[n]$, where the $H$-descents $\Des_H(\pi)$ are read off from the unique path between consecutively labeled vertices.
\citet[Conjecture~A]{taylor2015chromatic} asks for the removal of the primality hypothesis:

\medskip
\noindent
\emph{Let $H$ be a hypertree.
Then $X_H$ is $F$-positive.}
\medskip

\noindent
The role of primality is isolated by \citet{taylor2015chromatic}.
Call an integer $r\geq2$ \emph{splittable} if the nonconstant colorings of an $r$-element set can be partitioned into $r!$ blocks indexed by the linear orders of that set, the block of a linear order consisting of the colorings that are weakly increasing along it and strictly increasing at some prescribed set of steps; he proves that $X_H$ is $F$-positive for every hypertree all of whose hyperedge sizes are splittable.
He shows that every prime is splittable, by the cyclic standardization of \citet{gessel1993counting}, and reports a splitting for $r=4$ found by computer search; he also observes that splittability of $r$ is equivalent to partitionability of the Coxeter complex of type $A_{r-1}$ with the empty face removed, a scheduling problem in the sense of \citet{breuer2016scheduling}.
Conjecture~A was therefore known for every hypertree whose hyperedge sizes are all prime or equal to $4$, and in particular for every hypertree with no hyperedge on more than $5$ vertices.
Each further composite size requires a splitting of its own.
In a different direction, \citet{pawlowski2018chromatic} proves that $X_H$ is Schur-positive, hence $F$-positive, for every hyperforest whose line graph is bipartite, which settles Conjecture~B of \citet{taylor2015chromatic}; this does not reach all hypertrees, since three $6$-element hyperedges through a common vertex form a hypertree whose line graph is $K_3$ and whose hyperedge sizes are neither prime nor $4$.

\begin{theorem}
\label{hypertree:thm:main}
Let $H=(V,E)$ be a finite hypertree and $n=|V|$.
Then
\[
   X_H=\sum_{S\subseteq[n-1]}a_S\,F_S^{(n)}
   \qquad\text{with } a_S\in\mathbb{Z}_{\geq0}\ \text{for every } S\subseteq[n-1].
\]
\end{theorem}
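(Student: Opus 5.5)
The plan follows the summary above: replace Taylor's partition of nonconstant colorings by a fractional decomposition. Fix a hyperedge $e$ with $|e|=r$. For a linear order $\sigma=(u_1,\dots,u_r)$ of $e$ and a set $T\subseteq[r-1]$ of steps, let $\chi_{\sigma,T}(\kappa)$ be the indicator that $\kappa(u_1)\le\cdots\le\kappa(u_r)$ with strict inequality at the steps in $T$.

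\textbf{Step 1 (fractional splitting of one hyperedge).} For every $r\ge2$ I will show there are nonnegative rationals $c_{\sigma,T}$ with
\[
\mathbf{1}[\kappa|_e\ \text{nonconstant}]=\sum_{\sigma}\sum_{T\neq\varnothing}c_{\sigma,T}\,\chi_{\sigma,T}(\kappa)\qquad\text{for all }\kappa:e\to\N .
\]
The natural candidate averages over linear orders. Every nonconstant $\kappa$ is weakly increasing along exactly $\prod_j m_j!$ orders, where the $m_j$ are the multiplicities of its color classes. Along each such order the set of strict steps is the same descent-type set, determined by the block sizes. One then wants weights depending only on $T$ and the block structure whose sum over compatible $(\sigma,T)$ equals $1$. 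I would aim for the choice $c_{\sigma,T}=\tfrac{1}{r!}w_T$. After dividing out the symmetry, the required identities become a triangular system indexed by compositions of $r$: a coloring with block composition $\alpha$ is counted by orders refining it, with $T\subseteq S(\alpha)$, the set of block boundaries. Möbius inversion on the Boolean lattice then gives explicit weights. The problem is that these may be negative, so positivity is the crux. If the averaged weights fail to be nonnegative, I would fall back to an LP-duality argument. Feasibility of a nonnegative solution is a Farkas condition. It should follow from the fact that the cone generated by the chain indicators contains the nonconstant indicator, equivalently from a fractional (rather than integral) partitionability of the Coxeter complex of type $A_{r-1}$ minus the empty face. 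One can prove this fractional version by symmetrizing any shelling-type covering. I expect Step 1 to be the main obstacle.

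\textbf{Step 2 (gluing along the tree).} Multiply the decompositions over all hyperedges. Then $X_H$ becomes a nonnegative rational combination, indexed by choices $(\sigma_e,T_e)_{e\in E}$, of generating functions of colorings satisfying all the chain conditions simultaneously. These conditions form a set of strict and weak relations on $V$, namely a labeled poset $(P,\omega)$. Acyclicity of the hypertree ensures the union of the chains has no directed cycle: any cycle in the relation graph would give a Berge cycle, since the chains within a single hyperedge are linear. Hence each product is the enumerator of $(P,\omega)$-partitions. By Stanley's theory, each such enumerator is $\sum_{\pi\in\mathcal L(P)}F_{\Des(\pi)}$, which is $F$-positive with integer coefficients.

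\textbf{Step 3 (integrality).} Steps 1 and 2 give $a_S\in\mathbb{Q}_{\ge0}$. Since $X_H$ has integer monomial coefficients and the $F$-basis is unitriangular with respect to the monomial quasisymmetric basis, the $a_S$ are integers. Together these yield Theorem~\ref{hypertree:thm:main}.
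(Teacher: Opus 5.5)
\textbf{Gap in Step~1: the weights are never shown to be nonnegative.} Your architecture is the paper's. You put per-order weights, depending only on the strict-step set, on the chain conditions of one hyperedge. You multiply these over hyperedges, read each term as a $(P,\omega)$-partition enumerator, and get integrality from the monomial basis. The gap is Step~1, which is exactly what carries the theorem. You set up the triangular system but never solve it, and you leave open whether the resulting weights are nonnegative.

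The LP fallback does not get around this. The solution set of your identity is invariant under the symmetric group of $e$ acting on linear orders. Averaging any nonnegative solution $(c_{\sigma,T})$ over that group therefore gives a nonnegative solution independent of $\sigma$. Such a solution is unique, because your system is triangular. So a nonnegative fractional splitting exists if and only if those specific M\"obius-inverted weights are nonnegative. The Farkas condition and ``fractional partitionability'' restate the claim; they do not prove it.

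Symmetrizing a shelling does not settle it either. The averaged shelling also covers the empty face, through the intervals $[\varnothing,\tau]$. Removing the empty face from these by inclusion--exclusion reintroduces the signs $(-1)^{|T|}$, which lands you back on the same weights.

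\textbf{The missing computation.} With $c_{\sigma,T}=w_T/r!$, the identities read
\[
\sum_{\varnothing\neq T\subseteq P}w_T=\frac{r!}{\prod_i\alpha_i(P)!}
\]
for every nonempty $P\subseteq[r-1]$, where $\alpha(P)$ is the composition of $r$ with partial sums $P$. Let $A_r(T)$ be the number of permutations of $[r]$ with descent set exactly $T$.
\begin{itemize}
\item A permutation has descent set inside $P$ exactly when it increases on each block cut out by $P$, so $\sum_{T\subseteq P}A_r(T)=r!/\prod_i\alpha_i(P)!$.
\item For $P\neq\varnothing$ one has $\sum_{T\subseteq P}(-1)^{|T|}=0$.
\item Hence $w_T=A_r(T)-(-1)^{|T|}$ is the solution, and $w_\varnothing=0$.
\end{itemize}
Nonnegativity then reduces to $A_r(T)\ge1$ for every $T$. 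This holds because you can fill the blocks of $T$ from left to right, each in increasing order, with successively lower ranges of values; the resulting permutation has descent set exactly $T$. That is the paper's key lemma, and with it your Steps~2 and~3 go through as in the paper.

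\textbf{A point to make explicit in Step~2.} To write the chain conditions as a single $(P,\omega)$-partition condition, you need a labeling $\omega$ that increases along weak steps and decreases along strict ones. So the orientation with strict steps reversed must also be acyclic. Absence of directed cycles alone is not enough. For example, the colorings with $\kappa(u)\le\kappa(w)\le\kappa(v)$ and $\kappa(u)<\kappa(v)$ have generating function $F^{(3)}_{\{1\}}+F^{(3)}_{\{2\}}-F^{(3)}_{\{1,2\}}$, which is not $F$-positive. What rescues you is that your Berge-cycle argument excludes undirected cycles, including repeated pairs coming from two hyperedges. So the threaded graph is a forest, and both orientations are acyclic.
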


\noindent
This is Conjecture~A of \citet{taylor2015chromatic}.
Every bijection $V\to[n]$ is a proper coloring, and the coefficient of $x_1x_2\cdots x_n$ in $F_S^{(n)}$ equals $1$ for every $S$, so the coefficients of Theorem~\ref{hypertree:thm:main} satisfy $\sum_{S\subseteq[n-1]}a_S=n!$, as \citet{taylor2015chromatic} observes.
Theorem~\ref{hypertree:thm:main} therefore says that $X_H$ is a sum of $n!$ fundamental quasisymmetric functions counted with multiplicity.

Certain cases are immediate.
If $n=1$ then $E=\varnothing$, since every hyperedge has at least two vertices, and $X_H=x_1+x_2+\cdots=F_\varnothing^{(1)}$.
We assume $n\geq2$ from now on, so that every vertex of $H$ lies in a hyperedge.

We now summarize the proof.
Fix a hyperedge $e$ of size $r$ and, for a linear order $\tau$ of $e$ and a set $B\subseteq[r-1]$, consider the colorings of $e$ that are weakly increasing along $\tau$ and strictly increasing at the steps in $B$.
The key point is that the indicator function of ``$\kappa$ is nonconstant on $e$'' is a nonnegative rational combination of the indicator functions of these conditions, taken over all $r!$ orders $\tau$ and all nonempty $B$, with weights depending only on $r$ and $B$.
The weights are forced by the fundamental expansion of the chromatic symmetric function of a single hyperedge, and their nonnegativity is the elementary fact that every subset of $[r-1]$ is the descent set of at least one permutation of $[r]$.
Multiplying this local identity over the hyperedges expresses $X_H$ as a nonnegative rational combination of generating functions for systems of weak and strict inequalities, one inequality per consecutive pair inside each chosen order.
The hypertree hypothesis makes each such system a $(P,\omega)$-partition condition on an ordinary tree, so each of these generating functions is $F$-positive, and the fundamental coefficients of $X_H$ are nonnegative rationals.
They are integers because the monomial quasisymmetric coefficients of $X_H$ are integers and the two bases are related by an integral unitriangular matrix.

\subsubsection{A weighted local decomposition for one hyperedge}

Fix $r\geq2$.
For $\pi\in\mathfrak{S}_r$ write $\Des(\pi)=\{j\in[r-1]:\pi(j)>\pi(j+1)\}$, and for $B\subseteq[r-1]$ put
\[
   A_r(B)=\#\{\pi\in\mathfrak{S}_r:\Des(\pi)=B\},
   \qquad
   w_r(B)=\frac{A_r(B)-(-1)^{|B|}}{r!}.
\]
Since $A_r(\varnothing)=1$ we have $w_r(\varnothing)=0$, so the empty set never contributes below.

\begin{lemma}
\label{hypertree:lem:weights}
For every $B\subseteq[r-1]$ one has $w_r(B)\geq0$.
\end{lemma}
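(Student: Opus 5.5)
The claim reduces to showing $A_r(B)\ge (-1)^{|B|}$ for every $B\subseteq[r-1]$. Since $A_r(B)$ is a nonnegative integer, the case $|B|$ odd is immediate: there the right-hand side is $-1$, so $w_r(B)\ge 1/r!>0$. When $|B|$ is even, the claim is that $A_r(B)\ge 1$, that is, that every subset of $[r-1]$ occurs as the descent set of at least one permutation of $[r]$. The proof will therefore split on the parity of $|B|$. Only the existence of a permutation with a prescribed descent set has any content.

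For that step I would give an explicit construction. Write $[r-1]\setminus B$ and $B$ as the ascent and descent positions. Decompose $[r]$ into maximal runs: maximal intervals of consecutive positions $j,j+1,\dots,j'$ such that every step $i\to i+1$ inside the interval lies in $B$. These are the maximal decreasing stretches, and each position of $[r]$ lies in exactly one run. Order the runs from left to right. Assign to the first run the smallest values, placed in decreasing order. Assign to the next run the next smallest block of values, again in decreasing order, and continue in the same way. This is the standard reverse-of-identity-on-blocks permutation.

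It remains to check that this permutation $\pi$ has $\Des(\pi)=B$. Inside a run every step decreases, and those steps are exactly the elements of $B$ that lie within the run. At a boundary between runs, the last entry of a run is the minimum of its block. The first entry of the following run is the maximum of a block all of whose values exceed the earlier block, so the step is an ascent. Since every position not in $B$ is such a boundary, we get $\Des(\pi)=B$ and hence $A_r(B)\ge1$.

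I do not expect a real obstacle. The only care needed is the bookkeeping that the run decomposition matches $B$ exactly, including the edge cases $B=\varnothing$ (the identity, where $w_r=0$ as noted) and $B=[r-1]$ (the fully reversed permutation).
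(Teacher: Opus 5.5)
Your proof is correct and follows the paper's argument: you reduce the claim to $A_r(B)\ge(-1)^{|B|}$ and exhibit an explicit permutation with descent set $B$. The only difference is cosmetic. The paper cuts $[r]$ at the positions in $B$ and fills the blocks increasingly, starting from the largest values, whereas you cut at the ascent positions and fill decreasingly, starting from the smallest values; this mirror-image construction works equally well, and you rightly note that the odd case needs no construction at all.
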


\begin{proof}
Every subset of $[r-1]$ is the descent set of at least one permutation in $\mathfrak{S}_r$.
Indeed, let $B$ cut $[r]$ into consecutive blocks, fill the first block increasingly with the largest available values of $[r]$, the second block increasingly with the next largest available values, and so on.
This permutation ascends inside each block and descends exactly at the cuts, so its descent set is $B$ and $A_r(B)\geq1$.
If $|B|$ is odd then $r!\,w_r(B)=A_r(B)+1>0$, and if $|B|$ is even then $r!\,w_r(B)=A_r(B)-1\geq0$.
\end{proof}

For $P=\{p_1<\cdots<p_k\}\subseteq[r-1]$ let
\[
   \alpha(P)=(\alpha_1(P),\dots,\alpha_{k+1}(P))=(p_1,\,p_2-p_1,\,\dots,\,p_k-p_{k-1},\,r-p_k)
\]
be the composition of $r$ whose partial sums are the elements of $P$, with $\alpha(\varnothing)=(r)$.

\begin{lemma}
\label{hypertree:lem:subsetsum}
If $\varnothing\neq P\subseteq[r-1]$ then
\[
   \sum_{\varnothing\neq B\subseteq P}w_r(B)=\frac{1}{\prod_i\alpha_i(P)!}.
\]
\end{lemma}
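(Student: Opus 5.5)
The identity should reduce to two standard counts, once we note that the empty set may be added to the range of summation for free. Since $A_r(\varnothing)=1$, we have $w_r(\varnothing)=0$. Hence
\[
\sum_{\varnothing\neq B\subseteq P}w_r(B)=\sum_{B\subseteq P}w_r(B)=\frac{1}{r!}\Bigl(\sum_{B\subseteq P}A_r(B)-\sum_{B\subseteq P}(-1)^{|B|}\Bigr),
\]
and I would evaluate the two inner sums separately.

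The alternating sum is $\sum_{B\subseteq P}(-1)^{|B|}=(1-1)^{|P|}=0$, because $P\neq\varnothing$. This is the only place the hypothesis $P\neq\varnothing$ is used. It also explains the choice of the correction term $-(-1)^{|B|}$ in the definition of $w_r$: it contributes nothing to these subset sums, while by Lemma~\ref{hypertree:lem:weights} it keeps every individual weight nonnegative.

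The other sum is $\sum_{B\subseteq P}A_r(B)=\#\{\pi\in\mathfrak{S}_r:\Des(\pi)\subseteq P\}$. The condition $\Des(\pi)\subseteq P$ says exactly that $\pi$ is increasing on each of the consecutive blocks of positions cut out by $P$, and these blocks have sizes $\alpha_1(P),\dots,\alpha_{k+1}(P)$. Such a $\pi$ is determined by the unordered set of values placed in each block, since each block must then be filled in increasing order. Every ordered set partition of $[r]$ into sets of these sizes arises exactly once, so the count is the multinomial coefficient $r!/\prod_i\alpha_i(P)!$.

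Substituting both evaluations gives $\frac{1}{r!}\cdot\frac{r!}{\prod_i\alpha_i(P)!}$, which is the claimed value. There is no real obstacle; the one point to state carefully is the bijection between permutations with descents contained in $P$ and ordered set partitions with block sizes $\alpha(P)$.
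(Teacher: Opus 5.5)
Your proof is correct and is essentially the paper's argument: you add the vanishing term $w_r(\varnothing)$, count the permutations with $\Des(\pi)\subseteq P$ as the multinomial coefficient $r!/\prod_i\alpha_i(P)!$ by filling each block increasingly, and note that the alternating sum vanishes because $P\neq\varnothing$.
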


\begin{proof}
A permutation of $[r]$ has descent set contained in $P$ exactly when it is increasing on each of the blocks that $P$ cuts $[r]$ into, and such a permutation is determined by the unordered choice of values placed in those blocks.
Hence
\[
   \sum_{B\subseteq P}A_r(B)=\frac{r!}{\prod_i\alpha_i(P)!}.
\]
Since $P\neq\varnothing$ we also have $\sum_{B\subseteq P}(-1)^{|B|}=0$, and therefore
\[
   \sum_{B\subseteq P}w_r(B)=\frac{1}{r!}\left(\frac{r!}{\prod_i\alpha_i(P)!}-0\right)=\frac{1}{\prod_i\alpha_i(P)!}.
\]
The summand $w_r(\varnothing)$ vanishes, so it may be omitted.
\end{proof}

Let $e$ be a set of size $r$.
For a linear order $\tau=(u_1,\dots,u_r)$ of the elements of $e$ and a set $B\subseteq[r-1]$, let $C(\tau,B)$ be the set of colorings $\kappa:e\to\N$ with
\[
   \kappa(u_1)\leq\cdots\leq\kappa(u_r),
   \qquad
   \kappa(u_j)<\kappa(u_{j+1})\ \text{ for every } j\in B.
\]

\begin{proposition}
\label{hypertree:prop:local}
For every coloring $\kappa:e\to\N$,
\[
   \one_{\{\kappa\ \text{is nonconstant on}\ e\}}
   =\sum_{\tau}\ \sum_{\varnothing\neq B\subseteq[r-1]}w_r(B)\,\one_{\{\kappa\in C(\tau,B)\}},
\]
where $\tau$ runs over all $r!$ linear orders of $e$.
\end{proposition}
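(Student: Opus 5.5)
Fix a coloring $\kappa:e\to\N$ and evaluate the right-hand side directly. The plan is to group the pairs $(\tau,B)$ with $\kappa\in C(\tau,B)$ according to the combinatorics of $\kappa$, and then use Lemma~\ref{hypertree:lem:subsetsum} to collapse the weighted sum.

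Let the distinct values of $\kappa$ be $c_1<\dots<c_{k+1}$, with value classes of sizes $\beta_1,\dots,\beta_{k+1}$ summing to $r$, and let $P\subseteq[r-1]$ be the set of partial sums of $\beta$, so that $\alpha(P)=\beta$. A linear order $\tau$ has $\kappa$ weakly increasing along it exactly when $\tau$ lists the class of $c_1$ first in some internal order, then the class of $c_2$, and so on. There are $\prod_i\beta_i!$ such orders. For each of them the steps $j$ where $\kappa(u_j)<\kappa(u_{j+1})$ form exactly the set $P$, independently of the choice of $\tau$. Hence $\kappa\in C(\tau,B)$ holds iff $\tau$ is one of these orders and $B\subseteq P$.

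The right-hand side therefore equals $\prod_i\beta_i!\cdot\sum_{\varnothing\ne B\subseteq P}w_r(B)$.
\begin{itemize}
\item If $\kappa$ is nonconstant on $e$, then $P\ne\varnothing$, and by Lemma~\ref{hypertree:lem:subsetsum} the sum is $1/\prod_i\alpha_i(P)!=1/\prod_i\beta_i!$. The right-hand side is then $1$.
\item If $\kappa$ is constant, then $P=\varnothing$, the inner sum over nonempty $B\subseteq\varnothing$ is empty, and the right-hand side is $0$.
\end{itemize}
This matches the indicator on the left-hand side in both cases.

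The only step needing care is the claim that the strict-ascent set along every admissible $\tau$ is exactly $P$, so that it does not depend on $\tau$. This holds because consecutive positions inside one value class have equal colors, while the boundary between class $i$ and class $i+1$ sits at position $\beta_1+\dots+\beta_i$ and is a strict ascent. Beyond this observation the argument is bookkeeping.
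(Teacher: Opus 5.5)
Your proof is correct and follows essentially the same route as the paper: you restrict to the $\prod_i\alpha_i!$ orders along which $\kappa$ is weakly increasing, observe that along each of them the strict-ascent set is exactly $P$, and collapse the inner sum with Lemma~\ref{hypertree:lem:subsetsum}. The only differences are cosmetic. You treat the constant case uniformly through $P=\varnothing$, whereas the paper simply notes that no strict inequality can hold. Also, $P$ should be stated as the proper partial sums $\beta_1,\dots,\beta_1+\cdots+\beta_k$, excluding $r$, which your condition $P\subseteq[r-1]$ already implies.
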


\begin{proof}
Suppose first that $\kappa$ is constant.
Then no strict inequality holds, so $\kappa\notin C(\tau,B)$ for every $\tau$ and every nonempty $B$, and the right hand side is $0$.

Suppose now that $\kappa$ is nonconstant, and let the fibers of $\kappa$ have sizes $\alpha_1,\dots,\alpha_k$ listed in increasing order of color, so that $k\geq2$ and $\alpha_1+\cdots+\alpha_k=r$.
There are exactly $\prod_i\alpha_i!$ linear orders $\tau=(u_1,\dots,u_r)$ along which $\kappa$ is weakly increasing, namely those obtained by listing the fibers in increasing order of color and ordering each fiber arbitrarily.
For any other order $\tau$ the coloring $\kappa$ lies in no $C(\tau,B)$ at all.
Fix one of the $\prod_i\alpha_i!$ weakly increasing orders.
Along it the strict jumps occur exactly at the set
\[
   P=\{\alpha_1,\ \alpha_1+\alpha_2,\ \dots,\ \alpha_1+\cdots+\alpha_{k-1}\},
\]
which is nonempty because $k\geq2$, and $\alpha(P)=(\alpha_1,\dots,\alpha_k)$.
Hence $\kappa\in C(\tau,B)$ if and only if $B\subseteq P$, so the inner sum for this $\tau$ equals
\[
   \sum_{\varnothing\neq B\subseteq P}w_r(B)=\frac{1}{\prod_i\alpha_i!}
\]
by Lemma~\ref{hypertree:lem:subsetsum}.
Summing over the $\prod_i\alpha_i!$ weakly increasing orders gives $1$, as desired.
\end{proof}

\begin{example}
\label{hypertree:ex:four}
Take $r=4$, the smallest hyperedge size not covered by primality.
The descent-set counts on $\mathfrak{S}_4$ are $A_4(\varnothing)=A_4(\{1,2,3\})=1$, $A_4(\{1\})=A_4(\{3\})=A_4(\{1,2\})=A_4(\{2,3\})=3$ and $A_4(\{2\})=A_4(\{1,3\})=5$, so
\[
   w_4(\{1\})=w_4(\{3\})=w_4(\{1,3\})=\tfrac16,
   \qquad
   w_4(\{2\})=\tfrac14,
\]
\[
   w_4(\{1,2\})=w_4(\{2,3\})=w_4(\{1,2,3\})=\tfrac1{12}.
\]
If $\kappa$ takes two distinct values on $e$, each twice, then $\prod_i\alpha_i!=4$ orders are weakly increasing, each with $P=\{2\}$, and the total contribution is $4\cdot w_4(\{2\})=1$.
If $\kappa$ is injective then a single order is weakly increasing, with $P=\{1,2,3\}$, and the total contribution is $\tfrac16+\tfrac14+\tfrac16+\tfrac1{12}+\tfrac16+\tfrac1{12}+\tfrac1{12}=1$.
\end{example}

\subsubsection{Gluing the local identities over a hypertree}

For each hyperedge $e$ choose a linear order $\tau_e=(u_{e,1},\dots,u_{e,|e|})$ of its vertices together with a nonempty set $B_e\subseteq[|e|-1]$, and let $\Omega$ denote such a collection of choices.
Put
\[
   W_\Omega=\prod_{e\in E}w_{|e|}(B_e),
\]
which is nonnegative by Lemma~\ref{hypertree:lem:weights}, and let $K_\Omega$ be the generating function $\sum_\kappa\prod_{v\in V}x_{\kappa(v)}$ over the colorings $\kappa:V\to\N$ satisfying, for every $e\in E$,
\begin{equation}
\label{hypertree:eq:omega-conditions}
   \kappa(u_{e,1})\leq\cdots\leq\kappa(u_{e,|e|}),
   \qquad
   \kappa(u_{e,j})<\kappa(u_{e,j+1})\ \text{ for every } j\in B_e.
\end{equation}
A coloring is proper exactly when it is nonconstant on every hyperedge, so multiplying the identity of Proposition~\ref{hypertree:prop:local} over the hyperedges of $H$ gives the pointwise identity
\[
   \one_{\{\kappa\ \text{proper}\}}=\sum_\Omega W_\Omega\,\one_{\{\kappa\ \text{satisfies the inequalities of}\ \Omega\}},
\]
in which ``the inequalities of $\Omega$'' abbreviates \eqref{hypertree:eq:omega-conditions} for every $e\in E$ and the sum over $\Omega$ is finite.
Weighting by $\prod_{v\in V}x_{\kappa(v)}$ and summing over all colorings therefore gives
\begin{equation}
\label{hypertree:eq:weighted-sum}
   X_H=\sum_\Omega W_\Omega K_\Omega .
\end{equation}
We now use the hypertree hypothesis to identify each $K_\Omega$ as a $(P,\omega)$-partition enumerator.

\begin{lemma}
\label{hypertree:lem:incidence}
Let $H=(V,E)$ be a hypertree with $|V|\geq2$.
Then its incidence graph is a tree, distinct hyperedges of $H$ meet in at most one vertex, and
\[
   \sum_{e\in E}(|e|-1)=|V|-1 .
\]
\end{lemma}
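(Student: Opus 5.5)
We work with the bipartite incidence graph $I(H)$, whose vertex set is $V\sqcup E$ and which joins $v\in V$ to $e\in E$ whenever $v\in e$. We establish the three assertions in order.

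\emph{The incidence graph is connected.} Every vertex lies in some hyperedge, since $|V|\ge2$ and $H$ is connected. Hence a path $v_1,e_1,\dots,v_{m+1}$ in $H$ is literally a walk $v_1e_1v_2\cdots v_{m+1}$ in $I(H)$. It follows that any two vertices of $I(H)$ are joined: a hyperedge is adjacent to its own vertices, and any two vertices of $V$ are joined by a path of $H$.

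\emph{The incidence graph is acyclic.} A cycle of $I(H)$ alternates between $V$ and $E$. Since $I(H)$ is simple and bipartite, such a cycle has length $2m$ with $m\ge2$ and visits distinct vertices $v_1,\dots,v_m$ and distinct hyperedges $e_1,\dots,e_m$, with $v_i,v_{i+1}\in e_i$ (indices mod $m$). This is exactly a cycle of $H$ in the sense defined above, namely distinct hyperedges, distinct vertices, closing up, and $m\ge2$. That contradicts the hypothesis that $H$ is a hypertree. So $I(H)$ is a tree.

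\emph{Hyperedges meet in at most one vertex.} Suppose distinct $e,f$ shared two vertices $u\neq v$. Then $u\,e\,v\,f\,u$ would be a $4$-cycle in $I(H)$, which is impossible in a tree. Equivalently, this is a cycle of $H$ with $m=2$.

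\emph{The edge count.} The tree $I(H)$ has $|V|+|E|$ vertices and $\sum_{e}|e|$ edges. Therefore
\[
\sum_{e\in E}|e|=|V|+|E|-1,
\]
and rearranging gives $\sum_e(|e|-1)=|V|-1$.

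The only point needing care is the translation between cycles of $I(H)$ and cycles of $H$. It requires checking that the distinctness conventions in the paper's definition of a path match the fact that a graph cycle repeats no vertex, and that length-$2$ cycles of $H$ correspond to $4$-cycles of $I(H)$ rather than being excluded. Both checks are immediate from the definitions, so there is no real obstacle.
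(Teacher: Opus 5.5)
Your proof is correct and follows the paper's argument essentially verbatim. The paper also gets connectivity from paths of $H$, proves acyclicity by reading a $2m$-cycle ($m\ge2$) of the simple bipartite incidence graph as a cycle of $H$, and obtains the identity by counting the $\sum_{e}|e|$ edges of a tree on $|V|+|E|$ vertices; the only difference is cosmetic, in that you deduce the at-most-one-common-vertex property from the absence of $4$-cycles in the tree, while the paper reads $u,e,v,e',u$ directly as a cycle of $H$.
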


\begin{proof}
The incidence graph has vertex set $V\sqcup E$ and an edge joining $v$ to $e$ whenever $v\in e$.
Since $|V|\geq2$ and $H$ is connected, every vertex of $H$ lies in a hyperedge, and every path of $H$ from $v$ to $v'$ is a walk of the incidence graph from $v$ to $v'$; as each hyperedge is adjacent to its own vertices, the incidence graph is connected.
It is bipartite and simple, so each of its cycles has even length $2m$ with $m\geq2$ and reads $v_1,e_1,v_2,e_2,\dots,v_m,e_m,v_1$ with the $v_i$ distinct and the $e_i$ distinct, which is precisely a cycle of $H$.
As $H$ has no cycles the incidence graph is acyclic, hence a tree.
A tree on $|V|+|E|$ vertices has $|V|+|E|-1$ edges, and the incidence graph has $\sum_{e\in E}|e|$ edges, so $\sum_{e\in E}|e|=|V|+|E|-1$, which is the displayed identity.
Finally, if distinct hyperedges $e,e'$ both contained distinct vertices $u,v$, then $u,e,v,e',u$ would be a cycle of $H$.
\end{proof}

Given $\Omega$, let $T_\Omega$ be the graph on vertex set $V$ whose edges join consecutive vertices in the chosen order of each hyperedge,
\[
   E(T_\Omega)=\bigl\{\{u_{e,j},u_{e,j+1}\}\ :\ e\in E,\ 1\leq j<|e|\bigr\}.
\]

\begin{lemma}
\label{hypertree:lem:tree}
For every $\Omega$ the graph $T_\Omega$ is a tree on $V$.
\end{lemma}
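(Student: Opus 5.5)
The plan is to count edges and then prove connectivity; acyclicity will follow automatically. By construction each hyperedge $e$ contributes $|e|-1$ edges to $T_\Omega$, namely the consecutive pairs of $\tau_e$. These pairs are pairwise distinct. Within one hyperedge the pairs $\{u_{e,j},u_{e,j+1}\}$ are distinct because $\tau_e$ lists distinct vertices. Across two distinct hyperedges, a common pair would be two distinct vertices lying in both, which Lemma~\ref{hypertree:lem:incidence} forbids. So $T_\Omega$ is a simple graph on $V$ with exactly $\sum_{e\in E}(|e|-1)=|V|-1$ edges, again by Lemma~\ref{hypertree:lem:incidence}.

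Since a graph on $|V|$ vertices with $|V|-1$ edges is a tree if and only if it is connected, it remains to show connectivity. For each hyperedge $e$, the edges coming from $\tau_e$ form a Hamiltonian path on the vertex set $e$, so all vertices of $e$ lie in one component of $T_\Omega$. Now take $v,v'\in V$. Because $H$ is connected, there is a path $v=v_1,e_1,v_2,\dots,e_m,v_{m+1}=v'$ in $H$. For each $i$, both $v_i$ and $v_{i+1}$ lie in $e_i$, hence in the same component of $T_\Omega$. Chaining these steps puts $v$ and $v'$ in the same component, so $T_\Omega$ is connected.

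The case $n=1$ has already been excluded, so $|V|\ge2$ and Lemma~\ref{hypertree:lem:incidence} applies. If one wants to avoid the edge-count characterization, acyclicity can also be checked directly. A cycle in $T_\Omega$ can be grouped into maximal runs of edges coming from the same hyperedge. If the cycle stayed inside a single hyperedge, it would be a cycle in a path, which is impossible. Otherwise, shortcutting each run to its two endpoints gives a closed walk in the incidence tree that is not backtracking, which is also impossible. The counting route is shorter, and I expect no real obstacle. The only point needing care is the distinctness of edges across hyperedges, which is exactly the ``meet in at most one vertex'' clause of Lemma~\ref{hypertree:lem:incidence}.
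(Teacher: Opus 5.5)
Your proof is correct and follows the paper's argument: the consecutive pairs are distinct because distinct hyperedges share at most one vertex, Lemma~\ref{hypertree:lem:incidence} gives exactly $|V|-1$ edges, and threading each hyperedge's path along a path of $H$ gives connectivity. Your optional direct check of acyclicity is not needed, since the edge count together with connectivity already forces $T_\Omega$ to be a tree.
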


\begin{proof}
It is connected.
Indeed, let $u,v\in V$ be distinct and let $u=v_1,e_1,\dots,e_m,v_{m+1}=v$ be a path of $H$.
For each $i$ the vertices $v_i$ and $v_{i+1}$ both lie in $e_i$, and the edges of $T_\Omega$ contributed by $e_i$ form a path through all of $e_i$, so $v_i$ and $v_{i+1}$ are joined in $T_\Omega$.
Next, the $\sum_{e\in E}(|e|-1)$ listed pairs are pairwise distinct: consecutive pairs inside one linear order are distinct, and two pairs coming from different hyperedges are distinct because distinct hyperedges share at most one vertex by Lemma~\ref{hypertree:lem:incidence}.
Hence $T_\Omega$ is a connected graph on $|V|$ vertices with $\sum_{e\in E}(|e|-1)=|V|-1$ edges, again by Lemma~\ref{hypertree:lem:incidence}, and is therefore a tree.
\end{proof}

Orient every edge of $T_\Omega$ as $u_{e,j}\to u_{e,j+1}$, and call this oriented edge \emph{strict} if $j\in B_e$ and \emph{weak} otherwise.
By Lemma~\ref{hypertree:lem:tree} the underlying graph is a tree, so this orientation is acyclic and the transitive closure of these relations is a partial order; let $P_\Omega$ be the resulting poset on $V$, generated by the relations $u_{e,j}<_{P_\Omega}u_{e,j+1}$.
Let $Q_\Omega$ be the second orientation of the same tree obtained by keeping every weak edge and reversing every strict edge.
It is again acyclic, so we may choose a bijection $\omega_\Omega:V\to[n]$ that is increasing along $Q_\Omega$; equivalently, along an oriented edge $u\to v$ of $T_\Omega$,
\begin{equation}
\label{hypertree:eq:omega}
   \omega_\Omega(u)<\omega_\Omega(v)\ \text{ if the edge is weak},
   \qquad
   \omega_\Omega(u)>\omega_\Omega(v)\ \text{ if the edge is strict}.
\end{equation}
Figure~\ref{hypertree:fig:gluing} shows the construction on a small hypertree.

\begin{figure}[t]
\centering
\begin{tikzpicture}[
    vtx/.style={circle,draw,fill=white,minimum size=6.5mm,inner sep=0.5pt,font=\small}
]
\begin{scope}
  \draw[line width=0.8pt,gray] (0,1) ellipse (0.55 and 1.55);
  \draw[line width=0.8pt,gray] (1.95,0) ellipse (2.75 and 0.6);
  \node[vtx] (a1) at (0,2)   {$1$};
  \node[vtx] (a2) at (0,1)   {$2$};
  \node[vtx] (a3) at (0,0)   {$3$};
  \node[vtx] (a4) at (1.3,0) {$4$};
  \node[vtx] (a5) at (2.6,0) {$5$};
  \node[vtx] (a6) at (3.9,0) {$6$};
  \node[font=\small] at (-1.0,2.0) {$e_1$};
  \node[font=\small] at (1.95,-1.05) {$e_2$};
  \node[font=\small] at (1.95,2.6) {the hypertree $H$};
\end{scope}
\begin{scope}[xshift=7.2cm]
  \node[vtx] (b1) at (0,2)   {$1$};
  \node[vtx] (b2) at (0,1)   {$2$};
  \node[vtx] (b3) at (0,0)   {$3$};
  \node[vtx] (b4) at (1.3,0) {$4$};
  \node[vtx] (b5) at (2.6,0) {$5$};
  \node[vtx] (b6) at (3.9,0) {$6$};
  \draw[->,line width=0.6pt,gray] (b1) -- (b2);
  \draw[->,line width=1.6pt] (b2) -- (b3);
  \draw[->,line width=1.6pt] (b3) -- (b4);
  \draw[->,line width=0.6pt,gray] (b4) -- (b5);
  \draw[->,line width=1.6pt] (b5) -- (b6);
  \node[font=\small] at (1.95,2.6) {the oriented tree $T_\Omega$};
\end{scope}
\end{tikzpicture}
\caption{On the left, the hypertree $H$ on $V=[6]$ with hyperedges $e_1=\{1,2,3\}$ and $e_2=\{3,4,5,6\}$ drawn as gray ovals.
On the right, the tree $T_\Omega$ for the choice $\tau_{e_1}=(1,2,3)$, $\tau_{e_2}=(3,4,5,6)$, $B_{e_1}=\{2\}$ and $B_{e_2}=\{1,3\}$: each hyperedge is threaded along its chosen order, and the three bold arrows are the strict steps, the two thin gray arrows the weak ones.
Reversing the bold arrows gives the acyclic orientation $Q_\Omega$, and the labeling $\omega_\Omega$ that assigns $1,2,3,4,5,6$ to the vertices $4,6,1,3,5,2$ in this order is increasing along it, as required by \eqref{hypertree:eq:omega}.}
\label{hypertree:fig:gluing}
\end{figure}

We use the order-preserving convention for $P$-partitions.
Given a finite poset $P$ on $n$ elements and a bijection $\omega:P\to[n]$, a \emph{$(P,\omega)$-partition} is a map $\sigma:P\to\N$ such that $x<_Py$ implies $\sigma(x)\leq\sigma(y)$, and implies $\sigma(x)<\sigma(y)$ when moreover $\omega(x)>\omega(y)$.
Write $\Gamma(P,\omega)=\sum_\sigma\prod_{v\in P}x_{\sigma(v)}$ for the generating function of the $(P,\omega)$-partitions.
The fundamental theorem of $(P,\omega)$-partitions, for which see \citet{gessel1984multipartite} or, in the order-reversing convention, \citet{stanley2011enumerative}, states that
\begin{equation}
\label{hypertree:eq:ftpp}
   \Gamma(P,\omega)=\sum_{\pi\in\mathcal{L}(P)}F^{(n)}_{D_\omega(\pi)},
   \qquad
   D_\omega(\pi)=\{i\in[n-1]:\omega(\pi_i)>\omega(\pi_{i+1})\},
\end{equation}
where $\mathcal{L}(P)$ is the set of linear extensions of $P$, each written as the word $\pi_1\pi_2\cdots\pi_n$ that lists the elements of $P$ in the corresponding order.
In particular $\Gamma(P,\omega)$ is $F$-positive with integer coefficients.

\begin{remark}
\label{hypertree:rem:weights}
The weights of Proposition~\ref{hypertree:prop:local} are forced by the shape of the decomposition.
Let $p_i=x_1^i+x_2^i+\cdots$ be the $i$th power sum symmetric function.
Multiplying the identity of Proposition~\ref{hypertree:prop:local} by $\prod_{v\in e}x_{\kappa(v)}$ and summing over all $\kappa:e\to\N$ turns it into an identity of symmetric functions: the left hand side becomes $X_e=p_1^r-p_r$ for the hypergraph consisting of the single hyperedge $e$, and the generating function of $C(\tau,B)$ is $F_B^{(r)}$ for each of the $r!$ orders $\tau$, so
\[
   p_1^r-p_r=\sum_{S\subseteq[r-1]}\bigl(A_r(S)-(-1)^{|S|}\bigr)F_S^{(r)}.
\]
Applying \eqref{hypertree:eq:ftpp} to an antichain on $r$ elements gives $p_1^r=\sum_{\pi\in\mathfrak{S}_r}F^{(r)}_{\Des(\pi)}=\sum_S A_r(S)F_S^{(r)}$, so the display above is the classical expansion $p_r=\sum_{S\subseteq[r-1]}(-1)^{|S|}F_S^{(r)}$ of the power sum in the fundamental basis.
The content of Proposition~\ref{hypertree:prop:local} is that the identity already holds coloring by coloring, which is what makes it glue, and the content of Lemma~\ref{hypertree:lem:weights} is that its coefficients are nonnegative.
\end{remark}

\begin{lemma}
\label{hypertree:lem:Kisgamma}
For every $\Omega$ one has $K_\Omega=\Gamma(P_\Omega,\omega_\Omega)$.
In particular $K_\Omega$ is a nonnegative integral combination of fundamental quasisymmetric functions.
\end{lemma}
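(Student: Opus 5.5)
The plan is to show that the colorings counted by $K_\Omega$ are exactly the $(P_\Omega,\omega_\Omega)$-partitions. Once that is done, the second assertion is immediate from \eqref{hypertree:eq:ftpp}.

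We will show that both sets of maps $\kappa:V\to\N$ are cut out by the same system of local inequalities, one for each oriented edge $u\to v$ of $T_\Omega$: $\kappa(u)\le\kappa(v)$ when the edge is weak, and $\kappa(u)<\kappa(v)$ when it is strict.

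For $K_\Omega$ this is a direct rereading of \eqref{hypertree:eq:omega-conditions}. The chain condition along $\tau_e$ is the conjunction of the inequalities between consecutive entries $u_{e,j},u_{e,j+1}$. Each such pair is an oriented edge of $T_\Omega$, and it is strict exactly when $j\in B_e$. By Lemma~\ref{hypertree:lem:tree} these pairs, over all hyperedges, list every edge of $T_\Omega$ exactly once.

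For the $(P_\Omega,\omega_\Omega)$-partitions, one direction is easy. A $(P_\Omega,\omega_\Omega)$-partition satisfies every generating relation $u\to v$. It satisfies it strictly precisely when $\omega_\Omega(u)>\omega_\Omega(v)$, and by \eqref{hypertree:eq:omega} this happens exactly when the edge is strict.

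The converse is the step needing care. Suppose $\kappa$ satisfies the local inequalities, and let $x<_{P_\Omega}y$ be an arbitrary comparable pair. Because the underlying graph is a tree, this relation comes from a directed path $x=z_0\to z_1\to\cdots\to z_\ell=y$ in $T_\Omega$. Chaining the local inequalities along the path gives $\kappa(x)\le\kappa(y)$. What remains is strictness: we must show that if $\omega_\Omega(x)>\omega_\Omega(y)$ then some edge of the path is strict. Suppose instead that every edge on the path is weak. Then \eqref{hypertree:eq:omega} gives $\omega_\Omega(z_0)<\omega_\Omega(z_1)<\cdots<\omega_\Omega(z_\ell)$, so $\omega_\Omega(x)<\omega_\Omega(y)$, a contradiction. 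Some edge is therefore strict, and then $\kappa(x)<\kappa(y)$.

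This is the standard fact that a $(P,\omega)$-partition condition only needs to be checked on cover relations. It holds here because $\omega_\Omega$ is increasing along weak edges, so a run of weak edges can never produce an $\omega$-descent. We also need that every generating edge is in fact a cover relation, which holds because $T_\Omega$ is a tree and so no edge is implied by a longer path. Strictly this is not even required, since we checked all comparable pairs directly.

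Combining the two directions, the two sets coincide, and therefore $K_\Omega=\Gamma(P_\Omega,\omega_\Omega)$. Applying \eqref{hypertree:eq:ftpp} then expresses $K_\Omega$ as a sum of $F^{(n)}_{D_\omega(\pi)}$ over the linear extensions $\pi$ of $P_\Omega$, which is a nonnegative integral combination of fundamental quasisymmetric functions.
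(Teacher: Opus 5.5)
Your proof is correct and follows the paper's argument essentially step for step. Both directions reduce to the edge-wise weak and strict inequalities along $T_\Omega$. Strictness for an arbitrary comparable pair $x<_{P_\Omega}y$ comes, as in the paper, from the fact that $\omega_\Omega$ increases along weak edges, so an all-weak directed path cannot give $\omega_\Omega(x)>\omega_\Omega(y)$.
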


\begin{proof}
Suppose first that $\kappa$ satisfies \eqref{hypertree:eq:omega-conditions}.
Then along every oriented edge $u\to v$ of $T_\Omega$ one has $\kappa(u)\leq\kappa(v)$, with strict inequality when the edge is strict.
If $x<_{P_\Omega}y$ there is a directed path from $x$ to $y$ in $T_\Omega$, and chaining the inequalities along it gives $\kappa(x)\leq\kappa(y)$.
Suppose in addition that $\omega_\Omega(x)>\omega_\Omega(y)$.
By \eqref{hypertree:eq:omega} the value of $\omega_\Omega$ increases along every weak edge, so if all edges of that directed path were weak we would get $\omega_\Omega(x)<\omega_\Omega(y)$.
Hence the path contains a strict edge, so at least one of the chained inequalities is strict and $\kappa(x)<\kappa(y)$.
Thus $\kappa$ is a $(P_\Omega,\omega_\Omega)$-partition.

Conversely, suppose $\kappa$ is a $(P_\Omega,\omega_\Omega)$-partition and let $u\to v$ be an oriented edge of $T_\Omega$, so that $u<_{P_\Omega}v$.
If the edge is weak then $\omega_\Omega(u)<\omega_\Omega(v)$ by \eqref{hypertree:eq:omega} and the definition gives $\kappa(u)\leq\kappa(v)$.
If it is strict then $\omega_\Omega(u)>\omega_\Omega(v)$ and the definition gives $\kappa(u)<\kappa(v)$.
Ranging over the edges contributed by a hyperedge $e$ recovers exactly the conditions \eqref{hypertree:eq:omega-conditions} for $e$.

The colorings counted by $K_\Omega$ are therefore precisely the $(P_\Omega,\omega_\Omega)$-partitions, and the last assertion follows from \eqref{hypertree:eq:ftpp}.
\end{proof}

\subsubsection{Proof of the main theorem}

\begin{proof}[Proof of Theorem~\ref{hypertree:thm:main}]
We may assume $n\geq2$.
Every weight $W_\Omega$ in \eqref{hypertree:eq:weighted-sum} is nonnegative by Lemma~\ref{hypertree:lem:weights}, and every $K_\Omega$ is a nonnegative integral combination of fundamental quasisymmetric functions by Lemma~\ref{hypertree:lem:Kisgamma}.
Hence \eqref{hypertree:eq:weighted-sum} exhibits $X_H$ as a nonnegative rational combination of the $F_S^{(n)}$, so every $a_S$ is a nonnegative rational number.

It remains to see that the $a_S$ are integers.
For $T\subseteq[n-1]$ let $\alpha(T)=(\alpha_1(T),\dots,\alpha_m(T))$ be the associated composition of $n$, defined as above with $r$ replaced by $n$, and let
\[
   M_T^{(n)}=\sum_{i_1<\cdots<i_m}x_{i_1}^{\alpha_1(T)}\cdots x_{i_m}^{\alpha_m(T)}
\]
be the corresponding monomial quasisymmetric function.
Splitting the defining sum of $F_S^{(n)}$ according to the exact set $T$ of indices $j$ with $i_j<i_{j+1}$ gives
\[
   F_S^{(n)}=\sum_{T\supseteq S}M_T^{(n)}.
\]
Write $X_H=\sum_T c_T M_T^{(n)}$.
Then $c_T=\sum_{S\subseteq T}a_S$, so M\"obius inversion in the Boolean lattice gives
\[
   a_S=\sum_{T\subseteq S}(-1)^{|S|-|T|}c_T .
\]
Now $c_T$ is the coefficient of $x_1^{\alpha_1(T)}\cdots x_m^{\alpha_m(T)}$ in $X_H$, which is the number of proper colorings $\kappa:V\to[m]$ with $|\kappa^{-1}(i)|=\alpha_i(T)$ for every $i\in[m]$, hence a nonnegative integer.
Therefore each $a_S$ is an integer, and being also nonnegative it lies in $\mathbb{Z}_{\geq0}$.
\end{proof}

\begin{remark}
\label{hypertree:rem:partition}
The weighted decomposition produces no splittings, and hence says nothing about the partitionability of the Coxeter complexes of type $A_{r-1}$ with the empty face removed.
Nor does it recover the combinatorial interpretation of the coefficients that \citet{taylor2015chromatic} gives in the prime case.
Theorem~\ref{hypertree:thm:main} establishes that the $a_S$ are nonnegative integers, but the decomposition \eqref{hypertree:eq:weighted-sum} realizes them only as nonnegative rational combinations of numbers of linear extensions of posets.
\end{remark}

\begin{remark}
\label{hypertree:rem:forest}
Connectedness enters the argument only in Lemma~\ref{hypertree:lem:incidence} and in the proof of Lemma~\ref{hypertree:lem:tree}, where it supplies a path of $H$ between any two vertices.
For a hyperforest, that is, a disjoint union of hypertrees, the same construction produces a spanning forest $T_\Omega$ whose components are the trees built inside the components of $H$, and $P_\Omega$ is the disjoint union of the corresponding posets; \eqref{hypertree:eq:ftpp} applies verbatim and Theorem~\ref{hypertree:thm:main} holds for hyperforests as well.
\end{remark}

\putbib

\end{bibunit}
\endgroup
\subsection[(Wanless) Quartically many Fano subsquares in Latin squares]{Quartically many Fano subsquares in Latin squares}
\label{sol:latin}
\begingroup
\begin{bibunit}
\def\F{\mathbb{F}}
\def\zst{\zeta^{*}}

This problem asks whether a Latin square of order $n$ can contain more than cubically many subsquares of order $7$.
We show that it can: for every $n\ge56$ there is a Latin square of order $n$ with more than $(n/28)^4$ subsquares isotopic to the Fano square $S_7$.
A quartic upper bound is already due to \citet{browning2015overlapping}, so it is the lower bound that settles the order of growth; we also prove the explicit upper bound $n^4$ by a short self-contained argument.
The construction is an affine lift of the Fano quasigroup over a finite field of characteristic $2$.
The upper bound comes from the observation that three rows and one column already generate the whole incidence structure of $S_7$.

\subsubsection{Introduction}

Let $M$ be a Latin square of order $n$.
A \emph{subsquare} of order $m$ in $M$ is a set of $m$ rows together with a set of $m$ columns whose induced $m\times m$ subarray contains only $m$ distinct symbols; that subarray is then itself a Latin square, and its symbol set is determined by the chosen rows and columns.
Two Latin squares are \emph{isotopic} if one is carried to the other by relabeling rows, columns and symbols independently.
Following \citet{browning2014bounds}, write $\zeta(n,m)$ for the largest number of subsquares of order $m$ in a Latin square of order $n$, and, for a fixed Latin square $L$, write $\zst(n,L)$ for the largest number of subsquares isotopic to $L$ in a Latin square of order $n$.

Let $V=\F_2^3$ and $P=V\setminus\{0\}$.
The \emph{Fano square} $S_7$ is the Cayley table of the quasigroup $(P,\circ)$ defined by
\begin{equation}\label{latin:eq:fano}
x\circ y=\begin{cases}
x, & x=y,\\
x+y, & x\neq y,
\end{cases}
\end{equation}
with addition in $V$.
Equivalently, $S_7$ is the Steiner quasigroup of the Fano plane: the unordered triples $\{x,y,x+y\}$ with $x\neq y$ are exactly the seven lines of $\mathrm{PG}(2,2)$.
Writing $1,\dots,7$ for the nonzero vectors of $V$ in binary notation, $S_7$ is the array
\[
\begin{array}{c|ccccccc}
\circ & 1 & 2 & 3 & 4 & 5 & 6 & 7\\\hline
1 & 1 & 3 & 2 & 5 & 4 & 7 & 6\\
2 & 3 & 2 & 1 & 6 & 7 & 4 & 5\\
3 & 2 & 1 & 3 & 7 & 6 & 5 & 4\\
4 & 5 & 6 & 7 & 4 & 1 & 2 & 3\\
5 & 4 & 7 & 6 & 1 & 5 & 3 & 2\\
6 & 7 & 4 & 5 & 2 & 3 & 6 & 1\\
7 & 6 & 5 & 4 & 3 & 2 & 1 & 7
\end{array}
\]

Wanless asked the following at the LOOPS '11 open problem session \citep[Problem~2.8]{loops2011session}.

\medskip
\noindent
\emph{Fix a prime $p$.
Is there a family of latin squares with more than cubically many subsquares of order $p$?
More precisely, is it true that for every constant $c$ there is a latin square $L$ of order $n$ such that there are more than $cn^3$ subsquares of order $p$ in $L$?}
\medskip

\noindent
The note accompanying the problem records that the answer is negative for $p\in\{2,3,5\}$, and that for $p=7$ the subsquares would have to be multiplication tables of a Steiner quasigroup.
Since the Steiner triple system of order $7$ is unique up to isomorphism, the only Steiner quasigroup of order $7$ is $(P,\circ)$.
The case $p=7$ of the problem is therefore precisely the question, left open by \citet{browning2014bounds}, of whether $\zst(n,S_7)$ grows faster than cubically.
We answer it affirmatively.

The results of \citet{browning2014bounds} give the exact order of growth of $\zeta(n,m)$ for several small $m$: one has $\zeta(n,m)=\Theta(n^3)$ for $m\in\{2,3,5\}$ and $\zeta(n,m)=\Theta(n^4)$ for $m\in\{4,6,9,10\}$, with the sharper bounds $n^3/8+O(n^2)\le\zeta(n,2)\le n^3/4+O(n^2)$ and $n^3/27+O(n^{5/2})\le\zeta(n,3)\le n^3/18+O(n^2)$.
For a fixed square they show that $\zst(n,L)=\Theta(n^3)$ when $L$ is cyclic, that $\zst(n,L)=O(n^3)$ for a large class of $L$, and that every $L$ admits some $\varepsilon\in(0,1)$ with $\zst(n,L)=\Omega(n^{2+\varepsilon})$.
The value $m=7$ is absent from both lists, and by the note quoted above the only order-$7$ isotopy class that can push $\zeta(n,7)$ past cubic growth is $S_7$.
On the upper bound side, \citet[Theorem~8]{browning2015overlapping} show that a Latin square of order $n$ has $O(n^{\psi(m,t)+t})$ subsquares of order $m$ for all positive integers $t\le m\le n$, where $\psi(m,t)=\lceil\tfrac12\lfloor m/t\rfloor\rceil$ for odd $m$; with $m=7$ and $t=2$ this gives $\zeta(n,7)=O(n^4)$.
For subsquares of unbounded order, \citet{browning2013bounds} prove that a Latin square of order $n$ has at most $n^{O(\log k)}$ subsquares of order $k$.

Extremal behavior here is very far from typical behavior.
\citet{mckay1999most} show that for every $\varepsilon>0$ almost all Latin squares of order $n$ have at least $n^{3/2-\varepsilon}$ subsquares of order $2$, and \citet{allsop2025subsquares} show that a uniformly random $k\times n$ Latin rectangle has no proper subsquare of order $4$ or more with probability $1-O(1/n)$.
In particular a random Latin square of order $n$ has, with probability $1-O(1/n)$, no proper subsquare of order $7$, so a positive answer must come from an explicit construction.
The main result of this section is the following.

\begin{theorem}\label{latin:thm:main}
For every $n\ge 56$,
\[
\left(\frac{n}{28}\right)^{4}<\zst(n,S_7)\le n^{4}.
\]
In particular $\zst(n,S_7)=\Theta(n^4)$.
\end{theorem}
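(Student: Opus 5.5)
We prove the two bounds separately, the lower by an explicit construction and the upper by a generating-set count.

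\textbf{Lower bound: the affine lift.} Let $K=\F_{2^s}$ and work with the $3$-dimensional $K$-space $K^3$. We take the quasigroup on $P_K=\F_2^3\setminus\{0\}$ times an affine $K$-part. Concretely, we fix a Latin square structure on $Q=P\times K^{3}$, or more economically on $P\times K$, given by
\[
(x,a)\ast(y,b)=\bigl(x\circ y,\ \lambda_{x,y}a+\mu_{x,y}b\bigr),
\]
with coefficients $\lambda,\mu\in K^{*}$ chosen so that each row and column map is a bijection. The target is that for many affine "lines" $a\mapsto$ parametrised families the seven-point subsets $\{(x,\phi(x))\}$ with $\phi$ affine-compatible are closed. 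Since $\circ$ is idempotent with $x\circ y=x+y$, the natural choice is $\lambda=\mu=1$ off the diagonal and $(x,a)\ast(x,b)=(x,a+b+c)$ suitably corrected so the square stays Latin. Then a subsquare arises from a choice of row set $\{(x,\alpha(x))\}$ and column set $\{(y,\beta(y))\}$, where $\alpha,\beta$ are restrictions to $P$ of $\F_2$-linear-plus-constant maps $V\to K$. There are $|K|^{4}$ such maps (a linear map $\F_2^3\to K$ has $|K|^3$ choices and the constant adds one more factor), and I expect the row set to determine the column set up to $O(1)$ choices.

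The heart of the lower bound is the check that with $\alpha(x)=L(x)+c$ and $L$ additive, the product $\alpha(x)+\beta(y)$ equals $\gamma(x+y)$ for a single affine $\gamma$. This holds in characteristic $2$ because $L(x)+L(y)=L(x+y)$. The diagonal entries force compatibility of the constants, which is where characteristic $2$ (so that $c+c=0$) is used. Carrying this out yields $\gg |K|^4$ distinct order-$7$ subsquares isotopic to $S_7$ in a square of order $7|K|$ or $7|K|^{?}$, and I would tune the construction so that the order is $7|K|$.

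\textbf{Passing to arbitrary $n$.} For general $n\ge56$ we pick the largest $2^s$ with $7\cdot2^s\le n/2$ or so, and embed the constructed square as a subsquare of a Latin square of order $n$. This is possible by Ryser/Evans when $n\ge 2\cdot 7\cdot2^s$. Losing factors of $2$ and $4$ in the choice of $2^s$ accounts for the constant $28$: $2^s\ge n/28$ gives more than $(n/28)^4$ subsquares. I would verify that the count has no loss from subsquares coinciding.

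\textbf{Upper bound $n^4$.} We show that a subsquare isotopic to $S_7$ is determined by three of its rows and one of its columns. In $S_7$, take rows $r_1,r_2$ and the row $r_3$ indexed by $r_1\circ r_2$ is not generating, so instead choose three rows indexed by a basis of $V$ together with one column. From one column $c$ and row $r$ we get the symbol $M(r,c)$. The rows and the column then generate all seven columns via "column whose entry in row $r$ equals a known symbol", because $S_7$ is idempotent and totally symmetric and the Fano plane is generated by a basis.

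Counting the choices naively gives $n^3\cdot n$. We then need to avoid overcounting, or accept it since this is an upper bound; we must only ensure the choice of $(r_1,r_2,r_3,c)$ is made canonically, e.g.\ the lexicographically smallest triple of rows forming a basis in the isotopy. Since any such tuple determines at most one subsquare, the count is at most $n^4$.

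The main obstacle is the lower-bound construction: making the lifted square genuinely Latin on the diagonal blocks while keeping all $|K|^4$ affine families closed. I would first try $(x,a)\ast(x,b)=(x,a+b)$, which is Latin in each block, and then check closure of the diagonal entries $\alpha(x)+\beta(x)=\gamma(x)$. This forces $\gamma=\alpha+\beta$ pointwise, which is compatible with $\gamma(x+y)=\alpha(x)+\beta(y)$ only if the constants are arranged consistently.
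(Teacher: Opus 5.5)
Your upper bound is the paper's argument: three rows indexed by a basis of $\mathbb{F}_2^3$ and one column generate all twenty-one vertices of a Fano subsquare, so the recorded quadruple determines it. No canonical choice is needed, and the paper verifies the generation explicitly using the column indexed by $e_1+e_2+e_3$. Your passage from order $7q$ to general $n$ via Evans' embedding theorem is also the paper's.

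\textbf{The gap is in the lower-bound construction}, at exactly the step you flag, and the diagonal rules you actually propose fail. Take $(x,a)\ast(y,b)=(x+y,a+b)$ for $x\neq y$ and $(x,a)\ast(x,b)=(x,a+b+c)$. Use rows $\{(x,\alpha(x))\}$, columns $\{(y,\beta(y))\}$ and symbols $\{(z,\gamma(z))\}$ with $\alpha,\beta,\gamma$ additive-plus-constant. The off-diagonal cells force a common additive part: $\alpha=\phi+r$, $\beta=\phi+s$, $\gamma=\phi+t$, with $t=r+s$. The diagonal cell then demands $(\phi(x)+r)+(\phi(x)+s)+c=\phi(x)+r+s$, that is, $\phi(x)=c$ for every $x\in P$. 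Characteristic $2$ does not reconcile the diagonal; it breaks it, because the two copies of $\phi(x)$ cancel. An additive map that is constant on $P$ vanishes, so $\phi=0$ and $c=0$. Your family therefore collapses to the $q^2$ choices with $\phi=0$. With $c=0$ your square is just the direct product $S_7\times(\mathbb{F}_q,+)$, and one can check that these $q^2$ are all of its Fano subsquares. So the obstruction is not a matter of arranging the constants, and no additive correction on the diagonal fixes it.

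\textbf{The missing idea} is to make the two diagonal coefficients sum to $1$, so that the linear part survives, while keeping both nonzero so the square stays Latin. The paper uses $(x,u)\ast(x,v)=(x,\lambda u+(1+\lambda)v)$ with $\lambda\in\mathbb{F}_q\setminus\{0,1\}$. This is why $q\ge 4$ is needed, and hence $n\ge 56$.
\begin{itemize}
\item On the diagonal one then gets $\lambda(\phi(x)+r)+(1+\lambda)(\phi(x)+\tilde r)=\phi(x)+\lambda r+(1+\lambda)\tilde r$.
\item This equals $\phi(x)+r+\tilde r$ exactly when $\lambda\tilde r=(1+\lambda)r$.
\item So the column constant is not free: it is forced to be $\tilde r=\lambda^{-1}(1+\lambda)r$.
\item The count is $q^3$ additive maps times $q$ values of $r$. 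Distinctness follows because an additive map constant on $P$ is zero.
\end{itemize}
Your general template $\lambda_{x,y}a+\mu_{x,y}b$ contains this choice, but the proposal never makes it. As written, it establishes nothing better than a quadratic lower bound.
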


Combined with the bound of \citet{browning2015overlapping} quoted above, this settles the order of growth of $\zeta(n,7)$ as well, adding $7$ to the list of $m$ for which $\zeta(n,m)$ is known.

\begin{corollary}\label{latin:cor:order7}
$\zeta(n,7)=\Theta(n^4)$.
\end{corollary}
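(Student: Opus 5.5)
The plan is to sandwich $\zeta(n,7)$ between two quartic bounds that are already available: the lower bound comes from Theorem~\ref{latin:thm:main}, and the upper bound comes from the overlapping-subsquare bound of \citet{browning2015overlapping} quoted in the introduction. No new construction is needed.

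For the lower bound, the first step is the comparison $\zeta(n,7)\ge\zst(n,S_7)$ for every $n$. Any subsquare isotopic to $S_7$ is in particular a subsquare of order $7$. So in any Latin square of order $n$, the number of order-$7$ subsquares is at least the number isotopic to $S_7$. Taking the Latin square that attains $\zst(n,S_7)$ gives the comparison. Theorem~\ref{latin:thm:main} then yields $\zeta(n,7)>(n/28)^4$ for all $n\ge56$. Since $\Theta$ is an asymptotic statement, the finitely many $n<56$ are irrelevant, and this gives $\zeta(n,7)=\Omega(n^4)$.

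For the upper bound, apply \citet[Theorem~8]{browning2015overlapping} with $m=7$ and $t=2$. Because $m$ is odd,
\[
\psi(7,2)=\Bigl\lceil\tfrac12\lfloor 7/2\rfloor\Bigr\rceil=\lceil 3/2\rceil=2 .
\]
Hence every Latin square of order $n\ge7$ has $O(n^{2+2})=O(n^4)$ subsquares of order $7$, that is, $\zeta(n,7)=O(n^4)$.

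Combining the two bounds gives $\zeta(n,7)=\Theta(n^4)$. The only step needing care is checking that the hypotheses $t\le m\le n$ of the cited theorem hold, which they do for $n\ge7$, and computing the exponent $\psi(7,2)$ correctly. I expect no real obstacle: all of the difficulty is in the lower bound, which Theorem~\ref{latin:thm:main} already provides.
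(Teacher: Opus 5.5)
Your proposal is correct and is exactly the paper's argument. The lower bound comes from $\zeta(n,7)\ge\zeta^{*}(n,S_7)>(n/28)^4$ via Theorem~\ref{latin:thm:main}. The matching $O(n^4)$ upper bound is Theorem~8 of Browning et al.\ with $m=7$, $t=2$, and your computation $\psi(7,2)=2$ is right.
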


We now summarize the construction.
The key point is that $S_7$ is almost an $\F_2$-linear object: off the diagonal the rule \eqref{latin:eq:fano} is simply addition in $V$, and the diagonal rule $x\circ x=x$ is the only obstruction to linearity.
We therefore work on $P\times\F_q$ with $q$ a power of two, keep the additive rule off the diagonal, and replace the diagonal rule by a nontrivial affine combination $\lambda u+(1+\lambda)v$ in the fiber coordinate.
The copies of $S_7$ are then the translated graphs of the $\F_2$-linear maps $V\to\F_q$: there are $q^3$ such maps and $q$ translations, giving $q^4$ subsquares in a Latin square of order $7q$.
The affine parameter $\lambda$ is what forces the diagonal cells of each graph to close up correctly.

\subsubsection{The construction}

Fix a power of two $q\ge4$ and an element $\lambda\in\F_q\setminus\{0,1\}$.
Define a binary operation $*$ on $P\times\F_q$ by
\begin{equation}\label{latin:eq:lift}
(x,u)*(y,v)=
\begin{cases}
(x,\ \lambda u+(1+\lambda)v), & x=y,\\
(x+y,\ u+v), & x\neq y,
\end{cases}
\end{equation}
where the first coordinate is computed in $V$ and the second in $\F_q$.
Write $L_q$ for the resulting array, with rows and columns indexed by $P\times\F_q$.

\begin{lemma}\label{latin:lem:latin-square}
$L_q$ is a Latin square of order $7q$.
\end{lemma}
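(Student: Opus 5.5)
To prove that $L_q$ is a Latin square, it suffices to show that each row map $(y,v)\mapsto(x,u)*(y,v)$ is a bijection of $P\times\F_q$, and likewise each column map. The set has $7q$ elements, so injectivity is enough in each case. I fix a row $(x,u)$ and split the columns according to whether $y=x$ or $y\ne x$.

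For the columns with $y=x$, the entry is $(x,\lambda u+(1+\lambda)v)$. Since $\lambda\ne1$ and the characteristic is $2$, we have $1+\lambda\neq0$, so $v\mapsto\lambda u+(1+\lambda)v$ is a bijection of $\F_q$. These $q$ columns therefore produce exactly the $q$ symbols with first coordinate $x$.

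For the columns with $y\neq x$, the first coordinate is $x+y$. As $y$ ranges over $P\setminus\{x\}$, this takes each value in $P\setminus\{x\}$ exactly once: it is nonzero because $y\ne x$, it differs from $x$ because $y\ne0$, and $y\mapsto x+y$ is injective. For each such $y$, the map $v\mapsto u+v$ is a bijection of $\F_q$. These columns thus cover every symbol whose first coordinate lies in $P\setminus\{x\}$, each exactly once.

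The two parts have disjoint first coordinates, so together they cover $P\times\F_q$ bijectively.

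The column argument is symmetric: fix $(y,v)$ and vary $(x,u)$. On the diagonal block the map is $u\mapsto\lambda u+(1+\lambda)v$, which is a bijection because $\lambda\neq0$. Off the diagonal the same additive argument applies.

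There is no real obstacle here. The only point needing care is to use $\lambda\notin\{0,1\}$, which gives both $\lambda\ne0$ and $1+\lambda\ne0$, and to note that $x\ne y$ forces $x+y\in P\setminus\{x,y\}$.
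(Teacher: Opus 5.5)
Your proof is correct and is essentially the paper's argument. The paper fixes a row (or column) and a symbol and solves for the unique missing coordinate, splitting on whether the symbol's first coordinate equals that of the fixed line. You instead show the row and column maps are bijections by splitting on diagonal versus off-diagonal cells, which is the same case analysis, with $1+\lambda\neq0$ and $\lambda\neq0$ used in the same places.
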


\begin{proof}
Since $|P\times\F_q|=7q$, it suffices to check that $*$ is a quasigroup operation, that is, that for each row and each symbol there is a unique column producing that symbol, and likewise with the roles of rows and columns interchanged.

Fix a row $(x,u)$ and a symbol $(z,w)$, and seek a column $(y,v)$ with $(x,u)*(y,v)=(z,w)$.
Suppose first that $z=x$.
The second branch of \eqref{latin:eq:lift} would force $x+y=z=x$, hence $y=0\notin P$, so the first branch applies and $y=x$.
The remaining equation $\lambda u+(1+\lambda)v=w$ has the unique solution
\[
v=(1+\lambda)^{-1}(w+\lambda u),
\]
because $1+\lambda\neq0$.
Suppose next that $z\neq x$.
The first branch would force $z=x$, so the second branch applies and $y=x+z$, which is nonzero and distinct from $x$ because $z\neq x$ and $z\neq0$.
The remaining equation $u+v=w$ has the unique solution $v=w+u$.

Now fix a column $(y,v)$ and a symbol $(z,w)$, and seek a row $(x,u)$.
If $z=y$ then exactly as before $x=y$, and $\lambda u+(1+\lambda)v=w$ has the unique solution $u=\lambda^{-1}(w+(1+\lambda)v)$ because $\lambda\neq0$.
If $z\neq y$ then $x=y+z\in P$ with $x\neq y$, and $u=w+v$.
\end{proof}

We next exhibit many Fano subsquares of $L_q$.
Let $\phi\colon V\to\F_q$ be an $\F_2$-linear map and let $r\in\F_q$.
Put
\[
\tilde r=\lambda^{-1}(1+\lambda)r,
\]
and define
\begin{gather*}
R_{\phi,r}=\{(x,\phi(x)+r):x\in P\},
\qquad
C_{\phi,r}=\{(x,\phi(x)+\tilde r):x\in P\},\\
T_{\phi,r}=\{(x,\phi(x)+r+\tilde r):x\in P\}.
\end{gather*}
Each of these sets has exactly seven elements, since its members have distinct first coordinates.

\begin{lemma}\label{latin:lem:copies}
For every pair $(\phi,r)$ the subarray of $L_q$ on the rows $R_{\phi,r}$ and the columns $C_{\phi,r}$ is a subsquare with symbol set $T_{\phi,r}$, and it is isotopic to $S_7$.
\end{lemma}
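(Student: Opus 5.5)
The plan is to compute the product $(x,\phi(x)+r)*(y,\phi(y)+\tilde r)$ directly from \eqref{latin:eq:lift}, splitting into the off-diagonal case $x\neq y$ and the diagonal case $x=y$. In each case I will check that the result lies in $T_{\phi,r}$, and more precisely that it equals $\bigl(x\circ y,\ \phi(x\circ y)+r+\tilde r\bigr)$ with $\circ$ the Fano operation \eqref{latin:eq:fano}.

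In the off-diagonal case the second branch of \eqref{latin:eq:lift} gives first coordinate $x+y$. The second coordinate is $\phi(x)+r+\phi(y)+\tilde r$, and $\F_2$-linearity of $\phi$ turns this into $\phi(x+y)+r+\tilde r$. Since $x+y=x\circ y$ when $x\neq y$, this is the required element of $T_{\phi,r}$.

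The diagonal case is where the choice of $\tilde r$ matters. The first branch gives first coordinate $x$ and second coordinate
\[
\lambda(\phi(x)+r)+(1+\lambda)(\phi(x)+\tilde r).
\]
In characteristic $2$ the $\phi(x)$ terms combine to $(\lambda+1+\lambda)\phi(x)=\phi(x)$. The remaining term is $\lambda r+(1+\lambda)\tilde r$, and I need it to equal $r+\tilde r$. That condition is equivalent to $(1+\lambda)r=\lambda\tilde r$, which is exactly the definition $\tilde r=\lambda^{-1}(1+\lambda)r$. So $(x,\phi(x)+r)*(x,\phi(x)+\tilde r)=(x,\phi(x)+r+\tilde r)$, and since $x\circ x=x$ this also matches $x\circ y$. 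This verification is the one genuinely delicate step, and it is short.

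It follows that every cell of the $7\times7$ subarray carries a symbol from the $7$-element set $T_{\phi,r}$, so the subarray is a subsquare with symbol set $T_{\phi,r}$. For the isotopy, index the rows, columns and symbols by $P$ through their first coordinates; these three maps are bijections because the first coordinates within each set are distinct. Under these identifications the subarray becomes the table of $(x,y)\mapsto x\circ y$, which is exactly $S_7$.
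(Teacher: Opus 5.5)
Your proposal is correct and follows the paper's proof essentially step for step. You index rows, columns and symbols by their first coordinates, show the product is the symbol indexed by $x\circ y$, use $\mathbb{F}_2$-linearity of $\phi$ off the diagonal and the defining relation $\lambda\tilde r=(1+\lambda)r$ on the diagonal, and read off the isotopism from the indexing.
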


\begin{proof}
Index the row $(x,\phi(x)+r)$, the column $(y,\phi(y)+\tilde r)$ and the symbol $(z,\phi(z)+r+\tilde r)$ by their first coordinates $x,y,z\in P$.
We claim that the row indexed by $x$ times the column indexed by $y$ is the symbol indexed by $x\circ y$.

Suppose $x\neq y$.
The second branch of \eqref{latin:eq:lift} applies, and since $\phi$ is $\F_2$-linear,
\[
(\phi(x)+r)+(\phi(y)+\tilde r)=\phi(x+y)+r+\tilde r .
\]
The product is therefore the symbol indexed by $x+y=x\circ y$.

Suppose instead $x=y$.
The first branch of \eqref{latin:eq:lift} applies, and in characteristic $2$,
\[
\lambda\bigl(\phi(x)+r\bigr)+(1+\lambda)\bigl(\phi(x)+\tilde r\bigr)
=(\lambda+1+\lambda)\phi(x)+\lambda r+(1+\lambda)\tilde r
=\phi(x)+\lambda r+(1+\lambda)\tilde r .
\]
The definition of $\tilde r$ says exactly that $\lambda\tilde r=(1+\lambda)r$, and adding $\lambda r+\tilde r$ to both sides of this identity turns it into
\[
\lambda r+(1+\lambda)\tilde r=r+\tilde r .
\]
The product is therefore the symbol indexed by $x=x\circ x$, which proves the claim.

Consequently the subarray on $R_{\phi,r}\times C_{\phi,r}$ uses exactly the seven symbols of $T_{\phi,r}$, so it is a subsquare of order $7$, and the three indexings by elements of $P$ exhibit an isotopism from $S_7$ onto it.
\end{proof}

\begin{lemma}\label{latin:lem:distinct}
The $q^4$ subsquares produced by Lemma~\ref{latin:lem:copies} are pairwise distinct.
\end{lemma}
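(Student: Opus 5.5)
The plan is to show that the pair $(\phi,r)$ can be read off from the subsquare, more precisely from its set of rows $R_{\phi,r}$. A subsquare is determined by its row set and column set, and Lemma~\ref{latin:lem:copies} constructs it from $(\phi,r)$. So it suffices to prove that the map $(\phi,r)\mapsto R_{\phi,r}$ is injective on pairs with $\phi\colon V\to\F_q$ an $\F_2$-linear map and $r\in\F_q$. There are $q^3$ such maps, since $\phi$ is fixed by its values on the basis $e_1,e_2,e_3$ of $V$. Together with the $q$ choices of $r$ this gives the count $q^4$.

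The set $R_{\phi,r}$ contains exactly one element with each first coordinate $x\in P$. Hence $R_{\phi,r}$ determines the function $g\colon P\to\F_q$, $g(x)=\phi(x)+r$.

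Recovering $r$ uses three collinear points. Take $x=e_1$, $y=e_2$ and $x+y=e_1+e_2$, all in $P$. Linearity gives
\[
g(e_1)+g(e_2)+g(e_1+e_2)=2\phi(e_1)+2\phi(e_2)+3r=r
\]
in characteristic $2$. Once $r$ is known, $\phi(x)=g(x)+r$ for every $x\in P$. With $\phi(0)=0$ this pins down $\phi$ on all of $V$.

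So if $R_{\phi,r}=R_{\phi',r'}$, then $(\phi,r)=(\phi',r')$, and the $q^4$ row sets are pairwise distinct. There is no real obstacle here. The only point needing care is that distinct pairs could in principle give the same row set, and the characteristic-$2$ identity above is exactly what rules this out.
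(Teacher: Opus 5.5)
Your proof is correct and essentially the paper's: both reduce to injectivity of $(\phi,r)\mapsto R_{\phi,r}$. Both read off the function $x\mapsto\phi(x)+r$ on $P$ and use a Fano line $\{a,b,a+b\}$ in characteristic $2$ to isolate $r$. The only difference is that the paper applies this to the difference $\phi+\psi$ to get $r+s=0$, whereas you recover $r$ directly as $g(e_1)+g(e_2)+g(e_1+e_2)$.
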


\begin{proof}
There are $q^3$ $\F_2$-linear maps $V\to\F_q$ and $q$ choices of $r$, so there are $q^4$ pairs $(\phi,r)$; it suffices to show that distinct pairs give distinct row sets.
Suppose $R_{\phi,r}=R_{\psi,s}$.
Each of the two sets contains exactly one element with first coordinate $x$, for every $x\in P$, so $\phi(x)+r=\psi(x)+s$ for every $x\in P$.
Thus the $\F_2$-linear map $h=\phi+\psi$ takes the constant value $r+s$ on $P$.
Choosing linearly independent $a,b\in V$ gives
\[
r+s=h(a+b)=h(a)+h(b)=(r+s)+(r+s)=0 .
\]
Hence $r=s$, and $h$ vanishes on $P$ and at $0$, so $\phi=\psi$.
\end{proof}

\begin{corollary}\label{latin:cor:lower}
For every power of two $q\ge4$ one has $\zst(7q,S_7)\ge q^4$.
Moreover $\zst(n,S_7)>(n/28)^4$ for every $n\ge56$.
\end{corollary}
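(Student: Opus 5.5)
The first assertion is immediate from what precedes it. By Lemma~\ref{latin:lem:latin-square} the array $L_q$ is a Latin square of order $7q$. By Lemmas~\ref{latin:lem:copies} and~\ref{latin:lem:distinct} it contains at least $q^4$ distinct subsquares isotopic to $S_7$. Hence $\zst(7q,S_7)\ge q^4$ for every power of two $q\ge4$. Note that $q\ge4$ is needed only so that some $\lambda\in\F_q\setminus\{0,1\}$ exists.

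For general $n\ge56$ the plan is to embed $L_q$ into a Latin square of order $n$ without destroying any of its subsquares. Choose $q$ to be the largest power of two with $7q\le n/2$, that is, $14q\le n$. Since $n\ge56$ we have $q\ge4$, and maximality gives $28q>n$, so $q>n/28$. Now apply Evans' embedding theorem, a classical result that is not stated earlier in the paper: a Latin square of order $k$ embeds in a Latin square of every order $n\ge2k$. Here $k=7q\le n/2$, so $L_q$ sits as a $7q\times 7q$ subarray of some Latin square $M$ of order $n$. Each of the $q^4$ Fano subsquares of $L_q$ is determined by its rows and columns inside $L_q$, so it remains a subsquare of $M$ isotopic to $S_7$, and distinct ones stay distinct. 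This gives $\zst(n,S_7)\ge q^4>(n/28)^4$, with strict inequality because $q>n/28$.

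The only delicate point is the embedding step. If quoting Evans' theorem is not acceptable, I would instead give a direct construction. One option is a direct product $L_q\times N$ with a Latin square $N$ of order $n/(7q)$, which works only when $7q$ divides $n$. Another is a prolongation or completion argument: extend the $7q\times 7q$ array to a $7q\times n$ Latin rectangle using the cyclic symbols $7q,\dots,n-1$, then complete the rectangle by Hall's theorem. The bound $n\ge2k$ is exactly what Evans' theorem requires, and the constant $28=2\cdot 2\cdot 7$ in the statement reflects one factor of $2$ from Evans and one from rounding $q$ down to a power of two. This matches the statement, which suggests Evans is the intended tool. I expect no other obstacle.
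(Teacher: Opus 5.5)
Your proposal is correct and matches the paper's proof: the first claim follows from the three preceding lemmas, and the second follows by taking the largest power of two $q$ with $14q\le n$ (so $q\ge4$ and $q>n/28$). One then embeds $L_q$ in a Latin square of order $n$ via Evans' theorem, which keeps all $q^4$ Fano subsquares intact.
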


\begin{proof}
The first assertion is immediate from Lemmas~\ref{latin:lem:latin-square}, \ref{latin:lem:copies} and \ref{latin:lem:distinct}.

For the second, let $n\ge56$ and let $q$ be the largest power of two with $14q\le n$.
Then $q\ge4$, and maximality gives $28q>n$, so $q>n/28$.
By Evans' embedding theorem, every partial Latin square of order $N$ embeds in a Latin square of every order at least $2N$ \citep{evans1960embedding}.
Applying this to $L_q$, which is a Latin square of order $7q$ and in particular a partial Latin square of order $7q$, produces a Latin square $M$ of order $n\ge14q$ whose subarray on the first $7q$ rows and columns is $L_q$.
Every subsquare of $L_q$ is a subsquare of $M$, so $M$ contains at least $q^4>(n/28)^4$ subsquares isotopic to $S_7$.
\end{proof}

\begin{example}\label{latin:ex:q4}
Take $q=4$, so that $\F_4=\{0,1,\alpha,\alpha^{2}\}$ with $\alpha^{3}=1$, and $\lambda\in\{\alpha,\alpha^{2}\}$.
Then $L_4$ is a Latin square of order $28$ containing at least $4^4=256$ subsquares isotopic to $S_7$.
\end{example}

\subsubsection{The upper bound}

It is convenient to view a Latin square $M$ of order $n$ as a tripartite incidence structure.
Its vertices are the $n$ rows, the $n$ columns and the $n$ symbols of $M$, and each of the $n^2$ cells contributes the triple consisting of its row, its column and the symbol it carries.
The defining property of a Latin square is exactly that any two vertices lying in different parts belong to a unique common triple.
Isotopisms are precisely the isomorphisms of these structures that respect the three parts.

Observe that if $K$ is a subsquare of $M$ and two of the three vertices of some triple of $M$ belong to $K$, then so does the third.
Indeed, if a row and a column of $K$ are given then the symbol in that cell is a symbol of $K$; if a row and a symbol of $K$ are given then the column in which that symbol occurs in that row of $K$ is the unique such column in $M$; and the third case is symmetric.

For $x\in P$ write $R_x$, $C_x$ and $T_x$ for the row, column and symbol vertices of $S_7$ indexed by $x$, so that $R_x$, $C_y$ and $T_{x\circ y}$ form a triple for all $x,y\in P$.

\begin{lemma}\label{latin:lem:generated}
Let $e_1,e_2,e_3$ be a basis of $V=\F_2^3$ and put $e_4=e_1+e_2+e_3$.
Then the four vertices $R_{e_1},R_{e_2},R_{e_3},C_{e_4}$ generate all twenty-one vertices of $S_7$ under the rule that two known vertices in different parts determine the third vertex of their triple.
\end{lemma}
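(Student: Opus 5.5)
The plan is to run the closure rule by hand, using only the description of the triples: $R_x$, $C_y$ and $T_{x\circ y}$ form a triple, where $x\circ y=x+y$ for $x\neq y$ and $x\circ x=x$. Given any two known vertices in different parts, the third vertex of their triple is determined. In particular, a known row $R_x$ and a known symbol $T_z$ determine the column $C_y$ with $x\circ y=z$. That column is $y=x$ when $z=x$, and $y=x+z$ otherwise.

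First, I will combine the three rows with the column $C_{e_4}$. Since $e_4\notin\{e_1,e_2,e_3\}$, this yields the symbols $T_{e_1+e_4}=T_{e_2+e_3}$, $T_{e_1+e_3}$ and $T_{e_1+e_2}$. Next, I pair each known row $R_{e_i}$ with a symbol $T_z$ where $z\neq e_i$, to produce the column $C_{e_i+z}$. For example, $R_{e_1}$ with $T_{e_1+e_2}$ gives $C_{e_2}$, and $R_{e_1}$ with $T_{e_1+e_3}$ gives $C_{e_3}$. Likewise, $R_{e_2}$ with $T_{e_1+e_2}$ gives $C_{e_1}$. We then have the columns $C_{e_1},C_{e_2},C_{e_3},C_{e_4}$.

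Now pairing each of the three known rows with each of these columns produces every symbol $T_{x\circ y}$. The diagonal pairs $R_{e_i},C_{e_i}$ give $T_{e_1},T_{e_2},T_{e_3}$. The off-diagonal pairs give all pairwise sums $T_{e_i+e_j}$ and $T_{e_i+e_4}$. These cover all seven nonzero vectors: the $e_i$, the $e_i+e_j$, and $e_1+e_2+e_3=e_4$, the last arising as, say, $e_1+(e_2+e_3)$. To see it concretely, $R_{e_1}$ with $C_{e_2+e_3}$ would give it, but we can reach it more directly: once $T_{e_4}$ is needed, pair $R_{e_1}$ with $T_{e_4}$ to get $C_{e_1+e_4}$, or argue symmetrically. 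I will organize the argument so that all seven symbols are known at this stage.

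With all seven symbols known, a single row $R_{e_1}$ paired with each symbol $T_z$ gives every column, since $z\mapsto y$ is a bijection in the Latin row. With all columns known, a single column paired with each symbol gives every row. This yields all $21$ vertices.

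The only delicate point is keeping the bookkeeping correct so that $T_{e_4}$ and the remaining columns $C_{e_i+e_j}$ are genuinely reached. The cleanest route is to note that once one row and all symbols are known, everything follows by the Latin property. So the real task is to verify that the seven symbols arise from the three rows and the four columns $C_{e_1},\dots,C_{e_4}$. Here $e_4$ itself arises as $e_4\circ e_4$ only via a row $R_{e_4}$. To avoid that, I will obtain $T_{e_4}$ as $R_{e_1}\circ C_{e_2+e_3}$, first producing $C_{e_2+e_3}$ from $R_{e_2}$ and $T_{e_3}$ via $e_2+e_3$.
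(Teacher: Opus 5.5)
Your argument is correct once it is run as in your final paragraph, and it is essentially the paper's hand computation of the closure. The twelve products of $R_{e_1},R_{e_2},R_{e_3}$ with $C_{e_1},\dots,C_{e_4}$ yield only six symbols, so the claim in your third paragraph (and the circular use of $T_{e_4}$ there) must go; $T_{e_4}$ really does need your detour, in which $R_{e_2}$ and $T_{e_3}$ give $C_{e_2+e_3}$ and then $R_{e_1}$ and $C_{e_2+e_3}$ give $T_{e_4}$. The only difference from the paper is the order of completion: the paper uses the pairs $R_{e_i},T_{e_j}$ to produce all three missing columns $C_{e_i+e_j}$ first and then reads every symbol off the row $R_{e_1}$, whereas you complete the symbols first and then read every column off $R_{e_1}$.
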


\begin{proof}
Throughout we use \eqref{latin:eq:fano}; note that $e_i\neq e_4$ for $i\in\{1,2,3\}$, since $e_4=e_i$ would force the sum of the other two basis vectors to vanish.

Pairing each of $R_{e_1},R_{e_2},R_{e_3}$ with $C_{e_4}$ produces
\[
T_{e_2+e_3},\qquad T_{e_1+e_3},\qquad T_{e_1+e_2},
\]
because $e_i\circ e_4=e_i+e_4$.
Next, if $x\in P$ and $x\circ y=w$ with $w\neq x$, then $y\neq x$ and $y=x+w$ is determined.
Applying this three times,
\[
R_{e_1},T_{e_1+e_3}\ \text{give}\ C_{e_3},\qquad
R_{e_1},T_{e_1+e_2}\ \text{give}\ C_{e_2},\qquad
R_{e_2},T_{e_1+e_2}\ \text{give}\ C_{e_1}.
\]
The pairs $R_{e_i},C_{e_i}$ then give $T_{e_i}$ for $i\in\{1,2,3\}$, since $e_i\circ e_i=e_i$.
Applying the same rule as before,
\[
R_{e_1},T_{e_2}\ \text{give}\ C_{e_1+e_2},\qquad
R_{e_1},T_{e_3}\ \text{give}\ C_{e_1+e_3},\qquad
R_{e_2},T_{e_3}\ \text{give}\ C_{e_2+e_3}.
\]
At this stage the columns
\[
C_{e_1},\ C_{e_2},\ C_{e_3},\ C_{e_1+e_2},\ C_{e_1+e_3},\ C_{e_2+e_3},\ C_{e_4}
\]
are all known, and these are all seven columns of $S_7$.
Pairing $R_{e_1}$ with each of them yields the symbols $T_{e_1\circ y}$ for $y\in P$; as $y$ runs over $P$ the element $e_1\circ y$ runs over all of $P$, so every symbol vertex is known.
Pairing $C_{e_1}$ with each symbol vertex then yields every row vertex, since for each $z\in P$ there is a unique $x\in P$ with $x\circ e_1=z$.
\end{proof}

\begin{corollary}\label{latin:cor:upper}
For every $n$ one has $\zst(n,S_7)\le n^4$.
\end{corollary}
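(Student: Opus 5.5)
The plan is to show that every subsquare of $M$ isotopic to $S_7$ is determined by four vertices of the incidence structure of $M$: three rows and one column. Once that is in place, counting the possible quadruples gives at most $n^3\cdot n=n^4$ subsquares.

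Let $K$ be a subsquare of $M$ and fix an isotopism from $S_7$ onto $K$. This identifies the row, column and symbol vertices of $K$ with the vertices $R_x$, $C_x$, $T_x$ of $S_7$, and it preserves triples. Fix a basis $e_1,e_2,e_3$ of $V$ once and for all, independent of $K$, and put $e_4=e_1+e_2+e_3$. Assign to $K$ the quadruple $(\rho_1,\rho_2,\rho_3,\gamma)$, where $\rho_i$ is the row of $M$ corresponding to $R_{e_i}$ and $\gamma$ is the column corresponding to $C_{e_4}$. An isotopy class may admit several isotopisms, so for each $K$ we choose one arbitrarily; this makes the assignment a well-defined map.

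The main point is that this map is injective. By the observation preceding Lemma~\ref{latin:lem:generated}, the vertex set of $K$ is closed under completing triples of $M$: if two vertices from different parts lie in $K$, so does the third vertex of their triple. Triples of $K$ are triples of $M$, because the cells of $K$ are cells of $M$. Also, two vertices of $M$ in different parts lie in a unique triple of $M$.

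Now start from $\{\rho_1,\rho_2,\rho_3,\gamma\}$ and run the generation procedure of Lemma~\ref{latin:lem:generated}, but inside $M$. At each step the lemma combines two already-known vertices of $S_7$ to produce a third. In $M$, the corresponding two vertices determine a unique triple, so the third vertex is forced. By closedness that vertex lies in $K$, and because the isotopism preserves triples it is the image of the vertex the lemma produces in $S_7$. By induction along the lemma's steps, every one of the 21 vertices of $K$ is determined by the quadruple, and this determination uses only $M$, not $K$.

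Therefore two subsquares with the same quadruple have the same rows and the same columns, so they coincide. Since there are at most $n^3$ choices of three rows and $n$ choices of one column, the number of subsquares isotopic to $S_7$ is at most $n^4$.

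The only delicate step is checking that the forced completion in $M$ agrees with the combinatorics of $S_7$. This holds because a triple of $K$ is a triple of $M$, and each pair of vertices from different parts lies in exactly one triple of $M$, so the triple the lemma uses in $S_7$ is the unique one available in $M$.
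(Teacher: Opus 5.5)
Your proposal is correct and follows essentially the same argument as the paper: record the images of $R_{e_1},R_{e_2},R_{e_3},C_{e_4}$ under a chosen isotopism, then use Lemma~\ref{latin:lem:generated} together with the uniqueness of triples in $M$ to show the quadruple determines $K$, which gives at most $n^4$ subsquares. As a minor remark, the closure observation is redundant in both your version and the paper's. Uniqueness of the triple in $M$ through two given vertices, combined with $\theta$ preserving triples, already forces each generated vertex to be the $\theta$-image of the corresponding vertex of $S_7$.
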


\begin{proof}
Let $M$ be a Latin square of order $n$ and let $K$ be a subsquare of $M$ isotopic to $S_7$, and let $e_1,e_2,e_3,e_4$ be as in Lemma~\ref{latin:lem:generated}.
Choose an isotopism $\theta$ from $S_7$ onto $K$ and record the quadruple
\[
\bigl(\theta(R_{e_1}),\ \theta(R_{e_2}),\ \theta(R_{e_3}),\ \theta(C_{e_4})\bigr),
\]
which consists of three rows and one column of $M$.
There are at most $n^4$ such quadruples, so it suffices to show that the quadruple determines $K$.

Since $\theta$ is an isotopism onto $K$, it carries triples of $S_7$ to triples of $M$ that lie in $K$.
By the observation above, applying the rule ``two vertices in different parts determine the third vertex of their triple'' inside $M$ to vertices of $K$ never leaves $K$, and it agrees with the corresponding rule in $S_7$ transported by $\theta$.
By Lemma~\ref{latin:lem:generated}, starting from the recorded quadruple and iterating this rule inside $M$ therefore produces exactly the twenty-one vertices of $K$.
Hence $K$ is determined by the quadruple, as desired.
\end{proof}

\subsubsection{Proof of the main theorem}

\begin{proof}[Proof of Theorem~\ref{latin:thm:main}]
Corollary~\ref{latin:cor:lower} gives $\zst(n,S_7)>(n/28)^{4}$ for every $n\ge56$, and Corollary~\ref{latin:cor:upper} gives $\zst(n,S_7)\le n^{4}$ for every $n$.
\end{proof}

\begin{remark}
Along the sequence $n=7q$ with $q$ a power of two, Corollary~\ref{latin:cor:lower} gives the stronger bound $\zst(n,S_7)\ge(n/7)^4$, so that
\[
\frac{1}{2401}\le\limsup_{n\to\infty}\frac{\zst(n,S_7)}{n^{4}}\le1 .
\]
The correct constant remains undetermined.
\end{remark}

\begin{proof}[Proof of Corollary~\ref{latin:cor:order7}]
Every subsquare isotopic to $S_7$ has order $7$, so $\zeta(n,7)\ge\zst(n,S_7)=\Omega(n^4)$ by Theorem~\ref{latin:thm:main}.
In the other direction, the bound of \citet[Theorem~8]{browning2015overlapping} with $m=7$ and $t=2$ gives $\zeta(n,7)=O(n^4)$.
\end{proof}

\putbib

\end{bibunit}
\endgroup
\end{document}